\documentclass{article}

\PassOptionsToPackage{numbers, compress}{natbib}

\usepackage[main, final]{neurips_2026}

\usepackage[utf8]{inputenc} %
\usepackage[T1]{fontenc}    %
\usepackage{hyperref}       %
\usepackage{url}            %
\usepackage{booktabs}       %
\usepackage{amsfonts}       %
\usepackage{nicefrac}       %
\usepackage{microtype}      %
\usepackage{xcolor}         %

\usepackage{dsfont}
\usepackage{nicefrac}
\usepackage{subcaption}
\usepackage{enumitem}
\usepackage{xparse}
\usepackage{makecell}
\usepackage{afterpage}
\usepackage{placeins}
\usepackage{multicol}
\usepackage{booktabs}
\usepackage{tabularx}

\usepackage{tikz}
\usetikzlibrary{arrows.meta,positioning}

\usepackage{amsmath}
\usepackage{amssymb}
\usepackage{mathtools}
\usepackage{amsthm}
\usepackage{float}
\usepackage{aliascnt}

\usepackage{hyperref}
\usepackage[capitalize,noabbrev]{cleveref}
\theoremstyle{plain}
\newtheorem{theorem}{Theorem}[section]

\newaliascnt{proposition}{theorem}
\newtheorem{proposition}[proposition]{Proposition}
\aliascntresetthe{proposition}

\newaliascnt{lemma}{theorem}
\newtheorem{lemma}[lemma]{Lemma}
\aliascntresetthe{lemma}

\newaliascnt{corollary}{theorem}
\newtheorem{corollary}[corollary]{Corollary}
\aliascntresetthe{corollary}

\theoremstyle{definition}

\newaliascnt{definition}{theorem}
\newtheorem{definition}[definition]{Definition}
\aliascntresetthe{definition}

\newaliascnt{assumption}{theorem}
\newtheorem{assumption}[assumption]{Assumption}
\aliascntresetthe{assumption}

\newaliascnt{claim}{theorem}
\newtheorem{claim}[claim]{Claim}
\aliascntresetthe{claim}

\newaliascnt{example}{theorem}

\aliascntresetthe{example}

\theoremstyle{remark}

\newaliascnt{remark}{theorem}
\newtheorem{remark}[remark]{Remark}
\aliascntresetthe{remark}

\crefname{theorem}{Theorem}{Theorems}
\crefname{proposition}{Proposition}{Propositions}
\crefname{lemma}{Lemma}{Lemmas}
\crefname{corollary}{Corollary}{Corollaries}
\crefname{definition}{Definition}{Definitions}
\crefname{assumption}{Assumption}{Assumptions}
\crefname{claim}{Claim}{Claims}
\crefname{example}{Example}{Examples}
\crefname{remark}{Remark}{Remarks}

\crefname{appendix}{App.}{App.}
\crefname{algorithm}{Scheme}{Schemes}
\crefname{equation}{Eq.}{Eq.}
\crefname{definition}{Definition}{Def.}

\crefname{corollary}{Corollary}{Cor.}

\newcommand{\ie}{\textit{i.e., }}
\newcommand{\eg}{\textit{e.g., }}

\definecolor{codecomment}{RGB}{65, 120, 40}

\newcommand{\vect}[1]{\mathbf{#1}}

\newcommand{\vu}{\vect{u}}
\newcommand{\vv}{\vect{v}}
\newcommand{\vw}{\vect{w}}

\newcommand{\mat}[1]{\mathbf{#1}}
\newcommand{\X}{\mat{X}}

\newcommand{\M}{\mat{M}}

\newcommand{\V}{\mat{V}}
\newcommand{\I}{\mat{I}}
\newcommand{\A}{\mat{A}}
\newcommand{\B}{\mat{B}}

\newcommand{\K}{\mat{K}}
\newcommand{\x}{\vect{x}}
\newcommand{\W}{\mat{W}}
\newcommand{\SM}{\mat{S}}

\newcommand{\vtheta}{\boldsymbol{\theta}}

\newcommand{\w}{\vw}

\floatname{algorithm}{Scheme}

\newcommand{\norm}[1]{\left\Vert{#1}\right\Vert}
\newcommand{\abs}[1]{\left|{#1}\right|}

\newcommand{\expect}[2]{\mathop{\mathbb{E}}_{#1}\sqprn{#2}}
\newcommand{\prn}[1]{\left({#1}\right)}
\newcommand{\bigprn}[1]{\big({#1}\big)}

\newcommand{\sqprn}[1]{\left[{#1}\right]}

\newcommand{\real}{\mathbb{R}}
\newcommand{\mean}[1]{\mathbb{E}\left[{#1}\right]}
\newcommand{\Tr}[1]{\mathrm{Tr}\left({#1}\right)}
\newcommand{\curlprn}[1]{\left\{{#1}\right\}}

\newenvironment{proof-sketch}{\noindent{\bf Proof sketch.}}{}%

\newcommand{\lntk}{\lambda_{\min}(\mathbf{K}_{\textrm{NTK}})}

\def\eqref#1{Eq.~(\ref{#1})}

\def\appref#1{App.~\ref{#1}}

\usepackage{titletoc} %
\newcommand{\appendixtableofcontents}{%
  \section*{Appendix contents}
  \vspace{-0.1em}
  \printcontents[app]{l}{1}{}
}

\titlecontents{section}
  [1.5em]                                       %
  {\vspace{5pt}\bfseries}                       %
  {\contentslabel{1.5em}}                       %
  {}                                            %
  {\titlerule*[0.5pc]{.}\contentspage}          %

\titlecontents{subsection}
  [3.5em]                                       %
  {\vspace{0.2pt}}                                %
  {\contentslabel{2.3em}}                       %
  {}                                            %
  {\titlerule*[0.5pc]{.}\contentspage}          %

\makeatletter
\NewDocumentEnvironment{recall}{ O{} o }{%
  \par
  \normalfont
  \topsep6\p@\@plus6\p@\relax
  \trivlist
  \item\relax
  {%
    \bfseries
    Recall%
    \if\relax\detokenize{#1}\relax\else\space #1\fi
  }%
  \IfValueT{#2}{\space(#2)}%
  .\hspace\labelsep\ignorespaces
}{%
  \endtrivlist
}
\makeatother

\title{Global Convergence in Deep Networks\\
  via the Second Law of Thermodynamics}

\newcommand*\samethanks[1][\value{footnote}]{\footnotemark[#1]}

\author{
Matan Tsipory\thanks{Correspondence to: \texttt{matan.tsipory@gmail.com}.}\\
Technion\thanks{
Technion - Israel Institute of Technology,
The Andrew and Erna Viterbi
Faculty of Electrical \& Computer Engineering, Technion City, Haifa 3200003, Israel.
}
\And
Ofir Gaash\\
Technion\samethanks[2]
\And
Itamar Harel\\
Technion\thanks{
Technion - Israel Institute of Technology, Faculty of Data and Decision Sciences, Technion City, Haifa 3200003, Israel.
}
\AND
Nathan Srebro\\
TTIC\thanks{Toyota Technological Institute at Chicago, Chicago, IL, USA.}
\And
Daniel Soudry\\
Technion\samethanks[2]
}

\begin{document}

\maketitle

\begin{abstract}
Existing convergence analyses of finite-width overparameterized deep neural networks under standard parameterization often rely on parameters remaining close to their initialization and typically neglect the role of noise, despite empirical evidence that stochasticity can improve optimization and generalization.
To bridge this gap, we study $L_2$-regularized continuous-time Langevin dynamics~(CLD) with isotropic noise, for training 
feedforward networks
with smooth activations.
We introduce a change-of-measure inequality, inspired by the second law of thermodynamics, that enables control of key properties of the training dynamics under the time-evolving parameter distribution.
Leveraging a sharpened lower bound on the minimum eigenvalue of the Neural Tangent Kernel~(NTK) at initialization, we show that injected noise helps preserve NTK stability with high probability without confining the weights to a small neighborhood of their initialization. As a result, we derive a non-asymptotic upper bound on the expected training loss.
Our analysis yields explicit conditions on width and noise temperature that guarantee convergence to a finite loss floor.
Finally, we experimentally show the network Jacobian can change from initialization, departing the commonly analyzed lazy regime.
\end{abstract}

\section{Introduction}
\label{sec:introduction}

It has been widely demonstrated that stochasticity in the training of neural networks~(NNs)---mostly introduced by mini-batch sampling in stochastic gradient descent~(SGD)---is often beneficial for optimization and generalization
\citep[e.g.,][]{keskar2017on, goyal2017accurate, golmant2018computational, smith2020generalization, sclocchi2023dissecting}.
While theoretical analyses establish a clear separation in favor of stochasticity in convex optimization \citep{amir2021sgd, amir2021never}, and highlight the benefit of injected noise in escaping saddle points \citep{jin2017escape, jin2021nonconvex}, non-asymptotic convergence guarantees for overparameterized neural networks do not yet fully reflect this empirical picture.

Much prior work focuses on settings where the parameters remain close to their initialization throughout training. In several of these works, the convergence rate is governed by the minimum eigenvalue of the Neural Tangent Kernel (NTK), which is typically controlled by the near-initialization condition \citep{du2019deep, song2021subquadratic, nguyen2020global, bombari2022memorization}. This condition is poorly aligned with noisy dynamics, which tend to move the parameters away from their initialization. Consistent with this, much of the existing theory focuses on deterministic algorithms \citep{oymak2020toward,  cao2020generalization, nguyen2021proof, montanari2022interpolation}. Related analyses that do attempt to account for noise often treat it primarily as a nuisance \citep{chen2021much, lugosi2024convergence, oberweis2026convergence, allen2019convergence, zou2019improved, xu2024overparametrized, chen2020generalized}. As a result, finite-width convergence guarantees for noisy neural network training beyond the near-initialization regime remain limited.

In this work, we 
treat noise not merely as a feature of realistic training, but as a central tool in the analysis.
We study continuous-time Langevin dynamics~(CLD), in which isotropic Gaussian noise is added to gradient flow, and derive a change-of-measure inequality inspired by the second law of thermodynamics.
This inequality allows us to control key quantities under the evolving noisy dynamics, in particular, the minimum eigenvalue of the NTK, without requiring the parameters to remain near initialization.
This leads to a non-asymptotic global convergence guarantee for feedforward neural networks.

Our analysis differs conceptually from many existing works on Langevin dynamics \citep[e.g.,][]{raginsky2017non, xu2018global, kumar2025langevin}:
Rather than proving convergence by analyzing the log-Sobolev inequality~(LSI) constant of the Gibbs distribution, we obtain a direct optimization bound along the dynamics.
Avoiding this route is crucial, since the LSI constant typically deteriorates exponentially as the noise temperature decreases. In turn, this yields linear convergence with a decay exponent that is independent of the noise temperature.

\paragraph{Main contributions.}
\begin{itemize}
\item We prove a general change-of-measure inequality (\Cref{thm:second_law_change_of_measure}) inspired by the second law of thermodynamics \citep{Cover1994,Merhav2011,harel2025temperature}. This bounds expectations under the time‑$t$ distribution of a Markov process using only moment information at initialization, provided the process admits a Gibbs stationary distribution relative to the initialization. 
\item We strengthen the lower bound on the minimum eigenvalue of the NTK at initialization introduced by \citet{banerjee2023neural}, by sharpening their high-probability guarantees (\Cref{thm:lower_bound_ntk_deep}).
\item We apply the ``second-law change-of-measure'' to prove that the NTK stays well-conditioned during training, yielding a global convergence bound on the expected loss under regularized CLD (\Cref{thm:upper_bound_on_expected_loss}) with non-annealing, isotropic noise, and provide explicit dependence on width, temperature, and parameterization. We explicitly show our result applies to standard and NTK parameterizations. For shallow nets, we also prove convergence to zero loss if the noise is eventually removed.
\item 
We empirically demonstrate that the resulting Langevin dynamics evolve the Jacobian away from its initial value under standard parameterization, suggesting that the proof technique does not implicitly enforce lazy training.
\end{itemize}

\paragraph{Notation.}
\label{sec:notations}
We denote $\sqprn{n}=\curlprn{1, \ldots, n}$. We use bold symbols for vectors and matrices, \eg $\vv,\,\X$. 
The gradient of a function $f:\real^d\to\real$ is denoted by $\nabla f\in\real^d$,
the Hessian by $\nabla^2 f\in\real^{d\times d}$, and the Laplacian by
$\Delta f=\Tr{\nabla^2 f}$. For a vector-valued function
$f:\real^d\to\real^n$, we use $\nabla f\in\real^{n\times d}$ to denote its Jacobian,
with the intended meaning made clear by context.

We use $\lesssim$ to denote inequality up to constants, \ie $a \lesssim b$ iff there exists a constant $c>0$ such that $a\leq cb$, where $c$ does not scale with the problem dimension or any other size-related parameter.\footnote{We never ``hide'' input dimension $d$, number of samples $n$, network width $m$ and inverse noise temperature $\beta$ in constants.}
Similarly, we denote $a \asymp b$ iff there exists a constant $c>0$ such that $a = cb$. We interchangeably use these notations and the equivalent $O, \Omega, \Theta$ notations to denote the same properties, and denote $\tilde{O}, \tilde\Omega, \tilde{\Theta}$ when we ignore poly-logarithmic terms.
Throughout, the letter $c$, with any \textit{numerical} subscript or decoration (\eg $c_1,\, \tilde{c},\, C$), is reserved for global constants, 
unless otherwise stated.  

We denote for a random variable (RV) $X$ its $L^p$ norm $\norm{X}_{L^p} \triangleq \prn{\mean{\abs{X}^p}}^{\nicefrac{1}{p}}$. 
The subgaussian and subexponential norms of a random variable $X$ are denoted  $\norm{X}_{\psi_2}$ and $\norm{X}_{\psi_1}$, respectively.\linebreak
For two probability measures $p,\,q$, we denote by $p \ll q$ that $p$ is \emph{absolutely continuous} with respect to $q$, i.e., for any event $\mathcal{E}$, $q \prn{\mathcal{E}} = 0$ implies $p \prn{\mathcal{E}} = 0$. 
When $p \ll q$ and $q \ll p$, the two measures are called \emph{equivalent}, denoted as $p \sim q$.

\section{Key technical result: parameter control via a change of measure inequality}
\label{sec:change_of_measure}
We begin by introducing a change of measure inequality, inspired by the second law of thermodynamics, as a technical tool that allows us to control properties of Markov processes. 
In order to state our result, we require the following definition.
\begin{definition}[Gibbs distribution]
\label{def:gibbs_distribution}
Let $p,\,q$ be probability measures on a measurable space $\mathcal{X}$ such that $p\ll q$.
$p$ is Gibbs with respect to (w.r.t.)\! $q$ with non-negative potential $\Psi:\mathcal{X} \to \real$ iff $Z\triangleq\int e^{-\Psi} \mathrm{d} q < \infty$ and the Radon–Nikodym derivative of $p$ w.r.t. $q$ is
\begin{align*}
    \frac{\mathrm{d}p}{\mathrm{d}q}=Z^{-1} e^{-\Psi}\,.
\end{align*}
\end{definition}
We consider a time-invariant Markov process on a state space $\mathcal{X}$, in either discrete or continuous time, with marginal distribution $p_t$ at time $t$ and Gibbs stationary distribution $p_\infty$ with respect to a reference measure $\nu$. 
 Utilizing the second law of thermodynamics,
 we derive an upper bound on the R\'enyi divergence $D_\alpha$ of $p_t$ from $\nu$ at all times (\cref{lem:divergence_bound}),
 extending \citet[Corollary 2.5]{harel2025temperature} that applies to the KL divergence (the limit of $D_\alpha$ when $\alpha\to 1$) and the R\'enyi-infinity divergence $D_\infty$.
Then, we plug this bound with $p_0=\nu$ into a change-of-measure inequality based on the R\'enyi divergence \citep[Theorem 8]{begin2016pac}, to obtain 
our main technical result.
\begin{lemma}[Second law of thermodynamics change of measure inequality]
\label{thm:second_law_change_of_measure}
Consider a time invariant Markov process over the state space $\mathcal{X}$ whose marginal distribution is $p_t,\, t\geq 0$ such that $p_t \ll p_0$, with a stationary distribution $p_\infty$ which is Gibbs w.r.t.\! $p_0$ with potential $\Psi$,
and $p_\infty \sim p_0$.
Then, for any \emph{non-negative} measurable function $\phi:\mathcal{X}\to\mathbb{R}$ and $\alpha>1$, we have at any time $t \geq 0$,
\begin{align*}
    \mathbb{E}_{p_{t}} \phi\left(x\right) & \leq \prn{\mathbb{E}_{p_{0}}\left[e^{2\left(\alpha-1\right)\Psi}\right]}^{\frac{1}{2\alpha}}\,
    \prn{\mathbb{E}_{ p_{0}}\left[\phi\left(x\right)^{\frac{\alpha}{\alpha-1}}\right]}^{\frac{\alpha-1}{\alpha}}.
\end{align*}
\end{lemma}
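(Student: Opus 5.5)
The plan is to combine Hölder's inequality with a data-processing (``second law'') bound on the $\alpha$-th moment of the likelihood ratio between $p_t$ and $p_0$. Write $r_t \triangleq \mathrm{d}p_t/\mathrm{d}p_0$, which exists because $p_t\ll p_0$. Since $\phi\ge 0$, Hölder with exponents $\alpha$ and $\alpha/(\alpha-1)$ gives
\begin{align*}
\mathbb{E}_{p_t}\phi = \mathbb{E}_{p_0}\left[r_t\,\phi\right] \le \prn{\mathbb{E}_{p_0}\left[r_t^{\alpha}\right]}^{\frac{1}{\alpha}} \prn{\mathbb{E}_{p_0}\left[\phi^{\frac{\alpha}{\alpha-1}}\right]}^{\frac{\alpha-1}{\alpha}} .
\end{align*}
This is the R\'enyi change-of-measure step of \citet[Theorem 8]{begin2016pac}. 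It then suffices to show $\mathbb{E}_{p_0}[r_t^\alpha] \le \prn{\mathbb{E}_{p_0}[e^{2(\alpha-1)\Psi}]}^{1/2}$.

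To bound this moment, route through the stationary measure. Since $p_\infty\sim p_0$, we have $p_t\ll p_\infty$, so $s_t\triangleq \mathrm{d}p_t/\mathrm{d}p_\infty$ exists. By \cref{def:gibbs_distribution}, $\mathrm{d}p_\infty/\mathrm{d}p_0 = Z^{-1}e^{-\Psi}$, hence $r_t = s_t\, Z^{-1}e^{-\Psi}$ and
\begin{align*}
\mathbb{E}_{p_0}\left[r_t^\alpha\right] = \mathbb{E}_{p_\infty}\left[s_t^\alpha \prn{Z^{-1}e^{-\Psi}}^{\alpha-1}\right] \le Z^{-(\alpha-1)}\,\mathbb{E}_{p_\infty}\left[s_t^\alpha\right],
\end{align*}
where the inequality uses $\Psi\ge 0$ and $\alpha>1$.

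The key (second-law) step is monotonicity: $\mathbb{E}_{p_\infty}[s_t^\alpha] \le \mathbb{E}_{p_\infty}[s_0^\alpha]$. This is the $f$-divergence $D_f(p_t\Vert p_\infty)$ with convex $f(x)=x^\alpha$. Time invariance gives $p_t = p_0 P_t$ and $p_\infty = p_\infty P_t$ for the transition kernel $P_t$. The data-processing inequality for $f$-divergences under a common Markov kernel then yields $D_f(p_0P_t\Vert p_\infty P_t)\le D_f(p_0\Vert p_\infty)$; this is the content I expect \cref{lem:divergence_bound} to provide.

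It remains to compute the right-hand side. Here $s_0 = Z e^{\Psi}$, so
\begin{align*}
\mathbb{E}_{p_\infty}\left[s_0^\alpha\right] = Z^\alpha\, \mathbb{E}_{p_0}\left[Z^{-1}e^{-\Psi} e^{\alpha\Psi}\right] = Z^{\alpha-1}\,\mathbb{E}_{p_0}\left[e^{(\alpha-1)\Psi}\right].
\end{align*}
Combining, the factors of $Z$ cancel, giving $\mathbb{E}_{p_0}[r_t^\alpha]\le \mathbb{E}_{p_0}[e^{(\alpha-1)\Psi}]$. Jensen (or Cauchy--Schwarz) gives $\mathbb{E}_{p_0}[e^{(\alpha-1)\Psi}] \le \prn{\mathbb{E}_{p_0}[e^{2(\alpha-1)\Psi}]}^{1/2}$. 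Raising to the power $1/\alpha$ yields the stated factor. The statement is therefore slightly weaker than what this route gives; the squared exponent presumably comes from the authors applying Cauchy--Schwarz where I used the sign of $\Psi$.

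The main obstacle is making the data-processing step rigorous for a general time-invariant Markov process, discrete or continuous time, on a general measurable space. One must check that $p_t\ll p_\infty$ and verify the kernel form of the density, namely that $s_t$ is the image of $s_0$ under the adjoint (time-reversed) kernel with respect to $p_\infty$. One also needs Jensen for the convex map $x\mapsto x^\alpha$ under that conditional expectation, and some care when the moments are infinite, in which case the bound is trivial. If citing \cref{lem:divergence_bound} directly does not cover these technicalities, I would prove monotonicity via the conditional-expectation representation $s_t = \mathbb{E}_{p_\infty}[s_0(X_0)\mid X_t]$ under the stationary chain started from $p_\infty$, followed by conditional Jensen.
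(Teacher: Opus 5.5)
Your proof is correct, and it takes a different, sharper route through the divergence bound than the paper does.

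\textbf{The paper's route.} Via \cref{lem:divergence_bound} with $\nu=p_0$, the paper applies Cauchy--Schwarz to $\int\bigl(\tfrac{\mathrm{d}p_t}{\mathrm{d}p_\infty}\tfrac{\mathrm{d}p_\infty}{\mathrm{d}p_0}\bigr)^{\alpha-1}\mathrm{d}p_t$. This splits it into $D_{2\alpha-1}(p_t\Vert p_\infty)+D^{p_t}_{2\alpha-1}(p_\infty\Vert p_0)$. The paper then applies the second law at order $2\alpha-1$ and a H\"older step with $r=\infty$, which produces $D_\infty(p_0\Vert\nu)$ (zero here). It converts to potentials with \cref{claim:renyi_to_potential} and discards the non-positive term $\ln\mathbb{E}_{p_t}[e^{-2(\alpha-1)\Psi}]$. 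So, as you guessed, the doubled exponent is the price of Cauchy--Schwarz.

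\textbf{Your route.} You bound $\mathrm{d}p_\infty/\mathrm{d}p_0=Z^{-1}e^{-\Psi}\le Z^{-1}$ pointwise and invoke monotonicity at order $\alpha$ itself. You then evaluate $D_\alpha(p_0\Vert p_\infty)=\ln Z+\frac{1}{\alpha-1}\ln\mathbb{E}_{p_0}[e^{(\alpha-1)\Psi}]$ exactly. The factors of $Z$ cancel and give $D_\alpha(p_t\Vert p_0)\le\frac{1}{\alpha-1}\ln\mathbb{E}_{p_0}[e^{(\alpha-1)\Psi}]$, and the stated form follows by Jensen.

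\textbf{What this buys.} You get a strictly stronger lemma. In \cref{cor:second_law_for_langevin} one would only need $\mathbb{E}_{p_0}[e^{(\alpha-1)\beta\mathcal{L}}]$ to be $O(1)$. So $\alpha-1$ can be roughly doubled, and the moment order $\frac{\alpha}{\alpha-1}$ used downstream roughly halved; this is a constant-factor gain. The paper's route is set up for a general reference $\nu\neq p_0$ (cf.\ \cref{rem:avoid_d_infinity}). Your pointwise step transfers there as well, since $\mathrm{d}p_\infty/\mathrm{d}\nu\le Z^{-1}$ whenever $\Psi\ge 0$.

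\textbf{One correction.} The monotonicity you use is \cref{cor:second_law_continuous_time}, not \cref{lem:divergence_bound}; the paper derives it from \cref{prop:data_processing_inequality} via \cref{prop:second_law_discrete_time}. Your conditional-Jensen derivation from $s_t=\mathbb{E}[s_0(X_0)\mid X_t]$ under the stationary chain is an equally valid way to obtain it.
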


We provide the proof in \appref{app:divergence_upper_bound_proof}.
\cref{thm:second_law_change_of_measure} enables us to control specific properties of a Markov process throughout its mixing phase using only properties at initialization.
In the following section, we demonstrate its applicability to regularized Langevin dynamics.

\section{Regularized Langevin dynamics}
\label{sec:train_by_langevin}
We study the evolution of parameters $\vtheta \in \real^{N_p}$ by continuous-time Langevin dynamics (CLD), initialized by a Gaussian distribution $\vtheta_0 \sim \mathcal{N} \prn{\mathbf{0}_{N_p}, \boldsymbol{\Lambda}^{-1}}$, where $\boldsymbol{\Lambda} \in \real^{N_p \times N_p}$ is positive definite.
This scheme enables using our change of measure inequality, as long as its stationary distribution over the trainable parameters, $p_\infty$, is Gibbs w.r.t.\! their initial distribution $p_0$. In order to achieve this, 
we employ $L_2$-regularized CLD.
Formally, the parameters $\boldsymbol{\theta}$ evolve by 
\begin{align}
    \mathrm{d}\boldsymbol{\theta}_{t} &= 
    -\nabla_{\vtheta}\mathcal{L} \left(\boldsymbol{\theta}_{t}\right)\mathrm{d}t-\beta^{-1}\boldsymbol{\Lambda}\boldsymbol{\theta}_{t}\mathrm{d}t+\sqrt{2\beta^{-1}}\mathrm{d}\mathbf{w}_{t},
    \label{eq:regularized_langevin}
\end{align}
where $\mathcal{L} \prn{\boldsymbol{\theta}} \geq 0$ is some empirical loss function that is twice continuously differentiable in the parameters $\vtheta$, $\beta$ is an inverse-temperature (scalar) parameter, and $\mathrm{d}\mathbf{w}_{t}$ is the standard Brownian motion in $\real^{N_p}$.
Since $\mathcal{L}\geq 0$, the $L_2$ regularization ensures that
\[
Z=\int_{\real^{N_p}}e^{-\beta\mathcal{L}\left(\vtheta\right)-\frac{1}{2}\vtheta^\top\boldsymbol{\Lambda}\vtheta}\mathrm{d}\vtheta<\infty \,.
\]
Therefore, denoting by $\phi_{\boldsymbol{\Lambda}}$ the density of the multivariate Gaussian distribution $\mathcal{N}\left(\mathbf{0}_{N_p},\boldsymbol{\Lambda}^{-1}\right)$, the Gibbs distribution
$\propto e^{-\beta\mathcal{L}\left(\boldsymbol{\theta}\right)}\phi_{\boldsymbol{\Lambda}}\left(\boldsymbol{\theta}\right)$ is well defined and is the stationary distribution of the non-explosive process given by \cref{eq:regularized_langevin} (\cref{prop:cld_wellposed}).\footnote{\label{footnote:pavliotis}
Stationarity of the Gibbs distribution for Langevin dynamics is a well known fact, usually established by vanishing probability flux in the Fokker-Planck equation \citep[see, e.g.,][Proposition 4.2]{pavliotis2014stochastic}. However, that fact does not rule out explosion of the process. The later analysis via It\^o's formula requires non-explosion, which we prove in \cref{prop:cld_wellposed}.
Furthermore, the setting can be extended to anisotropic noise with a fixed positive definite covariance $\boldsymbol\Sigma$, provided the drift is preconditioned by the same $\boldsymbol\Sigma$: for $\mathrm{d}\boldsymbol{\theta}_t=-\boldsymbol\Sigma\nabla(\mathcal{L}(\boldsymbol{\theta}_t)+\frac{1}{2\beta}\boldsymbol{\theta}_t^\top\mathbf{\Lambda}\boldsymbol{\theta}_t)\mathrm{d}t+\sqrt{2\beta^{-1}\boldsymbol\Sigma}\mathrm{d}\mathbf{w}_t$, the same Gibbs law remains stationary. We conduct our analysis with $\boldsymbol\Sigma=\mathbf{I}$ for simplicity.
}
It is evident then that $p_\infty\propto e^{-\beta \mathcal{L} \prn{\boldsymbol{\theta}}} \phi_{\boldsymbol{\Lambda}} \prn{\boldsymbol{\theta}}$ is Gibbs w.r.t.\! $\phi_{\boldsymbol{\Lambda}}$ with potential $\Psi=\beta \mathcal{L} \prn{\boldsymbol{\theta}}$. 
Furthermore, the Markov process described in \cref{eq:regularized_langevin}, with \mbox{$\vtheta_0 \sim \mathcal{N} \prn{\mathbf{0}_{N_p},\,\boldsymbol{\Lambda}^{-1} }$}, satisfies the absolute continuity requirements of \cref{thm:second_law_change_of_measure} (\cref{prop:cld_wellposed}).
 Thus, we can apply \cref{thm:second_law_change_of_measure}:
\begin{corollary}
\label{cor:second_law_for_langevin}    
With $\boldsymbol{\theta}$ initialized by \mbox{$p_0=\mathcal{N} \prn{\mathbf{0}_{N_p},\,\boldsymbol{\Lambda}^{-1} }$} and evolving by \cref{eq:regularized_langevin}, we have for any \emph{non-negative} measurable function \mbox{$\varphi: \real^{N_p} \to \mathbb{R}$}, at any time $t\geq 0$ and for any $\alpha>1$,
\begin{align*}
    \mathbb{E}_{p_{t}} \varphi\left(\boldsymbol{\theta}_t\right) & \leq \prn{\mathbb{E}_{p_{0}}\left[e^{2\left(\alpha-1\right) \beta \mathcal{L} \prn{\boldsymbol{\theta}_0}}\right]}^{\frac{1}{2\alpha}}\,
    \norm{\varphi \prn{\boldsymbol{\theta}_0}}_{L^{\frac{\alpha}{\alpha - 1}}} .
\end{align*}
\end{corollary}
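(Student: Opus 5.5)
The corollary follows by applying \cref{thm:second_law_change_of_measure} directly to the Markov process defined by \cref{eq:regularized_langevin}. The plan is to check each hypothesis of the lemma and then read off the constants.

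\textbf{Step 1: the process is a valid time-invariant Markov process.} The SDE \cref{eq:regularized_langevin} has time-independent drift $-\nabla\mathcal{L}(\vtheta)-\beta^{-1}\boldsymbol{\Lambda}\vtheta$ and constant diffusion coefficient. Since $\mathcal{L}$ is $C^2$, the drift is locally Lipschitz. By \cref{prop:cld_wellposed}, the process is non-explosive, so $\vtheta_t$ is a well-defined time-homogeneous Markov process on $\mathcal{X}=\real^{N_p}$ with marginals $p_t$ and $p_0=\mathcal{N}(\mathbf{0}_{N_p},\boldsymbol{\Lambda}^{-1})$.

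\textbf{Step 2: the stationary law is Gibbs w.r.t.\ $p_0$ with potential $\Psi=\beta\mathcal{L}$.} The stationary density is $p_\infty\propto e^{-\beta\mathcal{L}}\phi_{\boldsymbol{\Lambda}}$. Since $p_0$ has density $\phi_{\boldsymbol{\Lambda}}$, this gives
\[
\frac{\mathrm{d}p_\infty}{\mathrm{d}p_0}=Z'^{-1}e^{-\beta\mathcal{L}}, \qquad Z'=\int e^{-\beta\mathcal{L}}\,\mathrm{d}p_0\le 1<\infty .
\]
The bound on $Z'$ uses $\mathcal{L}\ge 0$. The potential $\Psi=\beta\mathcal{L}$ is non-negative and measurable, so \cref{def:gibbs_distribution} is satisfied.

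\textbf{Step 3: absolute continuity.} The density ratio $e^{-\beta\mathcal{L}}/Z'$ is strictly positive and finite everywhere, so $p_\infty\sim p_0$. The remaining requirement, $p_t\ll p_0$ for all $t$, is supplied by \cref{prop:cld_wellposed}. Heuristically, the transition kernel of a nondegenerate diffusion has a positive density, so $p_t$ is equivalent to Lebesgue measure and hence to the Gaussian $p_0$.

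\textbf{Step 4: apply the lemma.} With $\phi=\varphi$ and $\Psi=\beta\mathcal{L}$, \cref{thm:second_law_change_of_measure} gives the first factor $\prn{\mathbb{E}_{p_0} e^{2(\alpha-1)\beta\mathcal{L}(\vtheta_0)}}^{1/(2\alpha)}$. The second factor is
\[
\prn{\mathbb{E}_{p_0}\varphi^{\alpha/(\alpha-1)}}^{(\alpha-1)/\alpha}=\norm{\varphi(\vtheta_0)}_{L^{\alpha/(\alpha-1)}},
\]
which is the $L^p$ norm with exponent $p=\alpha/(\alpha-1)$, since $\varphi\ge 0$.

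The only nontrivial hypothesis is the one in Step 3, namely non-explosion and $p_t\ll p_0$. Here I rely entirely on \cref{prop:cld_wellposed}. If I had to prove it myself, I would use the Lyapunov function $V(\vtheta)=\beta\mathcal{L}(\vtheta)+\tfrac12\vtheta^\top\boldsymbol{\Lambda}\vtheta$ for non-explosion. For absolute continuity, I would use Girsanov's theorem with localization relative to the Ornstein--Uhlenbeck process started at $p_0$, which preserves $p_0$.
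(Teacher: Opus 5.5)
Your proposal is correct and matches the paper's argument: the corollary is \cref{thm:second_law_change_of_measure} applied with $\nu=p_0$ and $\Psi=\beta\mathcal{L}$, with non-explosion, stationarity of the Gibbs law and $p_t\ll p_\infty\sim p_0$ all delegated to \cref{prop:cld_wellposed} (as in \cref{rem:absolute_continuity_under_standard_transitions}), and the second factor then read as an $L^{\alpha/(\alpha-1)}$ norm. Only your optional fallback for non-explosion is shaky: Khasminskii's test with $V=\beta\mathcal{L}+\tfrac12\vtheta^\top\boldsymbol{\Lambda}\vtheta$ essentially needs $\Delta\mathcal{L}\lesssim 1+V$, which is not evident for deep networks (there $\Delta\mathcal{L}$ can grow faster than quadratically in the weights), and this is exactly the kind of growth condition that the paper's Dirichlet-form conservativeness argument avoids.
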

\begin{remark}
Under our model, as defined in \cref{sec:setting_deep_nn}, we are mostly focused on regimes where $\prn{\mathbb{E}_{p_{0}}\left[e^{2\left(\alpha-1\right) \beta \mathcal{L} \prn{\boldsymbol{\theta}_0}}\right]}^{\frac{1}{2\alpha}} = O\! \prn{1}$, which simplifies the analysis greatly.
\end{remark}
\paragraph{It\^o's formula.}
To analyze the dynamics of the loss under regularized CLD (\cref{eq:regularized_langevin}), we can employ It\^o's formula \citep{ito1951formula}, \citep[Chapter 1, Theorem 6.4]{mao2007stochastic}:\footnote{See \appref{sec:ito_rigor} for a formal presentation of It\^o's formula and its applicability in our case.}
\begin{align*}
\mathrm{d}\mathcal{L} \left(\boldsymbol{\theta}_{t}\right)  & = \nabla \mathcal{L}  \left(\boldsymbol{\theta}_{t}\right)^{\top}  \left( \left(-\nabla \mathcal{L}  \left(\boldsymbol{\theta}_{t}\right) - \beta^{-1}\boldsymbol{\Lambda}\boldsymbol{\theta}_{t} \right)   \mathrm{d}t
+   \sqrt{2\beta^{-1}}\mathrm{d}\mathbf{w}_{t} \right)   + 
\frac{1}{2}\mathrm{Tr} \left(2\beta^{-1} \nabla^{2}\mathcal{L}  \left(\boldsymbol{\theta}_{t}\right)\right)   \mathrm{d}t \,.
\end{align*}
By taking the expected value \emph{over both the initialization and the Brownian noise}, we arrive at
\begin{align}
\label{eq:after_ito_mean_loss_dynamics}
\frac{\mathrm{d}}{\mathrm{d}t}\mathbb{E}\mathcal{L} \left(\boldsymbol{\theta}_{t}\right) & =\mathbb{E}\left[-\left\Vert \nabla \mathcal{L} \left(\boldsymbol{\theta}_{t}\right)\right\Vert ^{2}+\beta^{-1}\Delta \mathcal{L} \left(\boldsymbol{\theta}_{t}\right) -\beta^{-1}\nabla \mathcal{L} \left(\boldsymbol{\theta}_{t}\right)^{\top}\boldsymbol{\Lambda}\boldsymbol{\theta}_{t}\right]\,,
\end{align}
where $\Delta \mathcal{L} \left(\boldsymbol{\theta}_{t}\right)$ is the Laplacian of the loss.
In the next sections, we use \cref{cor:second_law_for_langevin} to control the terms on the right-hand side of \cref{eq:after_ito_mean_loss_dynamics}, thereby obtaining bounds on the expected loss for feedforward neural networks trained via regularized CLD.

\section{Model: deep neural network}
\label{sec:setting_deep_nn}
We consider a scalar-output feedforward NN trained with the mean squared error~(MSE) loss \citep[e.g.,][]{du2019deep, arora2019fine, montanari2022interpolation}.

Formally, consider a training set $\mathcal{D}=\left\{\left(\mathbf{x}_i,y_i\right)\right\}_{i\in\left[n\right]}$, where $\mathbf{x}_i\in\mathbb{R}^d$ and $y_i\in\mathbb{R}$. We denote by $\mathbf{X}\in\mathbb{R}^{n\times d}$ the matrix whose $i$-th row is $\mathbf{x}_i^\top$, and by $\mathbf{y}\in\mathbb{R}^n$ the vector whose $i$-th component is $y_i$.
In our setting, $f$ is a bias-free network with $L+1$ layers ($L$ hidden layers) and widths $m_l$, $l\in\left[L\right]$, under an arbitrary parameterization. With $\boldsymbol{\alpha}^{\left(0\right)}\left(\mathbf{x}\right)  =\mathbf{x}$, the model is defined recursively $\forall \x\in\real^d$ by
\begin{equation}
\begin{aligned}
\boldsymbol{\alpha}^{\left(l\right)}\left(\mathbf{x}\right) & =\phi\left(s_{l}\mathbf{W}^{\left(l\right)}\boldsymbol{\alpha}^{\left(l-1\right)}\left(\mathbf{x}\right)\right),\quad l\in\left[L\right]\,,\\
f\left(\vtheta;\mathbf{x}\right) & ={\alpha}^{\left(L+1\right)}\left(\mathbf{x}\right)=s_{L+1}\mathbf{v}^{\top}\boldsymbol{\alpha}^{\left(L\right)}\left(\mathbf{x}\right)\,,
\label{eq:network_definition}
\end{aligned}
\end{equation}
where $\mathbf{W}^{\left(l\right)}\in\mathbb{R}^{m_{l}\times m_{l-1}},\,l\in\left[L\right]$
are layer-wise weight matrices, $\mathbf{v}\in\mathbb{R}^{m_{L}}$
is the last layer vector, $\phi\left(\cdot\right)$ is the activation
function,
and $s_{l},\,l\in\left[L+1\right]$ are fixed positive
constants 
(\eg $s_l=1,\,l\in\sqprn{L+1}$ in standard parameterization). 
The total set of trainable parameters is denoted by
\begin{align*}
\vtheta & \triangleq\left(\mathrm{vec}\left(\mathbf{W}^{\left(1\right)}\right)^{\top},\ldots,\mathrm{vec}\left(\mathbf{W}^{\left(L\right)}\right)^{\top},\mathbf{v}^{\top}\right)^{\top}\in\mathbb{R}^{N_{p}}\,,
\end{align*}
with $m_{0}\triangleq d$ and $N_{p}\triangleq\sum_{l=1}^{L}m_{l}m_{l-1}+m_{L}$. 
For each $i\in\sqprn{n}$, let $f_i\left(\boldsymbol{\theta}\right)\triangleq f\left(\boldsymbol{\theta};\mathbf{x}_i\right)\in\mathbb{R}$, and write $f\left(\boldsymbol{\theta}\right)\triangleq f\left(\boldsymbol{\theta};\mathbf{X}\right)=\left(f_1\left(\boldsymbol{\theta}\right),\ldots,f_n\left(\boldsymbol{\theta}\right)\right)^{\top}\in\mathbb{R}^n$. Its Jacobian is denoted by $\nabla f \prn{\boldsymbol{\theta}} \triangleq\frac{\partial f\left(\boldsymbol{\theta};\mathbf{X}\right)}{\partial\boldsymbol{\theta}}\in\mathbb{R}^{n\times N_{p}}$, and the empirical NTK matrix by $\mathbf{K}_{\mathrm{NTK}}\left(\boldsymbol{\theta}\right)\triangleq\nabla f\left(\boldsymbol{\theta}\right)\nabla f\left(\boldsymbol{\theta}\right)^{\top}$. With the residual vector $\mathbf{r}\left(\boldsymbol{\theta}\right)\triangleq f\left(\boldsymbol{\theta}\right)-\mathbf{y}$, the MSE loss is
\begin{align}
\mathcal{L}\left(\boldsymbol{\theta}\right)
&=\frac{1}{2n}\left\Vert f\left(\boldsymbol{\theta}\right)-\mathbf{y}\right\Vert^2
=\frac{1}{2n}\left\Vert\mathbf{r}\left(\boldsymbol{\theta}\right)\right\Vert^2\,.
\label{eq:mse_loss_def}
\end{align}
\paragraph{Weight initialization.}
For some $\sigma_{0}>0$, we denote $c_{\phi,\sigma_{0}}\triangleq\mathbb{E}_{z\sim\mathcal{N}\left(0,\sigma_{0}^{2}\right)}\left[\phi^{2}\left(z\right)\right]=\Theta \prn{1}$, and $\nu_{0}^{2} \triangleq \frac{\sigma_{0}^{2}}{c_{\phi,\sigma_{0}}}\!=\!\Theta  \prn{1}$.
We examine Gaussian initializations,
\begin{align}
\mathrm{vec}\left(\mathbf{W}^{\left(l\right)}\right) \sim \mathcal{N}\left(\mathbf{0}_{m_l m_{l-1}},\,\frac{\nu_0^2}{a_l}\mathbf{I}_{m_l m_{l-1}}\right), \quad l \in \left[ L \right]; \quad \mathbf{v} \sim \mathcal{N}\left(\mathbf{0}_{m_L},\frac{1}{a_{L+1}}\mathbf{I}_{m_L}\right)\,,
\label{eq:Wv_initialization}
\end{align}
where
$a_l,\,l\in\left[L+1\right]$ are predetermined constants such that $\forall l\in\left[L\right]$, $\frac{a_{l}}{s_{l}^{2}}=m_{l-1}$. This demand leads to variance preservation, as in \citet{he2015delving}.\footnote{
The demand $\forall l\in\left[L\right],\,\nicefrac{a_{l}}{s_{l}^{2}}=m_{l-1}$ is required for the proof of the empirical NTK minimum eigenvalue lower bound, see \appref{app:ntk_min_eigenvalue} for more details. Other parts of our derivation do not require this, so we work with $a_l,\,s_l$ throughout.
}
We do not impose a similar demand on $a_{L+1}$, and denote  $\kappa \triangleq s_{L+1}  \sqrt{\frac{m_L}{a_{L+1}}}$ as the net’s overall output-scaling coefficient. The value of $\kappa$ significantly affects the performance of deep nets \citep{chizat2019lazy,woodworth2020kernel}.
Equivalently, \cref{eq:Wv_initialization} may be presented as
\begin{equation}
    \begin{aligned}
        \vtheta_0\sim\mathcal{N} \prn{\mathbf{0}_{N_p},\,\boldsymbol{\Lambda}^{-1} }, \quad
        \boldsymbol{\Lambda} = \mathrm{diag}\prn{
        \frac{a_1}{\nu_0^2}\I_{m_1 m_0},\ldots, \frac{a_L}{\nu_0^2}\I_{m_L m_{L-1}}, a_{L+1} \I_{m_L}}
        \,.
        \label{eq:Lambda_def}
    \end{aligned}
\end{equation}
\paragraph{Activation function.}
We impose the following assumptions on the activation function:
\begin{assumption}[Bounded activation]
\label{ass:activation_bounded}
The activation $\phi$ is bounded, \ie $\abs{\phi} \leq M_\phi$.
\end{assumption}
\begin{assumption}[Activation derivative]
\label{ass:activation_bounded_gradients}
The activation is twice continuously differentiable and its derivatives are bounded ($\abs{\phi'} \leq \alpha_\phi$, $\abs{\phi''} \leq \beta_\phi$).\footnote{\label{footnote:C1_activations}
This assumption can be relaxed to once continuously differentiable activations with bounded and $\beta_\phi$-Lipschitz gradients, such as $\mathrm{softsign}$ \citep{glorot2010understanding} and $\mathrm{ELU}$ \citep{clevert2015fast}. The resulting loss has a locally Lipschitz gradient, so global well-posedness and stationarity of the Gibbs law for the Gaussian initialization both follow from \cref{prop:cld_wellposed}. We rigorously show It\^o's formula remains applicable under such activations in \appref{sec:ito_extension_to_once_diff}.}
\end{assumption}
\begin{assumption}[Activation at origin]
\label{ass:activation_at_origin}
The activation $\phi$ satisfies $\phi\left(0\right)=0$.
\end{assumption}
We note that under our analysis, \cref{ass:activation_bounded} could be relaxed to apply only to the activation function used in the last hidden layer.
Taken together, these assumptions are satisfied in practice by a broad class of activations, including $\mathrm{tanh}$, $\mathrm{erf}$, and $\mathrm{soft\text{-}root\text{-}sign}$ \citep{zhou2020soft}, as well as by $\mathrm{GELU}$ \citep{hendrycks2016gaussian} and $\mathrm{Swish}$ \citep{ramachandran2017searching} in all hidden layers except the last. Activations satisfying similar assumptions have also recently been shown to be beneficial, for example in the context of self-normalization \citep{chen2025stronger}.
\paragraph{Data scaling.}
We make mild data-scaling assumptions that are common in such analyses \citep[e.g.,][]{oymak2020toward, song2021subquadratic, bombari2022memorization} and can be achieved in practice by normalization.
\begin{assumption}[Normalized data]
\label{ass:normalized_data}
Every input sample $\mathbf{x}_{i}\in\mathbb{R}^{d},i\in\left[n\right]$
has norm $\left\Vert \mathbf{x}_{i}\right\Vert _{2}^{2}=c_{\phi,\sigma_{0}}d$.\footnote{
The multiplicative constant $c_{\phi,\sigma_{0}}$ is added to match the setting of \citet{banerjee2023neural}, which we adapt.
}
\end{assumption}
\begin{assumption}[Bounded labels]
\label{ass:bounded_labels}
For all $i\in\sqprn{n},\,\abs{y_i}\leq R_y$, where $R_y=\Theta\prn{1}$.
\end{assumption}
\paragraph{Data separability.}
Similarly to \citet{banerjee2023neural}, our analysis relies on the following assumption:
\begin{assumption}[Data separability]
\label{ass:data_separability}
We assume 
\begin{align*} 
\lambda_{1}\triangleq \lambda_{\min}\left(\mathbb{E}_{\mathbf{g}\sim\mathcal{N}\left(\mathbf{0}_{d},\frac{\nu_{0}^{2}}{a_1}\mathbf{I}_{d}\right)}\left[\phi\left(s_1\mathbf{X}\mathbf{g}\right)\phi\left(s_1\mathbf{X}\mathbf{g}\right)^{\top}\right]\right) > 0\,.
\end{align*}
\end{assumption}
\cref{ass:data_separability} is satisfied under common data assumptions, \eg it is satisfied with high probability (over the data) for i.i.d.\! data on the sphere \citep[Theorem 3.3]{nguyen2020global}, and for $\delta$-separable data as defined in \citet{oymak2020toward} with ReLU activation. See also the discussion in \citet[Remark 4.2]{banerjee2023neural}.
We note that $\lambda_1$ is upper bounded by $\lambda_1 \leq \alpha_\phi^2  \nu_{0}^{2} c_{\phi,\sigma_0} = O \prn{1} $ (see \cref{eq:lambda1_upper_bound}).
Crucially, this condition is imposed only on the inputs; for the labels, we assume only boundedness.

For a summary of the assumptions, their role in the analysis, and possible relaxations, see 
\cref{tab:assumption_table}.

\section{Global convergence of deep networks under regularized CLD}
\label{sec:convergence_by_langevin}
We now present our main theorem.
\begin{theorem}[Global convergence of deep networks via regularized CLD]
\label{thm:upper_bound_on_expected_loss}
Consider the deep network defined in \cref{sec:setting_deep_nn}, under the assumptions therein, with equal widths $m_l = m,\,l\in\sqprn{L}$ and constant depth $L = \Theta(1)$. Suppose it is trained via regularized continuous-time Langevin dynamics as described in \cref{eq:regularized_langevin}, with $\boldsymbol{\Lambda}$ defined as in \cref{eq:Lambda_def}. Denoting $\bar{\kappa}\triangleq \max \prn{1, \kappa}$, suppose \mbox{$m =\Omega \prn{ \bar\kappa^2 \beta \frac{n}{\lambda_1} \ln \frac{n}{\lambda_1}\ln \prn{\bar\kappa^4 \beta}}$}, $\beta = \Omega \prn{\frac{n}{\lambda_1}}$.
Then, for all $t \geq 0$,
\begin{align}
\mathbb{E}\mathcal{L} \left(\boldsymbol{\theta}_{t}\right)\leq\left(\mathbb{E}\mathcal{L} \left(\boldsymbol{\theta}_{0}\right)-\mathcal{L}_\infty\right)e^{-c_1s_{L+1}^{2}\frac{\lambda_{1}m}{n}t}+\mathcal{L}_\infty\,, \label{eq:the_descent}
\end{align}
where $c_1$ is a global constant, and the expectation is taken over both the initialization and the noise, and the loss floor is
\begin{align*}
\mathcal{L}_{\infty}\asymp & \left(1+\bar{\kappa}^{4}\beta\right)\exp\left(-c_2\frac{m}{\bar{\kappa}^{2}\beta\frac{n}{\lambda_{1}}\ln\frac{n}{\lambda_{1}}}\right)
+\beta^{-1}\frac{n}{\lambda_{1}}\left(1+\kappa^{2}\frac{s_{1}^{2} d}{s_{L+1}^{2} m}+\frac{\kappa^{2}}{s_{L+1}^{2}}\sum_{l=2}^{L}s_{l}^{2}\right)\\
 & +\beta^{-1}\frac{n^2}{\lambda_{1}^2}\kappa^{2}\bar{\kappa}^{2}\left(\frac{s_{1}^{4} d^2}{s_{L+1}^{4} m^2}+\frac{1}{s_{L+1}^{4}}\left(\sum_{l=2}^{L}s_{l}^{2}\right)^{2}+\frac{1}{s_{L+1}^{4}}\sum_{l=2}^{L}s_{l}^{4}\right)
 +\beta^{-2}\frac{n^2}{\lambda_{1}^2}\frac{1}{\kappa^{4}}\,.
\end{align*}
For shallow networks, \ie $L=1$, the loss floor can be improved to 
\begin{equation}
\begin{aligned}
\mathcal{L}_\infty^{\prn{L=1}} \asymp& \left(1+\bar\kappa^{4}\beta\right)\exp\left(-{c_{2}}\frac{m}{\bar\kappa^{2}\beta\frac{n}{\lambda_{1}}\ln\frac{n}{\lambda_{1}}}\right)
+\beta^{-1}\frac{n}{\lambda_{1}}\left(1+\kappa^{2}\frac{s_{1}^{2}d}{s_{2}^{2}m}\right) \\
 \\ 
& +\beta^{-1}\frac{n^{2}}{\lambda_{1}^{2}}\bar\kappa^{2} \kappa^2 \frac{s_{1}^{4}d^{2}}{s_{2}^{4}m^{2}}
+\beta^{-2}\frac{n^{2}}{\lambda_{1}^{2}}\left(\frac{1}{\kappa^{4}}+\frac{s_{1}^{4}d^{2}}{s_{2}^{4}m^{2}}\right)
\,.
\label{eq:L_infty_raw}
\end{aligned}
\end{equation}
\end{theorem}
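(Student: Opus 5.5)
The plan is to start from the Itô identity \cref{eq:after_ito_mean_loss_dynamics} and turn it into a differential inequality of the form $\frac{\mathrm{d}}{\mathrm{d}t}\mathbb{E}\mathcal{L}(\vtheta_t)\le -\mu\,\mathbb{E}\mathcal{L}(\vtheta_t)+B$ with $\mu\asymp s_{L+1}^2\lambda_1 m/n$. Grönwall then gives \cref{eq:the_descent} with $\mathcal{L}_\infty=B/\mu$.

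\textbf{Gradient term.} Write $\norm{\nabla\mathcal{L}}^2=n^{-2}\mathbf{r}^\top\mathbf{K}_{\mathrm{NTK}}\mathbf{r}\ge \frac{2}{n}\lambda_{\min}(\mathbf{K}_{\mathrm{NTK}}(\vtheta_t))\,\mathcal{L}(\vtheta_t)$. Let $\lambda_\star\asymp s_{L+1}^2\lambda_1 m$ be the threshold from \cref{thm:lower_bound_ntk_deep}, and let $\mathcal{E}_t=\{\lambda_{\min}(\mathbf{K}_{\mathrm{NTK}}(\vtheta_t))<\lambda_\star/2\}$ be the bad event. Then
\[
\mathbb{E}\norm{\nabla\mathcal{L}}^2\ge \frac{\lambda_\star}{n}\mathbb{E}\mathcal{L}(\vtheta_t)-\frac{\lambda_\star}{n}\mathbb{E}\bigl[\mathcal{L}(\vtheta_t)\mathds{1}_{\mathcal{E}_t}\bigr].
\]
The last term is where the change of measure enters. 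Apply \cref{cor:second_law_for_langevin} to $\varphi=\mathcal{L}\,\mathds{1}_{\mathcal{E}}$ and then Cauchy--Schwarz at initialization. This bounds it by
\[
\Bigl(\mathbb{E}_{p_0}e^{2(\alpha-1)\beta\mathcal{L}}\Bigr)^{1/2\alpha}\,\norm{\mathcal{L}(\vtheta_0)}_{L^{2q}}\,\mathbb{P}_0(\mathcal{E})^{1/2q}, \qquad q=\alpha/(\alpha-1).
\]
I would take $\alpha-1\asymp 1/(\bar\kappa^2\beta)$, so the exponential moment is $O(1)$. This needs $\beta\mathcal{L}$ to have subexponential tails at scale $\bar\kappa^2\beta$, which holds because $|f|\le s_{L+1}M_\phi\norm{\mathbf{v}}$ is a Gaussian-type quantity of size $\kappa$ (\cref{ass:activation_bounded}). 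With this choice $q\asymp\bar\kappa^2\beta$, and \cref{thm:lower_bound_ntk_deep} gives $\mathbb{P}_0(\mathcal{E})\le \exp(-c\,m\lambda_1/(n\ln(n/\lambda_1)))$. So $\mathbb{P}_0(\mathcal{E})^{1/2q}$ becomes exactly the exponential factor in $\mathcal{L}_\infty$. The moment $\norm{\mathcal{L}}_{L^{2q}}\lesssim 1+\bar\kappa^4 q$ produces the prefactor $(1+\bar\kappa^4\beta)$, and the width condition on $m$ makes this correction small relative to the main term. A cleaner variant is to keep $\mathcal{L}$ outside and split the bound via $ab\le \epsilon a^2+b^2/\epsilon$; I would use whichever closes.

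\textbf{Laplacian and regularization terms.} Expanding the Hessian gives
\[
\Delta\mathcal{L}=\frac{1}{n}\Bigl(\norm{\nabla f}_F^2+\sum_i r_i\Delta f_i\Bigr).
\]
For the first piece, $\norm{\nabla f}_F^2=\Tr{\mathbf{K}_{\mathrm{NTK}}}$. Layer-by-layer bounds using $|\phi'|\le\alpha_\phi$ and the initialization scales give $\mathbb{E}_{p_0}\Tr{\mathbf{K}}\lesssim n\,s_{L+1}^2 m\bigl(1+\kappa^2\tfrac{s_1^2d}{s_{L+1}^2m}+\tfrac{\kappa^2}{s_{L+1}^2}\sum_{l\ge2}s_l^2\bigr)$, using \cref{ass:normalized_data}. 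I transfer this to time $t$ with \cref{cor:second_law_for_langevin}, at the cost of an $O(1)$ factor and a higher moment. Dividing $\beta^{-1}\times$ this term by $\mu$ yields the $\beta^{-1}\frac{n}{\lambda_1}(\cdots)$ contribution. For the second piece I use Young's inequality, $\frac1n\sum_i r_i\Delta f_i\le \epsilon\mathcal{L}+\frac{1}{\epsilon n}\sum_i(\Delta f_i)^2$, with $\epsilon$ a fraction of $\beta\mu$. The moment of $(\Delta f_i)^2$ at initialization (bounded via $\beta_\phi$ together with second-order layer sums of $s_l^4$) gives the $\beta^{-1}\frac{n^2}{\lambda_1^2}\kappa^2\bar\kappa^2(\cdots)$ term.

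For the regularization term, write $-\beta^{-1}\nabla\mathcal{L}^\top\boldsymbol{\Lambda}\vtheta=-\frac{\beta^{-1}}{n}\mathbf{r}^\top\nabla f\,\boldsymbol{\Lambda}\vtheta$ and bound it by Young again, $\le \epsilon\mathcal{L}+\frac{\beta^{-2}}{\epsilon n}\norm{\nabla f\,\boldsymbol{\Lambda}\vtheta}^2$, with the moment transferred to initialization. The last-layer block contributes $a_{L+1}\mathbf{v}^\top\nabla_{\mathbf v}f=a_{L+1}f$, which is of size $a_{L+1}\kappa$. After dividing by $\mu^2$ this gives $\beta^{-2}\frac{n^2}{\lambda_1^2}\kappa^{-4}$, plus further terms. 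For $L=1$ the hidden-layer sums vanish, and I would treat $\Delta f_i$ and $\boldsymbol{\Lambda}$-terms exactly, giving the refined floor; the condition $\beta=\Omega(n/\lambda_1)$ ensures the Young $\epsilon$-losses absorb into $\mu/2$.

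\textbf{Main obstacle.} The hardest step is the bad-event term. The tail probability from \cref{thm:lower_bound_ntk_deep} is only raised to the power $1/q\sim 1/(\bar\kappa^2\beta)$. Matching this against the exponential-moment constraint is what forces $m\gtrsim \bar\kappa^2\beta\frac{n}{\lambda_1}\ln\frac{n}{\lambda_1}\ln(\bar\kappa^4\beta)$, and I expect the exponent bookkeeping there to be delicate.
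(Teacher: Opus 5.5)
Your handling of the NTK, trace and Laplacian terms is essentially the paper's. Using an indicator of the bad event instead of $\max(0,\lambda_\star-\lambda_{\min})$, and pointwise Young instead of Cauchy--Schwarz, Jensen and Young, are cosmetic differences. The regularization term, however, has a genuine gap. You Young-split $-\frac{\beta^{-1}}{n}\mathbf{r}^\top\nabla f\,\boldsymbol{\Lambda}\vtheta$ and then bound $\mathbb{E}_{p_t}\norm{\nabla f\,\boldsymbol{\Lambda}\vtheta}^2$ by transferring moments to initialization. For the last-layer block $a_{L+1}\mathbf{v}^\top\nabla_{\mathbf{v}}f_i=a_{L+1}f_i$ this loses a factor of $\beta$. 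Conditionally on the hidden weights, $f_i(\vtheta_0)$ is Gaussian with variance of order $\kappa^2$. Hence \cref{cor:second_law_for_langevin} gives only $\mathbb{E}_{p_t}f_i^2\lesssim\norm{f_i^2(\vtheta_0)}_{L^{q}}\asymp q\kappa^2\asymp\bar\kappa^2\kappa^2\beta$, and this is sharp for that route. Take $\epsilon\asymp\mu\asymp s_{L+1}^2\lambda_1 m/n$ and $a_{L+1}=s_{L+1}^2m/\kappa^2$. The resulting floor contribution is $\beta^{-2}a_{L+1}^2\,\bar\kappa^2\kappa^2\beta/\mu^2\asymp\beta^{-1}\frac{n^2}{\lambda_1^2}\frac{\bar\kappa^2}{\kappa^2}$, not $\beta^{-2}\frac{n^2}{\lambda_1^2}\kappa^{-4}$; even your own size estimate $a_{L+1}\kappa$ would give $\kappa^{-2}$.

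This term is not dominated by the stated floors. For the shallow network with $\kappa=1$ it is $\beta^{-1}n^2/\lambda_1^2$. By contrast, $\mathcal{L}_\infty^{(L=1)}$ carries $\beta^{-1}n^2/\lambda_1^2$ only with the extra factor $\frac{s_1^4d^2}{s_2^4m^2}$. Under standard parameterization you would therefore need $\beta=\Omega(n^2)$ and $m=\tilde\Omega(n^3)$, instead of $\beta=\Omega(n)$ and $m=\tilde\Omega(n^2)$. In the deep case it is likewise not dominated in general, e.g.\ at $\kappa=1$ whenever $\sum_{l\ge2}s_l^2\ll s_{L+1}^2$ and $s_1^2d\ll s_{L+1}^2m$.

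The missing idea is to use an exact identity before any inequality: $a_{L+1}\mathbf{v}^\top\nabla_{\mathbf{v}}f_i=a_{L+1}f_i=a_{L+1}(r_i+y_i)$. This gives $-\beta^{-1}r_i\,a_{L+1}f_i=-\beta^{-1}a_{L+1}r_i^2-\beta^{-1}a_{L+1}r_iy_i$. The first piece is nonpositive and is simply discarded. The second is bounded by $\beta^{-1}a_{L+1}R_y\abs{r_i}$ using \cref{ass:bounded_labels} alone, with no change of measure. After Cauchy--Schwarz, Jensen and Young it gives exactly $\beta^{-2}\frac{n^2}{\lambda_1^2}\kappa^{-4}$.

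Bounding $f_i^2\le 2r_i^2+2y_i^2$ inside your Young step would also avoid the moment blow-up. But it puts a positive coefficient $\asymp\beta^{-2}a_{L+1}^2/\mu$ on $\mathbb{E}\mathcal{L}$, which can be absorbed only when $\kappa^2\beta\gg n/\lambda_1$; the identity gives a nonpositive coefficient unconditionally.

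Only the hidden-layer blocks should go through the moment transfer. These are $\frac{a_l}{\nu_0^2}s_{L+1}\mathbf{v}^\top\mathbf{h}^{(l)}(\mathbf{x}_i)$, with $\mathbf{h}^{(l)}=\frac{\partial\boldsymbol{\alpha}^{(L)}}{\partial\tilde{\boldsymbol{\alpha}}^{(l)}}\tilde{\boldsymbol{\alpha}}^{(l)}$. They produce the $\sum_{l\ge2}s_l^4$ part of the floor, while the Laplacian gives the $(\sum_{l\ge2}s_l^2)^2$ part. One minor slip: $\norm{\mathcal{L}(\vtheta_0)}_{L^{2q}}\lesssim 1+\kappa^2 q$, not $1+\bar\kappa^4 q$. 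With $q\asymp\bar\kappa^2\beta$, this is what yields the prefactor $1+\bar\kappa^4\beta$.
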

We provide the proofs for the shallow and deep cases in \appref{app:langevin_proof} and \appref{app:langevin_proof_deep}, respectively.
The floor $\mathcal{L}_\infty$ is strictly positive for any finite $\beta$,
since the stationary distribution of \cref{eq:regularized_langevin} is Gibbs
rather than a point mass, and in  \cref{tab:param_comparison} we show it can be
made smaller than any fixed target by the choice of $m$ and $\beta$.
We complement our result with a finite-probability zero-loss convergence guarantee for shallow networks via a scheduling approach, proven in \appref{app:convergence_with_gf_from_stopped_cld}.
\begin{proposition}[Convergence to zero loss for shallow networks]
\label{prop:convergence_to_zero_loss_shallow_stopped_cld}
For shallow networks, \ie $L=1$, suppose that CLD is stopped at a deterministic time
$\tau=\tilde{\Omega}\left(\frac{n}{s_{2}^{2}\lambda_{1}m}\right)$, and that training then proceeds by gradient flow, \ie
\begin{equation}
\frac{\mathrm{d}}{\mathrm{d}t}\vtheta_t=-\nabla\mathcal{L}\left(\vtheta_t\right)\,.
\label{eq:gradient_flow}
\end{equation}
Let $K>2$. Then, the network converges to zero loss with probability at least \mbox{$
1-\eta\exp\left(-c_{4}\frac{m}{c_{\beta}\bar\kappa^2\beta\frac{n}{\lambda_{1}}\ln\left(\frac{n}{\lambda_{1}}\right)}\right)-\frac{2}{K}
$}, provided that parameters are chosen such that
\begin{equation}
\mathcal{L}_{\infty}^{\prn{L=1}}\lesssim\frac{s_{2}^{2}m^{2}}{Kn^{2}d^{2}s_{1}^{4}\left(\frac{1}{ds_{1}^{2}}+\frac{\kappa^{2}}{ms_{2}^{2}}\right)\left(\frac{\kappa^{2}s_{1}^{2}d}{s_{2}^{2}}+\sqrt{nm}\right)}\,.
\label{eq:L_infty_for_song_convergence}
\end{equation}
E.g., under the standard parameterization with $d=o\prn{\sqrt{n}}$ and $K=\Theta\prn{\sqrt{n}/d}$, this holds at $\beta=\tilde\Omega\prn{n},\,m=\tilde\Omega\prn{n^{2}}$, as in \cref{thm:upper_bound_on_expected_loss}, so the loss vanishes with probability $1-o\prn{1}$.
\end{proposition}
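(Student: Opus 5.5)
The plan is a two-phase argument: use \cref{thm:upper_bound_on_expected_loss} to reach a small loss at time $\tau$, then use a deterministic near-initialization convergence result for gradient flow in the style of \citet{song2021subquadratic}, restarted from $\vtheta_\tau$. The condition \cref{eq:L_infty_for_song_convergence} is exactly what such a result needs: a residual small relative to the NTK eigenvalue and the local Lipschitz constant of the Jacobian.

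\textbf{Phase 1 (CLD up to time $\tau$).} Choose $\tau=\tilde\Omega\prn{\frac{n}{s_2^2\lambda_1 m}}$ so that the transient term in \cref{eq:the_descent} falls below $\mathcal{L}_\infty^{\prn{L=1}}$. Then $\mathbb{E}\mathcal{L}(\vtheta_\tau)\lesssim\mathcal{L}_\infty^{\prn{L=1}}$. The logarithmic factor comes from $\mathbb{E}\mathcal{L}(\vtheta_0)=O(1+\kappa^2)$.

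Markov's inequality then gives $\mathcal{L}(\vtheta_\tau)\le K\,\mathbb{E}\mathcal{L}(\vtheta_\tau)$ with probability at least $1-\frac1K$. The right-hand side of \cref{eq:L_infty_for_song_convergence} contains a factor $1/K$, which absorbs this $K$.

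\textbf{Phase 1, conditioning of $\vtheta_\tau$.} Let $\mathcal{B}$ be the bad event: $\lambda_{\min}(\K_{\mathrm{NTK}}(\vtheta))\lesssim s_2^2\lambda_1 m$, or a weight norm (e.g.\ $\norm{\W^{(1)}}$, $\norm{\vv}$) far above its typical size. By \Cref{thm:lower_bound_ntk_deep} and Gaussian concentration, $\Pr_{p_0}(\mathcal{B})\le \eta\exp(-c\,\cdot)$, with exponent of order $m/(\bar\kappa^2\beta\frac{n}{\lambda_1}\ln\frac{n}{\lambda_1})$.

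Applying \cref{cor:second_law_for_langevin} with $\varphi=\mathds{1}_{\mathcal{B}}$ bounds $\Pr_{p_\tau}(\mathcal{B})$ by $O(1)\cdot\Pr_{p_0}(\mathcal{B})^{(\alpha-1)/\alpha}$, using the remark that the exponential-moment prefactor is $O(1)$ in the relevant regime. This loses only a constant in the exponent, renamed $c_4/c_\beta$. A second Markov bound on a norm moment (e.g.\ $\mathbb{E}\norm{\vv_\tau}^2$, bounded again via the corollary) supplies the remaining $\frac1K$. A union bound yields the stated failure probability $\eta e^{-\cdots}+\frac2K$.

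\textbf{Phase 2 (gradient flow from $\vtheta_\tau$).} On the good event, I will run the standard argument with radius $R$ around $\vtheta_\tau$:
\begin{itemize}
\item For the shallow net, $\norm{\nabla f(\vtheta)-\nabla f(\vtheta')}\lesssim \mathrm{Lip}\cdot\norm{\vtheta-\vtheta'}$, where $\mathrm{Lip}$ depends on $s_1,s_2,d,\norm{\vv},\norm{\W}$ and on $\alpha_\phi,\beta_\phi$. This produces the factors $s_1^2 d$ and $\frac{\kappa^2 s_1^2 d}{s_2^2}+\sqrt{nm}$.
\item Hence $\lambda_{\min}$ stays at least half its value at $\vtheta_\tau$ inside the ball.
\item Gradient flow gives $\frac{\mathrm{d}}{\mathrm{d}t}\mathcal{L}\le -\frac{2\lambda_{\min}}{n}\mathcal{L}$.
\item The path length is at most $\int\norm{\nabla\mathcal{L}}\lesssim \sqrt{n\mathcal{L}(\vtheta_\tau)}\,\norm{\nabla f}/\lambda_{\min}$.
\end{itemize}
Requiring this path length to be at most $R$, with $R\asymp\lambda_{\min}/\mathrm{Lip}$, is equivalent to \cref{eq:L_infty_for_song_convergence} after substituting the Phase 1 bounds. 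A continuity argument then keeps the trajectory in the ball forever, so the loss converges to zero exponentially.

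\textbf{Main obstacle and example.} The hard step is bookkeeping the Jacobian Lipschitz constant and operator-norm bounds at $\vtheta_\tau$, not at initialization, so that they match the precise right-hand side. I would first bound all needed weight norms at $\vtheta_\tau$ through second-law moments, and let $K$ absorb the constants. The standard-parameterization example then follows by plugging $s_l=1$, $a_1=d$, $a_2=1$ into \cref{eq:L_infty_raw}, which is $\tilde O(n/\beta+\dots)$, and comparing with the right-hand side of \cref{eq:L_infty_for_song_convergence} at $m=\tilde\Omega(n^2)$ and $K=\sqrt n/d$.
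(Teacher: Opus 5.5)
\textbf{Overall assessment.} Your two-phase architecture is the paper's: a Markov step on $\mathbb{E}\mathcal{L}(\vtheta_\tau)$, transfer of initialization events through \cref{cor:second_law_for_langevin}, then a Song-type criterion at $\vtheta_\tau$. But the step that produces the right-hand side of \cref{eq:L_infty_for_song_convergence} is asserted rather than derived, and the ingredients you name would not give it. That step is smoothness of $f$ on the ball around $\vtheta_\tau$.

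\textbf{The smoothness constant.} Because $f_i=s_2\sum_j v_j\phi(s_1\mathbf{w}_j^\top\mathbf{x}_i)$ has a per-neuron block-diagonal Hessian, the controlling quantity is $\max_j\abs{v_j}$. It is not $\norm{\vv}$ or $\norm{\W}$; the Hessian bound does not involve $\W$ at all. Using $\norm{\vv}\asymp\sqrt{m/a_2}$ instead inflates the $\frac{\kappa^2s_1^2d}{s_2^2}$ factor by $n$.

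\textbf{Cost of the transfer.} The transfer is not ``a constant loss in the exponent''. Applying the corollary to $\mathds{1}_B$ gives $p_\tau(B)\le\eta\,p_0(B)^{1/(c_\beta\bar\kappa^2\beta)}$, which divides the exponent by $c_\beta\bar\kappa^2\beta$. (Your failure bound at $p_0$ should contain neither $\eta$ nor $\bar\kappa^2\beta$.) Consequently the typical level $\max_j\abs{v_j}\approx\sqrt{2\ln(2m)/a_2}$, whose failure probability is only polynomial in $m$, is useless. The paper deliberately uses the looser threshold $\sqrt{4m/(a_2n)}$, with failure $e^{-m/n}$, which survives the transfer as $\eta e^{-m/(c_\beta\bar\kappa^2\beta n)}$.

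\textbf{Self-consistent radius.} Inside the ball, $\max_j\abs{v_j}$ can grow by up to the radius $\rho$. So the smoothness constant $\beta_0+\gamma\rho$ and the radius $\rho\le\mu/(2\beta_f)$ must be solved jointly. That quadratic (\cref{prop:network_beta_smooth_in_ball}) gives $\beta_f\lesssim s_2s_1\sqrt d\,\bigl(\sqrt n+s_1\sqrt{dm/a_2}+(\lambda_1nm)^{1/4}\bigr)$, and its last term is where $\sqrt{nm}$ in the condition comes from. Relatedly, the radius is $\mu/(2\beta_f)\asymp\sqrt{\lambda_{\min}}/\mathrm{Lip}$, not $\lambda_{\min}/\mathrm{Lip}$. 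With your radius the near-isometry is not preserved, and the resulting condition is off by a factor $\lambda_{\min}\asymp s_2^2\lambda_1 m$.

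\textbf{The second Markov step.} Your Markov bound on $\mathbb{E}\norm{\vv_\tau}^2$, meant to ``supply the remaining $\frac1K$'', breaks the statement. It bounds the norm only by $K$ times its mean, so $\beta_f^2$, and hence the condition, acquires an extra factor of $K$. That $K$ is a free parameter ($\sqrt n/d$ in the example), not a constant to absorb. In the paper, every parameter-dependent property ($\mu$, $\nu$, $\max_j\abs{v_j}$) is transferred with exponentially small failure. The whole $\frac2K$ comes from a single Markov step: once $\tau$ kills the transient, $\mathbb{E}\mathcal{L}(\vtheta_\tau)\le2\mathcal{L}_\infty^{\prn{L=1}}$, so $\Pr\bigl(\mathcal{L}(\vtheta_\tau)>K\mathcal{L}_\infty^{\prn{L=1}}\bigr)\le\frac2K$.

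\textbf{The example.} The standard parameterization is $(a_1,a_2)=(d,m)$ with $\kappa=1$. Your choice $a_2=1$ gives $\kappa=\sqrt m$, which violates the width requirement $m=\Omega(\bar\kappa^2\beta\cdots)$. It also makes the exponential term of $\mathcal{L}_\infty^{\prn{L=1}}$ of order $m^2\beta$, so the floor does not become small.
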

In \cref{tab:param_comparison} we provide concrete examples for simplified forms of $\mathcal{L}_{\infty}$ and $\mathcal{L}_\infty^{\prn{L=1}}$ from \cref{thm:upper_bound_on_expected_loss} under the standard and NTK parameterizations, assuming $d = O \prn{n^{3/2}}$ for simplicity, and only give the largest terms. We provide a similar parameterization analysis for \cref{prop:convergence_to_zero_loss_shallow_stopped_cld} in \appref{sec:song_result_under_parameterizations}.
\subsection{Proof outline}
\label{sec:convergence_proof_outline}
By substituting the MSE loss definition, \cref{eq:mse_loss_def}, into \cref{eq:after_ito_mean_loss_dynamics}, we arrive at the following ODE for the expected loss:
\begin{equation}
\begin{aligned}
\frac{\mathrm{d}}{\mathrm{d}t}\mathbb{E}\mathcal{L} \left(\boldsymbol{\theta}_{t}\right) & =\frac{1}{n}\mathbb{E}\biggl[-n^{-1}\mathbf{r}\left(\boldsymbol{\theta}_{t}\right)^{\top}\mathbf{K}_{\mathrm{NTK}}\left(\boldsymbol{\theta}_{t}\right)\mathbf{r}\left(\boldsymbol{\theta}_{t}\right)
  +\beta^{-1}\mathrm{Tr}\left(\nabla f\left(\boldsymbol{\theta}_{t}\right)^{\top}\nabla f\left(\boldsymbol{\theta}_{t}\right)\right)
\\
 & \qquad\ \ \quad +\beta^{-1}\sum_{i=1}^{n}r_{i}\left(\boldsymbol{\theta}_{t}\right)\Delta f_{i}\left(\boldsymbol{\theta}_{t}\right)
 -\beta^{-1}\mathbf{r}\left(\boldsymbol{\theta}_{t}\right)^{\top}\nabla f\left(\boldsymbol{\theta}_{t}\right)\boldsymbol{\Lambda}\boldsymbol{\theta}_{t}\biggr]\,.
 \label{eq:the_four_terms}
\end{aligned}
\end{equation}
The proof then proceeds by upper bounding each of the four terms in \cref{eq:the_four_terms} for all times $t\geq0$, using high-probability properties at initialization and our change-of-measure inequality.

The first term is the descent term, for which we require a lower bound on the minimum eigenvalue of the NTK at initialization, observing that for any $\lambda_\star > 0$,
\begin{align}
&-n^{-1}\mathbb{E}\left[\mathbf{r}\left(\boldsymbol{\theta}_{t}\right)^{\top}\mathbf{K}_{\mathrm{NTK}}\left(\boldsymbol{\theta}_{t}\right)\mathbf{r}\left(\boldsymbol{\theta}_{t}\right)\right] \nonumber\\
& \leq-n^{-1}\mathbb{E}\left[\left\Vert \mathbf{r}\left(\boldsymbol{\theta}_{t}\right)\right\Vert _{2}^{2}\lambda_{\min}\left(\mathbf{K}_{\mathrm{NTK}}\left(\boldsymbol{\theta}_{t}\right)\right)\right] \nonumber\\
 & =-2\mathbb{E}\left[\mathcal{L} \left(\boldsymbol{\theta}_{t}\right)\lambda_{\min}\left(\mathbf{K}_{\mathrm{NTK}}\left(\boldsymbol{\theta}_{t}\right)\right)\right] \nonumber\\
& \leq-2\lambda_{\star}\mathbb{E}\left[\mathcal{L} \left(\boldsymbol{\theta}_{t}\right)\right] 
+2 \mathbb{E}\left[\mathcal{L} \left(\boldsymbol{\theta}_{t}\right)\max\left(0,\,\lambda_{\star}-\lambda_{\min}\left(\mathbf{K}_{\mathrm{NTK}}\left(\boldsymbol{\theta}_{t}\right)\right)\right)\right]\,.
\label{eq:example_of_need_to_use_second_law}
\end{align}
In \appref{sec:ntk_related_work}, we review existing bounds for the NTK minimum eigenvalue, and find that existing bounds are not suitable for our setting, with the closest match being \citet{banerjee2023neural}. However, they analyze only the NTK parameterization, and their bound has failure probability polynomially small in $m$. For our analysis, we require a bound that applies to the more general parameterizations presented in \cref{sec:setting_deep_nn}, and has exponentially low failure probability. To obtain such a bound, we tighten the analysis of \citet{banerjee2023neural}, yielding the following lemma, proven in \appref{app:ntk_min_eigenvalue}.
\begin{lemma}
\label{thm:lower_bound_ntk_deep}
For $m=\Omega \prn{\frac{n}{\lambda_1} \ln \frac{n}{\lambda_1} \ln n}$,
we have with probability at least $1-\exp \prn{-\tilde{c}_1\frac{m}{ \frac{n}{\lambda_1}\ln \frac{n}{\lambda_1}}}$
over the initialization of $\left\{ \mathbf{W}^{\left(l\right)}\right\} _{l\in\left[L\right]}$,
\[
\lambda_{\min}\left(\mathbf{K}_{\mathrm{NTK}}\left(\boldsymbol{\theta}\right)\right)\geq c_{0}s_{L+1}^{2}m\lambda_{1}\,.
\]
\end{lemma}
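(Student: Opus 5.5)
The NTK is a sum of PSD layer-wise Gram matrices, so I will lower bound only the contribution of the last-layer weights $\mathbf{v}$. The gradient of $f_i$ with respect to $\mathbf{v}$ is $s_{L+1}\boldsymbol{\alpha}^{(L)}(\mathbf{x}_i)$, so
\[
\mathbf{K}_{\mathrm{NTK}}(\boldsymbol{\theta})\succeq s_{L+1}^2\,\mathbf{A}^{(L)}\mathbf{A}^{(L)\top},
\]
where $\mathbf{A}^{(l)}\in\mathbb{R}^{n\times m}$ stacks the rows $\boldsymbol{\alpha}^{(l)}(\mathbf{x}_i)^\top$. Conditionally on $\mathbf{A}^{(L-1)}$, the matrix $\mathbf{A}^{(L)}\mathbf{A}^{(L)\top}=\sum_{j=1}^m \phi(s_L\mathbf{A}^{(L-1)}\mathbf{w}_j)\phi(s_L\mathbf{A}^{(L-1)}\mathbf{w}_j)^\top$ is a sum of $m$ i.i.d.\ PSD rank-one matrices, where the $\mathbf{w}_j$ are the rows of $\mathbf{W}^{(L)}$. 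Each summand has operator norm at most $nM_\phi^2$ by \cref{ass:activation_bounded}. The goal is therefore $\lambda_{\min}(\mathbf{A}^{(L)}\mathbf{A}^{(L)\top})\gtrsim m\lambda_1$, with failure probability $\exp(-\tilde c_1 m\lambda_1/(n\ln(n/\lambda_1)))$.

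I will argue by induction over layers, following \citet{banerjee2023neural} but with exponential tails. Define the population matrices $\mathbf{G}^{(l)}=\mathbb{E}_{\mathbf{g}}[\phi(s_l\mathbf{A}^{(l-1)}\mathbf{g})\phi(s_l\mathbf{A}^{(l-1)}\mathbf{g})^\top]$ with $\mathbf{g}\sim\mathcal{N}(\mathbf{0},\frac{\nu_0^2}{a_l}\mathbf{I})$. For $l=1$ this is exactly the matrix in \cref{ass:data_separability}, so $\lambda_{\min}(\mathbf{G}^{(1)})=\lambda_1$.

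\emph{Matrix Chernoff step.} The lower-tail matrix Chernoff bound for sums of bounded PSD matrices, applied conditionally on the previous layer, gives
\[
\lambda_{\min}(\mathbf{A}^{(l)}\mathbf{A}^{(l)\top})\geq \tfrac12 m\lambda_{\min}(\mathbf{G}^{(l)})
\]
with failure probability at most $n\exp(-c\,m\lambda_{\min}(\mathbf{G}^{(l)})/(nM_\phi^2))$. The prefactor $n$ is absorbed by the width condition $m\gtrsim \frac{n}{\lambda_1}\ln n$.

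\emph{Propagation step.} I then need $\lambda_{\min}(\mathbf{G}^{(l+1)})\gtrsim\lambda_1$ given the current layer. I would use the Hermite expansion of $\phi$ and the Schur product theorem:
\[
\mathbf{G}^{(l+1)}\succeq \sum_{k\ge1} c_k\,\big(\mathbf{D}^{-1/2}\mathbf{C}^{(l)}\mathbf{D}^{-1/2}\big)^{\odot k},
\]
where $\mathbf{C}^{(l)}=\frac{s_{l+1}^2\nu_0^2}{a_{l+1}}\mathbf{A}^{(l)}\mathbf{A}^{(l)\top}$ and $\mathbf{D}$ is its diagonal. The condition $a_{l+1}/s_{l+1}^2=m_l$ together with \cref{ass:normalized_data} makes the diagonal of $\mathbf{C}^{(l)}$ concentrate around $\sigma_0^2$. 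This concentration holds with subexponential, Bernstein-type tails, since each $\phi^2(\cdot)$ is bounded, so the diagonal entries are sums of $m$ bounded i.i.d.\ terms. The comparison is then between $\lambda_{\min}$ of a near-unit-diagonal kernel and the Gram matrix from the previous layer, which loses only constant factors per layer. Because $L=\Theta(1)$, these losses combine into the constant $c_0$.

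A union bound over the $L$ layers and the $n$ diagonal entries gives the claimed failure probability. The factor $\ln\frac{n}{\lambda_1}$ in the exponent arises from making the diagonal deviations $O(\lambda_1/n)$-small relative to the spectrum, or equivalently from truncating the Hermite series at order $k\approx\ln(n/\lambda_1)$.

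The main obstacle is the propagation step: showing that the minimum eigenvalue of the next-layer population kernel is controlled by $\lambda_1$ rather than degrading. Near-orthogonality arguments give only off-diagonal control, not a spectral bound. My first attempt would combine the $k=1$ Hermite term, which is $c_1^2$ times the normalized previous Gram matrix and has minimum eigenvalue $\gtrsim\lambda_1$ by induction, with a perturbation bound for the normalization $\mathbf{D}^{-1/2}$ that costs only a factor $(1\pm o(1))$ on the high-probability event. If the $k=1$ term is degenerate for the given activation, I would fall back on the smallest $k$ with $c_k\neq0$, using $\lambda_{\min}(\mathbf{M}^{\odot k})\ge\lambda_{\min}(\mathbf{M})\min_i M_{ii}^{k-1}$ for PSD $\mathbf{M}$.
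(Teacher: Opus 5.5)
Your proposal is correct and shares the paper's skeleton: the last-layer lower bound $\mathbf{K}_{\mathrm{NTK}}\succeq s_{L+1}^2\mathbf{A}^{(L)}\mathbf{A}^{(L)\top}$, a matrix Chernoff bound at each layer, concentration of the row norms, and propagation $\lambda_l\gtrsim\lambda_1$. The two routes differ in the Chernoff step and in how the propagation is justified.

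\emph{Chernoff step.} You use boundedness of $\phi$ to bound every rank-one summand by $nM_\phi^2$, so the lower-tail bound applies directly. The paper deliberately avoids boundedness in this lemma and uses only $\phi(0)=0$ and $|\phi'|\le\alpha_\phi$, so that unbounded hidden activations such as GELU are covered. Its column norms $\|\phi(s_l\mathbf{A}^{(l-1)}\mathbf{w}_j)\|$ are then only subgaussian at scale $b\asymp\sqrt{n}$. It therefore truncates them at $t^2\asymp b^2\ln(b^2/\lambda_l)$, applies Chernoff to the truncated matrix, and controls $\|\hat{\mathbf{G}}-\mathbf{G}\|$ by a tail integral (\cref{lem:lemma_4.1_equivalent}). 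That truncation level is exactly where the $\ln\frac{n}{\lambda_1}$ in the exponent comes from.

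\emph{Propagation.} The paper does not reprove it. It combines the norm concentration of \cref{lem:lemma_4.2_equivalent} with Lemma 4.3 of \citet{banerjee2023neural}, which is essentially the Hermite/Schur argument you sketch.

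\emph{What each route buys.} Yours is self-contained and gives the stronger guarantee $1-\exp(-cm\lambda_1/n)$, which implies the stated one since $\ln\frac{n}{\lambda_1}\ge1$. The price is that it needs bounded activations at every hidden layer, whereas the paper's lemma does not.

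\emph{Two corrections to your sketch.}
\begin{itemize}
\item Your explanation of the logarithm is wrong, and acting on it would hurt. Forcing diagonal deviations of size $O(\lambda_1/n)$ would cost $\exp(-cm\lambda_1^2/n^2)$, that is, width $m\gtrsim n^2/\lambda_1^2$. Your argument needs only constant-size diagonal deviations. These hold with probability $1-nLe^{-cm}$ via the layerwise norm induction, since the conditional mean at layer $l$ depends on the realized norms at layer $l-1$.
\item Your displayed Hermite inequality with constant coefficients $c_k$ is valid only if the diagonal $\mathbf{D}$ of $\mathbf{C}^{(l)}$ is exactly constant. In general the expansion reads $\mathbf{G}^{(l+1)}=\sum_k \mathrm{diag}(\mathbf{a}_k)\,\mathbf{R}^{\odot k}\,\mathrm{diag}(\mathbf{a}_k)$. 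Here $\mathbf{R}=\mathbf{D}^{-1/2}\mathbf{C}^{(l)}\mathbf{D}^{-1/2}$, and $(\mathbf{a}_k)_i$ is the $k$-th normalized Hermite coefficient of $\phi(\sqrt{D_{ii}}\,\cdot)$. You therefore need $\min_i(\mathbf{a}_k)_i^2$ bounded away from zero. This follows from continuity in $D_{ii}$ once the diagonal lies in a small enough fixed window around $\sigma_0^2$, and that only changes the constant in the $e^{-cm}$ bound.
\end{itemize}
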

Then, we choose $\lambda_\star = c_{0}s_{L+1}^{2}m\lambda_{1}$, and together with the subexponentiality of the loss at initialization, we can bound the second term in \cref{eq:example_of_need_to_use_second_law} using our change-of-measure inequality \cref{cor:second_law_for_langevin}.

Similarly, we upper bound the remaining terms in \cref{eq:the_four_terms} using \cref{cor:second_law_for_langevin} and known properties at initialization, relying heavily on properties of subgaussian and subexponential components. Finally, we arrive at a (possibly complicated) constant $a>0$ such that 
\begin{align*}
    \frac{\mathrm{d}}{\mathrm{d}t}\mathbb{E}\mathcal{L} \left(\boldsymbol{\theta}_{t}\right) \leq
    \frac{1}{n} \prn{
    -c\lambda_\star \mathbb{E}\mathcal{L} \left(\boldsymbol{\theta}_{t}\right) + a
    }\,.
\end{align*}
Then, to conclude the proof of \cref{thm:upper_bound_on_expected_loss}, Gr\"onwall's inequality yields, for all $t\geq0$,
\begin{align*}
\mathbb{E}\mathcal{L} \left(\boldsymbol{\theta}_{t}\right)\leq\left(\mathbb{E}\mathcal{L} \left(\boldsymbol{\theta}_{0}\right)-\mathcal{L}_\infty\right)e^{-c \frac{\lambda_\star}{n}t}+\mathcal{L}_\infty\,.
\end{align*}

Convergence to zero loss by gradient flow (\cref{prop:convergence_to_zero_loss_shallow_stopped_cld}) is then established by adapting the proximity-to-initialization claims of \citet{song2021subquadratic}. Using \cref{thm:upper_bound_on_expected_loss} and the change-of-measure inequality, we show that their conditions hold with positive probability at a sufficiently large deterministic stopping time for CLD, without requiring the parameters to stay near $\vtheta_0$.\footnote{
We believe that \cref{prop:convergence_to_zero_loss_shallow_stopped_cld} can be extended to deep networks by combining the proof mechanism of \appref{app:convergence_with_gf_from_stopped_cld} with a sharpened and generalized analysis of the deep-network Hessian at initialization. Such an analysis would strengthen the existing result of \citet{banerjee2023restricted} by yielding an exponentially small failure probability and accommodating arbitrary parameterizations. We leave this extension to future work.
}

\begin{table}
  \caption{Convergence bounds from \cref{thm:upper_bound_on_expected_loss} under standard and NTK parameterizations.}
  \label{tab:param_comparison}
  \centering
  \small
  \begin{tabular}{lllll}
    \toprule
    \begin{tabular}{@{}l@{}}Parameter-\\ization\end{tabular}
    &
    Setting
    &
    Parameters
    &
    $\mathcal{L}_\infty$
    &
    \begin{tabular}{@{}l@{}}Requirements for\\ $\exists\mathcal{L}_g<1 :\mathcal{L}_\infty \leq \mathcal{L}_g$\end{tabular}
    \tabularnewline
    \midrule
    Standard
    &
    Shallow
    &
    $\begin{aligned}
      \left(s_1,s_2\right)&=\left(1,1\right)\\
      \left(a_1,a_2\right)&=\left(d,m\right)\\
      \kappa&=1
    \end{aligned}$
    &
    $\begin{aligned}
      &\asymp\beta\exp\left(
      -c\frac{m}{\beta n\ln n}
      \right)\\
      &\quad+\beta^{-1}n
    \end{aligned}$
    &
    $\begin{aligned}
      m&=\tilde{\Omega}\left(\beta n\right)\\
      \beta&=\Omega\left(\frac{n}{\mathcal{L}_g}\right)
    \end{aligned}$
    \tabularnewline
    \cmidrule(l){2-5}
    &
    Deep
    &
    $\begin{aligned}
      s_l&=1,\,l\in\sqprn{L+1}\\
      a_l&=m_{l-1},\,l\in\sqprn{L+1}\\
      \kappa&=1
    \end{aligned}$
    &
    $\begin{aligned}
      &\asymp\beta\exp\left(-c\frac{m}{\beta n\ln n}\right)\\
      &\quad+\beta^{-1}n^2
    \end{aligned}$
    &
    $\begin{aligned}
      m&=\tilde{\Omega}\left(\beta n\right)\\
      \beta&=\Omega\left(\frac{n^2}{\mathcal{L}_{g}}\right) \\
    \end{aligned}$
    \tabularnewline
    \midrule
    NTK
    &
    Shallow
    &
    $\begin{aligned}
      \left(s_1,s_2\right)&=\left(\frac{1}{\sqrt{d}},\frac{1}{\sqrt{m}}\right)\\
      \left(a_1,a_2\right)&=\left(1,1\right)\\
      \kappa&=1
    \end{aligned}$
    &
    $\begin{aligned}
      &\asymp\beta\exp\left(
      -c\frac{m}{\beta n\ln n}
      \right)\\
      &\quad+\beta^{-1}n^2
    \end{aligned}$
    &
    $\begin{aligned}
      m&=\tilde{\Omega}\left(\beta n\right)\\
      \beta&=\Omega\left(\frac{n^2}{\mathcal{L}_g}\right)
    \end{aligned}$
    \tabularnewline
    \cmidrule(l){2-5}
    &
    Deep
    &
    $\begin{aligned}
      s_l&=\frac{1}{\sqrt{m_{l-1}}},\,l\in\sqprn{L+1}\\
      a_l&=1,\,l\in\sqprn{L+1}\\
      \kappa&=1
    \end{aligned}$
    &
    $\begin{aligned}
      &\asymp\beta\exp\left(-c\frac{m}{\beta n\ln n}\right)\\
      &\quad+\beta^{-1}n^2
    \end{aligned}$
    &
    $\begin{aligned}
      m&=\tilde{\Omega}\left(\beta n\right)\\
      \beta&=\Omega\left(\frac{n^2}{\mathcal{L}_{g}}\right) \\
    \end{aligned}$
    \tabularnewline
    \bottomrule
  \end{tabular}
\end{table}

\subsection{Analysis of the convergence result}
\label{sec:convergence_result_analysis}

Each term of \mbox{$\mathcal{L}_\infty$} in \cref{thm:upper_bound_on_expected_loss} can be traced back to one of the four terms in the It\^o expansion \cref{eq:the_four_terms}. Reading \cref{eq:L_infty_raw} from left to right: The exponent comes from the NTK term; the (unavoidable) \mbox{$\beta^{-1}\frac{n}{\lambda_1}$} factor originates in the diffusion term \mbox{$\beta^{-1}\Tr{\nabla f^\top\nabla f}$}; the \mbox{$\beta^{-1}\frac{n^{2}}{\lambda_{1}^{2}}$} contribution comes from the Laplacian term \mbox{$\beta^{-1}\sum_i r_i \Delta f_i$}; the \mbox{$\beta^{-2}\frac{n^2}{\lambda_1^2}$} factors arise from the regularization term.

In order to interpret our bounds in \cref{thm:upper_bound_on_expected_loss}, we discuss the requirements for achieving $\mathcal{L}_\infty\leq\mathcal{L}_g$, for some $\mathcal{L}_g < 1$. For simplicity, we absorb $\lambda_1=\Theta\prn{1}$ as a constant.
First, consider the first term in $\mathcal{L}_\infty$. To obtain convergence to $\mathcal{L}_g<1$, one must require $m=\tilde{\Omega}\prn{\bar\kappa^2\beta n}$. Since $\bar\kappa\geq1$ for any parameterization, this yields the necessary condition $m=\tilde{\Omega}\prn{\beta n}$, implying an upper bound on the inverse temperature.
Second, consider the term $\beta^{-1}\frac{n}{\lambda_1}$, which imposes the requirement $\beta=\Omega\prn{n/\mathcal{L}_g}$, independently of depth, width, and parameterization.\footnote{As we discuss later, $\beta=\Omega\prn{n}$ is in fact an unavoidable requirement for any optimization bound in our setting, where arbitrary labels are permitted.}

Taken together, these demands point to a tradeoff in the role of noise. 
On one hand, injected noise is essential in our analysis because, through the second-law change-of-measure argument, it allows control of quantities under the evolving training distribution without requiring the parameters to remain in a small neighborhood of initialization.
On the other hand, the same noise leads to an unavoidable loss floor, since it is isotropic and non-diminishing. This is also evident empirically (see \cref{sec:experiments}).
The bound thus makes explicit how width, temperature, and parameterization interact: larger width improves kernel conditioning, 
while larger inverse temperature reduces the optimization floor but weakens the stochastic mechanism of the proof.
Moreover, any scaling $\kappa>1$ requires either larger noise or greater width in our analysis, consistent with its strengthening of the lazy regime \citep{chizat2019lazy}.
\paragraph{Width requirements in context.}
Under standard parameterization, our requirement \mbox{$m=\tilde\Omega\prn{\beta n}$} translates to \mbox{$m=\tilde\Omega\prn{n^2}$} and \mbox{$m=\tilde\Omega\prn{n^3}$} for shallow and deep networks, respectively.
Existing proofs for GD/GF under standard parameterization demand
\mbox{$m=\Omega\prn{n^{5/3}}$} in the shallow case
\citep{song2021subquadratic}\footnote{
\citet{song2021subquadratic} 
state a bound of $md=\Omega(n^{3/2})$ under an added boundedness assumption on $\vv$. In their supplementary material, after removing this assumption, the bound becomes $md=\Omega(n^{5/3})$.
}
and \mbox{$m=\Omega\prn{n^2}$} in the deep case
\citep{nguyen2020global}; 
see \cref{tab:convergence_comparison_gd} for a broader comparison.
Our width requirement may be tightened for deep networks, but the shallow gap is intrinsic to our approach.
\paragraph{From a stopped schedule to noise annealing.}
\cref{prop:convergence_to_zero_loss_shallow_stopped_cld} can be viewed as the simplest non-trivial \emph{noise schedule}: fixed inverse temperature \mbox{$\beta$} until \mbox{$t=\tau$}, and \mbox{$\beta=\infty$} (gradient flow) thereafter. It captures the basic noise-annealing mechanism: use noise to escape rare-event regions while the loss is large, then rely on local geometry once the loss drops below the corresponding stationary floor. This mirrors practical schedules, where \mbox{$\beta\prn{t}$} increases continuously to infinity, as in simulated annealing \citep{kirkpatrick1983optimization} and empirical deep-learning variants \citep{neelakantan2015adding,deng2021shrinking}. 
Under our approach, a rigorous treatment of general annealing schedules would first require extending \cref{thm:second_law_change_of_measure} to time-dependent potentials, with their family of instantaneous Gibbs stationary measures;
we leave this to future work.

\section{Experiments}
\label{sec:experiments}
Our motivation for performing experiments is twofold. First, we wish to investigate a matter that our theory does not fully answer: whether the dynamics fall within the lazy regime under standard parameterization. While our analysis does not rely on proximity to initialization, which is often associated with lazy training, it does not rule out lazy behavior either. Second, we wish to empirically validate aspects of our theory, such as the obtained loss floor and the preservation of a lower bound on $\lntk$. To these ends, we train a shallow neural network for different values of $\beta$ that satisfy our constraints, and also for $\beta=\infty$, which simulates noiseless training. See \appref{app:97_app_experiments} for full details.

\begin{figure*}[ht]
    \centering
    \begin{subfigure}[t]{0.40\textwidth}
        \captionsetup{labelfont=normalfont,textfont=normalfont,justification=centering}
        \centering
        \includegraphics[width=\linewidth]{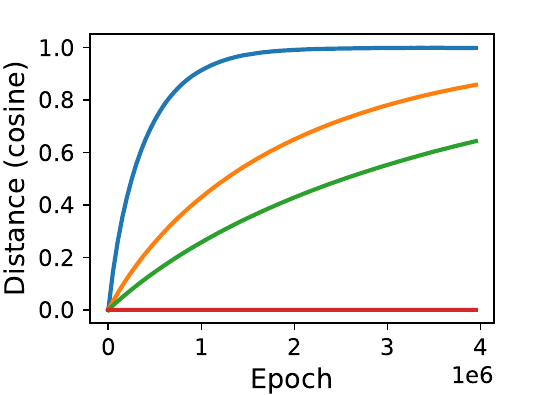}
        \caption{Jacobian drift in cosine distance.}
        \label{fig:expr1_jacobian_dist_co}
    \end{subfigure}
    \begin{subfigure}[t]{0.40\textwidth}
        \captionsetup{labelfont=normalfont,textfont=normalfont,justification=centering}
        \centering
        \includegraphics[width=\linewidth]{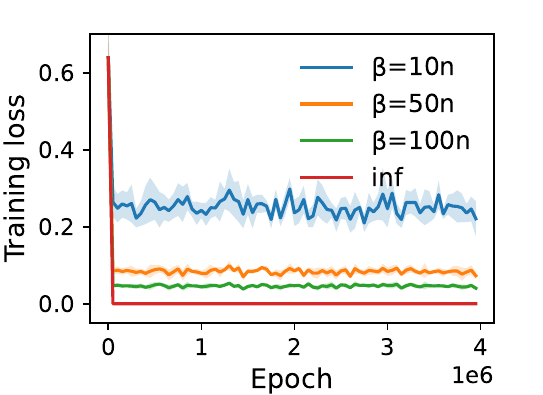}
        \caption{Training loss.}
        \label{fig:expr1_train_loss}
    \end{subfigure}
    \begin{subfigure}[t]{0.60\textwidth}
        \captionsetup{labelfont=normalfont,textfont=normalfont,justification=centering}
        \centering
        \includegraphics[width=\linewidth]{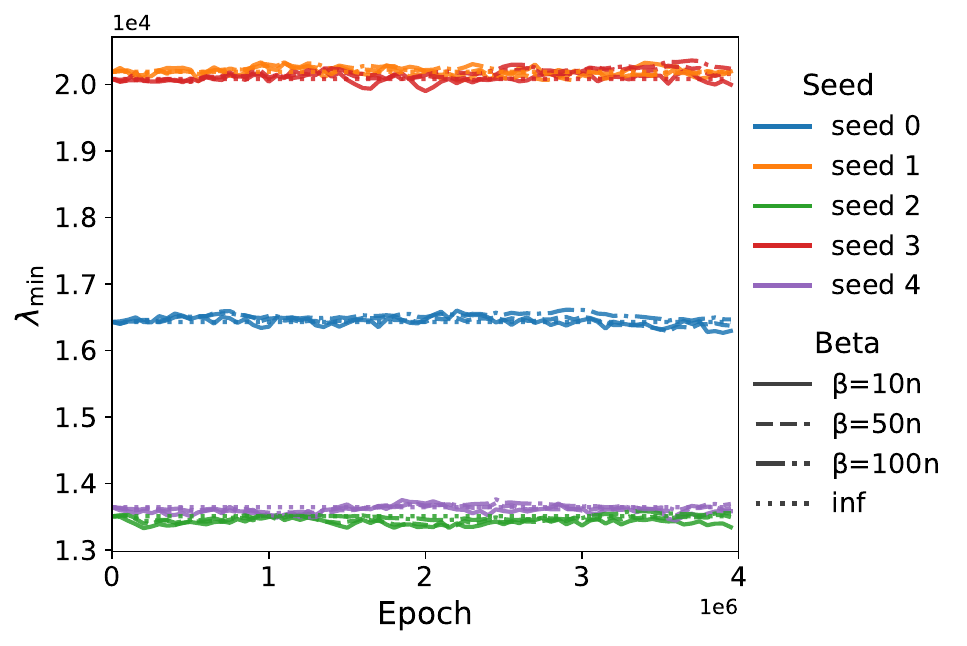}
        \caption{Minimal eigenvalue of feature Gram matrix.}
        \label{fig:feat_gram_lambda_all_betas_seeds}
    \end{subfigure}

    \caption{Training statistics as a function of the number of updates for $\beta \in \{10n, 50n, 100n, \infty\}$. In \cref{fig:expr1_jacobian_dist_co} we plot the distance of the network Jacobian from its value at initialization. In \cref{fig:expr1_train_loss} we plot the training loss. In \cref{fig:feat_gram_lambda_all_betas_seeds} we show the minimal eigenvalue of the feature Gram matrix, plotting each ($\beta$, seed) pair separately. For $\beta=\infty$ the seed controls only the dataset.}
    \label{fig:figure1}
\end{figure*}

\Cref{fig:expr1_jacobian_dist_co} shows \textit{Jacobian drift} during training, defined as the cosine distance between the network Jacobian (evaluated for all datapoints and concatenated) at time $t$ and at initialization. The cosine distance shows substantial deviation from the initial Jacobian direction under noisy training.
This is evidence that the dynamics depart from the commonly analyzed lazy regime. 
In \appref{sec:additional_experiments}, we provide additional evidence that the noisy neural-network dynamics are not fully captured by the frozen-Jacobian linearization around initialization.

\Cref{fig:expr1_train_loss} shows training-loss plateaus consistent with our theory, scaling proportionally to $\beta^{-1}$.
\Cref{fig:feat_gram_lambda_all_betas_seeds} shows the minimum eigenvalue of the feature Gram matrix $\phi(\mathbf{WX})\phi(\mathbf{WX})^\top$. This value can be viewed as a rescaled empirical analogue of $\lambda_1$ from \Cref{ass:data_separability}. 
It remains bounded away from zero throughout training, determined primarily by the sampled dataset. 
For small $\beta$, this agrees with our theory; for large $\beta$, it follows from the near-initialization dynamics of lazy training.

\section{Discussion}
\label{sec:discussion}

\paragraph{Convergence of Langevin dynamics.}
\label{par:related_work_langevin}
A common route to optimization guarantees for Langevin dynamics first proves convergence to the Gibbs law and then converts this distributional convergence into optimization bounds. Non-asymptotic analyses typically rely on functional inequalities for the Gibbs law, such as the log-Sobolev or Poincar\'e inequalities, whose constants control the rate \citep{vempala2019rapid,chewi2025analysis,mousavi2023towards}.
A prominent work of this kind is \citet{raginsky2017non}, where indeed the convergence rate is governed by the log-Sobolev constant associated with the Gibbs measure. As the authors note, this constant is bounded by the spectral gap and, unavoidably, worsens exponentially as the inverse temperature~$\beta$ increases. By proving convergence directly for finite-width neural networks, leveraging a network-specific NTK lower bound that is preserved along Langevin dynamics, we avoid relying on such unspecified constants and instead yield a convergence rate whose exponent does not depend on the noise level.
See \appref{sec:existing_work_langevin_convergence} and \cref{tab:convergence_comparison_noise} for an extended discussion.

\paragraph{Comparison to the generalized NTK analysis of \citet{chen2020generalized}.}
\citet{chen2020generalized} study two-layer networks through an infinite-width mean-field description and show that, under a sufficiently large output scaling, noisy gradient descent with weight decay can still exhibit kernel-like convergence. Their analysis replaces the usual Euclidean closeness-to-initialization argument by a distributional one: the parameter law remains close to the initialization law, in Wasserstein/KL-type distances, which keeps the minimum eigenvalue of the NTK bounded away from zero and yields linear convergence up to a noise floor. Thus, as in our work, convergence is ultimately obtained from a kernel lower bound beyond the conventional lazy-training argument.
When our finite-width bounds are specialized to the infinite-width limit ($m\to\infty$), the resulting convergence picture is consistent with that of \citet{chen2020generalized}. Their analysis is tailored to this limiting setting: the deterministic NTK lower bound is imposed and maintained at the level of the limiting parameter law. In contrast, our result applies directly at finite width, where the empirical NTK is random and its initialization-level nondegeneracy must be transferred to the trained parameters with high probability.\footnote{
Extending the results of \citet{chen2020generalized} to finite width using current methods, such as propagation-of-chaos~(PoC) bounds, would require quantitative control of the particle approximation for the scaled objective used in \citet{chen2020generalized}. 
Existing uniform-in-time PoC results for mean-field Langevin dynamics involve problem-dependent constants \citep[e.g.,][]{chen2025uniform,nitanda2025propagation}, which may depend on the output-scaling parameter in this setting; hence, meaningful finite-width guarantees do not seem to follow directly from existing PoC bounds.
}
A deeper difference lies in the way the two arguments treat noise. In \citet{chen2020generalized}, the regularization/noise parameter is part of the dynamics, but not a constructive ingredient in preserving the kernel lower bound; rather, increasing it worsens both their convergence guarantee and their KL-divergence control.
In our analysis, increasing the Langevin noise strengthens the divergence control, as in \citet{harel2025temperature}, which is precisely what propagates the initialization-level NTK lower bound along the finite-width trajectory.
Together, these perspectives suggest a more complete picture of how favorable initialization properties may be maintained during training in practice.

\paragraph{Is a bound with $\beta=o\prn{n}$ possible?}
A natural question is whether the analysis can be tightened.
Our result requires $\beta=\Omega\left(n\right)$ to obtain a nontrivial upper bound on the expected limiting loss.
While our width requirement may be loose, we argue that an optimization bound with $\beta=o\prn{n}$ cannot hold in this generality without some restriction on the labels.
Our setting permits random labels, which neural networks can fit \citep{zhang2017understanding}.
However, consider the generalization result of \citet{harel2025temperature}: under similar assumptions, with probability at least $1-\delta$ over $S\sim D^n$,
\begin{align*}
\mathbb{E}_{p_t}\sqprn{\mathcal{L}_D\prn{\boldsymbol{\theta}_t}-\mathcal{L}_S\prn{\boldsymbol{\theta}_t}\mid S}
&\leq
\sqrt{\frac{\beta\mathbb{E}_{p_0}\sqprn{\mathcal{L}_S\prn{\boldsymbol{\theta}_0}\mid S}+\ln\prn{n/\delta}}{2n}}\,.
\end{align*}
Thus, for any $\beta=o\prn{n}$, \eg $\beta=\Theta\left(n^{0.99}\right)$, their bound vanishes as $n\to\infty$ while still allowing random labels. Combining such a generalization bound with an optimization bound on $\mathbb{E}_{p_t}\sqprn{\mathcal{L}_S\prn{\boldsymbol{\theta}_t}}$ would then imply generalization under random labels, a contradiction. Consequently, any optimization result in this regime must avoid this implication, for instance by adding label structure, as in \citet{Ji2020Polylogarithmic}, or by including a corrective term, as in \citet{arora2019fine}.

\paragraph{Neural network parameterizations.}
NN parameterizations, \ie the scaling of weights, outputs, and learning rate, largely determine the limiting training dynamics. Under NTK and standard parameterizations, infinite-width training with GF is well approximated by kernel gradient descent with a fixed kernel, suppressing feature learning. Mean-field~(MF) parameterization instead captures genuine feature evolution through the evolution of the empirical distribution of neurons \citep{mei2018mean, chizat2018global, sirignano2020mean, rotskoff2022trainability}. However, finite-width MF guarantees remain limited and typically rely on restrictive assumptions \citep{javanmard2020analysis, mahankali2023beyond, dana2025convergence}; moreover, MF dynamics are largely limited to two or three layers, as they may freeze at greater depth \citep{nguyen2023rigorous}.
Thus, despite its ability to model feature learning, it remains unclear whether MF analysis explains the practical success of deep networks, whose default implementation often follows the standard parameterization, as in \texttt{PyTorch} \citep{paszke2019pytorch}. We were also unable to extend our proof mechanism to the MF parameterization; see \appref{sec:neural_network_parameterizations_introduction} for further discussion.

\paragraph{Why study CLD?}
We study CLD as a tractable model for the role of stochasticity in neural-network training. 
Although CLD differs from common training methods such as SGD, it is the continuous-time limit of other discrete-time algorithms, such as stochastic gradient Langevin dynamics~(SGLD) that is motivated both empirically and theoretically
\citep{kirkpatrick1983optimization,neelakantan2015adding,deng2021shrinking,jin2017escape,jin2021nonconvex,welling2011bayesian,zhang2022low}. Technically, CLD admits a Gibbs stationary distribution, enabling our change-of-measure argument. Related Gibbs-like stationary behavior and SDE approximations have been extensively studied for SGD \citep{mandt2017stochastic,yaida2018fluctuationdissipation,azizian2024what,shi2023learning,li2017stochastic,li2019stochastic,hu2019diffusion,cheng2020stochastic,lanconelli2025note,li2021validity}. Thus, CLD is not meant to reproduce SGD exactly, but to isolate the effect of noise in a mathematically tractable and practically motivated setting, with extensions to anisotropic state-dependent noise left for future work. We elaborate further in \appref{sec:why_study_cld}.

\paragraph{Future work.}
Our work could be extended in several natural directions to better capture practical training, for instance by incorporating noise annealing or state-dependent noise, such as noise induced by the SGD covariance, as discussed above, and by deriving analogous bounds for discretized Langevin dynamics.
Another direction is to investigate how noise in the training process may promote feature learning, using our method.
More broadly, our change-of-measure lemma may apply to other settings requiring rare-event control or finite-time analysis when convergence times are unavailable.

\begin{ack}
The research of DS was Funded by the European Union (ERC, A-B-C-Deep, 101039436). Views and opinions expressed are however those of the author only and do not necessarily reflect those of the European Union or the European Research Council Executive Agency (ERCEA). Neither the European Union nor the granting authority can be held responsible for them. The research of DS was also supported by the Council for Higher Education in Israel under the Moonshot Project. 
NS was partially supported by the NSF TRIPOD Institute on Data Economics Algorithms
and Learning (IDEAL) and an NSF-IIS award.

\end{ack}

\bibliography{99_bibliography}
\bibliographystyle{plainnat}

\newpage
\appendix
\startcontents[app]
\appendixtableofcontents
\newpage
\section{Additional related work}
\label{app:additional_related_work}

\subsection{Existing work on the convergence of Langevin dynamics}
\label{sec:existing_work_langevin_convergence}
Langevin diffusion has been used for global optimization for decades \citep{chiang1987diffusion}, alongside its classical role in sampling from the Gibbs distribution \citep{besag1994comments, Roberts1996Exponential}.

Works that study the convergence rate of Langevin dynamics from the sampling perspective typically focus on establishing distributional convergence to the Gibbs posterior, \ie convergence of a divergence measure such as the KL divergence or R\'enyi divergence to the Gibbs law. Representative examples include \citet{vempala2019rapid} and subsequent works \citep{chewi2025analysis, mousavi2023towards}; these works assume that the Gibbs distribution satisfies a suitable functional inequality, such as a log-Sobolev or Poincar\'e inequality. These inequalities rely on unspecified constants, which can be arbitrarily bad.

Many works on convergence rates of Langevin dynamics for optimization proceed through the same route, namely by converting distributional convergence to the Gibbs distribution into optimization guarantees, driven by the fact that the Gibbs distribution concentrates around minimizers.
A prominent example is \citet{raginsky2017non}, where the authors decompose the generalization error under SGLD and bound its constituent terms: the error controlled by convergence of SGLD toward the Gibbs distribution, the generalization gap of the Gibbs distribution itself, and the empirical loss under the Gibbs distribution.
The resulting convergence rate depends on the log-Sobolev constant of the Gibbs distribution, which, as the authors discuss, is controlled by the spectral gap and necessarily deteriorates exponentially with the inverse temperature~$\beta$; see their discussion section.\footnote{The LSI constant also deteriorates exponentially in the number of parameters $N_p$ in the worst case. However, as the authors note, this  $N_p$ dependence can be removed in some specific problems.}
Furthermore, their bound on the empirical loss under the Gibbs distribution scales as $\frac{N_p \log(\beta + 1)}{\beta}$, where $N_p$ is the number of trainable parameters. Hence, the resulting optimization guarantees become vacuous when $N_p > \beta$, limiting the applicability of the result to the low-noise regime, where the convergence rate is exponentially slow.
Follow-up works, such as \citet{xu2018global, kumar2025langevin}, employ similar analyses and suffer from the same limitations, applying only in the low-noise regime with a convergence rate that is exponentially slow.
A conceptually different recent contribution is \citet{chen2024langevin}, who replace the standard sampling-to-optimization argument with a direct Lyapunov/hitting-time analysis of Langevin dynamics; nevertheless, their main quantitative guarantees still rely on a Poincar\'e inequality with an unspecified constant for the Gibbs measure.

\paragraph{Neural-network-specific works.}
More recent works on the convergence of Langevin dynamics that focus specifically on neural network training still mostly rely on the Gibbs functional-inequality route, or are confined to special regimes such as lazy training.
\citet{lugosi2024convergence} study a Langevin-type continuous-time approximation of SGD with anisotropic noise covariance matching that of the stochastic gradient. However, their analysis requires the parameters to remain close to initialization, preventing its extension to settings in which parameters drift away from initialization.
Their work extends the noiseless analysis of \citet{chatterjee2022convergence}, assumes a non-standard initialization scheme, and importantly yields its strongest bounds in the zero-loss regime, thereby treating the noise essentially as a nuisance.
\citet{oberweis2026convergence} analyze similar noise dynamics to those of \citet{lugosi2024convergence}, but their analysis is explicitly confined to the lazy-training regime induced by output scaling, similarly to \citet{chizat2019lazy}.

Another major line of research is mean-field Langevin dynamics~(MFLD), where convergence is studied in an infinite-width lifted measure-space regime. Quantitative exponential-rate results in this line commonly rely on uniform log-Sobolev-type assumptions \citep[e.g.,][]{chizat2022meanfield, nitanda2022convex, takakura2024mean, chen2025uniform, nitanda2025propagation};
see \cref{sec:existing_mean_field_convergence_results} for further details.

The above works help distinguish our contribution: we prove empirical-loss convergence directly for finite-width neural networks.
Our proof uses noise as a crucial mechanism, together with a network-specific NTK lower bound that is preserved along Langevin dynamics, thereby avoiding the derivation of optimization guarantees from global functional inequalities of the Gibbs measure, and their associated unspecified constants.
\Cref{tab:convergence_comparison_noise} places the analyses discussed here alongside ours, recording for each the width required, whether the parameters may leave initialization, the objective that is contracted, the rate, and the loss floor.

\subsubsection{$L_2$-regularization in Langevin analyses}
Weight decay, \ie $L_2$-regularization, is standard in convergence analyses of Langevin dynamics. It ensures that $\mathcal{L}\to\infty$ as $\norm{\boldsymbol{\theta}}\to\infty$, a condition needed for the integrability of $e^{-\beta\mathcal{L}}$ and thus for the existence of the Gibbs distribution, where $\mathcal{L}$ here denotes the regularized loss.\footnote{
Without such a confining term the picture can differ markedly: for example, under cross-entropy the loss is minimized only as $\norm{\boldsymbol{\theta}}\to\infty$, so the empirical NTK of unregularized gradient flow drifts from initialization \citep{yu2025divergence}; $L_2$ regularization precludes this.
}
In \citet{raginsky2017non}, $L_2$-regularization is noted as a mechanism for enforcing \emph{dissipativity} of the loss, which is leveraged to derive a log-Sobolev inequality for the Gibbs distribution. Dissipativity is likewise assumed in later analyses of Langevin dynamics \citep[e.g.,][]{xu2018global, chen2024langevin}, and $L_2$-regularization is used explicitly in works such as \citet{kumar2025langevin}, as well as in the MFLD literature \citep{nitanda2022convex, chizat2022meanfield, takakura2024mean, nitanda2025propagation}.

In our setting, we introduce $L_2$-regularization so that the Gibbs reference measure matches the initialization, choosing precisely the strength for which $\nu=p_0$; see \cref{thm:second_law_change_of_measure}. Other regularization strengths are also possible, at the price of an additional divergence term in the change-of-measure inequality; see \cref{rem:avoid_d_infinity}.

\subsection{Neural network parameterizations}
\label{sec:neural_network_parameterizations_introduction}
\subsubsection{Introduction}
The problem of neural network optimization has been analyzed under several parameterizations, that is, under varying choices of neuron scaling and initialization. Importantly, these choices have a profound impact on the optimization and generalization behavior of the network, as observed in prior works such as \citet{chizat2019lazy, woodworth2020kernel, yang2021tensor}. 

The parameterization most commonly used in practice---also the default in \texttt{PyTorch}---is the so-called \emph{standard parameterization}, in which no explicit scaling is applied within the network, and the parameters are initialized by He initialization \citep{he2015delving, paszke2019pytorch}. However, exploring alternative parameterizations, and particularly different scalings with respect to network width, has led to fundamentally distinct perspectives on overparameterized neural network training dynamics. In particular, two main theoretical frameworks have emerged, each corresponding to a different scaling regime: the \emph{neural tangent kernel}~(NTK) approach \citep{jacot2018neural, du2018gradient, chizat2019lazy}, and the \emph{mean-field}~(MF) approach \citep{mei2018mean, chizat2018global, sirignano2020mean, rotskoff2022trainability}.

\paragraph{NTK.} In the infinite-width limit, the NTK analysis shows that the network training dynamics are equivalent to gradient descent in a linearized model with a fixed kernel \citep{jacot2018neural, lee2019wide, yu2026branch}. In this regime, the parameters remain close to their initialization throughout training, a phenomenon known as \emph{lazy learning} \citep{chizat2019lazy}. These properties extend to standard parameterization in the limit of infinite width \citep{yang2021tensor}. Under these assumptions, one can prove \emph{linear} convergence of deep networks to zero \emph{empirical} loss with width scaling polynomially in the number of samples \citep{du2019deep, allen2019convergence, zou2019improved, song2021subquadratic}. However, since the parameters move only slightly from their initial values, the network fails to learn new features. This significantly limits the expressive power observed in practice and may lead to weaker generalization performance \citep{wei2019regularization, woodworth2020kernel}.

\paragraph{Mean-field.} In contrast, under mean-field analysis, the network is viewed as a collection of infinite particles in the limit of infinite width. Then, the empirical distribution of neurons (particles) under gradient flow evolves in the space of measures according to gradient flow in the 2-Wasserstein metric.
This PDE perspective captures feature learning dynamics, as demonstrated, for example, by \citet{suzuki2023feature} in the context of the $k$-sparse parity problem, and by \citet{mahankali2023beyond, glasgow2025propagation} for learning single-index models.
However, as we elaborate in the following section, non-asymptotic global convergence analyses of mean-field parametrized networks \emph{under realistic assumptions} are still an open problem.

\subsubsection{Existing convergence results for mean-field parameterized networks}
\label{sec:existing_mean_field_convergence_results}
Preliminary mean-field analyses of noiseless training dynamics established asymptotic convergence to zero population loss in the infinite-width, infinite-time regime \citep{mei2018mean, chizat2018global, sirignano2020mean, rotskoff2022trainability}. In contrast, existing non-asymptotic convergence results for shallow networks under the mean-field parameterization typically rely on restrictive assumptions on the data distribution, such as strong separability conditions \citep[e.g.,][]{chen2022on, boursier2022gradient, dana2025convergence}, or on highly simplified teacher models (e.g., concave \citep{javanmard2020analysis} or quartic \citep{mahankali2023beyond} objectives).

More broadly, convergence results under milder assumptions, which would accommodate more realistic learning problems, remain largely open. A major obstacle is controlling the finite-width deviation from the infinite-particle PDE limit, termed the \emph{propagation of chaos}~(PoC) error, where ``chaos'' refers to the asymptotic independence of neurons \citep{sznitman1991topics}. Available bounds on this error, typically obtained via Gr\"onwall-type arguments, grow exponentially with training time \citep{mei2018mean, mei2019mean}, and thus do not suffice for problems which require longer time horizons for feature learning, such as learning multi-index models \citep{arous2021online, abbe2023sgd, bietti2025learning}.
Recently, \citet{glasgow2025propagation} derived a bound that grows polynomially with training time and thus addresses this category of problems, but requires additional stability and local strong convexity assumptions, which do not necessarily hold in neural networks. 
More recently, \citet{glasgow2026uniformintimeweakpropagationofchaosshallow} established uniform-in-time weak PoC bound, relying on a convergence assumption for the mean-field loss.

Meanwhile, analyses of \emph{mean-field Langevin dynamics}~(MFLD), in which Gaussian noise is added to the gradient updates (as in our work), establish linear convergence of both the infinite-width PDE limit \citep{nitanda2022convex, chizat2022meanfield} and the finite-width network \citep{nitanda2022particle, takakura2024mean}, together with uniform-in-time propagation-of-chaos bounds \citep{chen2025uniform, nitanda2025propagation}. Notably, these results rely on the associated Gibbs distribution satisfying a \emph{uniform log-Sobolev inequality}~(LSI). However, existing bounds on the LSI constant deteriorate exponentially with the inverse temperature~($\beta$) \citep{raginsky2017non, chizat2022meanfield, chewi2024uniform}, leading to vacuous guarantees in the low-noise regime for the convergence rates derived in the aforementioned works. More recently, \citet{wang2026local} strengthened this line of research by showing that \textit{local} convergence (i.e., convergence when initialized near the limiting distribution) can instead be established under the weaker Poincar\'e inequality, yielding faster rates. 
Importantly, both the LSI and Poincar\'e inequalities are abstract functional inequalities imposed on the Gibbs measure, and they are not verified to hold with usable constants in neural network models.
It is also worth mentioning \citet{hu2021mean}, who establish asymptotic convergence of MFLD without relying on log-Sobolev inequalities, and briefly discuss non-asymptotic convergence under stronger assumptions that are not shown to hold for neural networks in general. Moreover, the corresponding convergence rate deteriorates as the noise level decreases, with no explicit characterization of this dependence.
The MFLD rows of \cref{tab:convergence_comparison_noise} summarize how these guarantees compare with ours.

\subsubsection{Discussion: connection to practical neural network parameterizations}
\label{sec:connection_to_practical_parameterizations}
A natural question is to what extent these approaches explain training dynamics under the parameterizations used in practice, since neither the NTK parameterization (as introduced by \citet{jacot2018neural}) nor the mean-field parameterization (averaging over neurons) is commonly implemented.

As noted above, the standard parameterization remains the dominant choice in practice. As \citet{yang2021tensor} point out, in the infinite-width limit it leads to the kernel (lazy) regime when trained with sufficiently small step sizes, and therefore does not exhibit feature learning. Small step sizes are also assumed in several convergence analyses of standardly parameterized neural networks, which show---and rely on---the parameters remaining close to their initialization throughout training \citep[e.g.,][]{oymak2020toward, song2021subquadratic}.

However, in practice, neural networks trained under the standard parameterization do learn features and achieve strong generalization performance \citep{krizhevsky2012imagenet, he2015delving, zhang2017understanding}. This discrepancy is plausibly due to the fact that practical networks have finite width and are typically trained with step sizes larger than those allowed by the conservative stability conditions in the infinite-width analysis of \citet{yang2021tensor}. Indeed, larger step sizes have been shown to improve generalization capabilities \citep{damian2022neural, qiao2024stable}.

To reconcile stable infinite-width training with feature learning for deep networks, \citet{yang2021tensor} propose the \emph{maximal-update parameterization}~($\mu$P), which has gained traction in the recent literature \citep[see, e.g.,][]{bordelon2023depthwise, chen2025global, dey2025dont}.
For shallow networks, $\mu$P reduces to the mean-field parameterization (up to symmetries), and can be viewed as a generalization of mean-field scaling to deep architectures.

\subsubsection{Failure to extend our result to mean-field parameterization}
Mean-field parameterization, as well as $\mu$P parameterization, induces the output scaling $\kappa=\frac{1}{\sqrt{m}}$. This scaling conflicts directly with the NTK-minimum-eigenvalue term in $\mathcal{L}_\infty$ from \cref{thm:upper_bound_on_expected_loss}, through the regularization-induced factor
\[
\beta^{-2}\frac{n^{2}}{\lambda_{1}^{2}}\frac{1}{\kappa^{4}}\,.
\]
When $\kappa=\frac{1}{\sqrt{m}}$, controlling this term requires $\beta=\Omega\left(nm\right)$, in contrast to the requirement \mbox{$m=\tilde{\Omega}\left(\beta n\right)$}.

It is not surprising that this obstruction arises from regularization: we use regularization to preserve favorable initialization properties, and this may mark a boundary of our method.
The second-law change-of-measure argument controls training by propagating initialization-time properties of the full parameter process. This is sufficient when convergence follows from preserving a favorable initialization, but mean-field training is expected to rely on the creation of new, label-dependent features through the particle dynamics. Since our assumptions allow arbitrary labels and our proof does not track label-dependent feature evolution, the framework may be poorly suited to proving a teacher-free mean-field convergence theorem.

\subsection{Why study continuous-time Langevin dynamics for training neural networks?}
\label{sec:why_study_cld}
Our work employs CLD as a tractable framework for studying the role of stochasticity in neural network training. However, CLD differs from practical training algorithms, especially mini-batch SGD, in two central respects: (i) it is formulated in continuous rather than discrete time, and (ii) it uses additive isotropic noise on top of the full-batch gradient, rather than the anisotropic, state-dependent noise induced by mini-batch sampling. Nevertheless, there are several theoretical and practical reasons to study CLD in the context of neural network training.

First, although CLD is not the continuous-time limit of SGD, it \emph{is} the continuous-time limit of algorithms such as Langevin Monte Carlo (LMC)\footnote{Also known as the Unadjusted Langevin Algorithm (ULA).} and stochastic gradient Langevin dynamics (SGLD), \ie full-batch or stochastic-gradient methods with injected Gaussian noise, respectively \citep{karimi2023dynamical}. In particular, SGLD is well motivated both theoretically and empirically. Injected noise has been shown empirically to help optimization escape poor local minima in highly non-convex landscapes, as in simulated annealing \citep{kirkpatrick1983optimization}, and to improve both optimization and generalization in neural network training \citep{neelakantan2015adding, deng2021shrinking}. On the theoretical side, injected noise is also known to facilitate escape from saddle points \citep{jin2017escape, jin2021nonconvex}. Beyond its theoretical appeal, SGLD is used in practice in settings where explicit stochasticity is beneficial, such as Bayesian posterior sampling \citep{welling2011bayesian} and low-precision optimization \citep{zhang2022low}.

Second, the main technical reason we focus on CLD is that it admits the Gibbs stationary distribution, which enables us to apply our change-of-measure inequality. Importantly, \citet{mandt2017stochastic, yaida2018fluctuationdissipation, azizian2024what} show that, under certain conditions, SGD admits a stationary distribution that resembles the Gibbs distribution, while works such as \citet{vempala2019rapid, chewi2025analysis, mousavi2023towards} bound the R\'enyi divergence of LMC from the Gibbs distribution under functional inequalities such as the Poincar\'e inequality and the log-Sobolev inequality.
These results suggest that tools similar to our change-of-measure inequality may apply beyond the exact CLD setting.

Finally, analyzing SGD via stochastic differential equations (SDEs) in continuous time is a standard approach. \citet{shi2023learning} study the effects of learning rate under an isotropic Gaussian noise model, as in our setting, while \citet{li2017stochastic, li2019stochastic, hu2019diffusion, cheng2020stochastic, lanconelli2025note} rigorously justify approximating SGD in the small-step regime by an SDE with anisotropic, state-dependent Gaussian noise. \citet{li2021validity} further support this perspective, both theoretically and empirically, by showing that such SDEs can accurately capture the behavior of practical SGD even at finite learning rates, provided that the equilibrium gradient noise is sufficiently large relative to the norm of the gradient. 
Extending our analysis to more realistic anisotropic noise models is a natural direction for future work. 

Overall, the point of studying CLD is not that it reproduces SGD in every detail, but that it isolates the role of stochasticity in a mathematically tractable and still practically relevant setting. Moreover, the role of noise in our proof highlights its theoretical---and potentially practical---utility for global convergence.
\subsection{Existing lower bounds on the NTK minimum eigenvalue at initialization} 
\label{sec:ntk_related_work}
The eigenvalues of the empirical NTK matrix at initialization are well-studied \citep{oymak2020toward, nguyen2021tight, song2021subquadratic, montanari2022interpolation, bombari2022memorization, banerjee2023neural,karhadkar2024bounds}. Prior work varies substantially in the scope of their guarantees: they consider different assumptions on the initialization, output scaling, activations and data. Additionally, some are restricted to shallow networks or to polynomially-decaying failure probability. We first provide a detailed survey of prior work, and then discuss their relevance to our setting.

\citet{oymak2020toward} consider shallow networks with $\Theta(1)$ initialization variance and a fixed last layer with coordinates of order $\Theta \prn{\sqrt{m}}$. They show that $\lntk = \Omega(m)$ whenever $m = \Omega \prn{n \log^2 n}$. This result holds with failure probability of order $1/n$. 
\citet{montanari2022interpolation} consider a linearized model with output scaling of $1/\sqrt{m}$ and show a bound of $\Omega(1)$ whenever $md \log(md) > n$ and $n = \mathrm{poly}(d)$. 
\citet{nguyen2021tight} consider deep networks with ReLU activations under any scaling and initialization variance. For standard parameterization, they show that $\lntk = \Theta(m + d)$ as long as there exists a layer with width $m = \Omega \prn{n \log n}$. 
\citet{song2021subquadratic} consider shallow networks with standard parameterization and bounded activations. They show a bound of $\Omega(1)$ for $m = \Omega \prn{d^{-1} n^{5/3}}$. 
\citet{bombari2022memorization} consider deep networks where the last layer has initialization variance of $\Theta(1)$ and use data assumptions identical to \citet{nguyen2020global}. Denoting the widths of the two last hidden layers as $m_{L-1}$ and $m_{L-2}$, they show that $\lntk = \Omega \prn{m_{L-1} m_{L-2}}$ whenever $m_{L-1} m_{L-2} = \omega \prn{n \log^8 n}$ and $m_{L-1} = \Omega \prn{n^\gamma}$ for some $\gamma > 0$. Their results hold with failure probability of order $\exp \prn{-\log^2 m}$, which is asymptotically between polynomial and exponential. 
\citet{banerjee2023neural} consider deep networks with NTK parameterization, and show that $\lntk = \Omega(1)$ for $m = \Omega(n \log^2 n)$ with probability that decays like $\frac{1}{m}$. 
\citet{karhadkar2024bounds} consider both shallow and deep networks with NTK parameterization and ReLU activations. They assume the data exhibits $\delta$-separation. For shallow networks, they show that $\lntk = \Omega \prn{d^{-3} \delta^2}$ for $m = \Omega (\norm{\bf{X}}^2 d^3 \delta^{-2} \log n)$. 

None of these results directly apply to our setting. The works of \citet{oymak2020toward,montanari2022interpolation,bombari2022memorization,banerjee2023neural,karhadkar2024bounds} do not match our initialization and output scaling; \citet{nguyen2021tight,karhadkar2024bounds} focus on ReLU activations; \citet{oymak2020toward,bombari2022memorization,banerjee2023neural} do not provide sufficiently strong failure probabilities for our analysis; \citet{song2021subquadratic} use different data scaling and assumptions on the output layer. For this reason we extend the work of \citet{banerjee2023neural} to arbitrary parameterizations and strengthen its probability guarantee.

\subsection{Tabular comparison with prior convergence analyses}
\label{sec:prior_work_comparison}

\Cref{tab:convergence_comparison_gd,tab:convergence_comparison_noise} place our result
among existing convergence analyses for overparameterized networks. The first collects
analyses of gradient descent and its stochastic variant; the second collects analyses that
model training as a diffusion, which is where our result sits.
Several of the works listed are discussed in prose in
\cref{sec:existing_work_langevin_convergence,sec:existing_mean_field_convergence_results};
here we place them, together with further analyses, side by side.
Widths and floors are stated
up to constants and logarithmic factors, under each work's own conventions and omitting
dependence on depth and data separability.
The output scaling $\kappa$ is what places a model in the lazy regime; our analysis
requires no such scaling, holding at $\kappa=1$ under both parameterizations.

Every analysis in \cref{tab:convergence_comparison_gd} contracts the empirical risk to zero
at a rate set by width, output scaling and data separability, with the parameters confined
to a neighborhood of initialization; the rows differ only in the architecture they cover and
the width they demand. The analyses in \cref{tab:convergence_comparison_noise} differ from
one another in more ways, beginning with what their theorems contract: the empirical risk
itself, the population risk, the law of the iterates, or an entropy-regularized free energy.
Every floor is nonetheless reported on the empirical risk, so that the rows remain
comparable; where the theorem contracts something else, the qualification below gives the
conversion. They differ also in what sets the rate. For \citet{lugosi2024convergence},
\citet{oberweis2026convergence} and \citet{chen2020generalized} it is the familiar
combination of width, scaling and separability, and the noise plays no part; for the
remaining rows it is a Poincar\'e or log-Sobolev constant, whose only general estimates
degrade like $e^{-\Theta(\beta)}$, and which \citet{chizat2022meanfield} assumes to do so
outright. Every floor in the table grows as the noise grows, and only
\citet{chen2020generalized} shares the property our argument turns on, a noise level that
can be reduced to shrink the floor while the rate is left untouched. They have it only at
infinite width, and only once the output scaling is large enough to hold the model in its
linear regime. Among the finite-width analyses in either table, ours is therefore the only
one that lets the parameters leave a neighborhood of initialization, treats the injected
noise as the mechanism of the proof rather than as an obstacle, and still contracts the
empirical risk at a rate independent of the noise.

\begin{table}[H]
  \caption[Convergence analyses of gradient descent]{\textbf{Convergence analyses of
  gradient descent and stochastic gradient descent.} In every row the empirical risk
  contracts to zero at a linear rate, the parameters must remain within a neighborhood of
  initialization, and the randomness is either absent or an obstacle the analysis must
  absorb; the rows differ in the architecture they cover and in the width they require.
  \emph{Regime}: the parameterization the analysis is stated in or, where the guarantee is
  driven by an explicit output scaling rather than by the scale of the initialization, the
  scaling regime it requires.
  Here $n$ is the number of samples, $d$ the input dimension, $m$ the hidden-layer width and
  $\kappa$ the output scaling.
  Rows marked $\dagger$ carry qualifications set out after
  \cref{tab:convergence_comparison_noise}.
  }
  \label{tab:convergence_comparison_gd}
  \centering
  \scriptsize
  \setlength{\tabcolsep}{2.6pt}
  \renewcommand{\arraystretch}{1.35}
  \begin{tabular}{@{}
    >{\raggedright\arraybackslash}p{3.10cm}
    >{\raggedright\arraybackslash}p{2.10cm}
    >{\raggedright\arraybackslash}p{3.50cm}
    >{\raggedright\arraybackslash}p{1.00cm}
    >{\raggedright\arraybackslash}p{3.53cm}@{}}
    \toprule
    Work & Depth & Regime & Algo. & Width \\
    \midrule
    \citet{du2019deep}
      & Deep & NTK & GD
      & $m=\tilde\Omega(n^{4})$ \\
    \citet{oymak2020toward}
      & Shallow & $\Theta(1)$ init., fixed last layer & GD
      & $m=\tilde\Omega(n^{2}/d)$ \\
    \citet{song2021subquadratic}
      & Shallow & Standard & GD
      & $m=\tilde\Omega(n^{5/3}/d)$ \\
    \citet{nguyen2020global}
      & Deep, pyramidal & Standard & GD
      & $m_{1}=\tilde\Omega(n^{2})$ in the wide layer \\
    \citet{montanari2022interpolation}\textsuperscript{$\dagger$}
      & Shallow & Large $\kappa$, $\kappa\gtrsim n/\sqrt{md}$ & GF
      & $m=\tilde\Omega(n/d)$ \\
    \addlinespace[0.3em]
    \citet{allen2019convergence}
      & Deep & Standard & SGD
      & $m=\tilde\Omega(n^{24})$ \\
    \citet{zou2019improved}
      & Deep & Standard & SGD
      & $m=\tilde\Omega(n^{17})$
      \\
    \bottomrule
  \end{tabular}
\end{table}

\begin{table}[H]
  \caption[Convergence analyses of noisy training dynamics]{\textbf{Convergence analyses of
  noisy training dynamics.} Rows are grouped by the role the randomness plays in the proof.
  \emph{Regime}: as in \cref{tab:convergence_comparison_gd}.
  \emph{Leaves init.}: whether the parameters may travel far from their initialization
  ($\checkmark$) or must remain close ($\times$).
  \emph{Objective}: the quantity the theorem itself contracts, namely the empirical risk,
  the population risk, the law of the iterates towards a Gibbs measure, or the
  entropy-regularized free energy.
  \emph{Floor}: the non-vanishing residual left in the \emph{empirical} risk; where the
  objective is something else, the qualifications give the conversion.
  \emph{Rate}: \emph{linear} when the objective contracts at an exponential rate set by the
  data and the architecture; \emph{exponentially slow in $\beta$} when that rate is instead
  governed by a Poincar\'e or log-Sobolev constant, for which the only general estimates
  degrade like $e^{-\Theta(\beta)}$.
  Here $N_{p}$ is the number of parameters and $\beta$ the inverse temperature; the
  remaining symbols are as in \cref{tab:convergence_comparison_gd}.
  Rows marked $\dagger$ carry qualifications set out below.
  }
  \label{tab:convergence_comparison_noise}
  \centering
  \scriptsize
  \setlength{\tabcolsep}{2.6pt}
  \renewcommand{\arraystretch}{1.35}
  \begin{tabular}{@{}
    >{\raggedright\arraybackslash}p{2.25cm}
    >{\raggedright\arraybackslash}p{1.05cm}
    >{\raggedright\arraybackslash}p{1.50cm}
    >{\raggedright\arraybackslash}p{0.95cm}
    >{\raggedright\arraybackslash}p{1.55cm}
    >{\centering\arraybackslash}p{0.90cm}
    >{\raggedright\arraybackslash}p{1.35cm}
    >{\raggedright\arraybackslash}p{1.53cm}
    >{\raggedright\arraybackslash}p{1.42cm}@{}}
    \toprule
    Work & Depth & Regime & Algo. & Width & Leaves init. & Objective & Rate & Floor \\
    \midrule
    \multicolumn{9}{@{}l@{}}{\small\itshape Randomness as an obstacle the proof must absorb}\\
    \midrule
    \citet{lugosi2024convergence}\textsuperscript{$\dagger$}
      & Deep & Non-standard init. & Cont.\ SGD
      & Not characterized
      & $\times$ & Population & Linear & None \\
    \citet{oberweis2026convergence}\textsuperscript{$\dagger$}
      & Deep & Large $\kappa$ & Cont.\ SGD
      & Finite; not characterized
      & $\times$ & Empirical & Linear & $\propto\kappa^{-1/q}$ \\
    \citet{chen2020generalized}\textsuperscript{$\dagger$}
      & Shallow & Generalized NTK & CLD
      & $m=\infty$
      & $\checkmark$ & Empirical & Linear & $\propto\kappa^{-2}\beta^{-2}$ \\
    \midrule
    \multicolumn{9}{@{}l@{}}{\small\itshape Randomness as the mechanism of the proof}\\
    \midrule
    \citet{raginsky2017non}\textsuperscript{$\dagger$}
      & Any & Any & SGLD
      & Any
      & $\checkmark$ & Empirical & Exponen\-tially slow in $\beta$ & $\propto N_{p}\beta^{-1}$ \\
    \citet{kumar2025langevin}\textsuperscript{$\dagger$}
      & Shallow & Any & LMC
      & Any
      & $\checkmark$ & Gibbs law & Exponen\-tially slow in $\beta$ & $\propto N_{p}\beta^{-1}$ \\
    \citet{nitanda2022convex}; \citet{chizat2022meanfield}\textsuperscript{$\dagger$}
      & Shallow & Mean-field & MFLD
      & $m=\infty$
      & $\checkmark$ & Free energy & Exponen\-tially slow in $\beta$ & $\propto\beta^{-1}$ \\
    \citet{nitanda2025propagation}\textsuperscript{$\dagger$}
      & Shallow & Mean-field & MFLD
      & Finite $m$
      & $\checkmark$ & Free energy & Exponen\-tially slow in $\beta$ & $\propto\beta^{-1}+m^{-1}$ \\
    \addlinespace[0.45em]
    \textbf{Ours} (\Cref{thm:upper_bound_on_expected_loss})\textsuperscript{$\dagger$}
      & Shallow & Standard & CLD
      & $m=\tilde\Omega(n^{2})$
      & $\checkmark$ & Empirical & Linear & $\propto n\beta^{-1}$ \\
    \textbf{Ours} (\Cref{thm:upper_bound_on_expected_loss})\textsuperscript{$\dagger$}
      & Deep & Standard / NTK & CLD
      & $m=\tilde\Omega(n^{3})$
      & $\checkmark$ & Empirical & Linear & $\propto n^2\beta^{-1}$ \\
    \bottomrule
  \end{tabular}
\end{table}
\pagebreak
\paragraph{Qualifications.}
The rows marked $\dagger$ carry conditions that a table cell cannot hold.
\begin{itemize}
\item \citet{montanari2022interpolation} require the output scaling
  $\kappa\gtrsim n/\sqrt{md}$, which keeps the model in its linear regime, and their inputs
  must be i.i.d.\ uniform on the sphere, with $d\leq n\leq\mathrm{poly}(d)$.

\item \citet{lugosi2024convergence} study the \emph{population} risk, under the diffusion
  driven by the covariance of the stochastic gradients rather than by injected isotropic
  noise. Their general theorem assumes a local gradient-dominance condition on
  $\|\nabla f\|^{2}/f$ together with a noise-curvature condition, and the exponential
  convergence it yields is conditional on a high-probability event that the trajectory never
  leaves a prescribed ball around initialization. For networks these conditions are verified
  only under strong structural assumptions: twice-differentiable activations with everywhere
  positive derivative, bounded inputs with positive-definite population covariance,
  realizability, and a highly constrained initialization in which one weight matrix and all
  biases vanish, the hidden weights are non-negative, and the remaining layer has
  sufficiently large entries. Realizability is what carries their conclusion to the
  empirical risk: a population risk of zero forces the network to match the labels almost
  surely, hence on the sample. No width requirement in terms of $n$ is given.

\item \citet{oberweis2026convergence} keep the kernel non-degenerate throughout training at
  finite width, but only inside the lazy regime induced by the output scaling. Theirs is the
  one floor in the table not governed by the noise: their bound holds for every $q>1$, a
  H\"older conjugate exponent arising in the proof, so the floor is at best of order
  $\kappa^{-1}$ and vanishes only as $\kappa\to\infty$. Their width is never quantified in
  terms of $n$.

\item \citet{chen2020generalized} work at infinite width, and assume that the Gram matrix of
  the initial law satisfies $\lambda_{\min}(\mathbf{H}(p_{0}))=\Lambda>0$. Their parameters
  are free to leave a neighborhood of initialization; what their proof controls is the
  parameter \emph{law}, which is kept close to its initial value in Wasserstein and KL
  divergence. Their output scaling interpolates between the mean-field setting, $\kappa=1$,
  and NTK parameterization, $\kappa=\sqrt m$, and their theorem admits only $\kappa$ above
  an explicit threshold. Identifying their joint weight-decay and noise coefficient with
  $\beta^{-1}$, their Theorem~4.4 reads
  \[
    L(p_{t})\;\leq\;2e^{-2\kappa^{2}\Lambda t/n}
    \;+\;O\!\left(\frac{d^{2}n^{2}}{\beta^{2}\kappa^{2}\Lambda^{2}}\right).
  \]
  At any fixed admissible $\kappa$, lowering the noise shrinks this floor and leaves the
  rate unchanged. Choosing instead the smallest $\kappa$ their proof allows ties the two
  together, and in the large-noise regime the floor that results is of order $\beta^{-1}$,
  matching the other Langevin analyses.

\item \citet{raginsky2017non} reach the empirical risk in two steps: their Proposition~3.3
  bounds the Wasserstein distance between the law of the iterates and the Gibbs measure
  $\pi_{\mathbf{z}}\propto e^{-\beta F_{\mathbf{z}}}$ of the empirical risk, decaying like
  $e^{-k\eta/(\beta c_{\mathrm{LS}})}$, and their Proposition~3.4 bounds that measure's own
  suboptimality,
  \[
    \int F_{\mathbf{z}}\,\mathrm{d}\pi_{\mathbf{z}}-\min_{w}F_{\mathbf{z}}(w)
    \;\leq\;\frac{N_{p}}{2\beta}
      \log\!\left(\frac{eM}{m}\left(\frac{b\beta}{N_{p}}+1\right)\right),
  \]
  which is the floor listed. Driving it to $o(1)$ forces $\beta$ to scale at least with the
  parameter count, so a larger model tightens rather than relaxes the temperature
  requirement, while the log-Sobolev constant $c_{\mathrm{LS}}$ that sets the rate is left
  unbounded and is generally exponential in $\beta$; the two requirements therefore pull
  against each other. Adding a stability term turns the same chain into their headline
  population-risk bound.

\item \citet{kumar2025langevin} prove convergence of the law of the iterates to the Gibbs
  measure $\mu_{s}\propto e^{-2\tilde L/s}$ of a Frobenius-norm-regularized \emph{empirical}
  loss, in total variation and in $q$-R\'enyi divergence, for depth-two networks of
  arbitrary width and arbitrary data. Their Section~4.1 then reaches the risk along exactly
  the route above, invoking the proposition of \citet{raginsky2017non} quoted there for the
  step that makes the Gibbs measure an approximate empirical risk minimizer; the floor
  listed is that step's, and it is sketched rather than stated as a theorem. Their rate is
  governed by the Poincar\'e constant of $\mu_{s}$, which they leave as an open problem to
  bound. The threshold they place on the regularization strength is independent of the
  width, but it leaves a residual bias of order $\lambda\|\mathbf{W}^{\star}\|_{F}^{2}$ in
  the empirical loss, and the noise level is a separate parameter that this
  width-independence does not cover.

\item \citet{nitanda2022convex} and \citet{chizat2022meanfield} contract the free energy
  $F_{\beta}=L+\beta^{-1}\mathrm{Ent}$ to its own minimizer with no residual gap; the latter
  obtains $F_{\beta}(\mu_{t})-\min F_{\beta}\leq e^{-2\rho_{\beta}t/\beta}
  \left(F_{\beta}(\mu_{0})-\min F_{\beta}\right)$, with $\rho_{\beta}$ the log-Sobolev
  constant of the proximal Gibbs measure, which their annealing result assumes to satisfy
  $\rho_{\beta}\geq C_{0}e^{-\alpha^{*}\beta}$. The floor listed is therefore not an
  optimization floor but the bias their minimizer carries in the empirical risk, which the
  entropy sandwich bounds by $\beta^{-1}\mathrm{KL}(\mu^{\circ}\|\mu_{0})$ for any
  near-minimizer $\mu^{\circ}$ of $L$ of finite relative entropy. Removing it requires
  annealing, established only on the schedule $\beta_{t}^{-1}=\alpha/\log t$, along which
  the unregularized objective decays like $\log\log t/\log t$. At finite width
  \citet{nitanda2025propagation} add a uniform-in-time particle term of order $1/m$ which,
  unlike the rate, is independent of the log-Sobolev constant and so does not itself
  deteriorate as the noise is lowered; their discrete-time bound carries a further step-size
  term.

\item Our own floor, $\mathcal{L}_{\infty}$, grows with the noise, but our rate is unaffected
  by it: lowering the floor costs width, through $m=\tilde\Omega(\beta n/\lambda_{1})$ with
  $\lambda_{1}$ the data-separability constant, and not rate. The widths listed above follow
  by taking $\beta$ just large enough to make $\mathcal{L}_{\infty}$ small. Under the
  standard parameterization this needs $\beta=\tilde\Omega(n)$ when shallow and
  $\beta=\tilde\Omega(n^{2})$ when deep, giving $\tilde\Omega(n^{2})$ and
  $\tilde\Omega(n^{3})$; under the NTK parameterization the shallow floor carries the same
  extra factor of $n$ as the deep case, so $\beta=\tilde\Omega(n^{2})$ is required in both
  and both width requirements become $\tilde\Omega(n^{3})$.
\end{itemize}

\newpage

\section{Change-of-measure inequality derivation}
\label{app:divergence_upper_bound_proof}
Throughout this section, we work in a density setting. Namely, all probability measures on the relevant spaces are assumed to admit densities with respect to fixed underlying reference measures, and all Markov transition kernels are assumed to admit corresponding transition densities. We use the same notation for a probability measure and its density. On product spaces, joint densities and marginal densities are taken with respect to the corresponding product and marginal reference measures. All ratios of densities are understood on the sets where the denominator is positive; their values elsewhere are irrelevant for the divergence quantities considered below.

\subsection{Divergence upper bound}
\subsubsection{Second law of thermodynamics}
\begin{definition}[R\'enyi divergence \citep{renyi1961measures, van2014renyi}]
\label{def:renyi_divergence}
Let $p,\,q$ be probability measures on a measurable space $\mathcal{X}$. For $\alpha\in\prn{0,1}\cup\left(1,\infty\right]$, the \emph{R\'enyi divergence} of order $\alpha$ of $p$ from $q$ is defined, when $p\ll q$, by
\[
D_\alpha\prn{p\Vert q}
\triangleq\frac{1}{\alpha-1}\ln\int\prn{\frac{\mathrm{d}p}{\mathrm{d}q}}^{\alpha-1}\mathrm{d}p,\quad
\alpha\in\prn{0,1}\cup\prn{1,\infty};
\quad
D_\infty\prn{p\Vert q}
\triangleq\ln\operatorname*{ess\,sup}_{x\sim p}\frac{\mathrm{d}p}{\mathrm{d}q}\prn{x}.
\]
If $p\not\ll q$, then $D_\alpha\prn{p\Vert q}\triangleq\infty$ for every $\alpha\in\prn{0,1}\cup\left(1,\infty\right]$.
\end{definition}

We begin by showing a data processing inequality for the R\'enyi divergence, similar to Theorem 9 in \citet{van2014renyi}, and provide a proof for completeness and to fit our notation.
\begin{proposition}[Data processing inequality]
\label{prop:data_processing_inequality}
Let $p \prn{x,y}$ and $q \prn{x,y}$ denote the joint density functions of two
probability distributions defined on the product space
$\mathcal{X} \times \mathcal{Y}$. Let $p_X \prn{x}$ and $q_X\prn{x}$ be the
corresponding marginal densities; for example,
\[
p_X \prn{x} = \int_{\mathcal{Y}} p \prn{x,y} \mathrm{d}y .
\]
We further denote by $p \prn{y \mid x}$ and $q\prn{y \mid x}$ the conditional
densities, so that the joint densities factorize as
$p \prn{x,y} = p\prn{y \mid x} p_X\prn{x}$, and similarly for $q$.
Then for all $\alpha > 1$,
\[
D_{\alpha}\left(p_{X}\Vert q_{X}\right)\leq D_{\alpha}\left(p\Vert q\right)\,.
\]
\end{proposition}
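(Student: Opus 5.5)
We reduce the claim to a convexity inequality applied fiberwise. Since $\alpha>1$ and $\frac{1}{\alpha-1}\ln(\cdot)$ is increasing, it suffices to show
\[
\int_{\mathcal{X}} p_X(x)^{\alpha} q_X(x)^{1-\alpha}\,\mathrm{d}x \;\leq\; \int_{\mathcal{X}\times\mathcal{Y}} p(x,y)^{\alpha} q(x,y)^{1-\alpha}\,\mathrm{d}x\,\mathrm{d}y ,
\]
using the identity $\int (\mathrm{d}p/\mathrm{d}q)^{\alpha-1}\mathrm{d}p=\int p^\alpha q^{1-\alpha}$. First we dispose of absolute continuity. If $p\not\ll q$, the right-hand side is $\infty$ by \cref{def:renyi_divergence} and there is nothing to prove. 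If $p\ll q$, then $p_X\ll q_X$: for $q_X(A)=0$ we have $q(A\times\mathcal{Y})=0$, hence $p(A\times\mathcal{Y})=p_X(A)=0$. So both sides are given by the integral formula.

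Next, fix $x$ with $q_X(x)>0$ and write $p(x,y)=p_X(x)p(y\mid x)$ and $q(x,y)=q_X(x)q(y\mid x)$. The inner integral over $y$ then becomes
\[
\int_{\mathcal{Y}} p(x,y)^{\alpha}q(x,y)^{1-\alpha}\,\mathrm{d}y
= q_X(x)\int_{\mathcal{Y}} q(y\mid x)\,\Bigl(\tfrac{p(x,y)}{q(x,y)}\Bigr)^{\alpha}\mathrm{d}y .
\]
Apply Jensen's inequality to the convex map $u\mapsto u^\alpha$ on $[0,\infty)$ (convex since $\alpha>1$) under the probability density $q(\cdot\mid x)$. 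This gives
\[
\int_{\mathcal{Y}} q(y\mid x)\Bigl(\tfrac{p(x,y)}{q(x,y)}\Bigr)^{\alpha}\mathrm{d}y \;\geq\; \Bigl(\int_{\mathcal{Y}} q(y\mid x)\tfrac{p(x,y)}{q(x,y)}\mathrm{d}y\Bigr)^{\alpha} = \Bigl(\tfrac{p_X(x)}{q_X(x)}\Bigr)^{\alpha}.
\]
The last equality uses $p\ll q$, which ensures that integrating $p(x,y)$ over the set $\{q(x,y)>0\}$ recovers $p_X(x)$ for $q_X$-a.e.\ $x$. Multiplying by $q_X(x)$ yields the pointwise bound $\int_{\mathcal{Y}} p(x,y)^\alpha q(x,y)^{1-\alpha}\,\mathrm{d}y\geq p_X(x)^\alpha q_X(x)^{1-\alpha}$. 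Integrating over $x$ by Tonelli (all integrands are nonnegative) gives the desired inequality.

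The main obstacle is measure-theoretic bookkeeping rather than the Jensen step. We must handle the null sets where $q_X(x)=0$, where conditionals are undefined, and confirm that these contribute nothing: there $p_X(x)=0$ a.e.\ by absolute continuity. We must also check the convention on sets where $q(x,y)=0$ but $q_X(x)>0$: such points carry no $p$-mass a.e., consistent with the paper's convention that density ratios are taken only where the denominator is positive. Finally, Tonelli lets the argument go through even when the integrals are infinite, so no finiteness assumption is needed.
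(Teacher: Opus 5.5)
Your proof is correct and is essentially the paper's argument: both prove a fiberwise lower bound on the inner $y$-integral and then integrate over $x$ by Tonelli, and your Jensen step for $u\mapsto u^{\alpha}$ under $q(\cdot\mid x)$ is exactly the content of the nonnegativity $D_{\alpha}(p(\cdot\mid x)\Vert q(\cdot\mid x))\geq 0$ that the paper invokes after factoring the joint ratio into the marginal ratio times the conditional ratio. Working directly with the joint ratio has the small advantage of never needing $p(\cdot\mid x)$ on the set where $p_X(x)=0$, and you spell out the absolute-continuity bookkeeping that the paper leaves to its standing density conventions.
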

\begin{proof}
    Directly from \cref{def:renyi_divergence}, we have
\allowdisplaybreaks{
\begin{align*}
D_{\alpha}\left(p\Vert q\right) & =\frac{1}{\alpha-1}\ln\left(\int_{\mathcal{X}\times\mathcal{Y}}\left(\frac{p\left(x,y\right)}{q\left(x,y\right)}\right)^{\alpha-1}\mathrm{d}p\left(x,y\right)\right)\\
 & =\frac{1}{\alpha-1}\ln\left(\int_{\mathcal{X}\times\mathcal{Y}}\left(\frac{p\left(x,y\right)}{q\left(x,y\right)}\right)^{\alpha-1}p\left(x,y\right)\mathrm{d}x\mathrm{d}y\right)\\
 & =\frac{1}{\alpha-1}\ln\left(\int_{\mathcal{X}\times\mathcal{Y}}\left(\frac{p\left(y\mid x\right)p_{X}\left(x\right)}{q\left(y\mid x\right)q_{X}\left(x\right)}\right)^{\alpha-1}p\left(y\mid x\right)p_{X}\left(x\right)\mathrm{d}x\mathrm{d}y\right)\\
 & =\frac{1}{\alpha-1}\ln\left(\int_{\mathcal{X}\times\mathcal{Y}}\left(\frac{p_{X}\left(x\right)}{q_{X}\left(x\right)}\right)^{\alpha-1}p_{X}\left(x\right)\left(\frac{p\left(y\mid x\right)}{q\left(y\mid x\right)}\right)^{\alpha-1}p\left(y\mid x\right)\mathrm{d}x\mathrm{d}y\right)\\
\left[\text{Fubini-Tonelli}\right] & =\frac{1}{\alpha-1}\ln\left(\int_{\mathcal{X}}\left(\frac{p_{X}\left(x\right)}{q_{X}\left(x\right)}\right)^{\alpha-1}p_{X}\left(x\right)\left(\int_{\mathcal{Y}}\left(\frac{p\left(y\mid x\right)}{q\left(y\mid x\right)}\right)^{\alpha-1}p\left(y\mid x\right)\mathrm{d}y\right)\mathrm{d}x\right)\\
 & =\frac{1}{\alpha-1}\ln\,\Biggl(\int_{\mathcal{X}}\left(\frac{p_{X}\left(x\right)}{q_{X}\left(x\right)}\right)^{\alpha-1}p_{X}\left(x\right) \cdot \\
 & \qquad \qquad \qquad  \cdot \exp\left(\prn{\alpha - 1} \frac{1}{\alpha-1}\ln\left(\int_{\mathcal{Y}}\left(\frac{p\left(y\mid x\right)}{q\left(y\mid x\right)}\right)^{\alpha-1}\mathrm{d}p\left(y\mid x\right)\right)\right)\mathrm{d}x\Biggr)\\
  & =\frac{1}{\alpha-1}\ln\!\left(\int_{\mathcal{X}}\left(\frac{p_{X}\left(x\right)}{q_{X}\left(x\right)}\right)^{\alpha-1}\! p_{X}\left(x\right)\exp\Bigl(\prn{\alpha - 1} 
  D_\alpha \bigprn{p\prn{\cdot\mid x}\Vert q\prn{\cdot\mid x}}
  \Bigr)\mathrm{d}x\right)\\
\left[D_{\alpha}\geq0,\,*\right] & \geq\frac{1}{\alpha-1}\ln\left(\int_{\mathcal{X}}\left(\frac{p_{X}\left(x\right)}{q_{X}\left(x\right)}\right)^{\alpha-1}p_{X}\left(x\right)\exp\left(0\right)\mathrm{d}x\right)\\
 & =\frac{1}{\alpha-1}\ln\left(\int_{\mathcal{X}}\left(\frac{p_{X}\left(x\right)}{q_{X}\left(x\right)}\right)^{\alpha-1}\mathrm{d}p_{X}\left(x\right)\right)=D_{\alpha}\left(p_{X}\Vert q_{X}\right)\,,
\end{align*}
where the non-negativity of the R\'enyi divergence (*) was shown in Theorem 8 of \citet{van2014renyi}, and the sign is maintained since $\alpha > 1$.
}
\end{proof}

From here, the proof of a form of the second law of thermodynamics for the R\'enyi divergence in time-invariant Markov processes follows along the lines of the proof of Lemma A.2 in \citet{harel2025temperature} for the second law of thermodynamics for the KL divergence of discrete-time Markov processes, and then their Corollary A.4, which extends the result to continuous-time Markov processes.
\begin{proposition}[Second law of thermodynamics for discrete-time Markov processes]
    \label{prop:second_law_discrete_time}
    Given a time-invariant Markov process on state space $\mathcal{X}$ with marginal distributions $p_n,\, n\in \mathbb{N}_0$, with a stationary distribution $p_\infty$, we have for all $n\in \mathbb{N}_0$ and $\alpha>1$,
    \begin{align*}
        D_\alpha \prn{p_{n+1} \Vert p_\infty} \leq D_\alpha \prn{p_{n} \Vert p_\infty} \,.
    \end{align*}
\end{proposition}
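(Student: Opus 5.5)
The plan is to realize both $p_{n+1}$ and $p_\infty$ as marginals of joint distributions on $\mathcal{X}\times\mathcal{X}$ built from the \emph{same} transition kernel, and then apply \cref{prop:data_processing_inequality} twice, once for each coordinate. Let $k(y\mid x)$ denote the time-invariant transition density of the chain. I will define two joint densities on pairs $(x,y)=(X_n,X_{n+1})$:
\[
p(x,y)=k(y\mid x)\,p_n(x),\qquad q(x,y)=k(y\mid x)\,p_\infty(x).
\]
Stationarity of $p_\infty$ means $\int k(y\mid x)p_\infty(x)\,\mathrm{d}x=p_\infty(y)$. Hence the $y$-marginal of $q$ is $p_\infty$. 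The $y$-marginal of $p$ is $p_{n+1}$, by the Markov property.

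The first step is to compute $D_\alpha(p\Vert q)$ directly. Since the conditionals coincide, on the set where $k(y\mid x)>0$ we have $p(x,y)/q(x,y)=p_n(x)/p_\infty(x)$. Integrating out $y$ via Fubini–Tonelli, using $\int k(y\mid x)\,\mathrm{d}y=1$, gives $D_\alpha(p\Vert q)=D_\alpha(p_n\Vert p_\infty)$. The same computation shows absolute continuity: if $p_n\ll p_\infty$ then $p\ll q$. If instead $p_n\not\ll p_\infty$, the right-hand side of the claim is $\infty$ and there is nothing to prove.

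The second step applies \cref{prop:data_processing_inequality} with the roles of the coordinates swapped. That proposition is stated for the $X$-marginal, but the argument is symmetric: factorize $p(x,y)=p(x\mid y)p_{n+1}(y)$ and $q(x,y)=q(x\mid y)p_\infty(y)$, using the reversed conditionals obtained by Bayes' rule. Either I relabel the product space, or I simply repeat the proof with $y$ as the retained variable. This yields
\[
D_\alpha(p_{n+1}\Vert p_\infty)\le D_\alpha(p\Vert q)=D_\alpha(p_n\Vert p_\infty).
\]

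The main obstacle is the measure-theoretic bookkeeping in the density setting. The reversed conditional $q(x\mid y)$ must be well defined wherever $p_\infty(y)>0$, and $p(x\mid y)$ wherever $p_{n+1}(y)>0$. I also need the convention on ratios to be consistent: points where $p_\infty(y)=0$ but $p_{n+1}(y)>0$ must have zero $p$-mass once $p\ll q$ holds. The standing convention at the start of the appendix, that ratios are taken only where the denominator is positive, handles this. I will check explicitly that $p\ll q$ implies $p_{n+1}\ll p_\infty$ before invoking the proposition.
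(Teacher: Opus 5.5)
Your proposal is correct and follows the paper's proof essentially line for line: the paper builds the same pair of joint laws $p_n(x)K(y\mid x)$ and $p_\infty(x)K(y\mid x)$, computes $D_\alpha(p\Vert q)=D_\alpha(p_n\Vert p_\infty)$ directly by Fubini--Tonelli, and applies \cref{prop:data_processing_inequality} to the $y$-marginal, doing so silently for the second coordinate; your remark about the reversed conditionals makes that step explicit. Note only that your opening sentence says the data processing inequality is used ``twice''; as your own steps show, one application together with the direct computation suffices.
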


\begin{proof}
Fix any $n\in\mathbb{N}_0$. If \(p_n\not\ll p_\infty\), then
\[
D_\alpha\left(p_n\Vert p_\infty\right)=\infty,
\]
and the claim is trivial. Hence assume \(p_n\ll p_\infty\).

    Let $K\prn{\cdot \mid x}$ denote the transition kernel of the Markov process, and define joint densities on $\mathcal{X} \times \mathcal{X}$,
    \begin{align*}
        p \prn{x,y} = p_n \prn{x} K\prn{y \mid x},\qquad q \prn{x,y} = p_\infty \prn{x} K\prn{y \mid x}\,.
    \end{align*}
    The $Y$-marginal distributions are
        \begin{align*}
        p_Y \prn{y} &= \int p \prn{x,y} \mathrm{d} x =  \int p_n \prn{x} K\prn{y \mid x} \mathrm{d} x = p_{n+1} \prn{y}\,,\\
        q_Y \prn{y} &=\int p_\infty \prn{x} K\prn{y \mid x} \mathrm{d} x =  p_\infty \prn{y}\,.
    \end{align*}
    The identity \(p_Y=p_{n+1}\) follows from the Markov evolution, while \(q_Y=p_\infty\) follows from the stationarity of \(p_\infty\).
    Applying the data processing inequality, we get
    \begin{align*}
        D_\alpha \prn{p_{n+1} \Vert p_\infty} \leq D_\alpha \prn{p \Vert q} \,.
    \end{align*}
    From \cref{def:renyi_divergence} we have,
    \begin{align*}
        D_\alpha \prn{p \Vert q} 
        &=\frac{1}{\alpha-1}\ln\int\prn{\frac{p \prn{x,y}}{q \prn{x,y}}}^{\prn{\alpha-1}} \mathrm{d}p\prn{x,y} \\
        &=\frac{1}{\alpha-1}\ln\iint\prn{\frac{p_n \prn{x} K\prn{y \mid x}}{p_\infty \prn{x} K\prn{y \mid x}}}^{\prn{\alpha-1}} p\prn{x,y} \mathrm{d}x\mathrm{d}y \\
        &= \frac{1}{\alpha-1}\ln\iint\prn{\frac{p_n \prn{x}}{p_\infty \prn{x} }}^{\prn{\alpha-1}}  p_n \prn{x} K\prn{y \mid x} \mathrm{d}x\mathrm{d}y\\
        \sqprn{\text{Fubini-Tonelli}} &= \frac{1}{\alpha-1}\ln \prn{\int\prn{\frac{p_n \prn{x}}{p_\infty \prn{x} }}^{\prn{\alpha-1}}  p_n \prn{x} \prn{\int K\prn{y \mid x} \mathrm{d}y} \mathrm{d}x} \\
        &=\frac{1}{\alpha-1}\ln\int\prn{\frac{p_n \prn{x}}{p_\infty \prn{x} }}^{\prn{\alpha-1}}   \mathrm{d}p_n \prn{x}\\
        &= D_\alpha \prn{p_n \Vert p_\infty }\,.
    \end{align*}
    Note that the value of the ratio on the set where \(q\prn{x,y}=0\) is irrelevant, since the R\'enyi divergence only depends on the ratio \(q\)-a.e.
    Substituting this into the previous inequality, we get the desired result.
\end{proof}

\begin{corollary}[Second law of thermodynamics for continuous-time Markov processes]
    \label{cor:second_law_continuous_time}
Given a time-invariant Markov process on state space $\mathcal{X}$ with marginal distributions $p_t,\, t\geq 0$, with a stationary distribution $p_\infty$, we have for all $t \geq 0$ and $\alpha >1$,
    \begin{align*}
        D_\alpha \prn{p_{t} \Vert p_\infty} \leq D_\alpha \prn{p_{0} \Vert p_\infty} \,.
    \end{align*}
\end{corollary}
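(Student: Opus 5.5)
The plan is to reduce the continuous-time statement to the discrete-time one, \cref{prop:second_law_discrete_time}, by sampling the process on a time grid. Fix $t>0$; the case $t=0$ is an equality. Time invariance of the Markov process means there is a semigroup of transition kernels $K_s(\cdot\mid x)$, $s\geq0$, with $p_{u+s}(y)=\int p_u(x)K_s(y\mid x)\,\mathrm{d}x$ for all $u,s\geq 0$. Stationarity of $p_\infty$ means $\int p_\infty(x)K_s(y\mid x)\,\mathrm{d}x=p_\infty(y)$ for every $s\geq0$.

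Now consider the discrete-time chain $q_k\triangleq p_{kt}$, $k\in\mathbb{N}_0$. It is a time-invariant Markov process with the single transition kernel $K_t$, and $p_\infty$ is stationary for $K_t$ by the previous paragraph. Applying \cref{prop:second_law_discrete_time} with $n=0$ gives
\[
D_\alpha\prn{p_t\Vert p_\infty}=D_\alpha\prn{q_1\Vert p_\infty}\leq D_\alpha\prn{q_0\Vert p_\infty}=D_\alpha\prn{p_0\Vert p_\infty}.
\]
If $p_0\not\ll p_\infty$, the right-hand side is $\infty$ and there is nothing to prove; this case is already handled inside the proposition. Note that one step of size $t$ suffices: we need no limiting argument over finer grids, since we only compare time $t$ with time $0$. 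Applying the same argument with general $n$, or to the chain started at time $u$, would give monotonicity of $t\mapsto D_\alpha(p_t\Vert p_\infty)$, but the statement only asks for comparison with the initial time.

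The only genuine point to check is that the density setting assumed at the start of this appendix applies to the time-$t$ kernel $K_t$. We need $K_t$ to admit a transition density so that the joint densities $p_0(x)K_t(y\mid x)$ and $p_\infty(x)K_t(y\mid x)$ used in the proof of \cref{prop:second_law_discrete_time} are well defined. This is covered by the standing assumption that all Markov transition kernels admit transition densities. In any case, the data processing step in \cref{prop:data_processing_inequality} could be phrased measure-theoretically via Radon--Nikodym derivatives, since the ratio $p(x,y)/q(x,y)$ reduces to $p_0(x)/p_\infty(x)$ regardless of $K_t$. I therefore expect no real obstacle here, and the corollary follows immediately.
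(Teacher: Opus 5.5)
Your proposal is correct and matches the paper's proof: sample the process at the grid $\{nt\}_{n\in\mathbb{N}_0}$, note that $p_\infty$ remains stationary for the time-$t$ kernel, and apply \cref{prop:second_law_discrete_time} with $n=0$. Your remarks on the $p_0\not\ll p_\infty$ case and on the density setting are consistent with the paper's standing assumptions.
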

\begin{proof}
The case $t=0$ is immediate.
    By noting that for any $t> 0$, the process which has the marginal distributions $\tilde{p}_n = p_{nt},\,\forall n \in \mathbb{N}_0$ is a Markov process with the stationary distribution $\tilde{p}_\infty = p_\infty$, the result follows for $n=0$ in \cref{prop:second_law_discrete_time}.
\end{proof}

\subsubsection{Divergence bound derivation}
We now define a ``relative'' version of the R\'enyi divergence, similarly to \citet{harel2025temperature}.
\begin{definition}[Relative R\'enyi divergence]
\label{def:relative_renyi_divergence}
Let $p,\,q, \mu$ be probability measures on a measurable space $\mathcal{X}$ such that $\mu \ll q,\, \mu \ll p$. For $\alpha \in \prn{0,1} \cup \prn{1,\infty}$, the \textit{relative R\'enyi divergence} of order $\alpha$ of $p$ from $q$ relative to $\mu$ is defined as
\begin{align*}
    D_\alpha^\mu \prn{p \Vert q} \triangleq \frac{1}{\alpha - 1} \ln \int \prn{\frac{\mathrm{d}p}{\mathrm{d}q}} ^{\alpha - 1} \mathrm{d}\mu
\end{align*}
when $p \ll q$, and $D_\alpha^\mu \prn{p \Vert q} =\infty$ otherwise.\footnote{
The condition $p\ll q$ is imposed so that the Radon-Nikodym derivative $\frac{\mathrm{d}p}{\mathrm{d}q}$ exists. Under the standing assumption $\mu\ll q$, the integral depends only on the $q$-a.e. behavior of the integrand, so that when $p\not\ll q$ one could still naturally interpret the expression through the $q$-absolutely continuous part of $p$. The assignment $D_\alpha^\mu\prn{p\Vert q}=\infty$ in this case is therefore a definitional choice made to retain a globally stated formula in terms of $\frac{\mathrm{d}p}{\mathrm{d}q}$. The additional assumption $\mu\ll p$ is likewise not needed for this interpretation, but is convenient for $\alpha\in\prn{0,1}$, since it ensures that vanishing of the density does not produce blow-up on a set of positive $\mu$-measure.
}
The \textit{R\'enyi divergence} of $p$ from $q$ (\cref{def:renyi_divergence}) is then $D_\alpha \prn{p \Vert q} =D_\alpha^p \prn{p \Vert q} $.
\end{definition}
We proceed by showing a R\'enyi-divergence analogue of Claim 2.3 of \citet{harel2025temperature}.
\begin{claim}
\label{claim:renyi_to_potential}
Let $p,q,\mu_1,\mu_2$ be probability measures such that $p$ is Gibbs w.r.t. $q$ with potential $\Psi$ (\cref{def:gibbs_distribution}),
$p \sim  q$, and $\mu_1,\mu_2 \ll p,q$.
Let $\alpha \in \prn{0,1} \cup \prn{1,\infty}$, and assuming integrability, \ie
\[
\mathbb{E}_{\mu_1}\left[e^{-\left(\alpha-1\right)\Psi}\right]<\infty\,,
\quad
\mathbb{E}_{\mu_2}\left[e^{\left(\alpha-1\right)\Psi}\right]<\infty \,.
\]
Then,
\[
D_{\alpha}^{\mu_1}\left(p\Vert q\right)+D_{\alpha}^{\mu_2}\left(q\Vert p\right)=\frac{1}{\alpha-1}\left[\ln\mathbb{E}_{\mu_1}\left[e^{-\left(\alpha-1\right)\Psi}\right]+\ln\mathbb{E}_{\mu_2}\left[e^{\left(\alpha-1\right)\Psi}\right]\right]\,.
\]
\end{claim}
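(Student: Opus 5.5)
We prove \cref{claim:renyi_to_potential} by computing each relative divergence directly from the Gibbs form of the Radon--Nikodym derivative and observing that the normalizing constant cancels between the two terms.

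\textbf{Step 1: the two derivatives.} By \cref{def:gibbs_distribution},
\[
\frac{\mathrm{d}p}{\mathrm{d}q}=Z^{-1}e^{-\Psi}, \qquad Z=\int e^{-\Psi}\,\mathrm{d}q\in(0,\infty).
\]
Since $p\sim q$, the reverse derivative exists and equals the reciprocal $q$-a.e. (equivalently $p$-a.e.):
\[
\frac{\mathrm{d}q}{\mathrm{d}p}=Z e^{\Psi}.
\]
Positivity of $Z$ holds because $q$ is a probability measure and $e^{-\Psi}>0$. Because $\mu_1,\mu_2\ll p,q$, these a.e. identities also hold $\mu_1$- and $\mu_2$-a.e. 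Hence the integrands in \cref{def:relative_renyi_divergence} are well defined under $\mu_1$ and $\mu_2$. The absolute-continuity requirement $p\ll q$ (and $q\ll p$) in that definition is met, so neither divergence is set to $\infty$ by convention.

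\textbf{Step 2: substitute into the definition.} We get
\[
D_\alpha^{\mu_1}(p\Vert q)=\frac{1}{\alpha-1}\ln\!\left(Z^{-(\alpha-1)}\,\mathbb{E}_{\mu_1}\!\left[e^{-(\alpha-1)\Psi}\right]\right)
=-\ln Z+\frac{1}{\alpha-1}\ln\mathbb{E}_{\mu_1}\!\left[e^{-(\alpha-1)\Psi}\right].
\]
Similarly,
\[
D_\alpha^{\mu_2}(q\Vert p)=\ln Z+\frac{1}{\alpha-1}\ln\mathbb{E}_{\mu_2}\!\left[e^{(\alpha-1)\Psi}\right].
\]
Pulling the constant $Z^{\pm(\alpha-1)}$ out of the integral and splitting the logarithm is legitimate because the assumed integrability makes both expectations finite. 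They are also strictly positive, since the integrands are positive and $\mu_i$ are probability measures. Every logarithm is therefore a finite real number.

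\textbf{Step 3: add.} Summing the two expressions, the $\pm\ln Z$ terms cancel, which yields exactly the stated identity. The argument is identical for $\alpha\in(0,1)$ and $\alpha>1$, since only the sign of the prefactor $\frac{1}{\alpha-1}$ changes and it is carried along symbolically.

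The only delicate point is the measure-theoretic bookkeeping in Step 1. We must ensure that $\frac{\mathrm{d}q}{\mathrm{d}p}=\bigl(\frac{\mathrm{d}p}{\mathrm{d}q}\bigr)^{-1}$ holds on a set of full $\mu_i$-measure, and that no $0\cdot\infty$ or $\ln 0$ issue arises. Both follow from $p\sim q$ (through the chain rule for Radon--Nikodym derivatives), from $\mu_i\ll p,q$, and from the finiteness and positivity of the expectations.
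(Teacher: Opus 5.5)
Your proposal is correct and follows the paper's proof essentially verbatim: substitute $\frac{\mathrm{d}p}{\mathrm{d}q}=Z^{-1}e^{-\Psi}$ and its reciprocal into the two relative divergences, pull out the factors $Z^{\mp(\alpha-1)}$, and let the resulting $\pm\ln Z$ terms cancel. The paper leaves implicit the bookkeeping you add, namely the positivity of $Z$, the $\mu_i$-a.e.\ validity of the reciprocal derivative, and the finiteness of the logarithms (which rules out an $\infty-\infty$ sum); making it explicit is a harmless and welcome refinement.
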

\begin{proof}
     By \cref{def:gibbs_distribution}, $\frac{\mathrm{d}p}{\mathrm{d}q}=Z^{-1}e^{-\Psi}$,
where $Z<\infty$ is the partition function (a deterministic constant).
Then
\begin{align*}
& D_{\alpha}^{\mu_1}\left(p\Vert q\right)+D_{\alpha}^{\mu_2}\left(q\Vert p\right) \\
& =\frac{1}{\alpha-1}\ln\left(\int\left(\frac{\mathrm{d}p}{\mathrm{d}q}\right)^{\alpha-1}\mathrm{d}\mu_1\right)+\frac{1}{\alpha-1}\ln\left(\int\left(\frac{\mathrm{d}q}{\mathrm{d}p}\right)^{\alpha-1}\mathrm{d}\mu_2\right)\\
 & =\frac{1}{\alpha-1}\left[\ln\left(\int Z^{-\left(\alpha-1\right)}e^{-\left(\alpha-1\right)\Psi}\mathrm{d}\mu_1\right)+\ln\left(\int Z^{\left(\alpha-1\right)}e^{\left(\alpha-1\right)\Psi}\mathrm{d}\mu_2\right)\right]\\
 & =\frac{1}{\alpha-1}\left[\ln\left(Z^{-\left(\alpha-1\right)}\right)+\ln\left(\int e^{-\left(\alpha-1\right)\Psi}\mathrm{d}\mu_1\right)+\ln\left(Z^{\left(\alpha-1\right)}\right)+\ln\left(\int e^{\left(\alpha-1\right)\Psi}\mathrm{d}\mu_2\right)\right]\\
 & =\frac{1}{\alpha-1}\left[\ln\left(\int e^{-\left(\alpha-1\right)\Psi}\mathrm{d}\mu_1\right)+\ln\left(\int e^{\left(\alpha-1\right)\Psi}\mathrm{d}\mu_2\right)\right]\\
 & =\frac{1}{\alpha-1}\left[\ln\mathbb{E}_{\mu_1}\left[e^{-\left(\alpha-1\right)\Psi}\right]+\ln\mathbb{E}_{\mu_2}\left[e^{\left(\alpha-1\right)\Psi}\right]\right]\,.
\end{align*}
\end{proof}
We now obtain an upper bound on the R\'enyi divergence of $p_t$ to $\nu$ at all times via the second law of thermodynamics,
extending Corollary 2.5 of \citet{harel2025temperature}.
\begin{lemma}[Divergence Bound]
\label{lem:divergence_bound}
Consider a time-invariant Markov process with marginal distributions $p_t,\, t\geq 0$, with a stationary distribution $p_\infty$ which is Gibbs w.r.t. a distribution $\nu$ with potential $\Psi$, such that $p_\infty \sim \nu$, and $\forall t \geq 0$, $p_t \ll \nu, p_\infty$.
Then, we have at any time $t \geq 0$ and for any $\alpha>1$,
\begin{align*}
     D_{\alpha}\left(p_{t}\Vert\nu\right) & \leq D_{\infty}\left(p_{0}\Vert\nu\right)
    +\frac{1}{2 \prn{\alpha-1}}\ln\mathbb{E}_{p_{0}}\left[e^{2\left(\alpha-1\right)\Psi}\right]
    +\frac{1}{2 \prn{\alpha-1}}\ln\mathbb{E}_{p_{t}}\left[e^{-2\left(\alpha-1\right)\Psi}\right]\,.
\end{align*}
\end{lemma}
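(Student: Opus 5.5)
We prove the bound by chaining three ingredients: a decomposition of $\mathrm{d}p_t/\mathrm{d}\nu$ through $p_\infty$, Cauchy--Schwarz, and \cref{cor:second_law_continuous_time} together with \cref{claim:renyi_to_potential}.

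\textbf{Factoring the density.} Since $p_t\ll p_\infty\sim\nu$, the chain rule gives
\[
\frac{\mathrm{d}p_t}{\mathrm{d}\nu}=\frac{\mathrm{d}p_t}{\mathrm{d}p_\infty}\cdot\frac{\mathrm{d}p_\infty}{\mathrm{d}\nu}\quad \nu\text{-a.e.}
\]
Raising to the power $\alpha-1$ and integrating against $p_t$ gives
\[
e^{(\alpha-1)D_\alpha(p_t\Vert\nu)}=\mathbb{E}_{p_t}\!\left[\left(\tfrac{\mathrm{d}p_t}{\mathrm{d}p_\infty}\right)^{\alpha-1}\left(\tfrac{\mathrm{d}p_\infty}{\mathrm{d}\nu}\right)^{\alpha-1}\right].
\]
We apply Cauchy--Schwarz under $p_t$ to split this into the product of $\bigl(\mathbb{E}_{p_t}[(\mathrm{d}p_t/\mathrm{d}p_\infty)^{2(\alpha-1)}]\bigr)^{1/2}$ and $\bigl(\mathbb{E}_{p_t}[(\mathrm{d}p_\infty/\mathrm{d}\nu)^{2(\alpha-1)}]\bigr)^{1/2}$. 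The first factor is $\exp\bigl(\tfrac{2\alpha-1-1}{2}D_{2\alpha-1}(p_t\Vert p_\infty)\bigr)$. Taking logs and dividing by $\alpha-1$ yields
\[
D_\alpha(p_t\Vert\nu)\le D_{2\alpha-1}(p_t\Vert p_\infty)+\tfrac{1}{2(\alpha-1)}\ln\mathbb{E}_{p_t}\!\left[\left(\tfrac{\mathrm{d}p_\infty}{\mathrm{d}\nu}\right)^{2(\alpha-1)}\right].
\]
Here $2\alpha-1>1$, so \cref{cor:second_law_continuous_time} applies and gives $D_{2\alpha-1}(p_t\Vert p_\infty)\le D_{2\alpha-1}(p_0\Vert p_\infty)$.

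\textbf{Bounding the initial term.} Write $\frac{\mathrm{d}p_0}{\mathrm{d}p_\infty}=\frac{\mathrm{d}p_0}{\mathrm{d}\nu}\cdot\frac{\mathrm{d}\nu}{\mathrm{d}p_\infty}$ and bound $\frac{\mathrm{d}p_0}{\mathrm{d}\nu}\le e^{D_\infty(p_0\Vert\nu)}$ $p_0$-a.e. This gives
\[
D_{2\alpha-1}(p_0\Vert p_\infty)\le D_\infty(p_0\Vert\nu)+\tfrac{1}{2(\alpha-1)}\ln\mathbb{E}_{p_0}\!\left[\left(\tfrac{\mathrm{d}\nu}{\mathrm{d}p_\infty}\right)^{2(\alpha-1)}\right].
\]

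\textbf{Cancelling the partition function.} The two residual terms are exactly the relative divergences $D^{p_t}_{2\alpha-1}(p_\infty\Vert\nu)$ and $D^{p_0}_{2\alpha-1}(\nu\Vert p_\infty)$, each scaled by a factor $\tfrac{2(\alpha-1)}{2(\alpha-1)}=1$. By \cref{claim:renyi_to_potential}, applied with order $2\alpha-1$, $\mu_1=p_t$ and $\mu_2=p_0$, their sum equals
\[
\frac{1}{2(\alpha-1)}\left[\ln\mathbb{E}_{p_t}e^{-2(\alpha-1)\Psi}+\ln\mathbb{E}_{p_0}e^{2(\alpha-1)\Psi}\right],
\]
with $Z$ cancelling. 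Directly, $(\mathrm{d}p_\infty/\mathrm{d}\nu)^{2(\alpha-1)}=Z^{-2(\alpha-1)}e^{-2(\alpha-1)\Psi}$ and the reciprocal carries $Z^{2(\alpha-1)}$, so the $\ln Z$ terms cancel exactly. Combining the three steps gives the stated bound.

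\textbf{Main obstacle.} The delicate point is the bookkeeping of exponents, so that the Cauchy--Schwarz split produces order $2\alpha-1$ in both factors and the $Z$ powers match exactly. We also need the measure-theoretic side conditions: the a.e. chain rule requires $p_t\ll p_\infty$, and if an expectation is infinite the bound holds trivially. If Cauchy--Schwarz with equal weights fails to align the terms, we would fall back on Hölder with conjugate exponents chosen so that both sides carry the factor $2(\alpha-1)$.
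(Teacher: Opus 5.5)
Your proposal is correct and follows the paper's proof essentially step for step. The shared steps are the chain rule through $p_\infty$, Cauchy--Schwarz (H\"older with $r=s=2$) under $p_t$, the second law at order $2\alpha-1$, the essential-supremum bound on $\mathrm{d}p_0/\mathrm{d}\nu$ (the paper's H\"older with $r=\infty$, $s=1$), and \cref{claim:renyi_to_potential} at order $2\alpha-1$ with $\mu_1=p_t$, $\mu_2=p_0$ to cancel $Z$; your exponent bookkeeping checks out, so the H\"older fallback you mention is unnecessary.
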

\begin{proof}
Applying the chain rule,
\begin{align*}
D_{\alpha}\left(p_{t}\Vert\nu\right) & =\frac{1}{\alpha-1}\ln\left(\int\left(\frac{\mathrm{d}p_{t}}{\mathrm{d}\nu}\right)^{\alpha-1}\mathrm{d}p_{t}\right) \\
&=\frac{1}{\alpha-1}\ln\left(\int\left(\frac{\mathrm{d}p_{t}}{\mathrm{d}p_{\infty}}\frac{\mathrm{d}p_{\infty}}{\mathrm{d}\nu}\right)^{\alpha-1}\mathrm{d}p_{t}\right)\\
\left[\text{H\"older: }r=s=2\right] & \leq\frac{1}{\alpha-1}\ln\Biggl(\left(\int\left(\frac{\mathrm{d}p_{t}}{\mathrm{d}p_{\infty}}\right)^{2\left(\alpha-1\right)}\mathrm{d}p_{t}\right)^{1/2} \cdot\\
& \qquad \qquad \qquad \qquad \cdot \left(\int\left(\frac{\mathrm{d}p_{\infty}}{\mathrm{d}\nu}\right)^{2\left(\alpha-1\right)}\mathrm{d}p_{t}\right)^{1/2}\Biggr)\\
 & =\frac{1}{2\left(\alpha-1\right)}\ln\left(\int\left(\frac{\mathrm{d}p_{t}}{\mathrm{d}p_{\infty}}\right)^{2\left(\alpha-1\right)}\mathrm{d}p_{t}\right) \\
 & \qquad \qquad + \frac{1}{2\left(\alpha-1\right)}\ln\left(\int\left(\frac{\mathrm{d}p_{\infty}}{\mathrm{d}\nu}\right)^{2\left(\alpha-1\right)}\mathrm{d}p_{t}\right)\\
 & =D_{2\alpha-1}\left(p_{t}\Vert p_{\infty}\right)+D_{2\alpha-1}^{p_{t}}\left(p_{\infty}\Vert\nu\right)\\
\left[\text{\cref{cor:second_law_continuous_time}}\right] & \leq D_{2\alpha-1}\left(p_{0}\Vert p_{\infty}\right)+D_{2\alpha-1}^{p_{t}}\left(p_{\infty}\Vert\nu\right)\\
 \left[\text{Chain rule}\right] & =\frac{1}{2\alpha-2}\ln\left(\int\left(\frac{\mathrm{d}p_{0}}{\mathrm{d}\nu}\frac{\mathrm{d}\nu}{\mathrm{d}p_{\infty}}\right)^{2\alpha-2}\mathrm{d}p_{0}\right)+D_{2\alpha-1}^{p_{t}}\left(p_{\infty}\Vert\nu\right)\\
\left[\text{H\"older: }r=\infty,\,s=1\right] & \leq\frac{1}{2\alpha-2}\ln\left(
\operatorname*{ess\,sup}_{p_0}
\left(\frac{\mathrm{d}p_{0}}{\mathrm{d}\nu}\right)^{2\alpha-2}
\int\left(\frac{\mathrm{d}\nu}{\mathrm{d}p_{\infty}}\right)^{2\alpha-2}\mathrm{d}p_{0}\right) \\
& \qquad \qquad + D_{2\alpha-1}^{p_{t}}\left(p_{\infty}\Vert\nu\right)\\
 & =D_{\infty}\left(p_{0}\Vert\nu\right)+D_{2\alpha-1}^{p_{0}}\left(\nu\Vert p_{\infty}\right)+D_{2\alpha-1}^{p_{t}}\left(p_{\infty}\Vert\nu\right)\\
\left[\text{\cref{claim:renyi_to_potential}}\right] & = D_{\infty}\left(p_{0}\Vert\nu\right)+\frac{1}{2\alpha-2}\left(\ln\mathbb{E}_{p_{t}}\left[e^{-\left(2\alpha-2\right)\Psi}\right]+\ln\mathbb{E}_{p_{0}}\left[e^{\left(2\alpha-2\right)\Psi}\right]\right)\,,
\end{align*}
where the integrability requirements of \cref{claim:renyi_to_potential} are omitted since, when they do not hold, the RHS is infinite and the inequality is trivial.
\end{proof}
\begin{remark}[Avoiding R\'enyi-infinity divergence]
\label{rem:avoid_d_infinity}
 Using $r=\frac{3}{2},\,s=3$ in the first H\"older's inequality application and $r=s=2$ in the second application, we avoid $D_{\infty}\left(p_{0}\Vert\nu\right)$ and achieve 
\begin{align*}
D_{\alpha}\left(p_{t}\Vert\nu\right)  \leq D_{3 \alpha - 2}\left(p_{0}\Vert\nu\right)
    +\frac{1}{3 \prn{\alpha-1}}\prn{\ln\mathbb{E}_{p_{0}}\left[e^{3\left(\alpha-1\right)\Psi}\right] 
    +\ln\mathbb{E}_{p_{t}}\left[e^{-3\left(\alpha-1\right)\Psi}\right]}\,,
\end{align*}
which may be more useful when one does not assume $\nu=p_0$.
\end{remark}

\subsection{R\'enyi change-of-measure}
\label{app:change_of_measure}
We now cite Theorem 8 from \citet{begin2016pac}, stating a change of measure inequality based on the R\'enyi divergence, and provide the proof for completeness.
\begin{lemma}[R\'enyi change-of-measure \citep{begin2016pac}]
\label{lem:begin16}
Let $p,\,q$ be probability measures on a measurable
space $\mathcal{X}$ such that $p\ll q$. For any \emph{non-negative} measurable
function $\phi:\mathcal{X}\to\mathbb{R}$ and for $\alpha>1$, we have
\[
\frac{\alpha}{\alpha-1}\ln\mathbb{E}_{p}\left[\phi\left(x\right)\right]\leq D_{\alpha}\left(p\Vert q\right)+\ln\mathbb{E}_{q}\left[\phi\left(x\right)^{\frac{\alpha}{\alpha-1}}\right]\,.
\]
\end{lemma}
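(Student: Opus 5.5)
The plan is to derive the inequality from H\"older's inequality applied under $q$, after rewriting $\mathbb{E}_p$ as a $q$-expectation via the Radon--Nikodym derivative. Since $p\ll q$, write $\rho\triangleq\frac{\mathrm{d}p}{\mathrm{d}q}\geq 0$. For non-negative measurable $\phi$ we then have
\[
\mathbb{E}_p\left[\phi\left(x\right)\right]=\mathbb{E}_q\left[\rho\left(x\right)\phi\left(x\right)\right].
\]
Non-negativity is what lets this identity, and everything after it, hold in $[0,\infty]$ without any integrability caveats.

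Next apply H\"older's inequality under $q$ with conjugate exponents $r=\alpha$ and $s=\frac{\alpha}{\alpha-1}$, valid since $\alpha>1$:
\[
\mathbb{E}_q\left[\rho\phi\right]\leq\left(\mathbb{E}_q\left[\rho^{\alpha}\right]\right)^{\frac{1}{\alpha}}\left(\mathbb{E}_q\left[\phi^{\frac{\alpha}{\alpha-1}}\right]\right)^{\frac{\alpha-1}{\alpha}}.
\]
The first factor is identified with the R\'enyi divergence of \cref{def:renyi_divergence}:
\[
\mathbb{E}_q\left[\rho^{\alpha}\right]=\int\rho^{\alpha-1}\,\mathrm{d}p=\exp\left(\left(\alpha-1\right)D_\alpha\left(p\Vert q\right)\right).
\]
Substituting, taking logarithms, and multiplying through by $\frac{\alpha}{\alpha-1}>0$ gives
\[
\frac{\alpha}{\alpha-1}\ln\mathbb{E}_p\left[\phi\right]\leq D_\alpha\left(p\Vert q\right)+\ln\mathbb{E}_q\left[\phi^{\frac{\alpha}{\alpha-1}}\right],
\]
which is exactly the claim.

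The only step needing care is the degenerate cases, not the algebra. If $\mathbb{E}_p[\phi]=0$, the left side is $-\infty$ and there is nothing to prove. If either factor on the right is infinite, the inequality holds trivially under the usual conventions. One also has to note that $\mathbb{E}_q[\phi^{\alpha/(\alpha-1)}]=0$ forces $\phi=0$ $q$-a.s., hence $p$-a.s. by absolute continuity, which reduces to the first case. With these conventions fixed, the argument is a single application of H\"older's inequality.
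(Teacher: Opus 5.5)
Your proposal is correct and follows the paper's proof: rewrite $\mathbb{E}_p[\phi]$ as $\mathbb{E}_q\bigl[\tfrac{\mathrm{d}p}{\mathrm{d}q}\phi\bigr]$, apply H\"older's inequality with exponents $\alpha$ and $\tfrac{\alpha}{\alpha-1}$, and identify $\mathbb{E}_q\bigl[(\tfrac{\mathrm{d}p}{\mathrm{d}q})^{\alpha}\bigr]$ with $e^{(\alpha-1)D_\alpha(p\Vert q)}$. Your explicit handling of the degenerate zero and infinite cases is a small addition that the paper leaves implicit.
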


\begin{proof}
We rewrite the expectation over \(p\) as an expectation over \(q\), which is valid under \(p\ll q\), and apply H\"older's inequality with \(r=\alpha\) and \(s=\frac{\alpha}{\alpha-1}\), so that \(\frac{1}{r}+\frac{1}{s}=1\):
\begin{align*}
\frac{\alpha}{\alpha-1}\ln\mathbb{E}_{p}\left[\phi\left(x\right)\right] 
&=\frac{\alpha}{\alpha-1}\ln\mathbb{E}_{q}\left[\frac{\mathrm{d}p}{\mathrm{d}q}\phi\left(x\right)\right]\\
&\leq\frac{\alpha}{\alpha-1}\ln\left(\left(\mathbb{E}_{q}\left[\left(\frac{\mathrm{d}p}{\mathrm{d}q}\right)^{\alpha}\right]\right)^{\frac{1}{\alpha}}\left(\mathbb{E}_{q}\left[\phi\left(x\right)^{\frac{\alpha}{\alpha-1}}\right]\right)^{\frac{\alpha-1}{\alpha}}\right)\\
&=\frac{1}{\alpha-1}\ln\left(\mathbb{E}_{q}\left[\left(\frac{\mathrm{d}p}{\mathrm{d}q}\right)^{\alpha}\right]\right)+\ln\left(\mathbb{E}_{q}\left[\phi\left(x\right)^{\frac{\alpha}{\alpha-1}}\right]\right)\\
&=D_{\alpha}\left(p\Vert q\right)+\ln\left(\mathbb{E}_{q}\left[\phi\left(x\right)^{\frac{\alpha}{\alpha-1}}\right]\right)\,,
\end{align*}
where in the last equality we used the definition of the R\'enyi divergence and rewrote the integral with respect to \(q\). Indeed, since \(p\ll q\), the Radon--Nikodym derivative \(\frac{\mathrm{d}p}{\mathrm{d}q}\) exists, and
\begin{align*}
D_{\alpha}\left(p\Vert q\right)
&\triangleq
\frac{1}{\alpha-1}\ln\left(\int\left(\frac{\mathrm{d}p}{\mathrm{d}q}\right)^{\alpha-1}\mathrm{d}p\right)\\
&=
\frac{1}{\alpha-1}\ln\left(\int\left(\frac{\mathrm{d}p}{\mathrm{d}q}\right)^{\alpha-1}\frac{\mathrm{d}p}{\mathrm{d}q}\mathrm{d}q\right)\\
&=
\frac{1}{\alpha-1}\ln\left(\mathbb{E}_{q}\left[\left(\frac{\mathrm{d}p}{\mathrm{d}q}\right)^{\alpha}\right]\right)\,.
\end{align*}
\end{proof}

\subsection{Conclusion}
\begin{recall}[\cref{thm:second_law_change_of_measure}][Second law of thermodynamics change of measure inequality]
Consider a time invariant Markov process over the state space $\mathcal{X}$ whose marginal distribution is $p_t,\, t\geq 0$ such that $p_t \ll p_0$, with a stationary distribution $p_\infty$ which is Gibbs w.r.t.\! $p_0$ with potential $\Psi$,
and $p_\infty \sim p_0$.
Then, for any \emph{non-negative} measurable function $\phi:\mathcal{X}\to\mathbb{R}$ and $\alpha>1$, we have at any time $t \geq 0$,
\begin{align*}
    \mathbb{E}_{p_{t}} \phi\left(x\right) & \leq \prn{\mathbb{E}_{p_{0}}\left[e^{2\left(\alpha-1\right)\Psi}\right]}^{\frac{1}{2\alpha}}\,
    \prn{\mathbb{E}_{ p_{0}}\left[\phi\left(x\right)^{\frac{\alpha}{\alpha-1}}\right]}^{\frac{\alpha-1}{\alpha}}.
\end{align*}
\end{recall}
\begin{proof}
We simply plug \cref{lem:divergence_bound} with $\nu=p_0$ into \cref{lem:begin16} with $p=p_t,\,q=p_0$ to achieve the desired result, omitting the final non-positive term in \cref{lem:divergence_bound}.%
\end{proof}

\begin{remark}[Absolute continuity requirements]
\label{rem:absolute_continuity_under_standard_transitions}
When imposing $\nu=p_0=\mathcal{N}\prn{\mathbf{0}_{N_p},\,\boldsymbol{\Lambda}^{-1}}$, as in \cref{cor:second_law_for_langevin}, we have that $\nu$ has strictly positive Lebesgue density on $\real^{N_p}$. In addition, for the MSE loss (\cref{eq:mse_loss_def}) we have $\Psi=\beta\mathcal{L}\prn{\boldsymbol{\theta}}<\infty$ Lebesgue-a.e. on $\real^{N_p}$. Hence,
the Gibbs distribution
\[
p_\infty \propto e^{-\Psi}\nu
\]
also has strictly positive Lebesgue density on $\real^{N_p}$, and therefore $p_\infty\sim \nu$.

By \cref{prop:cld_wellposed}, the process is global, $p_\infty$ is stationary, and $p_t\ll p_\infty\sim\nu$ for every $t\geq0$. Hence the absolute-continuity requirements of \cref{thm:second_law_change_of_measure} hold directly.
\end{remark}

\newpage

\section{Assumption table}
\label{app:assumption_table}

\begin{table}[H]
\centering
\small
\setlength{\tabcolsep}{4pt}
\renewcommand{\arraystretch}{1.2}
\caption{Assumptions grouped by their role in the analysis, with where each enters
and whether it can be relaxed.}
\label{tab:assumption_table}
\begin{tabularx}{\linewidth}{@{}>{\raggedright\arraybackslash}p{0.25\linewidth} >{\raggedright\arraybackslash}X >{\raggedright\arraybackslash}p{0.26\linewidth}@{}}
\toprule
\textbf{Assumption} & \textbf{Role in the analysis} & \textbf{Relaxation}\\
\midrule
\multicolumn{3}{@{}l}{\emph{Essential to the mechanism}}\\
\midrule
\addlinespace[2pt]
Gibbs stationary law: the dynamics admit a stationary $p_\infty$ that is Gibbs w.r.t.\ a reference $\nu$ &
Structural requirement enabling the change-of-measure inequality (\cref{thm:second_law_change_of_measure}); holds for regularized CLD with any $\mathbf{\Lambda}\succ\mathbf{0}$ &
None; intrinsic to the mechanism\\
\midrule
\multicolumn{3}{@{}l}{\emph{Control the terms of the change-of-measure inequality}}\\
\midrule
\addlinespace[2pt]
Reference matched to initialization, $\nu=p_0$, by tying $\mathbf{\Lambda}$ to the initialization (\cref{eq:Lambda_def}) &
Eliminates the $D_\infty(p_0\Vert\nu)$ term from the right-hand side (see \cref{lem:divergence_bound}, \cref{rem:avoid_d_infinity}) &
Yes, at the cost of a multiplicative factor in \cref{thm:second_law_change_of_measure} and \cref{cor:second_law_for_langevin}\\
\addlinespace[2pt]
Bounded activation in the last hidden layer (\cref{ass:activation_bounded}) &
Yields a subexponential loss at initialization, bounding the exponential-moment prefactor in \cref{cor:second_law_for_langevin} &
Yes, via output normalization or soft clipping\\
\midrule
\multicolumn{3}{@{}l}{\emph{Artifacts of the analysis}}\\
\midrule
\addlinespace[2pt]
$C^2$ activation with bounded derivatives (\cref{ass:activation_bounded_gradients}) &
$C^2$ simplifies the It\^o analysis; the bounds control the worst-case terms in \cref{eq:the_four_terms} &
Yes, to $C^1$ with bounded, Lipschitz derivative (\cref{footnote:C1_activations})\\
\addlinespace[2pt]
Activation vanishing at origin (\cref{ass:activation_at_origin}) &
Normalizes the network at initialization &
Technical\\
\addlinespace[2pt]
Constant depth $L$ &
Suppresses $e^{O(L)}$ constants in the NTK bound &
Technical\\
\addlinespace[2pt]
Scalar output, no biases &
Standard simplification &
Technical\\
\addlinespace[2pt]
Normalized data, bounded labels (\cref{ass:normalized_data,ass:bounded_labels}) &
Simplifies constants &
By preprocessing\\
\midrule
\multicolumn{3}{@{}l}{\emph{Necessary for any NTK-based argument}}\\
\midrule
\addlinespace[2pt]
Data separability, $\lambda_1>0$ (\cref{ass:data_separability}) &
Keeps the descent term in \cref{eq:example_of_need_to_use_second_law} non-vacuous; a standard non-degeneracy condition. Constrains only the inputs (labels arbitrary); holds w.h.p.\ for i.i.d.\ data on the sphere and for separable data &
None; inherent to NTK analyses\\
\bottomrule
\end{tabularx}
\end{table}

\newpage

\section{Lower bound on the empirical NTK at initialization}
\label{app:ntk_min_eigenvalue}
\subsection{Setting: deep fully connected network}
We follow most of the notation from \cref{sec:setting_deep_nn},
with a feed-forward $L+1$ layer network ($L$ hidden layers), given
by
\begin{align*}
\boldsymbol{\alpha}^{\left(0\right)}\left(\mathbf{x}\right) & =\mathbf{x}\,,\\
\boldsymbol{\alpha}^{\left(l\right)}\left(\mathbf{x}\right) & =\phi\left(s_{l}\mathbf{W}^{\left(l\right)}\boldsymbol{\alpha}^{\left(l-1\right)}\left(\mathbf{x}\right)\right),\quad l\in\left[L\right]\,,\\
f\left(\theta;\mathbf{x}\right) & ={\alpha}^{\left(L+1\right)}\left(\mathbf{x}\right)=s_{L+1}\mathbf{v}^{\top}\boldsymbol{\alpha}^{\left(L\right)}\left(\mathbf{x}\right)\,,
\end{align*}
where $\mathbf{W}^{\left(l\right)}\in\mathbb{R}^{m_{l}\times m_{l-1}},\,l\in\left[L\right]$
are layer-wise weight matrices, $\mathbf{v}\in\mathbb{R}^{m_{L}}$
is the last layer vector, $\phi\left(\cdot\right)$ is the activation
function, $s_{l},\,l\in\left[L+1\right]$ are predetermined positive
constants, and the total set of parameters is denoted by the weight
vector 
\begin{align*}
\theta & \triangleq\left(\mathrm{vec}\left(\mathbf{W}^{\left(1\right)}\right)^{\top},\ldots,\mathrm{vec}\left(\mathbf{W}^{\left(L\right)}\right)^{\top},\mathbf{v}^{\top}\right)^{\top}\in\mathbb{R}^{N_{p}}\,,
\end{align*}
with $m_{0}\triangleq d$ and $N_{p}\triangleq\sum_{l=1}^{L}m_{l}m_{l-1}+m_{L}$.
We assume $L=O\left(1\right)$ to simplify the analysis.

Recall that $c_{\phi,\sigma_{0}}\triangleq\mathbb{E}_{z\sim\mathcal{N}\left(0,\sigma_{0}^{2}\right)}\left[\phi^{2}\left(z\right)\right],\,\nu_{0}^{2}=\frac{\sigma_{0}^{2}}{c_{\phi,\sigma_{0}}},\,\left\Vert \mathbf{x}_{i}\right\Vert _{2}^{2}=c_{\phi,\sigma_{0}}d,\,i\in\left[n\right]$
and assume that each $\mathbf{W}^{\left(l\right)},\,l\in\left[L\right]$
is initialized as 
\[
\mathrm{vec}\left(\mathbf{W}^{\left(l\right)}\right)\sim\mathcal{N}\left(\mathbf{0}_{m_{l}m_{l-1}},\,\frac{\nu_{0}^{2}}{a_{l}}\mathbf{I}_{m_{l}m_{l-1}}\right)\,,
\]
such that $\forall l\in\left[L\right],\,\frac{a_{l}}{s_{l}^{2}}=m_{l-1}$.
The results in this section require no assumption on the initialization of $\mathbf{v}$,
only that it is trainable.
Throughout this section, we assume \(\phi\prn{0}=0\) (\cref{ass:activation_at_origin}) and that \(\abs{\phi'} \leq \alpha_\phi\). In addition, we assume the data separability condition \(\lambda_1>0\) (\cref{ass:data_separability}).\footnote{
The condition \(\lambda_1>0\) hides a subtle property: it excludes cases where the activation and data induce a degenerate random-feature Gram matrix. For example, with a linear activation, \(\lambda_1\) reduces to a data-rank condition and vanishes whenever the normalized data Gram is singular. More generally, finite-degree polynomial activations can only generate finite-dimensional polynomial feature spaces, so a positive lower bound for arbitrary datasets requires a non-polynomial activation.
}
In this section, we do not require the activations to be bounded or their derivative to be Lipschitz continuous.

Let $\mathbf{A}^{\left(l\right)}\in\mathbb{R}^{n\times m_{l}}$ be
the matrix whose rows are $\mathbf{A}_{i,:}^{\left(l\right)}\triangleq\boldsymbol{\alpha}^{\left(l\right)}\left(\mathbf{x}_{i}\right)^{\top},\,l\in\left[L\right],\,i\in\left[n\right]$.
We further denote $\mathbf{w}_{j}^{\left(l\right)}\in\mathbb{R}^{m_{l-1}}$
as the transpose of the $j$-th row of $\mathbf{W}^{\left(l\right)},\,l\in\left[L\right]$, and $m\triangleq\min_{l\in\left[L\right]}m_{l}$.

The analysis in this section closely follows that of \citet{banerjee2023neural}, with adjustments used to tighten the probability of their bound, apply it to our generalized parameterization, and fix a minor error we found in their proof.

\subsection{Analysis}
We begin with a supporting proposition:
\begin{proposition}
\label{prop:sub-gaussian-outputs-per-layer}
For all $l\in\left[L\right],\,j\in\left[m_{l}\right],\,i\in\left[n\right]$
we have that $\left|\alpha_{j}^{\left(l\right)}\left(\mathbf{x}_{i}\right)\right|$
conditioned on $\mathbf{A}^{\left(l-1\right)}$ is subgaussian, with
$\left\Vert \left|\alpha_{j}^{\left(l\right)}\left(\mathbf{x}_{i}\right)\right|\right\Vert _{\psi_{2}}\leq c\frac{\sqrt{2}\alpha_{\phi}\nu_{0}s_{l}}{\sqrt{a_{l}}}\left\Vert \boldsymbol{\alpha}^{\left(l-1\right)}\left(\mathbf{x}_{i}\right)\right\Vert_{2}$
for some absolute constant $c>0$.
\end{proposition}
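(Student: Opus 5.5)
Conditioned on $\mathbf{A}^{(l-1)}$ (equivalently, on all weights of layers $1,\dots,l-1$), the vector $\boldsymbol{\alpha}^{(l-1)}(\mathbf{x}_i)$ is deterministic. The row $\mathbf{w}_j^{(l)}$ is independent of it and distributed as $\mathcal{N}\bigl(\mathbf{0},\frac{\nu_0^2}{a_l}\mathbf{I}_{m_{l-1}}\bigr)$. The pre-activation
\[
z \triangleq s_l\,\mathbf{w}_j^{(l)\top}\boldsymbol{\alpha}^{(l-1)}(\mathbf{x}_i)
\]
is therefore conditionally a centered Gaussian with variance $\sigma_z^2 = \frac{s_l^2\nu_0^2}{a_l}\norm{\boldsymbol{\alpha}^{(l-1)}(\mathbf{x}_i)}_2^2$. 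The plan is to transfer subgaussianity from $z$ to $\alpha_j^{(l)}(\mathbf{x}_i)=\phi(z)$ using the Lipschitz property of $\phi$ and $\phi(0)=0$.

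\textbf{Step 1.} By \cref{ass:activation_at_origin} and $\abs{\phi'}\le\alpha_\phi$, the mean value theorem gives $\abs{\phi(z)} = \abs{\phi(z)-\phi(0)} \le \alpha_\phi\abs{z}$ pointwise.

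\textbf{Step 2.} Monotonicity of the $\psi_2$ norm under pointwise domination (immediate from the Orlicz definition $\inf\{t:\mathbb{E}e^{X^2/t^2}\le 2\}$) yields $\norm{\abs{\phi(z)}}_{\psi_2}\le\alpha_\phi\norm{z}_{\psi_2}$.

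\textbf{Step 3.} For $z\sim\mathcal{N}(0,\sigma_z^2)$, compute directly: $\mathbb{E}e^{z^2/t^2}=(1-2\sigma_z^2/t^2)^{-1/2}$, which is at most $2$ once $t^2\ge \frac{8}{3}\sigma_z^2$. In particular $\norm{z}_{\psi_2}\le c\sqrt{2}\,\sigma_z$ for an absolute constant $c$; one can equally invoke the standard fact $\norm{z}_{\psi_2}\le C\sigma_z$ and absorb the $\sqrt{2}$ into the constant.

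Combining the three steps gives
\[
\norm{\abs{\alpha_j^{(l)}(\mathbf{x}_i)}}_{\psi_2}\le c\,\frac{\sqrt{2}\,\alpha_\phi\nu_0 s_l}{\sqrt{a_l}}\,\norm{\boldsymbol{\alpha}^{(l-1)}(\mathbf{x}_i)}_2 ,
\]
as claimed. For $l=1$ the conditioning is vacuous, since $\boldsymbol{\alpha}^{(0)}=\mathbf{x}_i$ is fixed.

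There is no real obstacle here. The only care needed is measure-theoretic: make precise that conditioning on $\mathbf{A}^{(l-1)}$ leaves $\mathbf{w}_j^{(l)}$ Gaussian and independent, which follows from the independence of the layerwise initializations in \cref{eq:Wv_initialization}. Then all subgaussian norms are computed under the regular conditional distribution.
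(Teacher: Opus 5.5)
Your proof is correct and follows the paper's argument. Conditionally on $\mathbf{A}^{(l-1)}$, the pre-activation is a centered Gaussian with standard deviation $\frac{\nu_0 s_l}{\sqrt{a_l}}\|\boldsymbol{\alpha}^{(l-1)}(\mathbf{x}_i)\|_2$, and $|\phi(z)|\le\alpha_\phi|z|$ transfers its subgaussianity to the activation. The differences are cosmetic: the paper goes through Gaussian tail bounds and then the tail characterization of $\|\cdot\|_{\psi_2}$, while you use monotonicity of the Orlicz norm and compute the Gaussian $\psi_2$ norm exactly, which also covers the degenerate case $\boldsymbol{\alpha}^{(l-1)}(\mathbf{x}_i)=\mathbf{0}$ that the paper handles separately.
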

\begin{proof}
We follow a similar proof to their proof of Lemma A.1.

Observe that for all $j\in\left[m_{l}\right]$, $\alpha_{j}^{\left(l\right)}\left(\mathbf{x}_{i}\right)=\phi\left(s_{l}\left(\mathbf{w}_{j}^{\left(l\right)}\right)^{\top}\boldsymbol{\alpha}^{\left(l-1\right)}\left(\mathbf{x}_{i}\right)\right)$,
where $\mathbf{w}_{j}^{\left(l\right)}\sim\mathcal{N}\left(\mathbf{0}_{m_{l-1}},\frac{\nu_{0}^{2}}{a_{l}}\mathbf{I}_{m_{l-1}}\right)$.

Note that if \(\left\Vert \boldsymbol{\alpha}^{\left(l-1\right)}\left(\mathbf{x}_{i}\right)\right\Vert_2=0\), then
\[
\alpha_j^{\left(l\right)}\left(\mathbf{x}_i\right)=\phi\left(0\right)=0,
\]
and the claim is trivial. Hence assume this norm is positive.

\begin{align*}
&\mathbb{P}\left(\left|s_{l}\left(\mathbf{w}_{j}^{\left(l\right)}\right)^{\top}\boldsymbol{\alpha}^{\left(l-1\right)}\left(\mathbf{x}_{i}\right)\right|\geq\epsilon s_{l}\left\Vert \boldsymbol{\alpha}^{\left(l-1\right)}\left(\mathbf{x}_{i}\right)\right\Vert _{2}\right) \\& =\mathbb{P}\left(\left|\left(\mathbf{w}_{j}^{\left(l\right)}\right)^{\top}\boldsymbol{\alpha}^{\left(l-1\right)}\left(\mathbf{x}_{i}\right)\right|\geq\epsilon\left\Vert \boldsymbol{\alpha}^{\left(l-1\right)}\left(\mathbf{x}_{i}\right)\right\Vert _{2}\right)\\
 & =\mathbb{P}\left(\left|\left(\mathbf{w}_{j}^{\left(l\right)}\right)^{\top}\frac{\boldsymbol{\alpha}^{\left(l-1\right)}\left(\mathbf{x}_{i}\right)}{\left\Vert \boldsymbol{\alpha}^{\left(l-1\right)}\left(\mathbf{x}_{i}\right)\right\Vert _{2}}\right|\geq\epsilon\right)\\
 & \leq2\exp\left(\frac{-\epsilon^{2}a_{l}}{2\nu_{0}^{2}}\right)\,,
\end{align*}
since $\left(\mathbf{w}_{j}^{\left(l\right)}\right)^{\top}\frac{\boldsymbol{\alpha}^{\left(l-1\right)}\left(\mathbf{x}_{i}\right)}{\left\Vert \boldsymbol{\alpha}^{\left(l-1\right)}\left(\mathbf{x}_{i}\right)\right\Vert _{2}}\sim\mathcal{N}\left(0,\frac{\nu_{0}^{2}}{a_{l}}\right)$
when conditioned on $\mathbf{A}^{\left(l-1\right)}$, we applied the
Gaussian tail property \citep[Exercise 2.6]{vershynin2018high}.

Denoting $\tilde{\epsilon}=\epsilon s_{l}\left\Vert \boldsymbol{\alpha}^{\left(l-1\right)}\left(\mathbf{x}_{i}\right)\right\Vert _{2}$,
we get that 
\[
\mathbb{P}\left(\left|s_{l}\left(\mathbf{w}_{j}^{\left(l\right)}\right)^{\top}\boldsymbol{\alpha}^{\left(l-1\right)}\left(\mathbf{x}_{i}\right)\right|\geq\tilde{\epsilon}\right)\leq2\exp\left(-\frac{\tilde{\epsilon}^{2}a_{l}}{2\nu_{0}^{2}s_{l}^{2}\left\Vert \boldsymbol{\alpha}^{\left(l-1\right)}\left(\mathbf{x}_{i}\right)\right\Vert _{2}^{2}}\right)\,.
\]
Since $\phi$ is $\alpha_{\phi}$-Lipschitz and $\phi\left(0\right)=0$,
we have $\left|\alpha_{j}^{\left(l\right)}\left(\mathbf{x}_{i}\right)\right|=\left|\phi\left(s_{l}\left(\mathbf{w}_{j}^{\left(l\right)}\right)^{\top}\boldsymbol{\alpha}^{\left(l-1\right)}\left(\mathbf{x}_{i}\right)\right)\right|\ensuremath{\leq}\alpha_{\phi}\left|s_{l}\left(\mathbf{w}_{j}^{\left(l\right)}\right)^{\top}\boldsymbol{\alpha}^{\left(l-1\right)}\left(\mathbf{x}_{i}\right)\right|$,
thus 
\begin{align*}
\mathbb{P}\left(\left|\phi\left(s_{l}\left(\mathbf{w}_{j}^{\left(l\right)}\right)^{\top}\boldsymbol{\alpha}^{\left(l-1\right)}\left(\mathbf{x}_{i}\right)\right)\right|\geq\tilde{\epsilon}\right) & \leq\mathbb{P}\left(\left|s_{l}\left(\mathbf{w}_{j}^{\left(l\right)}\right)^{\top}\boldsymbol{\alpha}^{\left(l-1\right)}\left(\mathbf{x}_{i}\right)\right|\geq\frac{\tilde{\epsilon}}{\alpha_{\phi}}\right)\\
 & \leq2\exp\left(-\frac{\tilde{\epsilon}^{2}a_{l}}{2\nu_{0}^{2}\alpha_{\phi}^2 s_{l}^{2}\left\Vert \boldsymbol{\alpha}^{\left(l-1\right)}\left(\mathbf{x}_{i}\right)\right\Vert _{2}^{2}}\right)\,,
\end{align*}
proving $\left|\alpha_{j}^{\left(l\right)}\left(\mathbf{x}_{i}\right)\right|$
is subgaussian with $\left\Vert \left|\alpha_{j}^{\left(l\right)}\left(\mathbf{x}_{i}\right)\right|\right\Vert _{\psi_{2}}\leq c\frac{\sqrt{2}\alpha_{\phi}\nu_{0}s_{l}}{\sqrt{a_{l}}}\left\Vert \boldsymbol{\alpha}^{\left(l-1\right)}\left(\mathbf{x}_{i}\right)\right\Vert _{2}$
for some absolute constant $c>0$.
\end{proof}
We now produce a tighter alternative to Lemma A.1 of \citet{banerjee2023neural},
to be used later.
\begin{proposition}
\label{prop:tighter-lemma-a1}For all $l\in\left[L\right],\,j\in\left[m_{l}\right]$
we have that $\left\Vert \mathbf{A}_{:,j}^{\left(l\right)}\right\Vert _{2}$
conditioned on $\mathbf{A}^{\left(l-1\right)}$ is subgaussian, with
$\left\Vert \left\Vert \mathbf{A}_{:,j}^{\left(l\right)}\right\Vert _{2}\right\Vert _{\psi_{2}}\leq c\frac{\sqrt{2}\alpha_{\phi}\nu_{0}s_{l}}{\sqrt{a_{l}}}\left\Vert \mathbf{A}^{\left(l-1\right)}\right\Vert _{\mathrm{F}}$
for some absolute constant $c>0$.
\end{proposition}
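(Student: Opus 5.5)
We fix $l$ and $j$ and condition on $\mathbf{A}^{(l-1)}$ throughout. Then the column $\mathbf{A}^{(l)}_{:,j}$ is a deterministic function of the single Gaussian vector $\mathbf{w}_j^{(l)}\sim\mathcal{N}\bigl(\mathbf{0}_{m_{l-1}},\frac{\nu_0^2}{a_l}\mathbf{I}_{m_{l-1}}\bigr)$:
\[
F\bigl(\mathbf{w}\bigr)\triangleq\Bigl\Vert \phi\bigl(s_l\mathbf{A}^{(l-1)}\mathbf{w}\bigr)\Bigr\Vert_2 .
\]
The natural tool is Gaussian concentration for Lipschitz functions, which gives a subgaussian tail around the mean. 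We want a bound on $\Vert F\Vert_{\psi_2}$ itself, with the Frobenius norm and no $\sqrt{n}$ loss. So rather than summing the per-entry bounds of \cref{prop:sub-gaussian-outputs-per-layer}, which would give a bound of order $\sqrt{n}\max_i\Vert\boldsymbol{\alpha}^{(l-1)}(\mathbf{x}_i)\Vert$ or worse, we control the Lipschitz constant and the mean separately.

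\emph{Step 1 (Lipschitz constant).} Because $\phi$ is $\alpha_\phi$-Lipschitz and applied entrywise, for any $\mathbf{w},\mathbf{w}'$ we have
\[
\abs{F(\mathbf{w})-F(\mathbf{w}')}\le \alpha_\phi s_l\bigl\Vert \mathbf{A}^{(l-1)}(\mathbf{w}-\mathbf{w}')\bigr\Vert_2\le \alpha_\phi s_l\Vert\mathbf{A}^{(l-1)}\Vert_2\,\Vert\mathbf{w}-\mathbf{w}'\Vert_2 .
\]
Since $\Vert\cdot\Vert_2\le\Vert\cdot\Vert_{\mathrm F}$, $F$ is $\alpha_\phi s_l\Vert\mathbf{A}^{(l-1)}\Vert_{\mathrm F}$-Lipschitz. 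Writing $\mathbf{w}=\frac{\nu_0}{\sqrt{a_l}}\mathbf{g}$ with $\mathbf{g}$ standard Gaussian, the function of $\mathbf{g}$ is $L_F\triangleq\frac{\alpha_\phi\nu_0 s_l}{\sqrt{a_l}}\Vert\mathbf{A}^{(l-1)}\Vert_{\mathrm F}$-Lipschitz. Gaussian concentration (e.g.\ \citealp[Theorem 5.2.2]{vershynin2018high}) then gives $\Vert F-\mathbb{E}F\Vert_{\psi_2}\le cL_F$.

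\emph{Step 2 (the mean).} We use Jensen, $\phi(0)=0$, and the Lipschitz property to get
\[
(\mathbb{E}F)^2\le\mathbb{E}F^2=\sum_{i}\mathbb{E}\,\phi\bigl(s_l(\mathbf{w})^\top\boldsymbol{\alpha}^{(l-1)}(\mathbf{x}_i)\bigr)^2\le\alpha_\phi^2 s_l^2\frac{\nu_0^2}{a_l}\sum_i\Vert\boldsymbol{\alpha}^{(l-1)}(\mathbf{x}_i)\Vert_2^2=L_F^2 .
\]
Hence $\mathbb{E}F\le L_F$.

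\emph{Step 3 (combine).} Since $F\ge0$, we have $\Vert F\Vert_{\psi_2}\le\Vert F-\mathbb{E}F\Vert_{\psi_2}+\Vert\mathbb{E}F\Vert_{\psi_2}\le cL_F+L_F/\sqrt{\ln 2}$. Absorbing constants gives the claimed bound $c\frac{\sqrt2\alpha_\phi\nu_0 s_l}{\sqrt{a_l}}\Vert\mathbf{A}^{(l-1)}\Vert_{\mathrm F}$. The degenerate case $\mathbf{A}^{(l-1)}=0$ is trivial, as in \cref{prop:sub-gaussian-outputs-per-layer}.

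The main obstacle is doing Step 1 correctly. One must obtain the Lipschitz constant in terms of the operator norm, or equivalently the Frobenius norm, of $\mathbf{A}^{(l-1)}$, rather than bounding coordinate by coordinate, since the latter loses a factor $\sqrt n$. An alternative, if a tail bound rather than a $\psi_2$-norm bound is preferred, is to replace Step 3 with an explicit tail: $\mathbb{P}(F\ge L_F+t)\le 2e^{-t^2/(2L_F^2)}$, which again yields a subgaussian norm of order $L_F$.
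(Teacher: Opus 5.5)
Your proof is correct, but it follows a different route from the paper's. The paper stays coordinatewise throughout. It takes the per-entry bound $\Vert|\alpha_j^{(l)}(\mathbf{x}_i)|\Vert_{\psi_2}\le K\Vert\boldsymbol{\alpha}^{(l-1)}(\mathbf{x}_i)\Vert_2$ from \cref{prop:sub-gaussian-outputs-per-layer} and squares it to get $\Vert|\alpha_j^{(l)}(\mathbf{x}_i)|^2\Vert_{\psi_1}\le K^2\Vert\boldsymbol{\alpha}^{(l-1)}(\mathbf{x}_i)\Vert_2^2$. It then applies the triangle inequality for $\Vert\cdot\Vert_{\psi_1}$ to $\Vert\mathbf{A}^{(l)}_{:,j}\Vert_2^2=\sum_i|\alpha_j^{(l)}(\mathbf{x}_i)|^2$ and takes a square root.

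Because the summation is done on squares in $\psi_1$, the paper gets exactly $K\Vert\mathbf{A}^{(l-1)}\Vert_{\mathrm{F}}$ with no loss. Your stated motivation, that summing the per-entry bounds costs a factor $\sqrt{n}$, is therefore mistaken. That loss only occurs if one sums the $\psi_2$ norms of the entries themselves.

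What the paper's route buys is that it is elementary and reuses the previous proposition. It also needs nothing about the weight law beyond subgaussianity of the one-dimensional marginals $\mathbf{w}^\top\mathbf{a}$, so it would survive non-Gaussian subgaussian weights. Your Lipschitz-concentration argument would not, since $F$ is not convex.

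What your route buys is a finer statement than the proposition asks for. It uses Gaussian concentration for Lipschitz functions, a heavier and Gaussian-specific tool. The mean $\mathbb{E}F$ is at the Frobenius scale $L_F$. The fluctuations $F-\mathbb{E}F$, however, are subgaussian at the smaller scale $\frac{\alpha_\phi\nu_0 s_l}{\sqrt{a_l}}\Vert\mathbf{A}^{(l-1)}\Vert_2$ set by the operator norm, which you discard when you replace $\Vert\cdot\Vert_2$ by $\Vert\cdot\Vert_{\mathrm{F}}$. That mean-plus-operator-norm tail could be useful in the truncation step of \cref{lem:lemma_4.1_equivalent}, where only the tail of $\Vert\phi(s_l\mathbf{A}^{(l-1)}\mathbf{g})\Vert_2$ above the threshold matters.
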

\begin{proof}
Observe that $\left\Vert \mathbf{A}_{:,j}^{\left(l\right)}\right\Vert _{2}^{2}=\sum_{i=1}^{n}\left(\alpha_{j}^{\left(l\right)}\left(\mathbf{x}_{i}\right)\right)^{2}=\sum_{i=1}^{n}\left|\alpha_{j}^{\left(l\right)}\left(\mathbf{x}_{i}\right)\right|^{2}$.
In \cref{prop:sub-gaussian-outputs-per-layer} we showed that $\forall i\in\left[n\right],\,\left\Vert \left|\alpha_{j}^{\left(l\right)}\left(\mathbf{x}_{i}\right)\right|\right\Vert _{\psi_{2}}\leq c\frac{\sqrt{2}\alpha_{\phi}\nu_{0}s_{l}}{\sqrt{a_{l}}}\left\Vert \boldsymbol{\alpha}^{\left(l-1\right)}\left(\mathbf{x}_{i}\right)\right\Vert _{2}$,
hence $\left\Vert \left|\alpha_{j}^{\left(l\right)}\left(\mathbf{x}_{i}\right)\right|^{2}\right\Vert _{\psi_{1}}\leq\left(c\frac{\sqrt{2}\alpha_{\phi}\nu_{0}s_{l}}{\sqrt{a_{l}}}\right)^{2}\left\Vert \boldsymbol{\alpha}^{\left(l-1\right)}\left(\mathbf{x}_{i}\right)\right\Vert _{2}^{2}$
\citep[Lemma 2.8.5]{vershynin2018high}. From the triangle
inequality for the subexponential norm we have 
\begin{align*}
\left\Vert \left\Vert \mathbf{A}_{:,j}^{\left(l\right)}\right\Vert _{2}^{2}\right\Vert _{\psi_{1}} & =\left\Vert \sum_{i=1}^{n}\left|\alpha_{j}^{\left(l\right)}\left(\mathbf{x}_{i}\right)\right|^{2}\right\Vert _{\psi_{1}}\leq\sum_{i=1}^{n}\left\Vert \left|\alpha_{j}^{\left(l\right)}\left(\mathbf{x}_{i}\right)\right|^{2}\right\Vert _{\psi_{1}}\\
 & \leq\left(c\frac{\sqrt{2}\alpha_{\phi}\nu_{0}s_{l}}{\sqrt{a_{l}}}\right)^{2}\sum_{i=1}^{n}\left\Vert \boldsymbol{\alpha}^{\left(l-1\right)}\left(\mathbf{x}_{i}\right)\right\Vert _{2}^{2}=\left(c\frac{\sqrt{2}\alpha_{\phi}\nu_{0}s_{l}}{\sqrt{a_{l}}}\right)^{2}\left\Vert \mathbf{A}^{\left(l-1\right)}\right\Vert _{\mathrm{F}}^{2}\\
\Rightarrow\left\Vert \left\Vert \mathbf{A}_{:,j}^{\left(l\right)}\right\Vert _{2}\right\Vert _{\psi_{2}} & \leq c\frac{\sqrt{2}\alpha_{\phi}\nu_{0}s_{l}}{\sqrt{a_{l}}}\left\Vert \mathbf{A}^{\left(l-1\right)}\right\Vert _{\mathrm{F}}\,.
\end{align*}
\end{proof}
We proceed to prove a version of Lemma 4.1 in \citet{banerjee2023neural}, adjusted
to general parameterizations and fixing a minor issue that we found in the
original proof. 
\begin{proposition}
\label{lem:lemma_4.1_equivalent}Let $\lambda_{l}\triangleq\lambda_{\min}\left(\mathbb{E}_{\mathbf{g}\sim\mathcal{N}\left(\mathbf{0}_{m_{l-1}},\frac{\nu_{0}^{2}}{a_{l}}\mathbf{I}_{m_{l-1}}\right)}\left[\phi\left(s_{l}\mathbf{A}^{\left(l-1\right)}\mathbf{g}\right)\phi\left(s_{l}\mathbf{A}^{\left(l-1\right)}\mathbf{g}\right)^{\top}\right]\right)$
for some $l\in\left[L\right]$. With $m_{l}=\Omega\left(\max\left[n,\,\ln\left(\frac{nL}{\delta}\right)\frac{1}{\lambda_{l}}\frac{s_{l}^{2}}{a_{l}}\left\Vert \mathbf{A}^{\left(l-1\right)}\right\Vert _{\mathrm{F}}^{2}\max\left[1,\,\ln\left(\frac{1}{\lambda_{l}}\frac{s_{l}^{2}}{a_{l}}\left\Vert \mathbf{A}^{\left(l-1\right)}\right\Vert _{\mathrm{F}}^{2}\right)\right]\right]\right)$,
we have with probability at least $1-\frac{\delta}{L}$ over the initialization
of $\mathbf{W}^{\left(l\right)}$,
\[
\lambda_{\min}\left(\mathbf{A}^{\left(l\right)}\left(\mathbf{A}^{\left(l\right)}\right)^{\top}\right)\geq\frac{m_{l}\lambda_{l}}{4}\,.
\]
\end{proposition}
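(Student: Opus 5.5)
Conditioned on $\mathbf{A}^{(l-1)}$, write $\mathbf{A}^{(l)}(\mathbf{A}^{(l)})^\top=\sum_{j=1}^{m_l}\mathbf{A}^{(l)}_{:,j}(\mathbf{A}^{(l)}_{:,j})^\top$, where the columns $\mathbf{A}^{(l)}_{:,j}=\phi\bigl(s_l\mathbf{A}^{(l-1)}\mathbf{w}^{(l)}_j\bigr)$ are i.i.d.\ random vectors in $\mathbb{R}^n$. Each summand is positive semidefinite, and the mean of the sum is $m_l\boldsymbol{\Sigma}_l$ with $\lambda_{\min}(\boldsymbol{\Sigma}_l)=\lambda_l$. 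The plan is a lower-tail matrix Chernoff bound (Tropp) for sums of independent PSD matrices. Since the summands are unbounded, we first truncate and then apply Chernoff to the truncated sum.

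Fix a threshold $R^2$ and define the truncated columns $\tilde{\mathbf{a}}_j\triangleq\mathbf{A}^{(l)}_{:,j}\mathds{1}\{\|\mathbf{A}^{(l)}_{:,j}\|_2\le R\}$. Then
\[
\mathbf{A}^{(l)}(\mathbf{A}^{(l)})^\top\succeq\sum_j\tilde{\mathbf{a}}_j\tilde{\mathbf{a}}_j^\top,
\]
and each truncated summand has operator norm at most $R^2$. Matrix Chernoff with deviation $1/2$ then gives
\[
\lambda_{\min}\Bigl(\sum_j\tilde{\mathbf{a}}_j\tilde{\mathbf{a}}_j^\top\Bigr)\ge \tfrac12 m_l\tilde\lambda
\]
with failure probability at most $n\exp\bigl(-c\,m_l\tilde\lambda/R^2\bigr)$, where $\tilde\lambda=\lambda_{\min}(\mathbb{E}\,\tilde{\mathbf{a}}\tilde{\mathbf{a}}^\top)$. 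We set this failure probability to $\delta/L$.

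We expect the main obstacle, and the place where the ``minor issue'' in the original proof sits, to be showing that truncation costs at most half the eigenvalue, i.e.\ $\tilde\lambda\ge\lambda_l/2$. Since
\[
\mathbb{E}\,\mathbf{a}\mathbf{a}^\top-\mathbb{E}\,\tilde{\mathbf{a}}\tilde{\mathbf{a}}^\top=\mathbb{E}\bigl[\mathbf{a}\mathbf{a}^\top\mathds{1}\{\|\mathbf{a}\|>R\}\bigr]\preceq \mathbb{E}\bigl[\|\mathbf{a}\|_2^2\mathds{1}\{\|\mathbf{a}\|_2>R\}\bigr]\mathbf{I},
\]
we bound this tail second moment using \cref{prop:tighter-lemma-a1}. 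With $K\triangleq c\,\alpha_\phi\nu_0 s_l\|\mathbf{A}^{(l-1)}\|_{\mathrm F}/\sqrt{a_l}$ the subgaussian norm of $\|\mathbf{a}\|_2$, integrating the tail gives $\mathbb{E}[\|\mathbf{a}\|^2\mathds{1}\{\|\mathbf{a}\|>R\}]\lesssim (R^2+K^2)e^{-R^2/K^2}$. Choosing
\[
R^2\asymp K^2\max\bigl[1,\ln(K^2/\lambda_l)\bigr]
\]
makes this at most $\lambda_l/2$, so $\tilde\lambda\ge\lambda_l/2$.

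With this $R$, Chernoff yields $\lambda_{\min}\ge m_l\lambda_l/4$ provided
\[
m_l\gtrsim \frac{R^2}{\lambda_l}\ln\frac{nL}{\delta}\asymp \ln\Bigl(\frac{nL}{\delta}\Bigr)\frac{1}{\lambda_l}\frac{s_l^2}{a_l}\|\mathbf{A}^{(l-1)}\|_{\mathrm F}^2\max\Bigl[1,\ln\Bigl(\frac{1}{\lambda_l}\frac{s_l^2}{a_l}\|\mathbf{A}^{(l-1)}\|_{\mathrm F}^2\Bigr)\Bigr],
\]
which matches the stated width requirement up to constants (with $\alpha_\phi,\nu_0$ absorbed). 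The extra $m_l=\Omega(n)$ condition is kept for compatibility with later use; it could also serve, alternatively, to handle the case where $\ln(nL/\delta)$ is small. Since everything above is conditional on $\mathbf{A}^{(l-1)}$ and $\mathbf{W}^{(l)}$ is independent of it, the bound holds with probability at least $1-\delta/L$ over $\mathbf{W}^{(l)}$, as claimed.
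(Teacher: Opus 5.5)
Your proposal is correct and takes essentially the same route as the paper. It truncates the columns at a norm threshold, applies the lower-tail matrix Chernoff bound to the truncated sum, and controls the truncation loss in the expected Gram matrix via the subgaussian tail from \cref{prop:tighter-lemma-a1}. Your tail-integral bound $(R^2+K^2)e^{-cR^2/K^2}$ and your threshold $R^2\asymp K^2\max[1,\ln(K^2/\lambda_l)]$ match the paper's corrected computation and its choice of $t$.
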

\begin{proof}
The proof follows theirs very closely for the most part.
A necessary condition for
$\mathbf{A}^{\left(l\right)}\left(\mathbf{A}^{\left(l\right)}\right)^{\top}$
 to be full rank is $m_{l}\geq n$, hence we impose $m_{l}\geq n$.

We define the matrix $\hat{\mathbf{A}}^{\left(l\right)}\in\mathbb{R}^{n\times m_{l}}$
such that for $j\in\left[m_{l}\right]$, the $j$-th column of $\hat{\mathbf{A}}^{\left(l\right)}$
is
\[
\hat{\mathbf{A}}_{:,j}^{\left(l\right)}=\phi\left(s_{l}\mathbf{A}^{\left(l-1\right)}\mathbf{w}_{j}^{\left(l\right)}\right)\mathds{1}_{\left[\left\Vert \phi\left(s_{l}\mathbf{A}^{\left(l-1\right)}\mathbf{w}_{j}^{\left(l\right)}\right)\right\Vert _{2}\leq t\right]}\,,
\]
where $\left(\mathbf{w}_{j}^{\left(l\right)}\right)^{\top}$ is the
$j$-th row of $\mathbf{W}^{\left(l\right)}$, $\mathds{1}_{\left[B\right]}$
is the indicator function of the event $B$, and $t>0$. From \cref{claim:zero_columns_smaller}, we have that
\begin{equation}
\lambda_{\min}\left(\hat{\mathbf{A}}^{\left(l\right)}\left(\hat{\mathbf{A}}^{\left(l\right)}\right)^{\top}\right)\leq\lambda_{\min}\left(\mathbf{A}^{\left(l\right)}\left(\mathbf{A}^{\left(l\right)}\right)^{\top}\right)\,.\label{eq:lambda_hat_smaller}
\end{equation}
Since $\hat{\mathbf{A}}_{:,j}^{\left(l\right)}\in\mathbb{R}^{n}$,
we have $\lambda_{\max}\left(\hat{\mathbf{A}}_{:,j}^{\left(l\right)}\left(\hat{\mathbf{A}}_{:,j}^{\left(l\right)}\right)^{\top}\right)=\left\Vert \hat{\mathbf{A}}_{:,j}^{\left(l\right)}\right\Vert ^{2}\leq t^{2}$,
by definition of $\hat{\mathbf{A}}_{:,j}^{\left(l\right)}$. Since
$\hat{\mathbf{A}}^{\left(l\right)}\left(\hat{\mathbf{A}}^{\left(l\right)}\right)^{\top}=\sum_{j\in\left[m_{l}\right]}\hat{\mathbf{A}}_{:,j}^{\left(l\right)}\left(\hat{\mathbf{A}}_{:,j}^{\left(l\right)}\right)^{\top}$,
we can use the matrix Chernoff bound, and observe that for any $\epsilon\in\left[0,1\right)$,
\begin{align*}
& \mathbb{P}\left(\lambda_{\min}\left(\hat{\mathbf{A}}^{\left(l\right)}\left(\hat{\mathbf{A}}^{\left(l\right)}\right)^{\top}\right)
 \leq\left(1-\epsilon\right)\lambda_{\min}\left(\mathbb{E}_{\mathbf{W}^{\left(l\right)}}\left[\hat{\mathbf{A}}^{\left(l\right)}\left(\hat{\mathbf{A}}^{\left(l\right)}\right)^{\top}\right]\right)\right) \\
& \leq n\left(\frac{e^{-\epsilon}}{\left(1-\epsilon\right)^{1-\epsilon}}\right)^{\nicefrac{\lambda_{\min}\left(\mathbb{E}_{\mathbf{W}^{\left(l\right)}}\left[\hat{\mathbf{A}}^{\left(l\right)}\left(\hat{\mathbf{A}}^{\left(l\right)}\right)^{\top}\right]\right)}{t^{2}}}\,.
\end{align*}
Recall that $\forall j\in\left[m_{l}\right],\,\mathbf{w}_{j}^{\left(l\right)}\sim\mathcal{N}\left(\mathbf{0}_{m_{l-1}},\frac{\nu_{0}^{2}}{a_{l}}\mathbf{I}_{m_{l-1}}\right)$,
and then
\begin{align*}
&\mathbb{E}_{\mathbf{W}^{\left(l\right)}}\left[\hat{\mathbf{A}}^{\left(l\right)}\left(\hat{\mathbf{A}}^{\left(l\right)}\right)^{\top}\right] \\
& = \mathbb{E}_{\mathbf{W}^{\left(l\right)}}\left[\sum_{j\in\left[m_{l}\right]}\hat{\mathbf{A}}_{:,j}^{\left(l\right)}\left(\hat{\mathbf{A}}_{:,j}^{\left(l\right)}\right)^{\top}\right]\\
 & =\sum_{j\in\left[m_{l}\right]}\mathbb{E}_{\mathbf{w}_{j}^{\left(l\right)}}\left[\hat{\mathbf{A}}_{:,j}^{\left(l\right)}\left(\hat{\mathbf{A}}_{:,j}^{\left(l\right)}\right)^{\top}\right]\\
 & =m_{l}\mathbb{E}_{\mathbf{g}\sim\mathcal{N}\left(\mathbf{0}_{m_{l-1}},\frac{\nu_{0}^{2}}{a_{l}}\mathbf{I}_{m_{l-1}}\right)}\left[\phi\left(s_{l}\mathbf{A}^{\left(l-1\right)}\mathbf{g}\right)\phi\left(s_{l}\mathbf{A}^{\left(l-1\right)}\mathbf{g}\right)^{\top}\mathds{1}_{\left[\left\Vert \phi\left(s_{l}\mathbf{A}^{\left(l-1\right)}\mathbf{g}\right)\right\Vert _{2}\leq t\right]}\right]\,.
\end{align*}
Denoting 
\begin{align*}
    \mathbf{G}&=\mathbb{E}_{\mathbf{g}\sim\mathcal{N}\left(\mathbf{0}_{m_{l-1}},\frac{\nu_{0}^{2}}{a_{l}}\mathbf{I}_{m_{l-1}}\right)}\left[\phi\left(s_{l}\mathbf{A}^{\left(l-1\right)}\mathbf{g}\right)\phi\left(s_{l}\mathbf{A}^{\left(l-1\right)}\mathbf{g}\right)^{\top}\right]\,, \\
\hat{\mathbf{G}}&=\mathbb{E}_{\mathbf{g}\sim\mathcal{N}\left(\mathbf{0}_{m_{l-1}},\frac{\nu_{0}^{2}}{a_{l}}\mathbf{I}_{m_{l-1}}\right)}\left[\phi\left(s_{l}\mathbf{A}^{\left(l-1\right)}\mathbf{g}\right)\phi\left(s_{l}\mathbf{A}^{\left(l-1\right)}\mathbf{g}\right)^{\top}\mathds{1}_{\left[\left\Vert \phi\left(s_{l}\mathbf{A}^{\left(l-1\right)}\mathbf{g}\right)\right\Vert _{2}\leq t\right]}\right]\,,
\end{align*}
we have for $\epsilon=\frac{1}{2}$,
\begin{align*}
\mathbb{P}\left(\lambda_{\min}\left(\hat{\mathbf{A}}^{\left(l\right)}\left(\hat{\mathbf{A}}^{\left(l\right)}\right)^{\top}\right)\leq\frac{m_{l}}{2}\lambda_{\min}\left(\hat{\mathbf{G}}\right)\right) & \leq n\left(\frac{e^{-1/2}}{\left(1/2\right)^{1/2}}\right)^{\nicefrac{m_{l}\lambda_{\min}\left(\hat{\mathbf{G}}\right)}{t^{2}}}\\
 & =n\exp\left(\ln\left(\sqrt{\frac{2}{e}}\right)\frac{m_{l}\lambda_{\min}\left(\hat{\mathbf{G}}\right)}{t^{2}}\right)\\
 & =\exp\left(-\frac{\left(1-\ln2\right)m_{l}\lambda_{\min}\left(\hat{\mathbf{G}}\right)}{2t^{2}}+\ln n\right)\,.
\end{align*}
We denote $\tilde{\delta}\triangleq\exp\left(-\frac{\left(1-\ln2\right)m_{l}\lambda_{\min}\left(\hat{\mathbf{G}}\right)}{2t^{2}}+\ln n\right)$,
and then we have with probability at least $1-\tilde{\delta}$,
\begin{equation}
\lambda_{\min}\left(\hat{\mathbf{A}}^{\left(l\right)}\left(\hat{\mathbf{A}}^{\left(l\right)}\right)^{\top}\right)\geq\frac{m_{l}}{2}\lambda_{\min}\left(\hat{\mathbf{G}}\right)\,.\label{eq:lambda_min_A_hat}
\end{equation}
Now we observe that
\begin{align*}
\left\Vert \hat{\mathbf{G}}-\mathbf{G}\right\Vert _{2} 
& =\Biggl\Vert \mathbb{E}_{\mathbf{g}\sim\mathcal{N}\left(\mathbf{0}_{m_{l-1}},\frac{\nu_{0}^{2}}{a_{l}}\mathbf{I}_{m_{l-1}}\right)}\biggl[\phi\left(s_{l}\mathbf{A}^{\left(l-1\right)}\mathbf{g}\right)\phi\left(s_{l}\mathbf{A}^{\left(l-1\right)}\mathbf{g}\right)^{\top} \cdot\\
& \qquad \qquad \qquad \qquad \qquad \qquad \cdot\left(1-\mathds{1}_{\left[\left\Vert \phi\left(s_{l}\mathbf{A}^{\left(l-1\right)}\mathbf{g}\right)\right\Vert _{2}\leq t\right]}\right)\biggr]\Biggr\Vert _{2}\\
 & =\left\Vert \mathbb{E}_{\mathbf{g}\sim\mathcal{N}\left(\mathbf{0}_{m_{l-1}},\frac{\nu_{0}^{2}}{a_{l}}\mathbf{I}_{m_{l-1}}\right)}\left[\phi\left(s_{l}\mathbf{A}^{\left(l-1\right)}\mathbf{g}\right)\phi\left(s_{l}\mathbf{A}^{\left(l-1\right)}\mathbf{g}\right)^{\top} \! \mathds{1}_{\left[\left\Vert \phi\left(s_{l}\mathbf{A}^{\left(l-1\right)}\mathbf{g}\right)\right\Vert _{2}>t\right]}\right]\right\Vert _{2}\\
\left[\text{Jensen}\right] & \leq\mathbb{E}_{\mathbf{g}\sim\mathcal{N}\left(\mathbf{0}_{m_{l-1}},\frac{\nu_{0}^{2}}{a_{l}}\mathbf{I}_{m_{l-1}}\right)}\! \left[\left\Vert \phi\left(s_{l}\mathbf{A}^{\left(l-1\right)}\mathbf{g}\right)\phi\left(s_{l}\mathbf{A}^{\left(l-1\right)}\mathbf{g}\right)^{\top} \! \mathds{1}_{\left[\left\Vert \phi\left(s_{l}\mathbf{A}^{\left(l-1\right)}\mathbf{g}\right)\right\Vert _{2}>t\right]}\right\Vert _{2}\right]\\
 & =\mathbb{E}_{\mathbf{g}\sim\mathcal{N}\left(\mathbf{0}_{m_{l-1}},\frac{\nu_{0}^{2}}{a_{l}}\mathbf{I}_{m_{l-1}}\right)} \! \left[\left\Vert \phi\left(s_{l}\mathbf{A}^{\left(l-1\right)}\mathbf{g}\right)\right\Vert _{2}^{2}\mathds{1}_{\left[\left\Vert \phi\left(s_{l}\mathbf{A}^{\left(l-1\right)}\mathbf{g}\right)\right\Vert _{2}>t\right]}\right]\\
 & =\int_{s=0}^{\infty}\mathbb{P}\left(\left\Vert \phi\left(s_{l}\mathbf{A}^{\left(l-1\right)}\mathbf{g}\right)\right\Vert _{2}^{2}\mathds{1}_{\left[\left\Vert \phi\left(s_{l}\mathbf{A}^{\left(l-1\right)}\mathbf{g}\right)\right\Vert _{2}>t\right]}>s\right)\mathrm{d}s\\
 & =\int_{s=0}^{\infty}\mathbb{P}\left(\left\Vert \phi\left(s_{l}\mathbf{A}^{\left(l-1\right)}\mathbf{g}\right)\right\Vert _{2}\mathds{1}_{\left[\left\Vert \phi\left(s_{l}\mathbf{A}^{\left(l-1\right)}\mathbf{g}\right)\right\Vert _{2}>t\right]}>\sqrt{s}\right)\mathrm{d}s\,,
\end{align*}
where Jensen could be used since the operator norm is convex. This
is the first place our proof deviates from that of \citet{banerjee2023neural},
by correcting the first transition in the following:
\begin{align*}
\left\Vert \hat{\mathbf{G}}-\mathbf{G}\right\Vert _{2} & \leq\int_{s=0}^{\infty}\mathbb{P}\left(\left\Vert \phi\left(s_{l}\mathbf{A}^{\left(l-1\right)}\mathbf{g}\right)\right\Vert _{2}\mathds{1}_{\left[\left\Vert \phi\left(s_{l}\mathbf{A}^{\left(l-1\right)}\mathbf{g}\right)\right\Vert _{2}>t\right]}>\sqrt{s}\right)\mathrm{d}s\\
 & =\int_{0}^{t^{2}}\mathbb{P}\left(\left\Vert \phi\left(s_{l}\mathbf{A}^{\left(l-1\right)}\mathbf{g}\right)\right\Vert _{2}>t\right)\mathrm{d}s+\int_{t^{2}}^{\infty}\mathbb{P}\left(\left\Vert \phi\left(s_{l}\mathbf{A}^{\left(l-1\right)}\mathbf{g}\right)\right\Vert _{2}>\sqrt{s}\right)\mathrm{d}s\,,
\end{align*}
since when $0<\sqrt{s}<t$ the event $x\mathds{1}_{\left[x>t\right]}>\sqrt{s}$
occurs iff $x>t$, and when $\sqrt{s}\geq t$ the event $x\mathds{1}_{\left[x>t\right]}>\sqrt{s}$
occurs iff $x>\sqrt{s}$.

From \cref{prop:tighter-lemma-a1} we have that $\left\Vert \mathbf{A}_{:,j}^{\left(l\right)}\right\Vert _{2}=\left\Vert \phi\left(s_{l}\mathbf{A}^{\left(l-1\right)}\mathbf{w}_{j}^{\left(l\right)}\right)\right\Vert _{2}$
is subgaussian, with $\left\Vert \left\Vert \mathbf{A}_{:,j}^{\left(l\right)}\right\Vert _{2}\right\Vert _{\psi_{2}}\leq c\frac{\sqrt{2}\alpha_{\phi}\nu_{0}s_{l}}{\sqrt{a_{l}}}\left\Vert \mathbf{A}^{\left(l-1\right)}\right\Vert _{\mathrm{F}}$.
Since $\mathbf{g}$ and $\mathbf{w}_{j}^{\left(l\right)}$ have the
same distribution, we observe that $\left\Vert \left\Vert \phi\left(s_{l}\mathbf{A}^{\left(l-1\right)}\mathbf{g}\right)\right\Vert _{2}\right\Vert _{\psi_{2}}\leq c\frac{\sqrt{2}\alpha_{\phi}\nu_{0}s_{l}}{\sqrt{a_{l}}}\left\Vert \mathbf{A}^{\left(l-1\right)}\right\Vert _{\mathrm{F}}$.
We denote $b\triangleq c\frac{\sqrt{2}\alpha_{\phi}\nu_{0}s_{l}}{\sqrt{a_{l}}}\left\Vert \mathbf{A}^{\left(l-1\right)}\right\Vert _{\mathrm{F}}$.
Then we have
\begin{align*}
\left\Vert \hat{\mathbf{G}}-\mathbf{G}\right\Vert _{2} & \leq\int_{0}^{t^{2}}\mathbb{P}\left(\left\Vert \phi\left(s_{l}\mathbf{A}^{\left(l-1\right)}\mathbf{g}\right)\right\Vert _{2}>t\right)\mathrm{d}s+\int_{t^{2}}^{\infty}\mathbb{P}\left(\left\Vert \phi\left(s_{l}\mathbf{A}^{\left(l-1\right)}\mathbf{g}\right)\right\Vert _{2}>\sqrt{s}\right)\mathrm{d}s\\
 & \leq2t^{2}\exp\left(-\frac{c_{1}}{b^{2}}t^{2}\right)+\int_{t^{2}}^{\infty}2\exp\left(-\frac{c_{1}}{b^{2}}s\right)\mathrm{d}s\\
 & =2t^{2}\exp\left(-\frac{c_{1}}{b^{2}}t^{2}\right)+2\frac{b^{2}}{c_1}\exp\left(-\frac{c_{1}}{b^{2}}t^{2}\right)\\
 & =2\left(t^{2}+\frac{b^{2}}{c_{1}}\right)\exp\left(-\frac{c_{1}}{b^{2}}t^{2}\right)\,.
\end{align*}
Let
\[
R_l\triangleq\max\left\{e,\frac{4b^2}{\lambda_lc_1}\right\},
\qquad
t\triangleq\frac{b}{\sqrt{c_1}}\sqrt{2\ln\left(8R_l\right)}\,.
\]
Then all logarithms and roots are well-defined. Using the bound above, and writing
\[
u\triangleq\frac{c_1t^2}{b^2}=2\ln\left(8R_l\right),
\]
we get
\begin{align*}
\left\Vert \hat{\mathbf{G}}-\mathbf{G}\right\Vert_2
&\leq
2\left(t^2+\frac{b^2}{c_1}\right)
\exp\left(-\frac{c_1}{b^2}t^2\right)\\
&=
2\frac{b^2}{c_1}\left(u+1\right)e^{-u}\\
&=
2\frac{b^2}{c_1}\left(2\ln\left(8R_l\right)+1\right)\frac{1}{\left(8R_l\right)^2}\,.
\end{align*}
Since \(R_l\geq e\), we have
\[
2\ln\left(8R_l\right)+1\leq 8R_l\,.
\]
Hence,
\[
\left\Vert \hat{\mathbf{G}}-\mathbf{G}\right\Vert_2
\leq
2\frac{b^2}{c_1}\frac{8R_l}{64R_l^2}
=
\frac{1}{4}\frac{b^2}{c_1R_l}\,.
\]
Moreover, by definition of \(R_l\),
\[
R_l\geq\frac{4b^2}{\lambda_lc_1}\,,
\quad\Longrightarrow
\frac{b^2}{c_1R_l}\leq\frac{\lambda_l}{4}\,.
\]
Therefore,
\[
\left\Vert \hat{\mathbf{G}}-\mathbf{G}\right\Vert_2
\leq
\frac{\lambda_l}{16}
\leq
\frac{\lambda_l}{2}\,.
\]
From the definition of \(\mathbf{G}\), we have
\[
\lambda_l=\lambda_{\min}\left(\mathbf{G}\right)\,,
\]
so Weyl's inequality gives
\[
\lambda_{\min}\left(\hat{\mathbf{G}}\right)
\geq
\lambda_{\min}\left(\mathbf{G}\right)
-
\left\Vert \hat{\mathbf{G}}-\mathbf{G}\right\Vert_2
\geq
\frac{\lambda_l}{2}\,.
\]
From \cref{eq:lambda_hat_smaller}, \cref{eq:lambda_min_A_hat}, we then have with probability at least \(1-\tilde{\delta}\),
\[
\lambda_{\min}\left(\mathbf{A}^{\left(l\right)}\left(\mathbf{A}^{\left(l\right)}\right)^{\top}\right)
\geq
\lambda_{\min}\left(\hat{\mathbf{A}}^{\left(l\right)}\left(\hat{\mathbf{A}}^{\left(l\right)}\right)^{\top}\right)
\geq
\frac{m_l}{2}\lambda_{\min}\left(\hat{\mathbf{G}}\right)
\geq
\frac{m_l\lambda_l}{4}\,,
\]
where
\begin{align*}
\tilde{\delta}
&=
\exp\left(
-\frac{\left(1-\ln2\right)m_l\lambda_{\min}\left(\hat{\mathbf{G}}\right)}{2t^2}
+\ln n
\right)\\
&\leq
\exp\left(
-\frac{\left(1-\ln2\right)m_l\lambda_l}{4t^2}
+\ln n
\right)\\
&=
\exp\left(
-\frac{\left(1-\ln2\right)c_1}{8}
\frac{m_l\lambda_l}{b^2\ln\left(8R_l\right)}
+\ln n
\right)\\
&\leq
\exp\left(
-c_2
\frac{m_l\lambda_l}{b^2\max\left[1,\ln\left(\frac{4b^2}{\lambda_lc_1}\right)\right]}
+\ln n
\right),
\end{align*}
for an absolute constant \(c_2>0\), where in the last step we used
\[
\ln\left(8R_l\right)
=
O\left(\max\left[1,\ln\left(\frac{4b^2}{\lambda_lc_1}\right)\right]\right).
\]
We now demand \(\tilde{\delta}\leq\frac{\delta}{L}\), which is ensured by
\[
m_l
\geq
\frac{1}{c_2}
\ln\left(\frac{nL}{\delta}\right)
\frac{b^2}{\lambda_l}
\max\left[1,\ln\left(\frac{4b^2}{\lambda_lc_1}\right)\right]\,.
\]
Substituting
\[
b
=
c\frac{\sqrt{2}\alpha_{\phi}\nu_0s_l}{\sqrt{a_l}}
\left\Vert \mathbf{A}^{\left(l-1\right)}\right\Vert_{\mathrm{F}}\,,
\]
we obtain the sufficient condition
\begin{align*}
m_l
&\gtrsim
\ln\left(\frac{nL}{\delta}\right)
\frac{1}{\lambda_l}
\frac{s_l^2}{a_l}
\left\Vert \mathbf{A}^{\left(l-1\right)}\right\Vert_{\mathrm{F}}^2
\max\left[
1,
\ln\left(
\frac{1}{\lambda_l}
\frac{s_l^2}{a_l}
\left\Vert \mathbf{A}^{\left(l-1\right)}\right\Vert_{\mathrm{F}}^2
\right)
\right]\,,
\end{align*}
after absorbing constants depending only on \(\alpha_\phi,\nu_0,c,c_1\) into the implicit constant.
Together with the imposed condition \(m_l\geq n\), this is precisely the stated lower bound on \(m_l\), up to absolute constants. Hence \(\tilde{\delta}\leq \delta/L\), and the claimed bound holds with probability at least \(1-\delta/L\).
\end{proof}
We now proceed to prove an analogue of their \citep{banerjee2023neural} Lemma
4.2, again adjusting for general parameterization and, importantly,
improving the bound's probability.

First, we need to restate their Proposition A.2, and adjust the proof
to our modified settings. Specifically, in this proof we differ from
theirs by the generalized gradient bound $\alpha_{\phi}$ of \cref{ass:activation_bounded_gradients}.
\begin{proposition}
\label{prop:phi_squared_distance_bound}With $c_{\phi,\sigma_{0}}\triangleq\mathbb{E}_{z\sim\mathcal{N}\left(0,\sigma_{0}^{2}\right)}\left[\phi^{2}\left(z\right)\right]$
and assuming $\phi\left(0\right)=0,\,\left|\dot{\phi}\right|\leq\alpha_{\phi}$,
we have for all $\zeta>0$,
\[
c_{\phi,\sigma_{0}}-\alpha_{\phi}^{2}\sigma_{0}^{2}\left|\zeta^{2}-1\right|\leq\mathbb{E}_{z\sim\mathcal{N}\left(0,\sigma_{0}^{2}\right)}\left[\phi^{2}\left(\zeta z\right)\right]\leq c_{\phi,\sigma_{0}}+\alpha_{\phi}^{2}\sigma_{0}^{2}\left|\zeta^{2}-1\right|\,.
\]
\end{proposition}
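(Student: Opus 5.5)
We prove both inequalities at once by bounding $\abs{\phi^2(\zeta z)-\phi^2(z)}$ pointwise and then integrating against the Gaussian. The key identity is
\[
\phi^2(\zeta z)-\phi^2(z)=\int_{1}^{\zeta}\frac{\mathrm{d}}{\mathrm{d}u}\phi^2(uz)\,\mathrm{d}u=\int_1^\zeta 2\phi(uz)\,\phi'(uz)\,z\,\mathrm{d}u\,,
\]
valid since $\phi$ is $C^1$. From $\phi(0)=0$ and $\abs{\phi'}\leq\alpha_\phi$ we get $\abs{\phi(uz)}\leq\alpha_\phi u\abs{z}$ for $u>0$. Therefore the integrand is bounded in absolute value by $2\alpha_\phi^2 u z^2$. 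Integrating over $u$ between $1$ and $\zeta$, in either order, gives
\[
\abs{\phi^2(\zeta z)-\phi^2(z)}\leq \alpha_\phi^2 z^2\abs{\zeta^2-1}\,.
\]

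Next we take expectations over $z\sim\mathcal{N}(0,\sigma_0^2)$. Integrability holds because $\phi^2(\zeta z)\leq\alpha_\phi^2\zeta^2 z^2$. Using $\mathbb{E}[z^2]=\sigma_0^2$ and $\mathbb{E}[\phi^2(z)]=c_{\phi,\sigma_0}$, Jensen (moving the absolute value inside the expectation) gives
\[
\abs{\mathbb{E}\phi^2(\zeta z)-c_{\phi,\sigma_0}}\leq\alpha_\phi^2\sigma_0^2\abs{\zeta^2-1}\,.
\]
This is exactly the two-sided bound in the statement.

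The only point needing care is the case $\zeta<1$, where the $u$-integral runs backwards. Taking absolute values of the integral over the interval between $\zeta$ and $1$ handles it and yields $\abs{\zeta^2-1}$ symmetrically. There is no real obstacle. If one wants to avoid differentiability, the same bound follows from the factorization $\abs{a^2-b^2}=\abs{a-b}\,\abs{a+b}$ with $\abs{\phi(\zeta z)-\phi(z)}\leq\alpha_\phi\abs{\zeta-1}\abs{z}$ and $\abs{\phi(\zeta z)+\phi(z)}\leq\alpha_\phi(\zeta+1)\abs{z}$. Their product is $\alpha_\phi^2\abs{\zeta^2-1}z^2$, so this alternative requires only that $\phi$ be $\alpha_\phi$-Lipschitz.
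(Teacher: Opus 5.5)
Your proof is correct, and its structure (the pointwise bound $\abs{\phi^2(\zeta z)-\phi^2(z)}\leq\alpha_\phi^2 z^2\abs{\zeta^2-1}$, followed by taking expectations with $\mathbb{E}[z^2]=\sigma_0^2$) is the same as the paper's. In fact, the paper obtains the pointwise bound exactly by your fallback argument, namely the factorization $\abs{a^2-b^2}=\abs{a-b}\abs{a+b}$ combined with the Lipschitz bound and $\abs{\phi(x)}\leq\alpha_\phi\abs{x}$. Your primary integral-in-$u$ route gives the identical constant, but it additionally needs $\phi$ to be differentiable along the ray (which the paper's smoothness assumption provides), so the factorization version is slightly more economical.
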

\begin{proof}
Observe that the bounded gradient implies $\phi$ is $\alpha_{\phi}$-Lipschitz,
and combined with $\phi\left(0\right)=0$ we get that $\left|\phi\left(\cdot\right)\right|\leq\alpha_{\phi}\left|\cdot\right|$.
Then, we have
\begin{align*}
\left|\mathbb{E}_{z\sim\mathcal{N}\left(0,\sigma_{0}^{2}\right)}\left[\phi^{2}\left(\zeta z\right)\right]-\mathbb{E}_{z\sim\mathcal{N}\left(0,\sigma_{0}^{2}\right)}\left[\phi^{2}\left(z\right)\right]\right| & =\left|\mathbb{E}_{z\sim\mathcal{N}\left(0,\sigma_{0}^{2}\right)}\left[\phi^{2}\left(\zeta z\right)-\phi^{2}\left(z\right)\right]\right|\\
\left[\text{Jensen}\right] & \leq\mathbb{E}_{z\sim\mathcal{N}\left(0,\sigma_{0}^{2}\right)}\left[\left|\phi^{2}\left(\zeta z\right)-\phi^{2}\left(z\right)\right|\right]\\
 & =\mathbb{E}_{z\sim\mathcal{N}\left(0,\sigma_{0}^{2}\right)}\left[\left|\phi\left(\zeta z\right)-\phi\left(z\right)\right|\left|\phi\left(\zeta z\right)+\phi\left(z\right)\right|\right]\\
\left[\phi\text{ is }\alpha_{\phi}\text{-Lipschitz}\right] & \leq\mathbb{E}_{z\sim\mathcal{N}\left(0,\sigma_{0}^{2}\right)}\left[\alpha_{\phi}\left|z\left(\zeta-1\right)\right|\left|\phi\left(\zeta z\right)+\phi\left(z\right)\right|\right]\\
\left[\text{Triangle Inequality}\right] & \leq\alpha_{\phi}\left|\zeta-1\right|\mathbb{E}_{z\sim\mathcal{N}\left(0,\sigma_{0}^{2}\right)}\left[\left|z\right|\left(\left|\phi\left(\zeta z\right)\right|+\left|\phi\left(z\right)\right|\right)\right]\\
\left[\left|\phi\left(\cdot\right)\right|\leq\alpha_{\phi}\left|\cdot\right|\right] & \leq\alpha_{\phi}\left|\zeta-1\right|\mathbb{E}_{z\sim\mathcal{N}\left(0,\sigma_{0}^{2}\right)}\left[\left|z\right|\alpha_{\phi}\left(\left|\zeta z\right|+\left|z\right|\right)\right]\\
 & =\alpha_{\phi}^{2}\left|\zeta-1\right|\left(\left|\zeta\right|+1\right)\mathbb{E}_{z\sim\mathcal{N}\left(0,\sigma_{0}^{2}\right)}\left[\left|z\right|^{2}\right]\\
\left[\zeta>0\right] & =\alpha_{\phi}^{2}\sigma_{0}^{2}\left|\zeta^{2}-1\right|\,.
\end{align*}
Since $\mathbb{E}_{z\sim\mathcal{N}\left(0,\sigma_{0}^{2}\right)}\left[\phi^{2}\left(z\right)\right]=c_{\phi,\sigma_{0}}$,
the desired result is immediate.
\end{proof}
\begin{proposition}
\label{lem:lemma_4.2_equivalent}
Let $h_{C}\left(l\right)\triangleq\sum_{i=0}^{l}\left(\alpha_{\phi}\nu_{0}\right)^{2i}$.
Assume $\forall l\in\left[L\right],\,\frac{a_{l}}{s_{l}^{2}}=m_{l-1}$.\footnote{
This lemma is the source of this demand, since it simplifies the proof greatly. This can, however, be relaxed, leading to a more complicated bound which is only useful if $\forall l\in\left[L\right],\,\frac{a_{l}}{s_{l}^{2}} =\Theta \prn{m_{l-1}}$.
} With $m\triangleq\min_{l\in\left[L\right]}m_{l}=\Omega\left(h_{C}\left(L\right)\max\left[h_{C}\left(L\right),1\right]\ln\left(nL\right)\right)$,
then with probability at least $1-2nL\exp\left(-c_{3}\frac{m}{h_{C}\left(L\right)\max\left[h_{C}\left(L\right),1\right]}\right)$
over the initialization of $\left\{ \mathbf{W}^{\left(l\right)}\right\} _{l\in\left[L\right]}$,
we have for all $i\in\left[n\right],\,l\in\left[L\right]$:
\[
\frac{c_{\phi,\sigma_{0}}}{2}m_{l}\leq\left\Vert \boldsymbol{\alpha}^{\left(l\right)}\left(\mathbf{x}_{i}\right)\right\Vert _{2}^{2}\leq\frac{3c_{\phi,\sigma_{0}}}{2}m_{l}\,.
\]
\end{proposition}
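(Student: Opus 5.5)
The plan is to induct on the layer index $l$, conditioning on $\mathbf{A}^{(l-1)}$ and using concentration over the independent rows of $\mathbf{W}^{(l)}$; a union bound over $i\in[n]$ and $l\in[L]$ then gives the stated probability. The base case is $l=0$. There $\|\boldsymbol{\alpha}^{(0)}(\mathbf{x}_i)\|_2^2=c_{\phi,\sigma_0}d=c_{\phi,\sigma_0}m_0$ by \cref{ass:normalized_data}, so the two-sided bound holds exactly. Because we want a statement that survives $L$ layers, I would carry a sharper inductive hypothesis: $\bigl|\|\boldsymbol{\alpha}^{(l)}(\mathbf{x}_i)\|_2^2/(c_{\phi,\sigma_0}m_l)-1\bigr|\le \epsilon_l$, with $\epsilon_l$ growing geometrically, and $\epsilon_L\le 1/2$ at the end.

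For the inductive step, fix $i$ and condition on $\mathbf{A}^{(l-1)}$ satisfying the hypothesis at layer $l-1$. Each coordinate $\alpha_j^{(l)}(\mathbf{x}_i)=\phi(s_l \mathbf{w}_j^{\top}\boldsymbol{\alpha}^{(l-1)}(\mathbf{x}_i))$ is i.i.d. over $j$. The preactivation $s_l\mathbf{w}_j^\top\boldsymbol{\alpha}^{(l-1)}$ is Gaussian with variance
\[
\frac{s_l^2\nu_0^2}{a_l}\|\boldsymbol{\alpha}^{(l-1)}\|_2^2=\frac{\nu_0^2}{m_{l-1}}\|\boldsymbol{\alpha}^{(l-1)}\|_2^2=\sigma_0^2\zeta^2 ,
\qquad \zeta^2=\frac{\|\boldsymbol{\alpha}^{(l-1)}\|_2^2}{c_{\phi,\sigma_0}m_{l-1}}\in[1-\epsilon_{l-1},1+\epsilon_{l-1}] .
\]
This uses $a_l/s_l^2=m_{l-1}$ and $\nu_0^2=\sigma_0^2/c_{\phi,\sigma_0}$. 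By \cref{prop:phi_squared_distance_bound}, the conditional mean satisfies $|\mathbb{E}[(\alpha_j^{(l)})^2]-c_{\phi,\sigma_0}|\le \alpha_\phi^2\sigma_0^2\epsilon_{l-1}=c_{\phi,\sigma_0}(\alpha_\phi\nu_0)^2\epsilon_{l-1}$.

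For the fluctuation, \cref{prop:sub-gaussian-outputs-per-layer} shows that $(\alpha_j^{(l)})^2$ is subexponential with $\psi_1$-norm $\lesssim (\alpha_\phi\nu_0)^2 c_{\phi,\sigma_0}(1+\epsilon_{l-1})$. Bernstein's inequality for the sum over $j\in[m_l]$ then gives
\[
\Bigl|\tfrac{1}{m_l}\|\boldsymbol{\alpha}^{(l)}\|_2^2-\mathbb{E}\Bigr|\le c_{\phi,\sigma_0}\,\eta
\]
with failure probability at most $2\exp(-c\, m_l\min(\eta^2/K^2,\eta/K))$, where $K\asymp(\alpha_\phi\nu_0)^2$. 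Combining the mean and fluctuation bounds yields $\epsilon_l\le (\alpha_\phi\nu_0)^2\epsilon_{l-1}+\eta$. Unrolling from $\epsilon_0=0$ gives $\epsilon_L\le \eta\, h_C(L-1)\le \eta\, h_C(L)$.

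The step I expect to be the main obstacle is choosing $\eta$ and tracking the constants so that the tail exponent comes out as $m/(h_C(L)\max[h_C(L),1])$. I would take $\eta\asymp 1/(2h_C(L))$. With $K$ bounded by a constant multiple of $h_C(L)$, the Bernstein exponent is then $\gtrsim m\min(\eta^2/K^2,\eta/K)\gtrsim m/(h_C(L)\max[h_C(L),1])$, possibly after adjusting how $K$ is absorbed. One also has to check that the inductive hypothesis $\epsilon_{l-1}\le 1/2$ keeps the $\psi_1$ bound at $O(c_{\phi,\sigma_0}(\alpha_\phi\nu_0)^2)$ uniformly in $l$.

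To finish, a union bound over $i\in[n]$, $l\in[L]$ and the two tails gives failure probability $2nL\exp(-c_3 m/(h_C\max[h_C,1]))$. The width condition $m=\Omega(h_C\max[h_C,1]\ln(nL))$ makes this nontrivial. On the complement, $\epsilon_l\le 1/2$ for all $l$, which is the claim.
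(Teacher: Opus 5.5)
Your proposal is correct and is essentially the paper's proof. Both use the same layerwise induction on the relative error; the paper carries it explicitly as $h_C(l)/(2h_C(L))$, which is what your recurrence $\epsilon_l\le(\alpha_\phi\nu_0)^2\epsilon_{l-1}+\eta$ unrolls to. Both also use the mean bound from \cref{prop:phi_squared_distance_bound}, conditional Bernstein with deviation $t=c_{\phi,\sigma_0}m_l/(2h_C(L))$, and a union bound over $i\in[n]$ and $l\in[L]$.

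One slip is in your last paragraph. To get the exponent $m/(h_C(L)\max[h_C(L),1])$ you must use the bound you first wrote, $K\asymp(\alpha_\phi\nu_0)^2$, a constant that does not grow with $h_C(L)$. The paper gets the unnormalized $K\le 3c^2\alpha_\phi^2\sigma_0^2$ from the inductive upper bound $\tfrac32 c_{\phi,\sigma_0}m_{l-1}$. If you use only $K\lesssim h_C(L)$, you get $\min(\eta^2/K^2,\eta/K)\gtrsim h_C(L)^{-4}$ rather than $h_C(L)^{-2}$.
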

\begin{proof}
We start by following their proof exactly, using induction over the
layers to prove the following holds for all $i\in\left[n\right],\,l\in\left[L\right]$:
\[
c_{\phi,\sigma_{0}}m_{l}\left(1-\frac{h_{C}\left(l\right)}{2h_{C}\left(L\right)}\right)\leq\left\Vert \boldsymbol{\alpha}^{\left(l\right)}\left(\mathbf{x}_{i}\right)\right\Vert _{2}^{2}\leq c_{\phi,\sigma_{0}}m_{l}\left(1+\frac{h_{C}\left(l\right)}{2h_{C}\left(L\right)}\right)\,.
\]
 For $l=0$, we have $\left\Vert \boldsymbol{\alpha}^{\left(0\right)}\left(\mathbf{x}_{i}\right)\right\Vert _{2}^{2}=\left\Vert \mathbf{x}_{i}\right\Vert _{2}^{2}=c_{\phi,\sigma_{0}}d=c_{\phi,\sigma_{0}}m_{0}$,
so the desired holds with probability 1. Now, for some $l\in\left[L\right]$,
we assume that the result holds for $l-1$, so that for all $i\in\left[n\right]$,
\begin{equation}
\begin{aligned}
c_{\phi,\sigma_{0}}m_{l-1}\left(1-\frac{h_{C}\left(l-1\right)}{2h_{C}\left(L\right)}\right) & \leq\left\Vert \boldsymbol{\alpha}^{\left(l-1\right)}\left(\mathbf{x}_{i}\right)\right\Vert _{2}^{2}\leq c_{\phi,\sigma_{0}}m_{l-1}\left(1+\frac{h_{C}\left(l-1\right)}{2h_{C}\left(L\right)}\right) \\
\left|\frac{\left\Vert \boldsymbol{\alpha}^{\left(l-1\right)}\left(\mathbf{x}_{i}\right)\right\Vert _{2}^{2}}{c_{\phi,\sigma_{0}}m_{l-1}}-1\right| & \leq\frac{h_{C}\left(l-1\right)}{2h_{C}\left(L\right)}\,.\label{eq:induction_dist_bound}
\end{aligned}
\end{equation}
Now we observe for all $i\in\left[n\right]$,
\begin{align*}
& \mathbb{E}_{\mathbf{W}^{\left(l\right)}}\left[\left\Vert \boldsymbol{\alpha}^{\left(l\right)}\left(\mathbf{x}_{i}\right)\right\Vert _{2}^{2}\right] \\
& =\mathbb{E}_{\mathbf{W}^{\left(l\right)}}\left[\sum_{j\in\left[m_{l}\right]}\left(\alpha_{j}^{\left(l\right)}\left(\mathbf{x}_{i}\right)\right)^{2}\right]=\sum_{j\in\left[m_{l}\right]}\mathbb{E}_{\mathbf{w}_{j}^{\left(l\right)}}\left[\left(\alpha_{j}^{\left(l\right)}\left(\mathbf{x}_{i}\right)\right)^{2}\right]\\
 & =m_{l}\mathbb{E}_{\mathbf{g}\sim\mathcal{N}\left(\mathbf{0}_{m_{l-1}},\frac{\nu_{0}^{2}}{a_{l}}\mathbf{I}_{m_{l-1}}\right)}\left[\left(\phi\left(s_{l}\mathbf{g}^{\top}\boldsymbol{\alpha}^{\left(l-1\right)}\left(\mathbf{x}_{i}\right)\right)\right)^{2}\right]\\
 & =m_{l}\mathbb{E}_{\mathbf{g}\sim\mathcal{N}\left(\mathbf{0}_{m_{l-1}},\frac{\sigma_{0}^{2}}{c_{\phi,\sigma_{0}}a_{l}}\mathbf{I}_{m_{l-1}}\right)}\left[\left(\phi\left(s_{l}\left\Vert \boldsymbol{\alpha}^{\left(l-1\right)}\left(\mathbf{x}_{i}\right)\right\Vert _{2}\cdot\mathbf{g}^{\top}\frac{\boldsymbol{\alpha}^{\left(l-1\right)}\left(\mathbf{x}_{i}\right)}{\left\Vert \boldsymbol{\alpha}^{\left(l-1\right)}\left(\mathbf{x}_{i}\right)\right\Vert _{2}}\right)\right)^{2}\right]\\
 & =m_{l}\mathbb{E}_{z\sim\mathcal{N}\left(0,\frac{\sigma_{0}^{2}}{c_{\phi,\sigma_{0}}a_{l}}\right)}\left[\left(\phi\left(s_{l}\left\Vert \boldsymbol{\alpha}^{\left(l-1\right)}\left(\mathbf{x}_{i}\right)\right\Vert _{2}z\right)\right)^{2}\right]\\
 & =m_{l}\mathbb{E}_{z\sim\mathcal{N}\left(0,\sigma_{0}^{2}\right)}\left[\phi^{2}\left(\frac{s_{l}\left\Vert \boldsymbol{\alpha}^{\left(l-1\right)}\left(\mathbf{x}_{i}\right)\right\Vert _{2}}{\sqrt{c_{\phi,\sigma_{0}}a_{l}}}z\right)\right]\,.
\end{align*}
Then, from \cref{prop:phi_squared_distance_bound}, we have
\begin{align*}
    \left| \mathbb{E}_{z\sim\mathcal{N}\left(0,\sigma_{0}^{2}\right)}\left[\phi^{2}\left(\frac{s_{l}\left\Vert \boldsymbol{\alpha}^{\left(l-1\right)}\left(\mathbf{x}_{i}\right)\right\Vert _{2}}{\sqrt{c_{\phi,\sigma_{0}}a_{l}}}z\right)\right]  - c_{\phi,\sigma_{0}} \right| & \leq \alpha_{\phi}^{2}\sigma_{0}^{2}\left|\left(\frac{s_{l}\left\Vert \boldsymbol{\alpha}^{\left(l-1\right)}\left(\mathbf{x}_{i}\right)\right\Vert _{2}}{\sqrt{c_{\phi,\sigma_{0}}a_{l}}}\right)^{2}-1\right| \\
    \left| \mathbb{E}_{\mathbf{W}^{\left(l\right)}} 
\left[\left\Vert \boldsymbol{\alpha}^{\left(l\right)}\left(\mathbf{x}_{i}\right)\right\Vert _{2}^{2}\right] - m_{l} c_{\phi,\sigma_{0}} \right| & \leq m_l \alpha_{\phi}^{2}\sigma_{0}^{2}\left|\frac{s_{l}^{2}\left\Vert \boldsymbol{\alpha}^{\left(l-1\right)}\left(\mathbf{x}_{i}\right)\right\Vert _{2}^{2}}{c_{\phi,\sigma_{0}}a_{l}}-1\right|
\end{align*}
From $\frac{a_{l}}{s_{l}^{2}}=m_{l-1},$ we can apply \cref{eq:induction_dist_bound}, and get:
\begin{equation}
m_{l}\left(c_{\phi,\sigma_{0}}-\alpha_{\phi}^{2}\sigma_{0}^{2}\frac{h_{C}\left(l-1\right)}{2h_{C}\left(L\right)}\right)\leq\mathbb{E}_{\mathbf{W}^{\left(l\right)}}\left[\left\Vert \boldsymbol{\alpha}^{\left(l\right)}\left(\mathbf{x}_{i}\right)\right\Vert _{2}^{2}\right]\leq m_{l}\left(c_{\phi,\sigma_{0}}+\alpha_{\phi}^{2}\sigma_{0}^{2}\frac{h_{C}\left(l-1\right)}{2h_{C}\left(L\right)}\right)\,.\label{eq:mean_between_two}
\end{equation}
 By \cref{prop:sub-gaussian-outputs-per-layer}, for all $l\in\left[L\right],\,j\in\left[m_{l}\right],\,i\in\left[n\right]$
we have that $\left|\alpha_{j}^{\left(l\right)}\left(\mathbf{x}_{i}\right)\right|$
(conditioned on $\mathbf{A}^{\left(l-1\right)}$) is subgaussian.
As such, $\left|\alpha_{j}^{\left(l\right)}\left(\mathbf{x}_{i}\right)\right|^{2}$
is subexponential with $K\triangleq\left\Vert \left|\alpha_{j}^{\left(l\right)}\left(\mathbf{x}_{i}\right)\right|^{2}\right\Vert _{\psi_{1}}=\left\Vert \left|\alpha_{j}^{\left(l\right)}\left(\mathbf{x}_{i}\right)\right|\right\Vert _{\psi_{2}}^{2}$,
due to \citep[Lemma 2.8.5]{vershynin2018high}. By applying the induction hypothesis and using $\frac{a_l}{s_l^2} = m_{l-1}$, we get
\begin{align*}
K & =\left\Vert \left|\alpha_{j}^{\left(l\right)}\left(\mathbf{x}_{i}\right)\right|\right\Vert _{\psi_{2}}^{2}
\leq\frac{2c^{2}\alpha_{\phi}^2\nu_{0}^{2}s_{l}^{2}}{a_{l}}\left\Vert \boldsymbol{\alpha}^{\left(l-1\right)}\left(\mathbf{x}_{i}\right)\right\Vert _{2}^{2} \\
& \leq\frac{2c^{2}\alpha_{\phi}^2s_{l}^{2}}{a_{l}}\frac{\sigma_{0}^{2}}{c_{\phi,\sigma_{0}}}\cdot\frac{3}{2}c_{\phi,\sigma_{0}}\frac{a_{l}}{s_{l}^{2}}
=3c^{2}\alpha_{\phi}^2\sigma_{0}^{2}\,.
\end{align*}

Conditioned on $\mathbf{A}^{\left(l-1\right)}$, since $\left|\alpha_{j}^{\left(l\right)}\left(\mathbf{x}_{i}\right)\right|^{2},\,j\in\left[m_{l}\right]$
are independent, subexponential random variables for all $i\in\left[n\right]$,
we can apply Bernstein's inequality on them centered \citep[Theorem 2.9.1]{vershynin2018high}:
\begin{align*}
\mathbb{P}\left(\left|\sum_{j=1}^{m_{l}}\left(\left|\alpha_{j}^{\left(l\right)}\left(\mathbf{x}_{i}\right)\right|^{2}-\mathbb{E}\left[\left|\alpha_{j}^{\left(l\right)}\left(\mathbf{x}_{i}\right)\right|^{2}\right]\right)\right|\geq t\right) & \leq2\exp\left(-c_{2}\min\left(\frac{t^{2}}{m_{l}K^{2}},\frac{t}{K}\right)\right)\\
\mathbb{P}\left(\left|\sum_{j=1}^{m_{l}}\left(\left|\alpha_{j}^{\left(l\right)}\left(\mathbf{x}_{i}\right)\right|^{2}\right)-\mathbb{E}\left[\sum_{j=1}^{m_{l}}\left|\alpha_{j}^{\left(l\right)}\left(\mathbf{x}_{i}\right)\right|^{2}\right]\right|\geq t\right) & \leq2\exp\left(-c_{2}\min\left(\frac{t^{2}}{m_{l}K^{2}},\frac{t}{K}\right)\right)\\
\mathbb{P}\left(\left|\left\Vert \boldsymbol{\alpha}^{\left(l\right)}\left(\mathbf{x}_{i}\right)\right\Vert _{2}^{2}-\mathbb{E}\left\Vert \boldsymbol{\alpha}^{\left(l\right)}\left(\mathbf{x}_{i}\right)\right\Vert _{2}^{2}\right|\geq t\right) & \leq2\exp\left(-c_{2}\min\left(\frac{t^{2}}{m_{l}K^{2}},\frac{t}{K}\right)\right)\,,
\end{align*}
Combining this with \cref{eq:mean_between_two}, we get for all $i\in\left[n\right]$
with probability at least $1-2\exp\left(-c_{2}\min\left(\frac{t^{2}}{m_{l}K^{2}},\frac{t}{K}\right)\right)$,
\[
\left| \left\Vert \boldsymbol{\alpha}^{\left(l\right)}\left(\mathbf{x}_{i}\right)\right\Vert _{2}^{2} - m_l c_{\phi,\sigma_{0}} \right| \leq m_l \alpha_{\phi}^{2}\sigma_{0}^{2}\frac{h_{C}\left(l-1\right)}{2h_{C}\left(L\right)} + t \,.
\]
This is where our proof diverges from the original proof. By selecting
$t=\frac{c_{\phi,\sigma_{0}}m_{l}}{2h_{C}\left(L\right)}$, and remembering that 
$\nu_{0}^{2} = \frac{\sigma_{0}^{2}}{c_{\phi,\sigma_{0}}}$
we get
\[
\left| \left\Vert \boldsymbol{\alpha}^{\left(l\right)}\left(\mathbf{x}_{i}\right)\right\Vert _{2}^{2} - m_l c_{\phi,\sigma_{0}} \right| \leq m_l  c_{\phi,\sigma_{0}}\left( \alpha_{\phi}^{2}\nu_{0}^{2}\frac{h_{C}\left(l-1\right)}{2h_{C}\left(L\right)} + \frac{1}{2h_{C}\left(L\right)} \right) \,.
\]
Now applying $h_{C}\left(l\right)=1+\alpha_{\phi}^{2}\nu_{0}^{2}h_{C}\left(l-1\right)$,
we get
\[
m_{l}c_{\phi,\sigma_{0}}\left(1-\frac{h_{C}\left(l\right)}{2h_{C}\left(L\right)}\right)\leq\left\Vert \boldsymbol{\alpha}^{\left(l\right)}\left(\mathbf{x}_{i}\right)\right\Vert _{2}^{2}\leq m_{l}c_{\phi,\sigma_{0}}\left(1+\frac{h_{C}\left(l\right)}{2h_{C}\left(L\right)}\right)\,.
\]
The probability of this event is at least 
\begin{align*}
& 1-2\exp\left(-c_{2}\min\left(\frac{t^{2}}{m_{l}K^{2}},\frac{t}{K}\right)\right) \\
& =1-2\exp\left(-c_{2}\min\left(\frac{c_{\phi,\sigma_{0}}^{2}m_{l}^{2}}{4h_{C}\left(L\right)^{2}}\frac{1}{m_{l}\cdot9c^{4}\alpha_{\phi}^{4}\sigma_{0}^{4}},\frac{c_{\phi,\sigma_{0}}m_{l}}{2h_{C}\left(L\right)}\frac{1}{3c^2\alpha_{\phi}^2\sigma_{0}^{2}}\right)\right)\\
 & =1-2\exp\left(-c_{2}\frac{m_{l}}{6c^2\alpha_{\phi}^2h_{C}\left(L\right)\nu_{0}^{2}}\min\left(\frac{1}{6c^2\alpha_{\phi}^2h_{C}\left(L\right)\nu_{0}^{2}},1\right)\right)\\
 & \geq1-2\exp\left(-c_{3}\frac{m_{l}}{h_{C}\left(L\right)\max\left[h_{C}\left(L\right),1\right]}\right)\,,
\end{align*}
for some constant $c_{3}>0$. Applying union bound over all layers
(reminding that $\forall l\in\left[L\right],\,m_{l}\geq m$) and all
$i\in\left[n\right]$, we get that the probability of the desired
event is at least $1-2nL\exp\left(-c_{3}\frac{m}{h_{C}\left(L\right)\max\left[h_{C}\left(L\right),1\right]}\right).$
\end{proof}
Similarly to \citet{banerjee2023neural}, this result leads to an upper bound on
$\left\Vert \mathbf{A}^{\left(l\right)}\right\Vert _{\mathrm{F}}^{2}$
for all $l\in\left[L\right]$:
\[
\left\Vert \mathbf{A}^{\left(l\right)}\right\Vert _{\mathrm{F}}^{2}=\sum_{i\in\left[n\right]}\left\Vert \boldsymbol{\alpha}^{\left(l\right)}\left(\mathbf{x}_{i}\right)\right\Vert _{2}^{2}\leq\frac{3}{2}c_{\phi,\sigma_{0}}nm_{l}\,,
\]
with probability at least $1-2nL\exp\left(-c_{3}\frac{m}{h_{C}\left(L\right)\max\left[h_{C}\left(L\right),1\right]}\right)$.
Conditioned on this occurring, a sufficient condition on $m_l$ for \cref{lem:lemma_4.1_equivalent} to hold is that for all $l\in\left[L\right]$,
\begin{align*}
m_{l} & =\Omega\left(\max\left[n,\,\ln\left(\frac{nL}{\delta}\right)\frac{1}{\lambda_{l}}\frac{s_{l}^{2}}{a_{l}}nm_{l-1}\max\left[1,\,\ln\left(\frac{1}{\lambda_{l}}\frac{s_{l}^{2}}{a_{l}}nm_{l-1}\right)\right]\right]\right)\\
 & =\Omega\left(\max\left[n,\,\ln\left(\frac{nL}{\delta}\right)\frac{n}{\lambda_{l}}\max\left[1,\,\ln\left(\frac{n}{\lambda_{l}}\right)\right]\right]\right)\,.
\end{align*}

Since we assumed $L=O\left(1\right)$, we have $h_{C}\left(L\right)=O\left(1\right)$,
so this demand on $m\triangleq\min_{l\in\left[L\right]}m_{l}$ is
enough for \cref{lem:lemma_4.2_equivalent} to hold with probability
at least $1-\exp\left(-c_{4}m\right)$.

Their Lemma 4.3 can be used without proving again, since by assuming
$\forall l\in\left[L\right],\,\frac{a_{l}}{s_{l}^{2}}=m_{l-1}$, the
definition of $\lambda_{l},\,l\in\left[L\right]$ can be reparameterized
to the definition in their proof:
\begin{align*}
\lambda_{l} & \triangleq\lambda_{\min}\left(\mathbb{E}_{\mathbf{g}\sim\mathcal{N}\left(\mathbf{0}_{m_{l-1}},\frac{\nu_{0}^{2}}{a_{l}}\mathbf{I}_{m_{l-1}}\right)}\left[\phi\left(s_{l}\mathbf{A}^{\left(l-1\right)}\mathbf{g}\right)\phi\left(s_{l}\mathbf{A}^{\left(l-1\right)}\mathbf{g}\right)^{\top}\right]\right)\\
 & =\lambda_{\min}\left(\mathbb{E}_{\mathbf{g}\sim\mathcal{N}\left(\mathbf{0}_{m_{l-1}},\nu_{0}^{2}\mathbf{I}_{m_{l-1}}\right)}\left[\phi\left(\frac{s_{l}}{\sqrt{a_{l}}}\mathbf{A}^{\left(l-1\right)}\mathbf{g}\right)\phi\left(\frac{s_{l}}{\sqrt{a_{l}}}\mathbf{A}^{\left(l-1\right)}\mathbf{g}\right)^{\top}\right]\right)\\
 & =\lambda_{\min}\left(\mathbb{E}_{\mathbf{g}\sim\mathcal{N}\left(\mathbf{0}_{m_{l-1}},\nu_{0}^{2}\mathbf{I}_{m_{l-1}}\right)}\left[\phi\left(\frac{1}{\sqrt{m_{l-1}}}\mathbf{A}^{\left(l-1\right)}\mathbf{g}\right)\phi\left(\frac{1}{\sqrt{m_{l-1}}}\mathbf{A}^{\left(l-1\right)}\mathbf{g}\right)^{\top}\right]\right)\,.
\end{align*}
We also note that their proof of their Lemma 4.3 relies on having the events
of \cref{lem:lemma_4.1_equivalent}, \cref{lem:lemma_4.2_equivalent}
occur for every layer $l\in\left[L\right]$, so it inherits their
probabilities through a union bound. To summarize, we can cite an
adjusted version of their Lemma 4.3:
\begin{proposition}
\label{lem:lemma_4.3_equivalent}
Assume $m_{l}=\Omega\left(\max\left[n,\,\ln\left(\frac{nL}{\delta}\right)\frac{n}{\lambda_{1}}\max\left[1,\,\ln\left(\frac{n}{\lambda_{1}}\right)\right]\right]\right),\,\forall l\in\left[L\right]$.
We then have for all $l\in\left[L\right]$,
\[
\lambda_{l}\triangleq\lambda_{\min}\left(\mathbb{E}_{\mathbf{g}\sim\mathcal{N}\left(\mathbf{0}_{m_{l-1}},\frac{\nu_{0}^{2}}{a_{l}}\mathbf{I}_{m_{l-1}}\right)}\left[\phi\left(s_{l}\mathbf{A}^{\left(l-1\right)}\mathbf{g}\right)\phi\left(s_{l}\mathbf{A}^{\left(l-1\right)}\mathbf{g}\right)^{\top}\right]\right)\geq\tilde{c}_{l}\lambda_{1}\,,
\]
with probability at least $1-\delta-\exp\left(-c_{4}m\right)$, where
$\tilde{c}_{l}$ is a constant such that $\tilde{c}_{l}=\Theta\left(1\right)$
and $\lambda_{1}\triangleq\lambda_{\min}\left(\mathbb{E}_{\mathbf{g}\sim\mathcal{N}\left(\mathbf{0}_{d},\frac{\nu_{0}^{2}}{a_{1}}\mathbf{I}_{d}\right)}\left[\phi\left(s_{1}\mathbf{X}\mathbf{g}\right)\phi\left(s_{1}\mathbf{X}\mathbf{g}\right)^{\top}\right]\right).$
\end{proposition}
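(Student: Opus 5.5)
The proof is an induction over the layers, following the argument of Lemma 4.3 of \citet{banerjee2023neural}, and it runs on one good event. Let $\mathcal{E}_{\mathrm{norm}}$ be the event of \cref{lem:lemma_4.2_equivalent}, which has probability at least $1-\exp\left(-c_4m\right)$ once $L=O(1)$. Let $\mathcal{E}_l$, for $l\in\left[L\right]$, be the event of \cref{lem:lemma_4.1_equivalent} at layer $l$, which has probability at least $1-\delta/L$. A union bound gives probability at least $1-\delta-\exp\left(-c_4m\right)$ for their intersection, and everything below is deterministic on it.

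On $\mathcal{E}_{\mathrm{norm}}$ we have $\left\Vert \mathbf{A}^{\left(l-1\right)}\right\Vert_{\mathrm{F}}^{2}\leq\frac{3}{2}c_{\phi,\sigma_0}nm_{l-1}$. So, by the computation right after \cref{lem:lemma_4.2_equivalent}, the assumed width makes the hypothesis of \cref{lem:lemma_4.1_equivalent} at layer $l$ reduce to $m_l\gtrsim \ln\left(\frac{nL}{\delta}\right)\frac{n}{\lambda_l}\ln\frac{n}{\lambda_l}$. This holds provided $\lambda_l\gtrsim\lambda_1$, which is exactly what the induction delivers at layer $l$ before $\mathcal{E}_l$ is used. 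The base case $l=1$ is the definition of $\lambda_1$, since $\mathbf{A}^{(0)}=\mathbf{X}$.

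For the inductive step, assume $\lambda_{l-1}\geq\tilde c_{l-1}\lambda_1$. Then $\mathcal{E}_{l-1}$ gives $\lambda_{\min}\left(\mathbf{A}^{(l-1)}\mathbf{A}^{(l-1)\top}\right)\geq m_{l-1}\lambda_{l-1}/4$. Set $\mathbf{B}\triangleq\mathbf{A}^{(l-1)}/\sqrt{m_{l-1}}$. Using $\frac{a_l}{s_l^2}=m_{l-1}$, we get $\lambda_l=\lambda_{\min}\left(\mathbf{K}\right)$ with
\[
\mathbf{K}\triangleq\mathbb{E}_{\mathbf{g}\sim\mathcal{N}\left(\mathbf{0},\nu_0^2\mathbf{I}\right)}\left[\phi\left(\mathbf{B}\mathbf{g}\right)\phi\left(\mathbf{B}\mathbf{g}\right)^{\top}\right].
\]
By $\mathcal{E}_{\mathrm{norm}}$, the rows $\mathbf{b}_i$ of $\mathbf{B}$ have squared norms in $\left[\frac{c_{\phi,\sigma_0}}{2},\frac{3c_{\phi,\sigma_0}}{2}\right]$, and $\lambda_{\min}\left(\mathbf{B}\mathbf{B}^\top\right)\geq\lambda_{l-1}/4$.

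I would write $\mathbf{K}_{ij}=\mathbb{E}\left[\phi\left(\sigma_i z_i\right)\phi\left(\sigma_j z_j\right)\right]$, where $\sigma_i=\nu_0\left\Vert\mathbf{b}_i\right\Vert$ and $(z_i,z_j)$ are standard Gaussians with correlation $\rho_{ij}=\langle\mathbf{b}_i,\mathbf{b}_j\rangle/\left(\left\Vert\mathbf{b}_i\right\Vert\left\Vert\mathbf{b}_j\right\Vert\right)$. Expanding $\phi\left(\sigma_i\cdot\right)$ in normalized Hermite polynomials with coefficients $\mu_r\left(\sigma_i\right)$, Mehler's formula gives
\[
\mathbf{K}=\sum_{r\geq0}\mathbf{D}_r\,\mathbf{R}^{\odot r}\,\mathbf{D}_r,\qquad \mathbf{D}_r=\mathrm{diag}\left(\mu_r\left(\sigma_i\right)\right),\quad \mathbf{R}=\left(\rho_{ij}\right).
\]
Every summand is PSD by the Schur product theorem. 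The correlation matrix satisfies $\mathbf{R}=\mathbf{N}\mathbf{B}\mathbf{B}^\top\mathbf{N}$, where $\mathbf{N}$ is the diagonal matrix of inverse row norms, so $\lambda_{\min}\left(\mathbf{R}\right)\gtrsim\lambda_{l-1}$. The Schur bound $\lambda_{\min}\left(\mathbf{P}\odot\mathbf{Q}\right)\geq\lambda_{\min}\left(\mathbf{P}\right)\min_i\mathbf{Q}_{ii}$, together with $\mathbf{R}_{ii}=1$, gives $\lambda_{\min}\left(\mathbf{R}^{\odot r}\right)\geq\lambda_{\min}\left(\mathbf{R}\right)$ for all $r\geq1$. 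Dropping the $r=0$ term, we obtain
\[
\lambda_l\geq\lambda_{\min}\left(\mathbf{R}\right)\,\min_i\sum_{r\geq1}\mu_r\left(\sigma_i\right)^2.
\]
Since $\phi(0)=0$ forces $\mu_0=\mathbb{E}\left[\phi\left(\sigma z\right)\right]$, the inner sum equals $\mathrm{Var}\left(\phi\left(\sigma_i z\right)\right)$.

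The main obstacle is to bound $\inf_{\sigma}\mathrm{Var}\left(\phi\left(\sigma z\right)\right)$ from below by a $\Theta(1)$ constant, uniformly over $\sigma$ in the compact interval $\left[\nu_0\sqrt{c_{\phi,\sigma_0}/2},\,\nu_0\sqrt{3c_{\phi,\sigma_0}/2}\right]$ that $\mathcal{E}_{\mathrm{norm}}$ allows. The row norms are unequal, so a single Hermite expansion as in the equal-norm case does not apply directly. I would argue that $\sigma\mapsto\mathrm{Var}\left(\phi\left(\sigma z\right)\right)$ is continuous and strictly positive: a zero would make $\phi$ constant on $\mathbb{R}$, hence $\phi\equiv0$ by $\phi(0)=0$, contradicting $\lambda_1>0$. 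Compactness then gives a positive minimum depending only on $\phi$, $\nu_0$ and $\sigma_0$.

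This yields $\lambda_l\geq c\,\lambda_{l-1}\geq c\,\tilde c_{l-1}\lambda_1$, which closes the induction with $\tilde c_l=c\,\tilde c_{l-1}$. Because $L=O(1)$, the product of these constants stays $\Theta(1)$. If the compactness argument turns out too crude, the fallback is to reproduce Banerjee's handling of the norm factors directly, which is essentially a rescaling of $\mathbf{D}_r$.
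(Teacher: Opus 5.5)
Your proof is correct, and it supplies something the paper does not. The paper never proves this proposition itself. It reparameterizes $\lambda_l$ into the normalization of \citet{banerjee2023neural} using $a_l/s_l^2=m_{l-1}$, which is your passage to $\mathbf{B}=\mathbf{A}^{(l-1)}/\sqrt{m_{l-1}}$. It then cites their Lemma~4.3 and inherits the probability $1-\delta-\exp(-c_4m)$ through a union bound over the events of \cref{lem:lemma_4.1_equivalent} and \cref{lem:lemma_4.2_equivalent}. You share that skeleton but prove the layer-to-layer transfer $\lambda_l\ge c\,\lambda_{l-1}$ from scratch, and your route handles the unequal row norms cleanly. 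Working with the unit-diagonal correlation matrix $\mathbf{R}$ gives $\lambda_{\min}(\mathbf{R}^{\odot r})\ge\lambda_{\min}(\mathbf{R})$ with no $r$-dependent norm factor, because the norms are absorbed into the scale-dependent coefficients $\mu_r(\sigma_i)$. Summing all orders $r\ge1$ at once then means the only property of $\phi$ you need is $\mathrm{Var}(\phi(\sigma z))>0$ on the compact range $\sigma^2\in[\sigma_0^2/2,\,3\sigma_0^2/2]$. That is just non-constancy, which $\lambda_1>0$ already forces, so no individual Hermite coefficient has to be controlled. The price is that your constant $\tilde c_l=\bigl(\min_\sigma\mathrm{Var}(\phi(\sigma z))/(6c_{\phi,\sigma_0})\bigr)^{l-1}$ comes from compactness and is not quantitative in $\phi$. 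The paper's citation is shorter but leaves the transfer step, and its dependence on $\phi$, to the external lemma.

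A few points to tighten:
\begin{itemize}
\item The bound $\lambda_l\ge\lambda_{\min}(\mathbf{R})\min_i\sum_{r\ge1}\mu_r(\sigma_i)^2$ needs the Loewner inequality $\mathbf{D}_r\mathbf{R}^{\odot r}\mathbf{D}_r\succeq\lambda_{\min}(\mathbf{R})\,\mathbf{D}_r^2$, summed over $r\ge1$. Adding minimal eigenvalues term by term would only give $\lambda_{\min}(\mathbf{R})\sum_{r\ge1}\min_i\mu_r(\sigma_i)^2$, which can be far smaller.
\item The guarantee of \cref{lem:lemma_4.1_equivalent} is conditional on a width condition that is random through $\lambda_l$ and $\Vert\mathbf{A}^{(l-1)}\Vert_{\mathrm{F}}$. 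So define $\mathcal{E}_l$ as ``the width condition fails or the conclusion holds''. Conditioning on the first $l-1$ layers then gives $\mathbb{P}(\mathcal{E}_l^c)\le\delta/L$, and your induction shows the width condition does hold on the intersection.
\item The identity $\mu_0(\sigma)=\mathbb{E}[\phi(\sigma z)]$ holds for every $\phi$. The hypothesis $\phi(0)=0$ is used only where you exclude a constant activation.
\end{itemize}
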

\subsection{Conclusion}
We are now ready to prove the main result of this section:
\begin{lemma}
\label{lem:lower_bound_ntk_deep_in_appendix}
For $m_{l}=\Omega\left(\max\left[n,\,\ln\left(\frac{nL}{\delta}\right)\frac{n}{\lambda_{1}} \ln\left(\frac{n}{\lambda_{1}}\right)\right]\right),\,\forall l\in\left[L\right]$,
we have with probability at least $1-\delta-\exp\left(-c_{1}m\right)$
over the initialization of $\left\{ \mathbf{W}^{\left(l\right)}\right\} _{l\in\left[L\right]}$,
\[
\lambda_{\min}\left(\mathbf{K}_{\mathrm{NTK}}\left(\boldsymbol{\theta}\right)\right)\geq c_{0}s_{L+1}^{2}m\lambda_{1}\,.
\]
\end{lemma}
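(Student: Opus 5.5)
The plan is to lower bound the NTK by its last-layer block alone, then invoke the layerwise feature-Gram bounds already established. Since $\mathbf{K}_{\mathrm{NTK}}(\boldsymbol{\theta})=\nabla f\,\nabla f^\top=\sum_{l=1}^{L}\nabla_{\mathbf{W}^{(l)}}f\,(\nabla_{\mathbf{W}^{(l)}}f)^\top+\nabla_{\mathbf{v}}f\,(\nabla_{\mathbf{v}}f)^\top$ is a sum of positive semidefinite matrices, we have $\mathbf{K}_{\mathrm{NTK}}\succeq\nabla_{\mathbf{v}}f\,(\nabla_{\mathbf{v}}f)^\top$. From \cref{eq:network_definition}, $\nabla_{\mathbf{v}}f(\boldsymbol{\theta};\mathbf{x}_i)=s_{L+1}\boldsymbol{\alpha}^{(L)}(\mathbf{x}_i)$, so this block equals $s_{L+1}^2\mathbf{A}^{(L)}(\mathbf{A}^{(L)})^\top$. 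Hence
\[
\lambda_{\min}\left(\mathbf{K}_{\mathrm{NTK}}\left(\boldsymbol{\theta}\right)\right)\geq s_{L+1}^{2}\lambda_{\min}\left(\mathbf{A}^{(L)}\left(\mathbf{A}^{(L)}\right)^{\top}\right).
\]
This also explains why no assumption on the initialization of $\mathbf{v}$ is needed.

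Next, apply \cref{lem:lemma_4.1_equivalent} with $l=L$, conditioning on $\mathbf{A}^{(L-1)}$. On the good event of \cref{lem:lemma_4.2_equivalent}, $\|\mathbf{A}^{(L-1)}\|_{\mathrm F}^2\leq\frac32 c_{\phi,\sigma_0}nm_{L-1}$, and together with $a_L/s_L^2=m_{L-1}$ this gives $\frac{s_L^2}{a_L}\|\mathbf{A}^{(L-1)}\|_{\mathrm F}^2\lesssim n$. On the good event of \cref{lem:lemma_4.3_equivalent}, $\lambda_L\geq\tilde c_L\lambda_1$. Under these two events, the width condition of \cref{lem:lemma_4.1_equivalent} reduces to $m_L=\Omega\bigl(\max[n,\ln(\frac{nL}{\delta})\frac{n}{\lambda_1}\ln\frac{n}{\lambda_1}]\bigr)$, which is the assumed condition. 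Constants such as $\tilde c_L$ and $\ln(1/\tilde c_L)$ are absorbed since $L=O(1)$. \cref{lem:lemma_4.1_equivalent} then yields, with conditional probability at least $1-\delta/L$, $\lambda_{\min}(\mathbf{A}^{(L)}\mathbf{A}^{(L)\top})\geq m_L\lambda_L/4\geq\frac{\tilde c_L}{4}m\lambda_1$, using $m_L\geq m$. Setting $c_0=\tilde c_L/4$ gives the claimed bound.

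The main obstacle is the probability accounting. The events of \cref{lem:lemma_4.2_equivalent} and \cref{lem:lemma_4.3_equivalent} already cost $\delta+\exp(-c_4m)$, and \cref{lem:lemma_4.1_equivalent} at layer $L$ adds $\delta/L$. I would handle this by noting that \cref{lem:lemma_4.3_equivalent}'s proof already union-bounds \cref{lem:lemma_4.1_equivalent} over all layers, including $l=L$, at budget $\delta/L$ each, so the layer-$L$ event is already contained in that event. Failing that, I would rescale $\delta$ by a constant, which only changes constants inside the logarithm. Independence is handled by conditioning: $\mathbf{W}^{(L)}$ is independent of $\mathbf{A}^{(L-1)}$, so the conditional bound integrates out, and a union bound gives total failure probability at most $\delta+\exp(-c_1m)$.
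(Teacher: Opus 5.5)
Your proposal is correct and follows the paper's proof essentially step for step. Both arguments use the PSD domination $\mathbf{K}_{\mathrm{NTK}}\succeq s_{L+1}^2\mathbf{A}^{(L)}(\mathbf{A}^{(L)})^\top$ and then \cref{lem:lemma_4.1_equivalent} at layer $L$, fed by \cref{lem:lemma_4.2_equivalent,lem:lemma_4.3_equivalent}; your probability bookkeeping is, if anything, more explicit than the paper's. The one step the paper spells out that you leave implicit is the bound $\lambda_1\leq\alpha_\phi^2\nu_0^2c_{\phi,\sigma_0}=O(1)$ from \cref{eq:lambda1_upper_bound}, which gives $\ln(n/\lambda_1)\gtrsim 1$ and is exactly what lets the $\max[1,\ln(\cdot)]$ and the additive $\ln(1/\tilde c_L)$ be absorbed into the stated width condition.
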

\begin{proof}
First, we note that $\lambda_1$ is upper bounded (similarly to \citet{nguyen2020global}):
\begin{equation}
\begin{aligned}
\lambda_{1}&=\lambda_{\min}\left(\mathbb{E}_{\mathbf{g}\sim\mathcal{N}\left(\mathbf{0}_{d},\frac{\nu_{0}^{2}}{a_1}\mathbf{I}_{d}\right)}\left[\phi\left(s_1\mathbf{X}\mathbf{g}\right)\phi\left(s_1\mathbf{X}\mathbf{g}\right)^{\top}\right]\right) \\
& \leq \frac{1}{n} \mathrm{Tr} \left(\mathbb{E}_{\mathbf{g}\sim\mathcal{N}\left(\mathbf{0}_{d},\frac{\nu_{0}^{2}}{a_1}\mathbf{I}_{d}\right)}\left[\phi\left(s_1\mathbf{X}\mathbf{g}\right)\phi\left(s_1\mathbf{X}\mathbf{g}\right)^{\top}\right]\right) \\
& = \frac{1}{n} \mathbb{E}_{\mathbf{g}\sim\mathcal{N}\left(\mathbf{0}_{d},\frac{\nu_{0}^{2}}{a_1}\mathbf{I}_{d}\right)} \left\Vert \phi\left(s_1\mathbf{X}\mathbf{g}\right) \right\Vert^2  \\
& \leq \frac{1}{n} s_1^2 \alpha_\phi^2 \mathbb{E}_{\mathbf{g}\sim\mathcal{N}\left(\mathbf{0}_{d},\frac{\nu_{0}^{2}}{a_1}\mathbf{I}_{d}\right)}\left\Vert \mathbf{X}\mathbf{g} \right\Vert^2
=\frac{1}{n} s_1^2 \alpha_\phi^2 n c_{\phi,\sigma_0}d\,\mathbb{E}_{z\sim\mathcal{N}\left(0,\frac{\nu_{0}^{2}}{a_1}\right)}\left| z \right|^2 \\
& =   \alpha_\phi^2  \nu_{0}^{2} c_{\phi,\sigma_0} \frac{s_1^2 d}{a_1}=\alpha_\phi^2  \nu_{0}^{2} c_{\phi,\sigma_0} = O \prn{1}\,. \label{eq:lambda1_upper_bound}
\end{aligned}
\end{equation}
Then, $\ln \prn {\frac{n}{\lambda_1}} \geq 1$ for modestly large $n$, simplifying the width demands of the above propositions to the form presented here.

By plugging \cref{lem:lemma_4.3_equivalent} into \cref{lem:lemma_4.1_equivalent}, we have for all $l\in\left[L\right]$,
\[
\lambda_{\min}\left(\mathbf{A}^{\left(l\right)}\left(\mathbf{A}^{\left(l\right)}\right)^{\top}\right)\geq\frac{m_{l}\lambda_{l}}{4}\geq c_{0}m\lambda_{1}\,,
\]
with probability at least $1-\delta-\exp\left(-c_{1}m\right)$. Then,
similarly to \citet{banerjee2023neural}, we observe that 
\begin{align*}
\mathbf{K}_{\mathrm{NTK}}\left(\boldsymbol{\theta}\right) & =\nabla f \prn{\boldsymbol{\theta}} \nabla f \prn{\boldsymbol{\theta}}^{\top}=\sum_{l\in\left[L\right]}\left[\frac{\partial f\left(\boldsymbol{\theta}\right)}{\partial\mathrm{vec}\left(\mathbf{W}^{\left(l\right)}\right)}\right]\left[\frac{\partial f\left(\boldsymbol{\theta}\right)}{\partial\mathrm{vec}\left(\mathbf{W}^{\left(l\right)}\right)}\right]^{\top} \! 
+ \left[\frac{\partial f\left(\boldsymbol{\theta}\right)}{\partial\mathbf{v}}\right]\left[\frac{\partial f\left(\boldsymbol{\theta}\right)}{\partial\mathbf{v}}\right]^{\top}\\
 & \succeq\left[\frac{\partial f\left(\boldsymbol{\theta}\right)}{\partial\mathbf{v}}\right]\left[\frac{\partial f\left(\boldsymbol{\theta}\right)}{\partial\mathbf{v}}\right]^{\top}=s_{L+1}^{2}\mathbf{A}^{\left(L\right)}\left(\mathbf{A}^{\left(L\right)}\right)^{\top}\,,
\end{align*}
so, by the Rayleigh-Ritz characterization of the minimum eigenvalue
of a symmetric matrix, 
\begin{align*}
\lambda_{\min}\left(\mathbf{K}_{\mathrm{NTK}}\left(\boldsymbol{\theta}\right)\right) & =\min_{\|\mathbf{u}\|=1}\mathbf{u}^{\top}\mathbf{K}_{\mathrm{NTK}}\left(\boldsymbol{\theta}\right)\mathbf{u}=\mathbf{u}_{\star}^{\top}\mathbf{K}_{\mathrm{NTK}}\left(\boldsymbol{\theta}\right)\mathbf{u}_{\star}\\
 & \geq \mathbf{u}_{\star}^{\top}s_{L+1}^{2}\mathbf{A}^{\left(L\right)}\left(\mathbf{A}^{\left(L\right)}\right)^{\top}\mathbf{u}_{\star}\geq s_{L+1}^{2}\lambda_{\min}\left(\mathbf{A}^{\left(L\right)}\left(\mathbf{A}^{\left(L\right)}\right)^{\top}\right)\\
 & \geq c_{0}s_{L+1}^{2}m\lambda_{1}\,.
\end{align*}
\end{proof}
By choosing $\delta=\exp \prn{-\tilde{c}\frac{m}{ \frac{n}{\lambda_1}\ln \frac{n}{\lambda_1}}}$ for some $\tilde{c}\in\prn{0,1}$, and recalling that $L=O\prn{1}$, the bound applies for $m=\Omega \prn{\frac{n}{\lambda_1} \ln \frac{n}{\lambda_1} \ln n}$ with probability at least $1-\exp \prn{-\tilde{c}\frac{m}{ \frac{n}{\lambda_1}\ln \frac{n}{\lambda_1}}}-\exp(-c_1 m) \geq 1-\exp \prn{-\tilde{c}_1\frac{m}{ \frac{n}{\lambda_1}\ln \frac{n}{\lambda_1}}}$ for a suitable constant $\tilde{c}_1$, yielding \cref{thm:lower_bound_ntk_deep}.
\newpage

\section{Auxiliary claims}
\label{app:supporting_claims}
\begin{claim}
\label{claim:zero_columns_smaller}
Let $\A \in \mathbb{R}^{n \times m}$ and let $\B \in \mathbb{R}^{n \times m}$ be a matrix such that each column of $\B$ is either the zero vector or equal to the corresponding column of $\A$. Then
\[
\lambda_{\min}(\B\B^\top) \le \lambda_{\min}(\A\A^\top)\,.
\]
\end{claim}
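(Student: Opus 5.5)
Let $S\subseteq[m]$ be the set of indices $j$ for which the $j$-th column of $\B$ is the corresponding column of $\A$. Every other column of $\B$ is zero. The plan is to compare the two Gram matrices in the Loewner order by writing each as a sum of rank-one terms over columns. We have
\[
\A\A^\top=\sum_{j\in[m]}\A_{:,j}\A_{:,j}^\top,
\qquad
\B\B^\top=\sum_{j\in[m]}\B_{:,j}\B_{:,j}^\top=\sum_{j\in S}\A_{:,j}\A_{:,j}^\top,
\]
since the zero columns of $\B$ contribute nothing.

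Subtracting gives
\[
\A\A^\top-\B\B^\top=\sum_{j\notin S}\A_{:,j}\A_{:,j}^\top .
\]
This is a sum of positive semidefinite rank-one matrices, so $\A\A^\top\succeq\B\B^\top$.

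To pass from the Loewner order to the minimum eigenvalue, use the Rayleigh--Ritz characterization. For every unit vector $\mathbf{u}\in\real^n$,
\[
\mathbf{u}^\top\B\B^\top\mathbf{u}\le\mathbf{u}^\top\A\A^\top\mathbf{u}.
\]
Minimizing both sides over unit vectors gives $\lambda_{\min}(\B\B^\top)\le\lambda_{\min}(\A\A^\top)$. Concretely, take $\mathbf{u}_\star$ a minimizer for $\A\A^\top$; then $\lambda_{\min}(\B\B^\top)\le\mathbf{u}_\star^\top\B\B^\top\mathbf{u}_\star\le\mathbf{u}_\star^\top\A\A^\top\mathbf{u}_\star=\lambda_{\min}(\A\A^\top)$.

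There is no real obstacle here. The only care needed is to note that both matrices are symmetric, so Rayleigh--Ritz applies, and that the argument needs no rank or size assumptions on $n$ versus $m$.
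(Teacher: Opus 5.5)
Your proof is correct and matches the paper's argument: the paper writes $\B=\A\SM$ with $\SM$ a diagonal $0/1$ selector matrix, so that $\A\A^\top-\B\B^\top=\A(\I-\SM)\A^\top\succeq 0$, which is exactly your sum of rank-one terms over the zeroed columns. It then concludes via the Rayleigh--Ritz characterization, as you do.
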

\begin{proof}
Since each column of $\B$ is either zero or equal to the corresponding column of $\A$, there exists a diagonal matrix
\[
\SM = \mathrm{diag}(s_1,\dots,s_n)\,, \quad s_i \in \{0,1\}\,,
\]
such that
\[
\B = \A\SM\,.
\]
Then
\[
\B\B^\top = (\A\SM)(\A\SM)^\top = \A\SM\SM^\top \A^\top = \A\SM^2\A^\top = \A\SM\A^\top,
\]
where we used the facts that $\SM^\top = \SM$ and $\SM^2 = \SM$.
Consider the difference
\[
\A\A^\top - \B\B^\top = \A\A^\top - \A\SM\A^\top = \A(\I - \SM)\A^\top.
\]
The matrix $\I - \SM$ is diagonal with entries $1 - s_i \in \{0,1\}$, and is therefore positive semidefinite. Hence, for any $\x \in \mathbb{R}^m$,
\[
\x^\top \A(\I - \SM)\A^\top \x = (\A^\top \x)^\top (\I - \SM)(\A^\top \x) \ge 0\,,
\]
leading to
\[
\A\A^\top \succeq \B\B^\top\,.
\]
Finally, by the Rayleigh-Ritz characterization of the minimum eigenvalue of a symmetric matrix,
\[
\lambda_{\min}(\M) = \min_{\|\x\|=1} \x^\top \M \x\,.
\]
Since $\B\B^\top \preceq \A\A^\top$ implies $\x^\top \B\B^\top \x \le \x^\top \A\A^\top \x$ for all $\x$, taking the minimum over all unit vectors yields
\[
\lambda_{\min}(\B\B^\top)
= \min_{\|\x\|=1} \x^\top \B\B^\top \x
\le
\min_{\|\x\|=1} \x^\top \A\A^\top \x
= \lambda_{\min}(\A\A^\top)\,.
\]
\end{proof}
\begin{claim}
\label{prop:bounded moments of gaussian vector}
Let $\mathbf{z}\sim\mathcal{N}\left(\mathbf{0}_{m},\mathbf{I}_{m}\right)$.
Then, for any $p\geq1$,
\[
\left\Vert \left\Vert \mathbf{z}\right\Vert _{2}\right\Vert _{L^{p}}\triangleq\left(\mathbb{E}\left[\left\Vert \mathbf{z}\right\Vert _{2}^{p}\right]\right)^{1/p}\leq C\sqrt{p}+\sqrt{m}\,,
\]
where $C>0$ is a global constant.
\end{claim}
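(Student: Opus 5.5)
We reduce to the chi distribution and combine a mean bound with concentration. Write $X\triangleq\norm{\mathbf{z}}_2$. Since $\mathbf{z}\mapsto\norm{\mathbf{z}}_2$ is $1$-Lipschitz, Gaussian concentration for Lipschitz functions \citep[Theorem 5.2.2]{vershynin2018high} gives $\norm{X-\mean{X}}_{\psi_2}\le C_0$ for an absolute constant $C_0$, independent of $m$.

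The proof then proceeds in three steps.

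First, by Jensen, $\mean{X}\le\sqrt{\mean{X^2}}=\sqrt{m}$.

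Second, the $L^p$ norm is a norm for $p\ge1$, so Minkowski's inequality gives
\[
\norm{X}_{L^p}\le \mean{X}+\norm{X-\mean{X}}_{L^p}.
\]

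Third, by the standard characterization of subgaussian norms \citep[Proposition 2.5.2]{vershynin2018high}, every subgaussian $Y$ satisfies $\norm{Y}_{L^p}\le C_1\sqrt{p}\,\norm{Y}_{\psi_2}$ for all $p\ge1$. Applying this to $Y=X-\mean{X}$ bounds the second term by $C_1C_0\sqrt{p}$. Setting $C\triangleq C_0C_1$ yields $\norm{X}_{L^p}\le C\sqrt{p}+\sqrt{m}$.

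The only nontrivial input is the dimension-free subgaussian concentration of $\norm{\mathbf{z}}_2$ around its mean, which is exactly where the $m$-independence of $C$ comes from. If we wanted to avoid citing the Lipschitz concentration theorem, an alternative would be to use $\norm{\mathbf{z}}_2-\sqrt{m}$ being subgaussian \citep[Theorem 3.1.1]{vershynin2018high}, or to compute directly $\mean{X^p}=2^{p/2}\Gamma\prn{\frac{m+p}{2}}/\Gamma\prn{\frac{m}{2}}$ and bound the Gamma ratio. The concentration route is the cleanest.
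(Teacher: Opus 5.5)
Your proof is correct and follows essentially the same route as the paper: a Minkowski split of $\left\Vert\mathbf{z}\right\Vert_2$ into a deterministic shift of size at most $\sqrt{m}$ plus a fluctuation with dimension-free $\psi_2$ norm, followed by the $L^p$ characterization of subgaussian random variables. The only difference is that you center at $\mathbb{E}\left\Vert\mathbf{z}\right\Vert_2$, using Lipschitz Gaussian concentration and Jensen. The paper instead centers directly at $\sqrt{m}$ via Theorem 3.1.1 of Vershynin, which is exactly the alternative you mention.
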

\begin{proof}
From \citet[Theorem 3.1.1 and Exercise 2.24(d)]{vershynin2018high} we have $\left\Vert \left\Vert \mathbf{z}\right\Vert _{2}-\sqrt{m}\right\Vert _{\psi_{2}}\leq\frac{8}{3}C_{1}$.
Then:
\begin{align*}
\left\Vert \left\Vert \mathbf{z}\right\Vert _{2}\right\Vert _{L^{p}} & \leq\left\Vert \left|\left\Vert \mathbf{z}\right\Vert _{2}-\sqrt{m}\right|+\sqrt{m}\right\Vert _{L^{p}}\\
\left[\text{Minkowski's inequality}\right] & \leq\left\Vert \left|\left\Vert \mathbf{z}\right\Vert _{2}-\sqrt{m}\right|\right\Vert _{L^{p}}+\left\Vert \sqrt{m}\right\Vert _{L^{p}}\\
\text{\citep[Proposition 2.6.6]{vershynin2018high}} & \leq C_{2}\frac{8}{3}C_{1}\sqrt{p}+\sqrt{m}=C\sqrt{p}+\sqrt{m}\,.
\end{align*}
\end{proof}
\begin{claim}
\label{claim:bounded moments of gaussian matrix} Let $\mathbf{G}\in\mathbb{R}^{m\times n}$
be a matrix with i.i.d. standard Gaussian entries. Then there exists
a global constant $c>0$ such that for any $p\geq1$,
\begin{align*}
\left\Vert \left\Vert \mathbf{G}\right\Vert \right\Vert _{L^{p}} & \leq c\left(\sqrt{m}+\sqrt{n}+\sqrt{p}\right).
\end{align*}
As a result, there exists a constant $c'>0$ such that 
\begin{align*}
\left\Vert \left\Vert \mathbf{G}\right\Vert ^{2}\right\Vert _{L^{p}} & \leq c'\left(m+n+p\right).
\end{align*}
\end{claim}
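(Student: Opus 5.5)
The plan is to obtain the second bound from the first and to prove the first by a net argument. For the second, $\norm{\mathbf{G}}^2$ is a square, so $\norm{\norm{\mathbf{G}}^2}_{L^p}=\norm{\norm{\mathbf{G}}}_{L^{2p}}^2\le c^2\prn{\sqrt{m}+\sqrt{n}+\sqrt{2p}}^2\le 3c^2\prn{m+n+2p}$ by $(a+b+c)^2\le 3(a^2+b^2+c^2)$. This gives $c'=6c^2$, so everything reduces to the first bound.

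For the first bound, I will start from a subgaussian tail for the operator norm. Either cite \citet[Theorem 4.4.5]{vershynin2018high} directly, or derive it: take $\epsilon$-nets $\mathcal{N}_m,\mathcal{N}_n$ of the unit spheres with $\epsilon=1/4$, of sizes at most $9^m,9^n$. These give $\norm{\mathbf{G}}\le 2\max_{\vu\in\mathcal{N}_m,\vv\in\mathcal{N}_n}\vu^\top\mathbf{G}\vv$. Each $\vu^\top\mathbf{G}\vv\sim\mathcal{N}(0,1)$, so a union bound yields
\[
\mathbb{P}\prn{\norm{\mathbf{G}}\ge C_0\prn{\sqrt{m}+\sqrt{n}+u}}\le 2e^{-u^2}
\]
for all $u\ge0$. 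Gaussian concentration of the $1$-Lipschitz function $\mathbf{G}\mapsto\norm{\mathbf{G}}$ around its mean, with $\mathbb{E}\norm{\mathbf{G}}\le\sqrt m+\sqrt n$ by Gordon, is an alternative giving the same tail.

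Next, set $X\triangleq\prn{\norm{\mathbf{G}}-C_0(\sqrt m+\sqrt n)}_+$. The tail bound says $X$ is subgaussian with $\norm{X}_{\psi_2}\lesssim1$, so \citet[Proposition 2.6.6]{vershynin2018high} gives $\norm{X}_{L^p}\lesssim\sqrt p$. Finally, Minkowski's inequality applied to $\norm{\mathbf{G}}\le C_0(\sqrt m+\sqrt n)+X$ gives $\norm{\norm{\mathbf{G}}}_{L^p}\le C_0(\sqrt m+\sqrt n)+C\sqrt p$. This mirrors the proof of \cref{prop:bounded moments of gaussian vector}.

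The main obstacle is establishing the tail inequality with constants independent of $m,n$. The net argument handles this routinely; the only care needed is in choosing $u$ so that the $9^{m+n}$ union-bound factor is absorbed into the $\sqrt m+\sqrt n$ term.
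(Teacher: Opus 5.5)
Your proposal is correct and follows the paper's proof essentially step for step. The paper also takes the tail bound $\mathbb{P}\bigl(\Vert\mathbf{G}\Vert\ge c_1(\sqrt m+\sqrt n+t)\bigr)\le 2e^{-t^2}$ from Vershynin, passes to the positive part $Y=\max\{0,\Vert\mathbf{G}\Vert-c_1(\sqrt m+\sqrt n)\}$, bounds its moments by $c_2\sqrt p$ via Proposition~2.6.6, and applies Minkowski; it then gets the squared bound from $\bigl\Vert\Vert\mathbf{G}\Vert^2\bigr\Vert_{L^p}=\bigl\Vert\Vert\mathbf{G}\Vert\bigr\Vert_{L^{2p}}^2$ plus an elementary bound on the square of a sum. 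Your optional net-argument derivation of the tail only unpacks the cited theorem, and it is sound.
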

\begin{proof}
From \citet[Theorem 4.4.3]{vershynin2018high}, there exists a constant
$c_{1}$ such that for any $t>0$,
\begin{align*}
\mathbb{P}\left(\left\Vert \mathbf{G}\right\Vert \geq c_{1}\left(\sqrt{m}+\sqrt{n}+t\right)\right)\leq & 2\exp\left(-t^{2}\right).
\end{align*}
Let $Y\triangleq\max\left\{ 0,\left\Vert \mathbf{G}\right\Vert -c_{1}\left(\sqrt{m}+\sqrt{n}\right)\right\} $.
Then 
\begin{align*}
\mathbb{P}\left(\left|Y\right|\geq t\right)\leq & 2\exp\left(-t^{2}/c_{1}^{2}\right).
\end{align*}
Therefore, by \citet[Proposition 2.6.1]{vershynin2018high},
$\left|Y\right|$ is subgaussian and $\left\Vert \left|Y\right|\right\Vert _{L^{p}}\leq c_{2}\sqrt{p}$
for some $c_{2}>0$ and any $p\geq1$. Using Minkowski's inequality,
we get 
\begin{align*}
\left\Vert \left\Vert \mathbf{G}\right\Vert \right\Vert _{L^{p}} & \leq\left\Vert Y+c_{1}\left(\sqrt{m}+\sqrt{n}\right)\right\Vert _{L^{p}}\\
 & \leq c_{1}\left(\sqrt{m}+\sqrt{n}\right)+\left\Vert \left|Y\right|\right\Vert _{L^{p}}\leq\max\left\{ c_{1},c_{2}\right\} \left(\sqrt{m}+\sqrt{n}+\sqrt{p}\right).
\end{align*}
Using \cref{claim:moment of power} and applying the inequality
$\left(a+b\right)^{2}\leq2a^{2}+2b^{2}$ twice, we get
\begin{align*}
\left\Vert \left\Vert \mathbf{G}\right\Vert ^{2}\right\Vert _{L^{p}}\leq\left\Vert \left\Vert \mathbf{G}\right\Vert \right\Vert _{L^{2p}}^{2} & \leq\left(\max\left\{ c_{1},c_{2}\right\} \right)^{2}\left(\sqrt{m}+\sqrt{n}+\sqrt{2p}\right)^{2}\\
 & \leq\left(\max\left\{ c_{1},c_{2}\right\} \right)^{2}\left(2m+4n+8p\right).
\end{align*}
\end{proof}
\begin{claim}[Tower property for $L^p$-norm]
\label{claim:moments total expectation}
Given some RV $X$ and some sub-$\sigma$-algebra $\mathcal{F}$, we have for any $p>0$:
\[
\left\Vert X\right\Vert _{L^{p}}=\left\Vert \left\Vert X\mid\mathcal{F}\right\Vert _{L^{p}}\right\Vert _{L^{p}}\,.
\]
\end{claim}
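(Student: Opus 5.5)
The identity $\|X\|_{L^p}=\big\|\,\|X\mid\mathcal{F}\|_{L^p}\big\|_{L^p}$ follows directly from the tower property of conditional expectation applied to $|X|^p$. I first fix the meaning of the inner object: for $p>0$ I define the conditional $L^p$ norm as
\[
\left\Vert X\mid\mathcal{F}\right\Vert_{L^p}\triangleq\left(\mathbb{E}\left[\left|X\right|^p\mid\mathcal{F}\right]\right)^{1/p},
\]
which is an $\mathcal{F}$-measurable, nonnegative, extended-real random variable. This matches how the conditional subgaussian bounds were used earlier, for example in \cref{prop:tighter-lemma-a1}, where conditioning is on $\mathbf{A}^{(l-1)}$. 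Because $|X|^p\geq 0$, the conditional expectation exists as an extended-real random variable without any integrability assumption, so the statement holds with both sides possibly equal to $+\infty$.

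The computation is then a single chain. Raising the outer norm to the $p$-th power gives
\[
\Big\Vert \left\Vert X\mid\mathcal{F}\right\Vert_{L^p}\Big\Vert_{L^p}^p=\mathbb{E}\Big[\big(\left(\mathbb{E}\left[|X|^p\mid\mathcal{F}\right]\right)^{1/p}\big)^p\Big]=\mathbb{E}\big[\mathbb{E}\left[|X|^p\mid\mathcal{F}\right]\big]=\mathbb{E}|X|^p=\left\Vert X\right\Vert_{L^p}^p .
\]
The middle equality uses $(y^{1/p})^p=y$ for $y\in[0,\infty]$. The third equality is the tower property for nonnegative variables, which follows from monotone convergence and holds in $[0,\infty]$. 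Taking $p$-th roots, which is monotone on $[0,\infty]$, yields the claim.

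The only point needing care is well-definedness when $\mathbb{E}|X|^p=\infty$ or the conditional expectation is infinite on a set of positive measure. I would handle this by working throughout with nonnegative extended-valued conditional expectations, as above. Nothing else is needed. The claim is valid for all $p>0$, not only $p\geq1$, because no triangle inequality is used.
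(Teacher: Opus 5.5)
Your proposal is correct and is essentially the paper's proof: the paper defines the conditional norm as $\left(\mathbb{E}\left[\left|X\right|^{p}\mid\mathcal{F}\right]\right)^{1/p}$ and runs the same chain through the tower property applied to $\left|X\right|^{p}$. Your extra remark that everything holds in $[0,\infty]$ for nonnegative integrands, with no integrability assumption needed, is a small rigor bonus the paper leaves implicit.
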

\begin{proof}
Observe
\begin{align*}
\left\Vert \left\Vert X\mid\mathcal{F}\right\Vert _{L^{p}}\right\Vert _{L^{p}} & =\left(\mathbb{E}\left[\left(\left\Vert X\mid\mathcal{F}\right\Vert _{L^{p}}\right)^{p}\right]\right)^{1/p}=\left(\mathbb{E}\left[\left(\left(\mathbb{E}\left[\left|X\right|^{p}\mid\mathcal{F}\right]\right)^{1/p}\right)^{p}\right]\right)^{1/p}\\
 & =\left(\mathbb{E}\left[\mathbb{E}\left[\left|X\right|^{p}\mid\mathcal{F}\right]\right]\right)^{1/p}=\left(\mathbb{E}\left[\left|X\right|^{p}\right]\right)^{1/p}=\left\Vert X\right\Vert _{L^{p}}\,.
\end{align*}
\end{proof}
\begin{claim}
\label{claim:total_expectation_subgaussian}
Suppose that, for some deterministic $K>0$, some RV $X$ and some
$\sigma$-algebra $\mathcal{F}$, we have that
\[
\left\Vert X\mid\mathcal{F}\right\Vert _{\psi_{2}}\leq K
\]
almost surely. Then,
\[
\left\Vert X\right\Vert _{\psi_{2}}\leq c K\,.
\]
\end{claim}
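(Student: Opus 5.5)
The plan is to pass from the conditional tail bound to an unconditional one through the equivalent moment characterization of the subgaussian norm, and then return to $\psi_2$. Recall \citep[Proposition 2.6.1]{vershynin2018high}: for a random variable $Y$ the properties (i) tails $\mathbb{P}(|Y|\geq t)\leq 2\exp(-t^2/K_1^2)$, (ii) moments $\norm{Y}_{L^p}\leq K_2\sqrt{p}$ for all $p\geq 1$, and (iii) $\mathbb{E}\exp(Y^2/K_3^2)\leq 2$ are equivalent, with $K_i\leq c' K_j$ for a universal constant $c'$. The same statement holds conditionally on $\mathcal{F}$, since the conditional law is a genuine probability measure (a regular conditional distribution), so the proof of the proposition applies pointwise, almost surely.

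\textbf{Step 1 (conditional moments).} The hypothesis $\norm{X\mid\mathcal{F}}_{\psi_2}\leq K$ a.s. gives, via (iii)$\Rightarrow$(ii) applied conditionally,
\[
\prn{\mathbb{E}\sqprn{|X|^p\mid\mathcal{F}}}^{1/p}\leq c'K\sqrt{p}\quad\text{a.s., for all } p\geq 1 .
\]
Because $K$ is deterministic, this bound is uniform in the conditioning. That uniformity is exactly what Step 2 needs.

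\textbf{Step 2 (tower property).} By \cref{claim:moments total expectation},
\[
\norm{X}_{L^p}=\norm{\norm{X\mid\mathcal{F}}_{L^p}}_{L^p}\leq \norm{c'K\sqrt{p}}_{L^p}=c'K\sqrt{p}\quad\text{for all } p\geq1 .
\]

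\textbf{Step 3 (back to $\psi_2$).} Applying (ii)$\Rightarrow$(iii) unconditionally gives $\norm{X}_{\psi_2}\leq c''c'K\triangleq cK$.

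An equally short alternative skips moments entirely. Since $\mathbb{E}[\exp(X^2/K^2)\mid\mathcal{F}]\leq 2$ a.s., the tower property gives $\mathbb{E}\exp(X^2/K^2)\leq 2$ directly. This would even yield $c=1$.

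\textbf{Main obstacle.} The only delicate point is measure-theoretic: the conditional $\psi_2$ norm has to be interpreted through a regular conditional distribution (or directly via conditional expectations of $\exp(X^2/t^2)$), and the a.s. bound must hold off a null set that does not depend on $t$ or $p$. Monotonicity in $t$, together with taking countably many rational values, handles this. The rest is routine.
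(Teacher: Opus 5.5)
Your proposal is correct and follows the paper's proof step for step. The paper turns the a.s.\ conditional $\psi_2$ bound into the uniform conditional moment bound $\left(\mathbb{E}\left[|X|^p\mid\mathcal{F}\right]\right)^{1/p}\leq CK\sqrt{p}$, lifts it with the tower property for $L^p$ norms, and returns to $\psi_2$ through the moment characterization. Your alternative of applying the tower property directly to $\mathbb{E}\left[\exp\left(X^2/K^2\right)\mid\mathcal{F}\right]\leq 2$ is also valid and gives $c=1$.
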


\begin{proof}
By \citet[Proposition 2.6.6]{vershynin2018high}, for all $p\geq1$,
\[
\left\Vert X\mid\mathcal{F}\right\Vert _{L^{p}}\leq CK\sqrt{p}\,.
\]
Then,
\[
\left\Vert \left\Vert X\mid\mathcal{F}\right\Vert _{L^{p}}\right\Vert _{L^{p}}\leq\left\Vert CK\sqrt{p}\right\Vert _{L^{p}}=CK\sqrt{p}\,.
\]
By \cref{claim:moments total expectation},
\[
\left\Vert X\right\Vert _{L^{p}}=\left\Vert \left\Vert X\mid\mathcal{F}\right\Vert _{L^{p}}\right\Vert _{L^{p}}\leq CK\sqrt{p}\,,
\]
for all $p\geq1$. Hence, by \citet[Proposition 2.6.6]{vershynin2018high},
\[
\left\Vert X\right\Vert _{\psi_{2}}\leq c K\,.
\]
\end{proof}
\begin{claim}
\label{claim:moment of power}
Given some RV $X$, we have for any
$p>0$ and $k>0$:
\[
\left\Vert X^{k}\right\Vert _{L^{p}}=\left\Vert X\right\Vert _{L^{kp}}^{k}\,.
\]
\end{claim}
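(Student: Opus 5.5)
This is a one-line identity that follows directly from the definition of the $L^p$ norm, so the plan is simply to unfold definitions on both sides. First handle $X^k$ for a possibly signed $X$ with non-integer $k$. The natural reading, consistent with how the claim is used (e.g.\ with $X=\norm{\mathbf{G}}\geq 0$ in \cref{claim:bounded moments of gaussian matrix}), is $\abs{X^k}=\abs{X}^k$. Either $X$ is assumed non-negative, or $X^k$ is interpreted as $\abs{X}^k$. Since $L^p$ norms only see $\abs{\cdot}$, we may replace $X$ by $\abs{X}$ throughout without loss of generality.

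Next, write the left side by \emph{definition}:
\[
\norm{X^k}_{L^p}=\prn{\mean{\abs{X}^{kp}}}^{1/p}.
\]
Similarly, write the right side:
\[
\norm{X}_{L^{kp}}^k=\prn{\prn{\mean{\abs{X}^{kp}}}^{1/(kp)}}^k=\prn{\mean{\abs{X}^{kp}}}^{1/p}.
\]
This uses $\prn{a^{1/(kp)}}^k=a^{1/p}$ for $a\in[0,\infty]$, which is valid since $k,p>0$. The two expressions coincide.

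The only point needing care is the infinite case. If $\mean{\abs{X}^{kp}}=\infty$, both sides equal $\infty$ under the convention $\infty^{r}=\infty$ for $r>0$, so the identity still holds. There is no real obstacle beyond stating this convention and the non-negativity interpretation; no earlier results from the paper are needed.
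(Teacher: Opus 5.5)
Your proposal is correct and follows the paper's proof: both sides are unfolded from the definition of the $L^p$ norm, and the step $\bigl(a^{1/(kp)}\bigr)^k=a^{1/p}$ makes them coincide. Reading $X^k$ via $\abs{X}^k$ and treating the case $\mathbb{E}\abs{X}^{kp}=\infty$ separately are small points of care that the paper's one-line computation leaves implicit.
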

\begin{proof}
Observe
\[
\left\Vert X^{k}\right\Vert _{L^{p}}=\left(\mathbb{E}\left[\left(X^{k}\right)^{p}\right]\right)^{1/p}=\left(\left(\mathbb{E}\left[X^{kp}\right]\right)^{1/\left(kp\right)}\right)^{k}=\left\Vert X\right\Vert _{L^{kp}}^{k}\,.
\]
\end{proof}
\begin{claim}[H\"older's inequality for $L^{p}$-norm]
\label{claim:holder_Lp_norm}
For random variables $X,\,Y$
and $p>0$, for any $r,s\geq1$ such that $\frac{1}{r}+\frac{1}{s}=1$,
\[
\left\Vert XY\right\Vert_{L^{p}}\leq\left\Vert X\right\Vert_{L^{pr}}\left\Vert Y\right\Vert_{L^{ps}}\,.
\]
\end{claim}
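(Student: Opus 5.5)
This is the classical H\"older inequality applied to $\abs{X}^p$ and $\abs{Y}^p$. First, write $\norm{XY}_{L^p}^p=\mathbb{E}\sqprn{\abs{X}^p\abs{Y}^p}$. Then apply the standard H\"older inequality for expectations, with conjugate exponents $r,s$, to the non-negative random variables $U\triangleq\abs{X}^p$ and $V\triangleq\abs{Y}^p$:
\[
\mathbb{E}\sqprn{UV}\leq\prn{\mathbb{E}\sqprn{U^r}}^{1/r}\prn{\mathbb{E}\sqprn{V^s}}^{1/s}
=\prn{\mathbb{E}\abs{X}^{pr}}^{1/r}\prn{\mathbb{E}\abs{Y}^{ps}}^{1/s}.
\]

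Next, take the $1/p$-th power of both sides. This is monotone since $p>0$, and it gives
\[
\norm{XY}_{L^p}\leq\prn{\mathbb{E}\abs{X}^{pr}}^{\frac{1}{pr}}\prn{\mathbb{E}\abs{Y}^{ps}}^{\frac{1}{ps}}=\norm{X}_{L^{pr}}\norm{Y}_{L^{ps}}.
\]
Equivalently, one can phrase this through \cref{claim:moment of power}: $\norm{XY}_{L^p}=\norm{\abs{XY}^p}_{L^1}^{1/p}$, and $\norm{\abs{X}^p}_{L^r}^{1/p}=\norm{X}_{L^{pr}}$.

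The only care needed is at the endpoints $r=1,\,s=\infty$ (or vice versa). There I would interpret $L^\infty$ as the essential supremum and use $\mathbb{E}\sqprn{UV}\leq\mathbb{E}\sqprn{U}\,\operatorname*{ess\,sup}V$ directly, together with $\prn{\operatorname*{ess\,sup}\abs{Y}^p}^{1/p}=\operatorname*{ess\,sup}\abs{Y}$. If a right-hand norm is infinite, the inequality is trivial. I do not expect a real obstacle; the main point is to check that $p<1$ causes no issue. It does not, because H\"older is only applied to the exponents $r,s\geq1$, and $p$ enters solely through the monotone map $x\mapsto x^{1/p}$.
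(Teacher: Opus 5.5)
Your proposal is correct and follows the paper's own proof: write $\norm{XY}_{L^p}^p=\mathbb{E}\left[\abs{X}^p\abs{Y}^p\right]$, apply the standard H\"older inequality with exponents $r,s$ to $\abs{X}^p$ and $\abs{Y}^p$, and take the $1/p$-th power. The endpoint case $r=1,\,s=\infty$ cannot arise here, since real $r,s\geq1$ with $\frac{1}{r}+\frac{1}{s}=1$ force $1<r,s<\infty$; your remark about it is harmless but not needed.
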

\begin{proof}
Observe
\begin{align*}
\left\Vert XY\right\Vert_{L^{p}} & =\left(\mathbb{E}\left[\left(\left|XY\right|\right)^{p}\right]\right)^{1/p}=\left(\mathbb{E}\left[\left|X\right|^{p}\left|Y\right|^{p}\right]\right)^{1/p}\\
\left[\text{H\"older}\right] & \leq\left(\left(\mathbb{E}\left[\left|X\right|^{pr}\right]\right)^{1/r}\left(\mathbb{E}\left[\left|Y\right|^{ps}\right]\right)^{1/s}\right)^{1/p}=\left\Vert X\right\Vert_{L^{pr}}\left\Vert Y\right\Vert_{L^{ps}}\,.
\end{align*}
\end{proof}
\begin{proposition}
\label{prop:mgf_of_init_loss}
For the neural network defined in \cref{sec:setting_deep_nn}, under the weight initialization and assumptions therein, we have that the loss at initialization satisfies 
\begin{align*}
\mathcal{L} \left(\boldsymbol{\mathbf{\theta}}_{0}\right)  \leq R_{y}^{2}+\frac{1}{n}\sum_{i\in\left[n\right]}f_{i}^{2}\left(\boldsymbol{\mathbf{\theta}}_{0}\right)\,,
\end{align*}
and
\[
\left\Vert \frac{1}{n}\sum_{i\in\left[n\right]}f_{i}^{2}\left(\boldsymbol{\mathbf{\theta}}_{0}\right)\right\Vert _{\psi_{1}}\leq c M_{\phi}^{2}\frac{s_{L+1}^{2}{m_L}}{a_{L+1}}\,,
\]
where $c>0$ is a global constant.
Then, denoting $L_\psi \triangleq c M_{\phi}^{2}\frac{s_{L+1}^{2}{m_L}}{a_{L+1}}$, for $\xi\leq L_\psi^{-1}$,
\[
\mathbb{E}\left[e^{\xi \mathcal{L} \left(\boldsymbol{\mathbf{\theta}}_{0}\right)}\right]\leq2e^{\xi R_{y}^{2}}\,.
\]
\end{proposition}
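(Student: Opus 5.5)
The proof has three steps: a pointwise bound on the loss, a subexponential bound obtained by conditioning on the hidden layers, and a standard moment-generating-function estimate.

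\emph{Pointwise bound.} Since $(a-b)^2\le 2a^2+2b^2$, we have $\frac{1}{2n}\sum_i (f_i-y_i)^2 \le \frac1n\sum_i f_i^2 + \frac1n\sum_i y_i^2$. With $|y_i|\le R_y$ from \cref{ass:bounded_labels}, this gives
\[
\mathcal{L}(\vtheta_0)\le R_y^2+\tfrac1n\sum_i f_i^2(\vtheta_0).
\]

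\emph{Subexponential bound on $\frac1n\sum_i f_i^2$.} Condition on the hidden weights $\{\mathbf{W}^{(l)}\}_{l\in[L]}$, i.e.\ on $\mathbf{A}^{(L)}$. The output is $f_i(\vtheta_0)=s_{L+1}\mathbf{v}^\top\boldsymbol{\alpha}^{(L)}(\mathbf{x}_i)$, where $\mathbf{v}\sim\mathcal{N}(\mathbf{0},a_{L+1}^{-1}\mathbf{I}_{m_L})$ is independent of the hidden layers. Conditionally, $f_i$ is centered Gaussian with variance $\frac{s_{L+1}^2}{a_{L+1}}\|\boldsymbol{\alpha}^{(L)}(\mathbf{x}_i)\|_2^2$. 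By \cref{ass:activation_bounded}, every coordinate of $\boldsymbol{\alpha}^{(L)}$ is bounded by $M_\phi$, so this variance is at most $\frac{s_{L+1}^2 m_L M_\phi^2}{a_{L+1}}$ deterministically. Hence $\|f_i\mid\mathbf{A}^{(L)}\|_{\psi_2}\lesssim M_\phi\sqrt{s_{L+1}^2m_L/a_{L+1}}$ almost surely. \cref{claim:total_expectation_subgaussian} removes the conditioning, so the same bound holds unconditionally up to a constant. Then \citep[Lemma 2.8.5]{vershynin2018high} gives $\|f_i^2\|_{\psi_1}\lesssim M_\phi^2 s_{L+1}^2 m_L/a_{L+1}$. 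Finally, the triangle inequality for $\|\cdot\|_{\psi_1}$, applied to the average $\frac1n\sum_i f_i^2$, yields the stated bound with the same constant (the $\frac1n$ cancels the $n$ terms). No independence across $i$ is needed. Working through the deterministic worst-case bound on the conditional variance is what lets us avoid tracking the hidden-layer randomness.

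\emph{MGF bound.} Let $S\triangleq\frac1n\sum_i f_i^2\ge0$ and $L_\psi\triangleq\|S\|_{\psi_1}$, up to the constant. By definition of the $\psi_1$ norm (in the convention $\mathbb{E}e^{|X|/\|X\|_{\psi_1}}\le 2$), we have $\mathbb{E}e^{S/L_\psi}\le2$. For $0<\xi\le L_\psi^{-1}$, Jensen's inequality with the concave map $x\mapsto x^{\xi L_\psi}$ gives
\[
\mathbb{E}e^{\xi S}=\mathbb{E}\big(e^{S/L_\psi}\big)^{\xi L_\psi}\le 2^{\xi L_\psi}\le 2 .
\]
Combining with the pointwise bound, $\mathbb{E}e^{\xi\mathcal{L}(\vtheta_0)}\le e^{\xi R_y^2}\,\mathbb{E}e^{\xi S}\le 2e^{\xi R_y^2}$.

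The main obstacle is the second step, specifically justifying that the conditional subgaussian bound transfers to the unconditional $\psi_1$ norm with only an absolute constant. This is exactly what \cref{claim:total_expectation_subgaussian} provides, so the step should close once that claim is invoked correctly. Beyond that, the only care needed is to keep the $\psi_1$ normalization convention consistent, so that the constant $c$ in $L_\psi$ absorbs all losses from the conditioning and the squaring.
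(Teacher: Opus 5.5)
Your proposal is correct and follows essentially the same route as the paper: the pointwise bound via $(a-b)^2\le 2a^2+2b^2$, conditioning on the hidden layers to get a deterministic conditional $\psi_2$ bound on $f_i$ from $|\phi|\le M_\phi$, removing the conditioning with \cref{claim:total_expectation_subgaussian}, squaring to a $\psi_1$ bound and applying the triangle inequality, and finally the exponential-moment bound. The differences are cosmetic. You use the exact conditional Gaussianity of $f_i$ where the paper invokes a Hoeffding-type bound for sums of independent subgaussians. You also derive $\mathbb{E}e^{\xi S}\le 2$ by Jensen directly from the $\psi_1$ definition, which is valid for $0<\xi\le L_\psi^{-1}$ because $L_\psi\ge\|S\|_{\psi_1}$, whereas the paper cites Vershynin's characterization of subexponential variables.
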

\begin{proof}
\begin{align*}
\mathcal{L} \left(\boldsymbol{\mathbf{\theta}}_{0}\right) & =\frac{1}{2n}\sum_{i\in\left[n\right]}\left(f_{i}\left(\boldsymbol{\mathbf{\theta}}_{0}\right)-y_{i}\right)^{2}\leq\frac{1}{2n}\sum_{i\in\left[n\right]}\left(2f_{i}^{2}\left(\boldsymbol{\mathbf{\theta}}_{0}\right)+2y_{i}^{2}\right)\\
 & \leq R_{y}^{2}+\frac{1}{n}\sum_{i\in\left[n\right]}f_{i}^{2}\left(\boldsymbol{\mathbf{\theta}}_{0}\right)\,,
\end{align*}
where for all $i \in\left[n\right]$,
\begin{align*}
f_{i}^{2}\left(\boldsymbol{\theta}_{0}\right) & =\left(s_{L+1}\mathbf{v}^{\top}\phi\left(s_{L}\mathbf{W}^{\left(L\right)}\boldsymbol{\alpha}^{\left(L-1\right)}\left(\mathbf{x}_{i}\right)\right)\right)^{2}\\
 & =s_{L+1}^{2}\left(\sum_{j\in\left[m_{L}\right]}v_{j}\phi\left(s_{L}\left(\mathbf{w}_{j}^{\left(L\right)}\right)^{\top}\boldsymbol{\alpha}^{\left(L-1\right)}\left(\mathbf{x}_{i}\right)\right)\right)^{2}\,.
\end{align*}
Conditioning on all weights except $\mathbf{v}$, we have that 
\[
\phi\left(s_{L}\left(\mathbf{w}_{j}^{\left(L\right)}\right)^{\top}\boldsymbol{\alpha}^{\left(L-1\right)}\left(\mathbf{x}_{i}\right)\right),\,j\in\left[m_{L}\right]
\]
are fixed. Moreover, $v_{j},\,j\in\left[m_{L}\right]$ are independent,
zero-mean, and subgaussian:
\[
\left\Vert v_{j}\right\Vert _{\psi_{2}}\leq C\frac{1}{\sqrt{a_{L+1}}}\,.
\]
Then, conditioned on all weights except $\mathbf{v}$, we apply \citet[Proposition 2.7.1]{vershynin2018high}
\begin{align*}
& \left\Vert s_{L+1}\left(\sum_{j\in\left[m_{L}\right]}v_{j}\phi\left(s_{L}\left(\mathbf{w}_{j}^{\left(L\right)}\right)^{\top}\boldsymbol{\alpha}^{\left(L-1\right)}\left(\mathbf{x}_{i}\right)\right)\right)\mid\left\{ \mathbf{W}^{\left(l\right)}\right\} _{l\in\left[L\right]}\right\Vert _{\psi_{2}} \\
& \leq s_{L+1}\sqrt{C\sum_{j\in\left[m_{L}\right]}\left\Vert v_{j}\phi\left(s_{L}\left(\mathbf{w}_{j}^{\left(L\right)}\right)^{\top}\boldsymbol{\alpha}^{\left(L-1\right)}\left(\mathbf{x}_{i}\right)\right)\mid\left\{ \mathbf{W}^{\left(l\right)}\right\} _{l\in\left[L\right]}\right\Vert _{\psi_{2}}^{2}}\\
 & \leq C_{1}s_{L+1}\sqrt{\sum_{j\in\left[m_{L}\right]}\phi^{2}\left(s_{L}\left(\mathbf{w}_{j}^{\left(L\right)}\right)^{\top}\boldsymbol{\alpha}^{\left(L-1\right)}\left(\mathbf{x}_{i}\right)\right)\left\Vert v_{j}\right\Vert _{\psi_{2}}^{2}}\\
 & \leq C_{1}s_{L+1}\sqrt{\sum_{j\in\left[m_{L}\right]}M_{\phi}^{2}\frac{C_{2}}{a_{L+1}}}=C_{3}M_{\phi}s_{L+1}\sqrt{\frac{m_{L}}{a_{L+1}}}\,.
\end{align*}
Applying \cref{claim:total_expectation_subgaussian}, we can remove the conditioning
and get
\[
\left\Vert s_{L+1}\left(\sum_{j\in\left[m_{L}\right]}v_{j}\phi\left(s_{L}\left(\mathbf{w}_{j}^{\left(L\right)}\right)^{\top}\boldsymbol{\alpha}^{\left(L-1\right)}\left(\mathbf{x}_{i}\right)\right)\right)\right\Vert _{\psi_{2}}\leq C_{4}M_{\phi}s_{L+1}\sqrt{\frac{m_{L}}{a_{L+1}}}\,.
\]
Now, applying \citet[Proposition 2.8.1]{vershynin2018high}
\begin{align*}
&\left\Vert \left(s_{L+1}\mathbf{v}^{\top}\phi\left(s_{L}\mathbf{W}^{\left(L\right)}\boldsymbol{\alpha}^{\left(L-1\right)}\left(\mathbf{x}_{i}\right)\right)\right)^{2}\right\Vert _{\psi_{1}} \\
& =\left\Vert s_{L+1}\left(\sum_{j\in\left[m_{L}\right]}v_{j}\phi\left(s_{L}\left(\mathbf{w}_{j}^{\left(L\right)}\right)^{\top}\boldsymbol{\alpha}^{\left(L-1\right)}\left(\mathbf{x}_{i}\right)\right)\right)\right\Vert _{\psi_{2}}^{2}
\leq C_{5}M_{\phi}^{2}\frac{s_{L+1}^{2}m_{L}}{a_{L+1}}\,.
\end{align*}
We conclude by triangle inequality that 
\[
\left\Vert \frac{1}{n}\sum_{i\in\left[n\right]}f_{i}^{2}\left(\boldsymbol{\mathbf{\theta}}_{0}\right)\right\Vert _{\psi_{1}}\leq C_{5}M_{\phi}^{2}\frac{s_{L+1}^{2}m_L}{a_{L+1}}\,.
\]
Now note that for all $\xi>0$,
\begin{align*}
\mathbb{E}\left[e^{\xi \mathcal{L} \left(\boldsymbol{\mathbf{\theta}}_{0}\right)}\right] & \leq e^{\xi R_{y}^{2}}\mathbb{E}\left[\exp\left(\xi\frac{1}{n}\sum_{i\in\left[n\right]}f_{i}^{2}\left(\boldsymbol{\mathbf{\theta}}_{0}\right)\right)\right]\,.
\end{align*}
By choosing $\xi\leq\frac{1}{\left\Vert \frac{1}{n}\sum_{i\in\left[n\right]}f_{i}^{2}\left(\boldsymbol{\mathbf{\theta}}_{0}\right)\right\Vert _{\psi_{1}}},$we
have from \citet[Proposition 2.8.1]{vershynin2018high} that
\[
\mathbb{E}\left[\exp\left(\xi\frac{1}{n}\sum_{i\in\left[n\right]}f_{i}^{2}\left(\boldsymbol{\mathbf{\theta}}_{0}\right)\right)\right]\leq2\,.
\]
By choosing $\xi
\leq \frac{1}{C_{5}M_{\phi}^{2}\frac{s_{L+1}^{2}m_L}{a_{L+1}}}
\leq \frac{1}{\left\Vert \frac{1}{n}\sum_{i\in\left[n\right]}f_{i}^{2}\left(\boldsymbol{\mathbf{\theta}}_{0}\right)\right\Vert _{\psi_{1}}}$, we arrive at the desired result.
\end{proof}
\begin{claim}
\label{claim:ode_inequality}
Given the ordinary differential inequality for $y\left(t\right)$,
\[
\dot{y}\leq-by+a\,,
\]
for some $a,b>0$, with a known $y\left(0\right)$, we have for all
$t\geq0$:
\[
y\left(t\right)\leq\left(y\left(0\right)-y_{\infty}\right)e^{-bt}+y_{\infty}\,,
\]
where $y_{\infty}\triangleq\frac{a}{b}$.
\end{claim}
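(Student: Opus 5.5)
The claim is the linear Gr\"onwall comparison for $\dot y\leq -by+a$ with $a,b>0$. I will prove it with an integrating factor. I assume $y$ is absolutely continuous, or differentiable, on $[0,\infty)$, so that the inequality holds (almost everywhere) and the fundamental theorem of calculus applies. This is the setting in which the claim is used: $y(t)=\mathbb{E}\mathcal{L}(\boldsymbol{\theta}_t)$, whose time derivative is given by \cref{eq:after_ito_mean_loss_dynamics}.

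First, define the shifted function $z(t)\triangleq y(t)-y_\infty$ with $y_\infty=a/b$. Since $-by+a=-b(y-a/b)$, the hypothesis becomes $\dot z\leq -bz$.

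Second, multiply by the positive integrating factor $e^{bt}$ and set $u(t)\triangleq e^{bt}z(t)$. Then
\[
\dot u(t)=e^{bt}\left(\dot z(t)+bz(t)\right)\leq 0 .
\]
Hence $u$ is non-increasing, because $u(t)-u(0)=\int_0^t \dot u(s)\,\mathrm{d}s\leq 0$.

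Third, this gives $e^{bt}z(t)\leq z(0)$ for all $t\geq 0$. Multiplying by $e^{-bt}>0$ and adding $y_\infty$ back yields
\[
y(t)\leq\left(y(0)-y_\infty\right)e^{-bt}+y_\infty ,
\]
which is the claim. No sign condition on $y(0)-y_\infty$ is needed, since multiplying by a positive exponential preserves the inequality in either case.

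The only delicate point is regularity. If $y$ is merely known to satisfy the inequality in integral form, namely $y(t)-y(s)\leq\int_s^t(-by+a)$, the same argument still goes through. In that case $u$ is absolutely continuous, the product rule holds almost everywhere, and monotonicity follows from integrating $\dot u\leq 0$.
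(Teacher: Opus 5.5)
Your argument is correct and is essentially the paper's proof: shift by $y_\infty=a/b$ so that $\dot z\leq -bz$, multiply by the integrating factor $e^{bt}$, note that $e^{bt}z(t)$ is non-increasing, and conclude. One caveat on your closing remark: the integral-form hypothesis $y(t)-y(s)\leq\int_s^t(-by+a)$ does not by itself make $u$ absolutely continuous, since it permits downward jumps. That remark is not needed, though, because the paper, like your main argument, takes $y$ to be differentiable.
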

\begin{proof}
Define $z\left(t\right)=y\left(t\right)-y_{\infty}$, where $y_{\infty}=\frac{a}{b}$.
Then,
\[
\dot{z}=\dot{y}\leq-by+a=-b\left(z+\frac{a}{b}\right)+a=-bz\,.
\]
The claim follows from Gr\"onwall's inequality; we include the elementary derivation for completeness.
Observe that
\[
\frac{\mathrm{d}}{\mathrm{d}t}\left(e^{bt}z\left(t\right)\right)=be^{bt}z\left(t\right)+e^{bt}\dot{z}\left(t\right)\leq be^{bt}z\left(t\right)-be^{bt}z\left(t\right)=0,
\]
hence $e^{bt}z\left(t\right)$ is non increasing, meaning that for
all $t\geq0$,
\begin{align*}
e^{bt}z\left(t\right) & \leq z\left(0\right)\\
\Longrightarrow e^{bt}\left(y\left(t\right)-y_{\infty}\right) & \leq y\left(0\right)-y_{\infty}\\
y\left(t\right) & \leq\left(y\left(0\right)-y_{\infty}\right)e^{-bt}+y_{\infty}\,.
\end{align*}
\end{proof}
\newpage

\section{Well-posedness of the dynamics and It\^o's formula}
\label{sec:ito_rigor}

\subsection{Global well-posedness and stationarity of regularized CLD}
\label{sec:cld_wellposed}

Two places in the paper use the fact that \cref{eq:regularized_langevin}
has a solution defined for all times whose law is absolutely continuous with
respect to the Gibbs measure, and that this measure is stationary for the
process: \cref{cor:second_law_for_langevin}, which applies
\cref{thm:second_law_change_of_measure}; 
and the
verification of the It\^o hypotheses below, which needs the
path to be bounded on compact time intervals. The vanishing of the stationary
probability flux does not by itself supply any of this, as
\cref{rem:flux_not_enough} explains. We therefore record the required
statement and derive it from the analytic theory of It\^o SDEs with
non-smooth coefficients developed by \citet{leestannattrutnau2022}. In this subsection, we refer to \citet{leestannattrutnau2022} whenever we use "they" or "their". 
We note that this proof was LLM-assisted (Claude), and was verified by the authors, both mathematically and for proper citations.

\begin{proposition}[Non-explosion, stationarity, and absolute continuity]
\label{prop:cld_wellposed}
Suppose $N_p\geq2$, $\mathcal{L}\in C^1\prn{\real^{N_p}}$ with
$\mathcal{L}\geq0$ and $\nabla\mathcal{L}$ locally Lipschitz, and
$\boldsymbol{\Lambda}\succ\mathbf{0}$. Let
$\vtheta_0\sim\nu\triangleq\mathcal{N}\prn{\mathbf{0}_{N_p},
\boldsymbol{\Lambda}^{-1}}$, independently of the driving Brownian motion.
Then \cref{eq:regularized_langevin} has a pathwise-unique global strong
solution, which is a time-homogeneous Markov process. The Gibbs measure
\begin{equation}
p_\infty\prn{\mathrm{d}\vtheta}
\propto e^{-\beta\mathcal{L}\prn{\vtheta}}
\phi_{\boldsymbol{\Lambda}}\prn{\vtheta}\,\mathrm{d}\vtheta
\label{eq:cld_stationary_law}
\end{equation}
is stationary for it, and its law $p_t$ satisfies $p_t\ll p_\infty\sim\nu$ for
every $t\geq0$.
\end{proposition}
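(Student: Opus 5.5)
The plan is to first get local existence and uniqueness, then rule out explosion with a Lyapunov function, then prove stationarity of the Gibbs law, and finally deduce absolute continuity. \emph{Step 1 (local solution).} The drift $b(\vtheta)\triangleq-\nabla\mathcal{L}(\vtheta)-\beta^{-1}\boldsymbol{\Lambda}\vtheta$ is locally Lipschitz, since $\nabla\mathcal{L}$ is and the linear part is globally Lipschitz, and the diffusion coefficient $\sqrt{2\beta^{-1}}\I$ is constant. Standard localization, i.e.\ truncating $b$ outside balls of radius $R$ and using the globally Lipschitz theory, gives a pathwise-unique strong solution up to the explosion time $\zeta=\lim_R\tau_R$, where $\tau_R$ is the exit time from the ball of radius $R$.

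\emph{Step 2 (non-explosion).} I would use the Lyapunov function $V(\vtheta)=\beta\mathcal{L}(\vtheta)+\frac12\vtheta^\top\boldsymbol{\Lambda}\vtheta$, which is coercive because $\mathcal{L}\ge0$ and $\boldsymbol{\Lambda}\succ\mathbf{0}$. Its generator is $\mathcal{A}V=-\beta\norm{\nabla V/\beta}^2+\beta^{-1}\Delta V$. Since $\mathcal{L}$ is only $C^1$ with locally Lipschitz gradient, I would mollify or use the It\^o--Tanaka-type formula for $W^{2,\infty}_{\mathrm{loc}}$ functions; this is exactly what \citet{leestannattrutnau2022} handle. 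The obstacle is that $\Delta\mathcal{L}$ need not be bounded by $cV$. The Gibbs structure avoids this: the generator is symmetric in $L^2(p_\infty)$, with $\mathcal{A}f=\beta^{-1}e^{V}\nabla\cdot(e^{-V}\nabla f)$. Then non-explosion (conservativeness) follows from the criterion of \citet{leestannattrutnau2022} for symmetric Dirichlet forms with finite invariant measure, since $p_\infty$ is a probability measure. Concretely, I would invoke their conservativeness result for $\mathcal{E}(f,g)=\beta^{-1}\int\nabla f\cdot\nabla g\,\mathrm{d}p_\infty$, using the cutoff functions $\chi_R$ with $\abs{\nabla\chi_R}\le 1/R$, for which $\mathcal{E}(\chi_R,\chi_R)\to0$. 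If this fails, the fallback is the simple bound $\mathcal{A}|\vtheta|^2$, which works whenever $\nabla\mathcal{L}$ has at most linear growth; that holds for our bounded-activation networks only layer-wise, so I prefer the Dirichlet-form route. The condition $N_p\ge2$ enters through the dimension assumption in their framework.

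\emph{Step 3 (stationarity and Markov property).} Pathwise uniqueness plus Yamada--Watanabe gives a strong Markov family that is time homogeneous. Its transition semigroup coincides with the $L^2(p_\infty)$-semigroup $e^{t\mathcal{A}}$ of the closed form $\mathcal{E}$; this identification is again from \citet{leestannattrutnau2022}, via uniqueness of the associated Hunt process. Symmetry of $\mathcal{E}$ gives $\int P_tf\,\mathrm{d}p_\infty=\int f\,\mathrm{d}p_\infty$, and conservativeness ($P_t1=1$) then gives invariance of $p_\infty$. This is also why the vanishing flux alone is insufficient: without conservativeness, mass could escape to infinity.

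\emph{Step 4 (absolute continuity).} The equivalence $p_\infty\sim\nu$ holds because both measures have strictly positive Lebesgue densities, and $e^{-\beta\mathcal{L}}\in(0,1]$. For $p_t$, I would write $p_t(A)=\int P_t\mathbf 1_A\,\mathrm{d}\nu$ and set $g=\mathrm{d}\nu/\mathrm{d}p_\infty=Ze^{\beta\mathcal{L}}$. If $p_\infty(A)=0$, then by stationarity $\int P_t\mathbf1_A\,\mathrm{d}p_\infty=0$, so $P_t\mathbf1_A=0$ $p_\infty$-a.e. Since $\nu\sim p_\infty$, this gives $P_t\mathbf1_A=0$ $\nu$-a.e., and hence $p_t(A)=0$. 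The one delicate point is that $P_t\mathbf 1_A$ is defined pointwise only via the strong solution; here I rely on the fact that the pointwise semigroup is a version of the $L^2$ one for quasi-every, and hence Lebesgue-a.e., starting point. I expect Step 2 together with this identification to be the main obstacle, and the main reliance on \citet{leestannattrutnau2022}.
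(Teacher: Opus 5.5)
Your proposal is correct and uses the same framework as the paper (Lee--Stannat--Trutnau). It rests on the same key observation: the drift equals the logarithmic derivative $\beta^{-1}\nabla\ln\rho$ of the Gibbs density, so the problem is in the symmetric case and conservativeness plus symmetry gives invariance of $p_\infty$. Two sub-steps are argued differently.

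For conservativeness, you use cutoffs $\chi_R$ with $\mathcal{E}(\chi_R,\chi_R)\lesssim\beta^{-1}R^{-2}p_\infty(\real^{N_p})\to0$ and $\chi_R\to1$ in $L^2(p_\infty)$. Hence $1$ lies in the domain of the closed form with $\mathcal{E}(1,1)=0$, and $T_t1=1$. The paper instead cites the integrability criterion (R1): $A_{ij}\in L^1(p_\infty)$ and $\mathbf{b}-\boldsymbol{\beta}^{\rho,\mathbf{A}}=\mathbf{0}$. Both reduce to $Z<\infty$ and a constant diffusion matrix; yours is self-contained but tied to symmetry. Turning $T_t1=1$ into non-explosion of the SDE already requires identifying the strong solution with the Hunt process of the form (your Step~3, the paper's (R2)--(R3)). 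That step therefore logically precedes the conclusion of your Step~2. General Dirichlet-form theory would give non-explosion only for quasi-every starting point, which still suffices since $\nu$ has a density; feeding $T_t1=1$ into (R2) gives every starting point.

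For absolute continuity your argument is genuinely more elementary. Stationarity forces $P_t(\cdot,A)=0$ $p_\infty$-a.e.\ whenever $p_\infty(A)=0$, hence $\nu$-a.e., hence $p_t(A)=0$. This needs only that the pointwise kernel is a $p_\infty$-version of the $L^2$ semigroup. The paper instead uses (R4), $P_t(\vtheta,\cdot)\sim p_\infty$ for every $\vtheta$ and $t>0$. That deeper regularity statement buys more: $p_t\ll p_\infty$ (indeed $p_t\sim p_\infty$ for $t>0$) for an \emph{arbitrary} initial law, including deterministic starting points. Your route instead uses precisely that the initial law $\nu$ is itself absolutely continuous with respect to $p_\infty$.
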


The hypotheses are weaker than those imposed in the body: they ask only that
$\mathcal{L}$ be once continuously differentiable with a locally Lipschitz
gradient, so the proposition covers both the $C^2$ activations of
\cref{ass:activation_bounded_gradients} and their $C^1$ relaxation in
\cref{footnote:C1_activations}. No growth condition on $\nabla\mathcal{L}$ and
no condition on second derivatives of $\mathcal{L}$ appears anywhere in the
proof; the only global input is $\mathcal{L}\geq0$, through which the $L_2$
regularization makes $Z$ finite.

\begin{proof}
\textbf{The cited framework.}
\citet{leestannattrutnau2022} study, on $\real^{d}$ with $d\geq2$, the
It\^o equation
\begin{equation}
\mathbf{X}_t=\mathbf{x}_0+\int_0^t\mathbf{S}\prn{\mathbf{X}_s}
\,\mathrm{d}\mathbf{w}_s
+\int_0^t\mathbf{b}\prn{\mathbf{X}_s}\,\mathrm{d}s\,,
\qquad0\leq t<\zeta\,,
\label{eq:lst_sde}
\end{equation}
with $\zeta$ the lifetime of $\mathbf{X}$, whose generator on
$C^2\prn{\real^d}$ is
\begin{equation}
\mathcal{G}f=\tfrac12\mathrm{Tr}\prn{\mathbf{A}\nabla^2f}
+\mathbf{b}^\top\nabla f\,,
\qquad
\mathbf{A}\triangleq\mathbf{S}\mathbf{S}^\top\,,
\label{eq:lst_generator}
\end{equation}
together with a measure $\mu$ having Lebesgue density $\rho$ that is
\emph{infinitesimally invariant}, meaning $\int\mathcal{G}f\,\mathrm{d}\mu=0$
for every $f\in C_c^\infty\prn{\real^d}$, where $C_c^\infty\prn{\real^d}$ are the smooth functions on $\real^d$
that vanish outside a compact set. For such a density and diffusion
matrix they define the logarithmic derivative
\begin{equation}
\boldsymbol{\beta}^{\rho,\mathbf{A}}\triangleq\tfrac12\nabla\mathbf{A}
+\frac{\mathbf{A}\nabla\rho}{2\rho}\,,
\qquad
\prn{\nabla\mathbf{A}}_i\triangleq\sum_{j=1}^d\partial_jA_{ij}\,,
\label{eq:lst_logderiv}
\end{equation}
where $\nabla\mathbf{A}$ is the row-wise divergence of $\mathbf{A}$, and call
$\mathbf{b}=\boldsymbol{\beta}^{\rho,\mathbf{A}}$ the
\emph{symmetric} case. Their condition~\textbf{(c)}
\citep[Section~3.3]{leestannattrutnau2022} asks, for some $p\in\prn{d,\infty}$,
that $S_{ij}\in H^{1,p}_{\mathrm{loc}}\prn{\real^d}\cap C\prn{\real^d}$ for all
$i,j$, that $\mathbf{A}$ be locally strictly elliptic, and that
$\mathbf{b}\in L^p_{\mathrm{loc}}\prn{\real^d,\real^d}$; it implies their
conditions~\textbf{(a)} and~\textbf{(b)}. Here $L^p_{\mathrm{loc}}$ is the
space of functions that are $p$-integrable on every compact subset of
$\real^d$, $H^{1,p}_{\mathrm{loc}}$ the space of functions lying in
$L^p_{\mathrm{loc}}$ together with their first-order weak derivatives,
$C\prn{\real^d}$ the continuous functions, and \emph{locally strictly
elliptic} means that on every relatively compact open $V\subset\real^d$ the
quadratic form 
$\boldsymbol{\xi}^\top \mathbf{A} \boldsymbol{\xi}$
lies between two positive multiples of $\norm{\boldsymbol{\xi}}^2$, uniformly on
$V$. Write $\prn{T_t}_{t>0}$ for the semigroup of $\mathcal{G}$ on
$L^s\prn{\real^d,\mu}$, the functions $s$-integrable against $\mu$,
$s\in\sqprn{1,\infty}$
\citep[Definition~2.7]{leestannattrutnau2022}; it is sub-Markovian, so
$T_t1\leq1$, with equality not automatic. Under~\textbf{(a)} and~\textbf{(b)}
they construct a Hunt process $\mathbb{M}$, \ie a strong Markov process, here
with continuous sample paths, with lifetime $\zeta$ and laws
$\prn{\mathbb{P}_{\mathbf{x}_0}}_{\mathbf{x}_0\in\real^d}$, where
$\mathbb{P}_{\mathbf{x}_0}$ is the law of $\mathbb{M}$ started at
$\mathbf{x}_0$. Its transition semigroup $\prn{P_t}_{t\geq0}$ is a pointwise
version of $\prn{T_t}_{t>0}$, and
$P_t\prn{\mathbf{x}_0,E}=\mathbb{P}_{\mathbf{x}_0}\prn{\mathbf{X}_t\in E}$
for every $\mathbf{x}_0\in\real^d$, $t\geq0$ and Borel $E\subseteq\real^d$
\citep[Theorems~2.31 and~3.11]{leestannattrutnau2022}.

Their Sections~2.3 and~3.1--3.3 assume in addition that the density of $\mu$ is
either produced from the coefficients as in their
Theorem~2.24 or prescribed in advance as in their
Remark~2.28. Ours is prescribed, so the second
applies: from any $\rho\in H^{1,p}_{\mathrm{loc}}\prn{\real^d}\cap
C\prn{\real^d}$ that is strictly positive, that remark builds admissible
operators indexed by a diffusion matrix, an antisymmetric matrix, and a
divergence-free field, and the member indexed by
$\prn{\mathbf{A},\mathbf{0},\mathbf{0}}$ has drift
$\boldsymbol{\beta}^{\rho,\mathbf{A}}$ and satisfies their
standing assumptions \citep[eqs.~(2.10)--(2.14)]{leestannattrutnau2022}.
Steps~1 and~2 below exhibit our equation as exactly this member, and verify
(2.10)--(2.14) directly, so that nothing here rests on an unchecked hypothesis.

We import four results, each stated under the assumptions just described:
\begin{enumerate}[topsep=2pt,itemsep=1pt]
\item[(R1)] \emph{Conservativeness}
\citep[Corollary~2.16(i)]{leestannattrutnau2022}: if
$A_{ij}\in L^1\prn{\mu}$ and $b_i-\beta_i^{\rho,\mathbf{A}}\in L^1\prn{\mu}$
for all $i,j$, then $T_t1=1$.
\item[(R2)] \emph{Non-explosion}
\citep[Corollary~3.23]{leestannattrutnau2022}: $\prn{T_t}_{t>0}$ is
conservative if and only if
$\mathbb{P}_{\mathbf{x}_0}\prn{\zeta=\infty}=1$ for every
$\mathbf{x}_0\in\real^d$.
\item[(R3)] \emph{Strong well-posedness}
\citep[Theorem~3.52]{leestannattrutnau2022}: under~\textbf{(c)} and
non-explosion, \cref{eq:lst_sde} has, for every $\mathbf{x}_0\in\real^d$, a
pathwise-unique strong solution defined for all times; uniqueness in law
holds, and that solution has the law of $\mathbb{M}$ started at
$\mathbf{x}_0$.
\item[(R4)] \emph{Absolute continuity and invariance}
\citep[Proposition~3.1(i)]{leestannattrutnau2022}:
$P_t\prn{\mathbf{x}_0,\cdot}$
is equivalent to $\mu$ for every $\mathbf{x}_0\in\real^d$ and $t>0$. Moreover, in the
symmetric case conservativeness makes $\mu$ invariant for the semigroup
\citep[Remark~2.13(i)]{leestannattrutnau2022} and hence for the process
\citep[Remark~3.44]{leestannattrutnau2022}.
\end{enumerate}

\medskip
\noindent\textbf{Step 1: the dictionary, and condition~\textbf{(c)}.}
We read \cref{eq:regularized_langevin} as \cref{eq:lst_sde} with $d=N_p\geq2$
and
\begin{equation}
\mathbf{S}=\sqrt{2\beta^{-1}}\,\I_{N_p}\,,
\qquad
\mathbf{A}=\mathbf{S}\mathbf{S}^\top=2\beta^{-1}\I_{N_p}\,,
\qquad
\mathbf{b}=-\nabla\mathcal{L}-\beta^{-1}\boldsymbol{\Lambda}\vtheta\,,
\label{eq:cld_dictionary}
\end{equation}
and with $\rho$ the density of $p_\infty$ in
\cref{eq:cld_stationary_law}, so that
$\mu=\rho\,\mathrm{d}\vtheta=p_\infty$. Because $\mathcal{L}\geq0$,
\begin{equation}
Z=\int_{\real^{N_p}}
e^{-\beta\mathcal{L}\prn{\vtheta}
-\frac12\vtheta^\top\boldsymbol{\Lambda}\vtheta}
\,\mathrm{d}\vtheta
\leq\int_{\real^{N_p}}
e^{-\frac12\vtheta^\top\boldsymbol{\Lambda}\vtheta}\,\mathrm{d}\vtheta
=\frac{\prn{2\pi}^{N_p/2}}{\sqrt{\det\boldsymbol{\Lambda}}}<\infty\,,
\label{eq:cld_partition_finite}
\end{equation}
so the normalizing constant of \cref{eq:cld_stationary_law} is finite and
strictly positive and $p_\infty$ is a well-defined probability measure; it is
equivalent to $\nu$ because $e^{-\beta\mathcal{L}}$ is strictly positive and
finite everywhere.

Condition~\textbf{(c)} is immediate here, because both $\mathbf{S}$ and
$\mathbf{A}$ are \emph{constant}: their entries lie in
$H^{1,p}_{\mathrm{loc}}\cap C$ for every $p$, and
$
\boldsymbol{\xi}^\top \mathbf{A} \boldsymbol{\xi}
=2\beta^{-1}\norm{\boldsymbol{\xi}}^2$, so $\mathbf{A}$ is strictly elliptic
globally, not merely locally. Finally $\mathbf{b}$ is continuous, since
$\mathcal{L}\in C^1$, hence locally bounded and so in $L^p_{\mathrm{loc}}$ for
every $p$. Thus \cref{eq:lst_sde} satisfies~\textbf{(c)} for every
$p\in\prn{N_p,\infty}$, and therefore also~\textbf{(a)} and~\textbf{(b)}.

\medskip
\noindent\textbf{Step 2: the standing assumptions, and symmetry.}
The density $\rho\propto e^{-\beta\mathcal{L}-\frac12\vtheta^\top
\boldsymbol{\Lambda}\vtheta}$ is strictly positive and $C^1$ because
$\mathcal{L}$ is, which places it in $H^{1,p}_{\mathrm{loc}}\cap C$ for every
$p$, as \citet[Remark~2.28]{leestannattrutnau2022} requires; consequently
$\sqrt{\rho}\in H^{1,2}_{\mathrm{loc}}$ and $p_\infty$ has full support, which
is their eq.~(2.10). The constant matrix $\mathbf{A}$ is symmetric and, being
bounded with vanishing weak derivatives, lies in every local Sobolev space,
weighted by $p_\infty$ or not; with the ellipticity of Step~1 this gives their
eqs.~(2.11) and (2.12). Finally $\mathbf{b}$ is
locally bounded, hence locally square-integrable against $p_\infty$, which is their eq.~(2.13).

For the symmetry, $\mathbf{A}$ being constant kills the first term of
\cref{eq:lst_logderiv}, leaving
\begin{equation}
\boldsymbol{\beta}^{\rho,\mathbf{A}}
=\frac{\mathbf{A}\nabla\rho}{2\rho}
=\beta^{-1}\nabla\ln\rho
=\beta^{-1}\nabla\prn{-\beta\mathcal{L}
-\tfrac12\vtheta^\top\boldsymbol{\Lambda}\vtheta}
=-\nabla\mathcal{L}-\beta^{-1}\boldsymbol{\Lambda}\vtheta
=\mathbf{b}\,,
\label{eq:cld_symmetric}
\end{equation}
by \cref{eq:cld_dictionary}. This is the symmetric case. It remains to check
their eq.~(2.14). Let $f\in C_c^\infty\prn{\real^{N_p}}$; its compact support is what
makes the integration by parts below free of boundary terms. Since
$\mathbf{A}=2\beta^{-1}\I_{N_p}$, the second-order part of
\cref{eq:lst_generator} is $\beta^{-1}\Delta f$, and
\begin{align}
\beta^{-1}\int_{\real^{N_p}}\Delta f\,\rho\,\mathrm{d}\vtheta
&=-\beta^{-1}\int_{\real^{N_p}}\nabla f^\top\nabla\rho\,\mathrm{d}\vtheta
\nonumber\\
&=-\int_{\real^{N_p}}
\prn{\beta^{-1}\frac{\nabla\rho}{\rho}}^{\!\top}\nabla f
\,\mathrm{d}p_\infty
=-\int_{\real^{N_p}}
\prn{\boldsymbol{\beta}^{\rho,\mathbf{A}}}^{\!\top}
\nabla f\,\mathrm{d}p_\infty\,,
\label{eq:cld_by_parts}
\end{align}
the last equality by \cref{eq:cld_symmetric}. Restoring the first-order part
$\mathbf{b}^\top\nabla f$ of \cref{eq:lst_generator},
\begin{equation}
\int_{\real^{N_p}}\mathcal{G}f\,\mathrm{d}p_\infty
=\int_{\real^{N_p}}
\prn{\mathbf{b}-\boldsymbol{\beta}^{\rho,\mathbf{A}}}^{\!\top}
\nabla f\,\mathrm{d}p_\infty=0\,,
\end{equation}
again by \cref{eq:cld_symmetric}. So $p_\infty$ is infinitesimally invariant,
completing (2.10)--(2.14) and the match with
\citet[Remark~2.28]{leestannattrutnau2022}. This is the weak form of the
vanishing-flux verification cited in \cref{footnote:pavliotis}, stated against
test functions rather than pointwise, which is the form the theory of
\citet{leestannattrutnau2022} consumes.

\medskip
\noindent\textbf{Step 3: conservativeness and non-explosion.}
Both integrability conditions of (R1) are immediate. The drift discrepancy
$\mathbf{b}-\boldsymbol{\beta}^{\rho,\mathbf{A}}$ vanishes identically by
\cref{eq:cld_symmetric}, and $\mathbf{A}$ is constant while $p_\infty$ is a
probability measure by \cref{eq:cld_partition_finite}, so
\begin{equation}
\int_{\real^{N_p}}\abs{A_{ij}}\,\mathrm{d}p_\infty
=2\beta^{-1}\delta_{ij}<\infty\,,
\qquad\forall i,j\in\sqprn{N_p}\,.
\end{equation}
Hence (R1) gives $T_t1=1$, and (R2) turns this into
$\mathbb{P}_{\vtheta}\prn{\zeta=\infty}=1$ for every
$\vtheta\in\real^{N_p}$.

\medskip
\noindent\textbf{Step 4: the conclusions.}
Step~1 supplies~\textbf{(c)} and Step~3 non-explosion, so (R3) gives, for
every deterministic starting point, a pathwise-unique strong solution of
\cref{eq:regularized_langevin} defined for all $t\geq0$, with the law of
$\mathbb{M}$; in particular it is a time-homogeneous Markov process with
transition kernels $\prn{P_t}_{t\geq0}$. For the Gaussian initialization,
$\mathbf{S}$ is constant and
$\mathbf{b}=-\nabla\mathcal{L}-\beta^{-1}\boldsymbol{\Lambda}\vtheta$ is
locally Lipschitz, since $\nabla\mathcal{L}$ is so by the hypothesis of
\cref{prop:cld_wellposed} and the weight-decay term is linear; standard
local SDE theory therefore supplies a pathwise-unique strong solution up to its
explosion time for an initial variable independent of the Brownian motion
\citep[Chapter~5, Theorems~2.5 and~2.9, applied after
localization]{karatzasshreve1991brownian}. By that independence,
conditioning on $\vtheta_0=\vtheta$ leaves the Brownian motion unchanged, so
the conditioned solution is the solution started at $\vtheta$, whose law is
$\mathbb{P}_{\vtheta}$ by (R3) and whose lifetime Step~3 showed to be
infinite.

For the stationarity, \cref{eq:cld_symmetric} puts us in the symmetric case
and Step~3 gives conservativeness, so $p_\infty$ is invariant for the process
by (R4). For the absolute continuity, let $E$ be Borel with
$p_\infty\prn{E}=0$. Conditionally on $\vtheta_0=\vtheta$ the solution has law
$\mathbb{P}_{\vtheta}$, so it lies in $E$ at time $t$ with probability
$P_t\prn{\vtheta,E}$; averaging over $\vtheta_0\sim\nu$,
\begin{equation}
p_t\prn{E}=\int_{\real^{N_p}}P_t\prn{\vtheta,E}\,
\nu\prn{\mathrm{d}\vtheta}\,,
\end{equation}
and by (R4) the integrand vanishes for every $\vtheta$ and every $t>0$, so
$p_t\prn{E}=0$. Finally $p_0=\nu\sim p_\infty$ covers $t=0$. Hence
$p_t\ll p_\infty\sim\nu$ for every $t\geq0$.
\end{proof}

\begin{remark}[Why the flux computation is not enough]
\label{rem:flux_not_enough}
The verification cited in \cref{footnote:pavliotis} shows that the Gibbs
density solves the \emph{stationary} Fokker--Planck equation, its probability
flux vanishing identically. That is strictly weaker than invariance under the
transition semigroup, which is what \cref{prop:cld_wellposed} asserts and what
\cref{thm:second_law_change_of_measure} consumes. If the equation explodes
with positive probability, the minimal semigroup is sub-Markov and, for a
symmetric diffusion, the reference measure is only \emph{sub}-invariant, with
equality if and only if the process is conservative. The flux computation is
local and cannot see this: it is insensitive to whether mass escapes to
infinity in finite time. This is why the stationarity claim waits until
non-explosion has been established, rather than being read off the flux
alone.
\end{remark}

\subsection{Introduction and smooth activations}
We first recall the standard multidimensional It\^o formula, as stated in
\citet[Chapter 1, Theorem 6.4]{mao2007stochastic}, adapted to our notation.
\begin{theorem}[Multidimensional It\^o formula]
    \label{thm:ito_full_presentation}
    Let $\left(\w_t\right)_{t\geq0}$ be a standard Brownian motion in
    $\real^{N_p}$. Let $\vu_t\in\real^{N_p}$ and
    $\V_t\in\real^{N_p\times N_p}$ be progressively measurable processes
    satisfying
    \[
    \int_0^T\norm{\vu_t}\mathrm{d}t<\infty,\qquad
    \int_0^T\norm{\V_t}_{\mathrm{F}}^2\mathrm{d}t<\infty\,,
    \]
    a.s. for every $T>0$. Let $\vtheta_t\in\real^{N_p}$ be the continuous
    It\^o process
    \[
    \mathrm{d}\vtheta_t=\vu_t\mathrm{d}t+\V_t\mathrm{d}\w_t\,.
    \]
    If $\mathcal{L}\in C^{2,1}\left(\real^{N_p}\times\real_+;\real\right)$,\footnote{That is, twice continuously differentiable in the spatial variable and once continuously differentiable in the time variable.}
    then $\mathcal{L}\left(\vtheta_t,t\right)$ is again an It\^o process and
    \begin{align*}
        \mathrm{d}\mathcal{L}\left(\vtheta_t,t\right)
        &=
        \frac{\partial\mathcal{L}}{\partial t}\left(\vtheta_t,t\right)\mathrm{d}t
        +\nabla_{\vtheta}\mathcal{L}\left(\vtheta_t,t\right)^{\top}\vu_t\mathrm{d}t\\
        &\quad+
        \frac{1}{2}\mathrm{Tr}\left(
        \V_t^{\top}\nabla_{\vtheta}^2\mathcal{L}\left(\vtheta_t,t\right)\V_t
        \right)\mathrm{d}t
        +\nabla_{\vtheta}\mathcal{L}\left(\vtheta_t,t\right)^{\top}\V_t\mathrm{d}\w_t\,.
    \end{align*}
\end{theorem}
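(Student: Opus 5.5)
This is the classical multidimensional Itô formula, cited from Mao, so the plan is to reproduce the standard textbook argument. The route is: reduce to the one-dimensional semimartingale case by localization and a second-order Taylor expansion along partitions, then control the quadratic-variation terms.

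\emph{Step 1: localization.} Fix $T>0$. For $k\in\mathbb{N}$ define the stopping times
\[
\tau_k=\inf\Bigl\{t\le T:\ \norm{\vtheta_t}\ge k\ \text{or}\ \int_0^t\norm{\vu_s}\,\mathrm{d}s\ge k\ \text{or}\ \int_0^t\norm{\V_s}_{\mathrm F}^2\,\mathrm{d}s\ge k\Bigr\}\wedge T .
\]
By continuity of $\vtheta$ and the a.s.\ finiteness hypotheses, $\tau_k\uparrow T$ a.s. On $[0,\tau_k]$ the path stays in a compact ball, so I replace $\mathcal{L}$ by a $C^{2,1}$ function with compact support agreeing with $\mathcal{L}$ on $\{\norm{\vtheta}\le k\}\times[0,T]$. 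Its derivatives up to second order in $\vtheta$ and first order in $t$ are then bounded and uniformly continuous. It suffices to prove the formula for the stopped process and let $k\to\infty$. Stochastic integrals are consistent under stopping, and the Lebesgue integrals converge pathwise, so the identity passes to the limit.

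\emph{Step 2: Taylor expansion along a partition.} Take partitions $0=t_0<\dots<t_N=t$ with mesh tending to zero. Write
\[
\mathcal{L}(\vtheta_t,t)-\mathcal{L}(\vtheta_0,0)=\sum_j\bigl[\mathcal{L}(\vtheta_{t_{j+1}},t_{j+1})-\mathcal{L}(\vtheta_{t_j},t_j)\bigr]
\]
and expand each increment to second order in $\Delta\vtheta_j$ and first order in $\Delta t_j$, with a remainder controlled by the modulus of continuity of $\nabla^2\mathcal{L}$ and $\partial_t\mathcal{L}$. The first-order sums converge to $\int\partial_t\mathcal{L}\,\mathrm{d}s$ and to $\int\nabla\mathcal{L}^\top\vu\,\mathrm{d}s+\int\nabla\mathcal{L}^\top\V\,\mathrm{d}\w$. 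For the stochastic integral, convergence is in probability, via the Itô isometry and approximation of the integrand by simple processes.

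\emph{Step 3: the second-order term, the main obstacle.} Split $\Delta\vtheta_j=\int\vu+\int\V\,\mathrm{d}\w$. Cross terms involving $\int\vu$ vanish in the limit: they are bounded by $\max_j\abs{\int_{t_j}^{t_{j+1}}\vu}$ times a finite total variation or quadratic variation. The key claim is
\[
\sum_j\Delta M_j^\top H_j\,\Delta M_j\to\int_0^t\mathrm{Tr}\bigl(\V_s^\top H_s\V_s\bigr)\,\mathrm{d}s ,
\]
where $M$ is the martingale part and $H_j=\nabla^2\mathcal{L}(\vtheta_{t_j},t_j)$. To prove it, first reduce to simple (step) $\V$ by an $L^2$ approximation, using the localization bound. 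For simple bounded $\V$, the difference
\[
\sum_j\bigl[\Delta M_j^\top H_j\Delta M_j-\mathrm{Tr}\bigl(\V_{t_j}^\top H_j\V_{t_j}\bigr)\Delta t_j\bigr]
\]
is a sum of martingale-difference terms, because $\Delta\w_j\Delta\w_j^\top-\I\,\Delta t_j$ has conditional mean zero. Its second moment is therefore $O\bigl(\sum_j\Delta t_j^2\bigr)\to0$. Finally, the Taylor remainder is bounded by $o(1)\cdot\sum_j\norm{\Delta\vtheta_j}^2$, which tends to zero in probability since the quadratic variation is tight.

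Collecting the limits along a subsequence converging a.s.\ gives the integral identity, which is the stated differential form. By continuity in $t$ it holds simultaneously for all $t$.
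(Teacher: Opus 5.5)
Your proposal is correct and is the standard localization-plus-Taylor-expansion proof of the It\^o formula. The paper gives no proof of its own here: it quotes the result from \citet[Chapter~1, Theorem~6.4]{mao2007stochastic}, where it is obtained along these same lines. The one step to state more carefully is the cross term $\sum_j\Delta A_j^\top H_j\,\Delta M_j$, with $\Delta A_j=\int_{t_j}^{t_{j+1}}\vu_s\,\mathrm{d}s$. Bound it by a constant times $\max_j\norm{\Delta M_j}\int_0^t\norm{\vu_s}\,\mathrm{d}s$, which vanishes by path continuity of $M$. A bound of $\max_j\norm{\Delta A_j}$ times $\sum_j\norm{\Delta M_j}$ does not work, because that sum is not bounded.
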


In our setting, as presented in \cref{sec:setting_deep_nn},
\begin{align*}
    \mathcal{L}\left(\vtheta\right)
    &=
    \frac{1}{2n}\left\Vert f\left(\vtheta\right)-\mathbf{y}\right\Vert^2\,,\\
    \vu_t
    &=
    -\nabla_{\vtheta}\mathcal{L}\left(\vtheta_t\right)
    -\beta^{-1}\boldsymbol{\Lambda}\vtheta_t\,,\\
    \V_t
    &=
    \sqrt{2\beta^{-1}}\I_{N_p}\,.
\end{align*}
The diffusion condition is immediate, since
\[
\int_0^T\norm{\V_t}_{\mathrm{F}}^2\mathrm{d}t
=
2\beta^{-1}N_pT<\infty\,.
\]
For the drift condition, \cref{prop:cld_wellposed} gives that the
solution $\vtheta_t$ is global and continuous for the Gaussian initialization
used here. Then, on every finite interval $\left[0,T\right]$, the path
$\left(\vtheta_t\right)_{0\leq t\leq T}$ is a.s. bounded.
If the network map
$f$ has continuously differentiable activations, then
$\nabla_{\vtheta}\mathcal{L}$ is continuous and hence locally bounded.
Therefore $\vu_t$ is a.s. bounded on $\left[0,T\right]$, and in particular
\[
\int_0^T\norm{\vu_t}\mathrm{d}t<\infty
\]
a.s. for every $T>0$.

When the activation function is twice continuously differentiable, 
the loss $\mathcal{L}$ is twice continuously
differentiable as a function of $\vtheta$. Hence all requirements of
\cref{thm:ito_full_presentation} hold.

\subsection{Extension to once continuously differentiable activations}
\label{sec:ito_extension_to_once_diff}
We next describe how to extend the argument to activations that are only once
continuously differentiable, such as $\mathrm{softsign}$ and $\mathrm{ELU}$
\citep{clevert2015fast}. For this purpose, we use the following one-dimensional
generalized It\^o formula from \citet[Proposition 1.1]{ondrejat2025generalizedito},
adapted to our notation.

\begin{theorem}[Generalized one-dimensional It\^o formula]
    \label{thm:ito_generalized}
    Let $X_t$ be a real-valued continuous semimartingale. Let
    $F\in C^1\left(\real\right)$,\footnote{That is, once continuously differentiable.} and assume that $F'$ is absolutely continuous
    on every compact interval. Let $g:\real\to\real$ be a Borel function
    satisfying
    \(
    g=F''
    \)
    Lebesgue-a.e. on $\real$. Then, for every $t\geq0$,
    \[
    F\left(X_t\right)-F\left(X_0\right)
    =
    \int_0^tF'\left(X_s\right)\mathrm{d}X_s
    +\frac{1}{2}\int_0^tg\left(X_s\right)\mathrm{d}\left\langle X\right\rangle_s\,,
    \]
    a.s.
\end{theorem}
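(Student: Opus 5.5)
The plan is a mollification argument combined with the occupation times formula. The classical It\^o formula (\cref{thm:ito_full_presentation}, one-dimensional, time-independent) applies to smooth approximations of $F$. The only delicate limit is the second-order term, because $g$ is merely locally integrable and is defined only up to Lebesgue-null sets.

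\textbf{Step 1 (localization).} Write $X=X_0+M+A$ with $M$ a continuous local martingale and $A$ continuous of finite variation. For $N\in\mathbb{N}$ let $\tau_N$ be the first time that $\abs{X_t}$, $\langle M\rangle_t$ or the total variation of $A$ on $\sqprn{0,t}$ exceeds $N$. Then $\tau_N\uparrow\infty$ a.s. It suffices to prove the identity for the stopped process $X^{\tau_N}$, which takes values in $\sqprn{-N,N}$. Only the restrictions of $F$ and $g$ to $\sqprn{-N-1,N+1}$ then matter. Since $F'$ is absolutely continuous there, $F''$ exists a.e. and lies in $L^1\prn{\sqprn{-N-1,N+1}}$, hence so does $g$.

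\textbf{Step 2 (smoothing).} Let $\eta_k$ be a standard mollifier supported in $\sqprn{-1/k,1/k}$ and set $F_k=F*\eta_k\in C^\infty$. Then $F_k\to F$ and $F_k'=F'*\eta_k\to F'$ uniformly on $\sqprn{-N,N}$, because $F,F'$ are continuous. Moreover, since $F'$ is absolutely continuous, $F_k''=F''*\eta_k=g*\eta_k$, and these converge to $g$ in $L^1\prn{\sqprn{-N,N}}$. The classical It\^o formula gives
\[
F_k\prn{X_t}-F_k\prn{X_0}=\int_0^tF_k'\prn{X_s}\mathrm{d}X_s+\tfrac12\int_0^tF_k''\prn{X_s}\mathrm{d}\langle X\rangle_s\,.
\]
The left-hand side converges pointwise to $F\prn{X_t}-F\prn{X_0}$. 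For the first-order term, $F_k'\prn{X_s}\to F'\prn{X_s}$ uniformly and boundedly on the localized interval. The $\mathrm{d}A$ part then converges by ordinary dominated convergence. The $\mathrm{d}M$ part converges in probability by the dominated convergence theorem for stochastic integrals, equivalently by $L^2$-convergence via the It\^o isometry with $\langle M\rangle_{\tau_N}\leq N$.

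\textbf{Step 3 (the second-order term; main obstacle).} This step needs $\int_0^t h_k\prn{X_s}\mathrm{d}\langle X\rangle_s\to\int_0^t g\prn{X_s}\mathrm{d}\langle X\rangle_s$ when $h_k\to g$ only in $L^1$. Pointwise convergence is unavailable, and $g$ itself is defined only a.e. I would invoke the occupation times formula: there is a local time field $\prn{L^a_t}_{a\in\real}$, a.s. jointly measurable and c\`adl\`ag in $a$, vanishing for $\abs{a}>N$, such that for every bounded or non-negative Borel $h$,
\[
\int_0^th\prn{X_s}\mathrm{d}\langle X\rangle_s=\int_{\real}h\prn{a}L_t^a\,\mathrm{d}a\,.
\]
By monotone convergence this extends to non-negative Borel $h$, and then to $\abs{g}$; in particular $\int_0^t\abs{g\prn{X_s}}\mathrm{d}\langle X\rangle_s\leq\sup_aL^a_t\,\norm{g}_{L^1\prn{\sqprn{-N,N}}}<\infty$ a.s., since a c\`adl\`ag function with compact support is bounded. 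Consequently
\[
\abs{\int_0^t\prn{F_k''-g}\prn{X_s}\mathrm{d}\langle X\rangle_s}\leq\sup_aL_t^a\,\norm{F_k''-g}_{L^1\prn{\sqprn{-N,N}}}\to0\quad\text{a.s.}
\]
The same bound shows that changing $g$ on a Lebesgue-null set leaves the integral unchanged a.s., which justifies the freedom in choosing the Borel version $g=F''$ a.e.

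Passing to the limit along a subsequence, so that convergence in probability becomes a.s., gives the formula up to $\tau_N$. Letting $N\to\infty$ yields it for every $t\geq0$. The real substance is the occupation-density step, namely the existence and boundedness of local times for continuous semimartingales (Tanaka's formula with the Revuz--Yor regularity in $a$). Once that is granted, the rest is routine approximation.
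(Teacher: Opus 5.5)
Your argument is correct, but it follows a different route from the paper. The paper does not prove \cref{thm:ito_generalized}; it imports it from \citet[Proposition 1.1]{ondrejat2025generalizedito}. You instead derive it from standard continuous-semimartingale theory, in three moves:
\begin{itemize}
\item mollify $F$ and apply the $C^2$ It\^o formula;
\item pass to the limit in the first-order term using uniform convergence of $F_k'$ on the localized range;
\item handle the second-order term through the occupation density formula, which turns the $L^1_{\mathrm{loc}}$ convergence $F_k''=g*\eta_k\to g$ into a.s.\ convergence of $\int_0^t(\cdot)(X_s)\,\mathrm{d}\langle X\rangle_s$ via $\sup_a L^a_t<\infty$.
\end{itemize}
That last step also shows two things the citation leaves implicit: the $g$-integral is a.s.\ finite, and it does not depend on which Borel version of $F''$ is chosen.

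The citation buys brevity. Your proof makes explicit that the only non-elementary input is local time with a c\`adl\`ag-in-$a$ version. Given that machinery, you could shorten things further. Since $F'$ is locally absolutely continuous, it is locally of bounded variation, so $F$ is a difference of convex functions with second-derivative measure $g(a)\,\mathrm{d}a$. The It\^o--Tanaka formula $F(X_t)=F(X_0)+\int_0^tF'(X_s)\,\mathrm{d}X_s+\frac12\int_{\real}L^a_t\,g(a)\,\mathrm{d}a$, combined with the occupation density formula, then gives the claim at once. Your Steps 2--3 essentially reprove It\^o--Tanaka in this special case.

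Two cosmetic points. First, the classical formula you apply to $F_k$ must be the version for general continuous semimartingales, because \cref{thm:ito_full_presentation} as stated covers only Brownian It\^o processes. For the paper's actual use in \cref{cor:ito_generalized_for_use} the narrower version would suffice. Second, on $\{\abs{X_0}>N\}$ you have $\tau_N=0$, so $X^{\tau_N}$ need not lie in $\sqprn{-N,N}$. Nothing breaks: all integrals vanish on that event, and elsewhere the integrands only see $X_s$ for $s\in(0,\tau_N]$, where $\abs{X_s}\leq N$.
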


\begin{remark}[A practical sufficient condition]
    \label{rem:lipschitz_derivative_implies_ac}
    The absolute-continuity assumption in \cref{thm:ito_generalized} is implied
    by a simpler and more familiar condition: it is enough that $F'$ is locally
    Lipschitz. That is, for every compact interval $\left[a,b\right]$, there
    exists a finite constant $L_{a,b}$ such that
    \[
    \left|F'\left(x\right)-F'\left(y\right)\right|
    \leq L_{a,b}\left|x-y\right|
    \qquad\forall x,y\in\left[a,b\right]\,.
    \]
    Under this condition, $F'$ is absolutely continuous on every compact
    interval. Moreover, $F''$ exists Lebesgue-a.e., and on every compact
    interval it is bounded by a finite constant. Therefore
    \cref{thm:ito_generalized} applies with any Borel representative
    $g:\real\to\real$ satisfying $g=F''$ Lebesgue-a.e.
\end{remark}

We will use the following specialization to scalar It\^o processes driven by
the same Brownian motion $\w_t$.

\begin{corollary}
    \label{cor:ito_generalized_for_use}
    Let $u_t\in\real$ and $\vv_t\in\real^{N_p}$ be progressively measurable
    processes satisfying
    \[
    \int_0^T\abs{u_t}\mathrm{d}t<\infty,\qquad
    \int_0^T\norm{\vv_t}^2\mathrm{d}t<\infty
    \]
    a.s. for every $T>0$, and let $X_t\in\real$ be the It\^o process
    \[
    \mathrm{d}X_t=u_t\mathrm{d}t+\vv_t^{\top}\mathrm{d}\w_t\,.
    \]
    Let $F\in C^1\left(\real\right)$, and assume that $F'$ is absolutely
    continuous on every compact interval. Let $g:\real\to\real$ be a Borel
    function satisfying $g=F''$ Lebesgue-a.e. on $\real$. Then
    \[
    F\left(X_t\right)-F\left(X_0\right)
    =
    \int_0^t
    \left[
    F'\left(X_s\right)u_s
    +\frac{1}{2}g\left(X_s\right)\norm{\vv_s}^2
    \right]\mathrm{d}s
    +
    \int_0^tF'\left(X_s\right)\vv_s^{\top}\mathrm{d}\w_s
    \]
    a.s. for every $t\geq0$. Equivalently, in differential notation,
    \[
    \mathrm{d}F\left(X_t\right)
    =
    F'\left(X_t\right)u_t\mathrm{d}t
    +\frac{1}{2}g\left(X_t\right)\norm{\vv_t}^2\mathrm{d}t
    +F'\left(X_t\right)\vv_t^{\top}\mathrm{d}\w_t\,.
    \]
\end{corollary}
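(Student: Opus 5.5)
The plan is to obtain \cref{cor:ito_generalized_for_use} directly from \cref{thm:ito_generalized}, by checking that $X_t$ is a continuous semimartingale and then identifying the two integrals that theorem produces. I would proceed in the following order.

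\textbf{Step 1: $X$ is a continuous semimartingale.} Write $X_t=X_0+A_t+M_t$ with $A_t\triangleq\int_0^t u_s\,\mathrm{d}s$ and $M_t\triangleq\int_0^t\vv_s^{\top}\mathrm{d}\w_s=\sum_{k=1}^{N_p}\int_0^t v_{s,k}\,\mathrm{d}w_{s,k}$.
\begin{itemize}
\item Since $\int_0^T\abs{u_t}\mathrm{d}t<\infty$ a.s., $A$ is continuous, adapted, and of finite variation on every $[0,T]$.
\item Since $\int_0^T\norm{\vv_t}^2\mathrm{d}t<\infty$ a.s., each stochastic integral $\int v_{\cdot,k}\,\mathrm{d}w_{\cdot,k}$ is well defined as a continuous local martingale, by localization with the stopping times $\tau_N=\inf\{t:\int_0^t\norm{\vv_s}^2\mathrm{d}s\geq N\}$.
\end{itemize}
Hence $X$ is a continuous semimartingale, and the hypotheses of \cref{thm:ito_generalized} hold for the given $F$ and $g$.

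\textbf{Step 2: compute the quadratic variation.} The finite-variation part $A$ contributes nothing to $\langle X\rangle$, so $\langle X\rangle=\langle M\rangle$. The coordinates of $\w$ are independent, so $\langle w_k,w_j\rangle_t=\delta_{kj}t$. Bilinearity of the bracket, together with the rule $\langle\int H\,\mathrm{d}w_k,\int K\,\mathrm{d}w_j\rangle=\int HK\,\mathrm{d}\langle w_k,w_j\rangle$, then gives
\[
\langle X\rangle_t=\sum_{k=1}^{N_p}\int_0^t v_{s,k}^2\,\mathrm{d}s=\int_0^t\norm{\vv_s}^2\mathrm{d}s\,.
\]

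\textbf{Step 3: split the stochastic integral.} \cref{thm:ito_generalized} gives
\[
F(X_t)-F(X_0)=\int_0^tF'(X_s)\,\mathrm{d}X_s+\tfrac12\int_0^tg(X_s)\,\mathrm{d}\langle X\rangle_s\,.
\]
By associativity of stochastic integration, the first term splits as
\[
\int_0^tF'(X_s)\,\mathrm{d}X_s=\int_0^tF'(X_s)u_s\,\mathrm{d}s+\int_0^tF'(X_s)\vv_s^{\top}\mathrm{d}\w_s\,.
\]
Both pieces are well defined. Indeed, $F'(X_s)$ is continuous in $s$, so it is a.s. bounded on $[0,t]$. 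Therefore $\int_0^t\abs{F'(X_s)u_s}\mathrm{d}s<\infty$ and $\int_0^t F'(X_s)^2\norm{\vv_s}^2\mathrm{d}s<\infty$ a.s.

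\textbf{Step 4: the second-order term.} Substituting Step 2,
\[
\int_0^tg(X_s)\,\mathrm{d}\langle X\rangle_s=\int_0^tg(X_s)\norm{\vv_s}^2\,\mathrm{d}s\,.
\]
This integral is finite a.s., because it is part of the conclusion of \cref{thm:ito_generalized}. Collecting the $\mathrm{d}s$ terms yields the stated identity, and the differential form is merely notation for it.

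\textbf{Main obstacle.} The only delicate point is Step 4. The function $g$ is defined only up to a Lebesgue-null set, so one might worry that $X$ spends positive time on the set where $g$ is arbitrary. This concern is already absorbed into \cref{thm:ito_generalized}, which is stated for any Borel $g=F''$ a.e. The reason it holds is the occupation-times formula: $\mathrm{d}\langle X\rangle$-a.e. the process avoids Lebesgue-null sets. Consequently, I only need to make sure the integrator is exactly $\mathrm{d}\langle X\rangle_s$ before rewriting it as $\norm{\vv_s}^2\mathrm{d}s$, and that is the order in which the steps above are arranged.
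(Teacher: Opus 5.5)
Your proposal is correct and follows the same route as the paper: read off $\langle X\rangle_t=\int_0^t\norm{\vv_s}^2\,\mathrm{d}s$ from the martingale part $\int_0^t\vv_s^{\top}\mathrm{d}\w_s$, then substitute it into \cref{thm:ito_generalized}. Your Steps 1 and 3 (checking that $X$ is a continuous semimartingale, and splitting $\int_0^tF'(X_s)\,\mathrm{d}X_s$ into its $\mathrm{d}s$ and $\mathrm{d}\w_s$ parts) only make explicit what the paper's two-line proof leaves implicit.
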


\begin{proof}
    The martingale part of $X_t$ is
    \[
    \int_0^t\vv_s^{\top}\mathrm{d}\w_s\,.
    \]
    Hence its quadratic variation is
    \[
    \left\langle X\right\rangle_t
    =
    \int_0^t\norm{\vv_s}^2\mathrm{d}s\,.
    \]
    Substituting this identity into \cref{thm:ito_generalized} gives the
    result.
\end{proof}

We now explain why this scalar generalized formula is sufficient for neural
networks with pointwise activations. Although $\vtheta_t\in\real^{N_p}$ is not
scalar, the nonsmoothness enters only through scalar applications of the
activation function. Therefore, we apply the standard multidimensional
It\^o formula to the smooth algebraic operations between layers, and apply the
generalized one-dimensional formula only to the scalar activation maps.

Assume that the activation function satisfies
\[
\phi\in C^1\left(\real\right)
\]
and that $\phi'$ is locally Lipschitz. By
\cref{rem:lipschitz_derivative_implies_ac}, $\phi'$ is absolutely continuous on
every compact interval, $\phi''$ exists Lebesgue-a.e., and
\cref{thm:ito_generalized} applies to $\phi\left(X_t\right)$ whenever $X_t$ is a
scalar It\^o process.

Write the layer pre-activations and post-activations as
\[
\widetilde{\A}_{\ell,t}
=
s_{\ell}\A_{\ell-1,t}\W_{\ell,t}^{\top},
\qquad
\A_{\ell,t}
=
\phi\left(\widetilde{\A}_{\ell,t}\right)\,,
\]
where $\phi$ is applied componentwise and $\A_{0,t}=\X$ is deterministic. We
argue by induction over the layers. The coordinates of each weight matrix
$\W_{\ell,t}$ are coordinates of $\vtheta_t$, hence are It\^o processes. Suppose
that all entries of $\A_{\ell-1,t}$ are It\^o processes. Then the finite vector
formed by the entries of $\W_{\ell,t}$ and $\A_{\ell-1,t}$ is itself a
finite-dimensional It\^o process. Each coordinate of
$\widetilde{\A}_{\ell,t}$ is a smooth polynomial function of this finite vector.
Therefore, by \cref{thm:ito_full_presentation}, every pre-activation coordinate
$\widetilde{A}_{\ell,t}^{i,j}$ is an It\^o process.

Applying \cref{cor:ito_generalized_for_use} to the scalar process
$\widetilde{A}_{\ell,t}^{i,j}$ gives that
\[
A_{\ell,t}^{i,j}
=
\phi\left(\widetilde{A}_{\ell,t}^{i,j}\right)
\]
is again an It\^o process. This completes the induction over layers.
Consequently, all network output coordinates are It\^o processes. Since the
squared loss
\[
\mathcal{L}\left(\mathbf{z}\right)
=
\frac{1}{2n}\norm{\mathbf{z}-\mathbf{y}}^2
\]
is a smooth function of the output vector $\mathbf{z}$, a final application of
the standard multidimensional It\^o formula yields the It\^o decomposition of
$\mathcal{L}\left(\vtheta_t\right)$.
\newpage

\section{Proof of global convergence of regularized CLD for shallow networks}
\label{app:langevin_proof}
\subsection{Introduction and goal}
\begin{recall}[\cref{thm:upper_bound_on_expected_loss}][Shallow networks]
Consider the network defined in \cref{sec:setting_deep_nn}, under the assumptions therein, with $L=1$. Suppose it is trained via regularized continuous-time Langevin dynamics as described in \cref{eq:regularized_langevin}, with 
$\boldsymbol{\Lambda}$ defined as in \cref{eq:Lambda_def}.
Denoting $\bar{\kappa}\triangleq \max \prn{1, \kappa}$, suppose \mbox{$m =\Omega \prn{ \bar\kappa^2 \beta \frac{n}{\lambda_1} \ln \frac{n}{\lambda_1}\ln \prn{\bar\kappa^4 \beta}}$}, $\beta = \Omega \prn{\frac{n}{\lambda_1}}$.
Then, for all $t \geq 0$,
\begin{align}
\mathbb{E}\mathcal{L} \left(\boldsymbol{\theta}_{t}\right)\leq\left(\mathbb{E}\mathcal{L} \left(\boldsymbol{\theta}_{0}\right)-\mathcal{L}_\infty\right)e^{-c_1s_{L+1}^{2}\frac{\lambda_{1}m}{n}t}+\mathcal{L}_\infty\,, 
\end{align}
where $c_1$ is a global constant, and the expectation is taken over both the initialization and the noise, with
\begin{equation}
\begin{aligned}
\mathcal{L}_\infty& \asymp\left(1+\bar\kappa^{4}\beta\right)\exp\left(-{c_{2}}\frac{m}{\bar\kappa^{2}\beta\frac{n}{\lambda_{1}}\ln\frac{n}{\lambda_{1}}}\right)+\beta^{-2}\frac{n^{2}}{\lambda_{1}^{2}}\left(\frac{1}{\kappa^{4}}+\frac{s_{1}^{4}d^{2}}{s_{2}^{4}m^{2}}\right) \\ 
& \qquad +\beta^{-1}\frac{n^{2}}{\lambda_{1}^{2}}\bar\kappa^{2} \kappa^2 \frac{s_{1}^{4}d^{2}}{s_{2}^{4}m^{2}}+\beta^{-1}\frac{n}{\lambda_{1}}\left(1+\kappa^{2}\frac{s_{1}^{2}d}{s_{2}^{2}m}\right)\,. 
\end{aligned}
\end{equation}
\end{recall}
\begin{proof}
We begin by noting that for the MSE loss, as defined in \cref{eq:mse_loss_def}, we have
\begin{align*}
\nabla \mathcal{L} \left(\boldsymbol{\theta}_{t}\right) & =\frac{1}{n}\nabla f\left(\boldsymbol{\theta}_{t}\right)^{\top}\mathbf{r}\left(\boldsymbol{\theta}_{t}\right)\,, \quad
\nabla^{2}\mathcal{L} \left( \boldsymbol{\theta}_{t}\right)  =\frac{1}{n}
\left( \nabla f\left(\boldsymbol{\theta}_{t}\right)^{\top}\nabla f\left(\boldsymbol{\theta}_{t}\right)+\sum_{i=1}^{n}r_{i}\left(\boldsymbol{\theta}_{t}\right)\nabla^{2}f_{i}\left(\boldsymbol{\theta}_{t}\right)
\right)\,.
\end{align*}
Then, substituting these into \cref{eq:regularized_langevin}, we get
\begin{equation}
\begin{aligned}
\frac{\mathrm{d}}{\mathrm{d}t}\mathbb{E}\mathcal{L} \left(\boldsymbol{\theta}_{t}\right) &  =\frac{1}{n}\biggl(-n^{-1}\mathbb{E}\left[\mathbf{r}\left(\boldsymbol{\theta}_{t}\right)^{\top}\mathbf{K}_{\mathrm{NTK}}\left(\boldsymbol{\theta}_{t}\right)\mathbf{r}\left(\boldsymbol{\theta}_{t}\right)\right]\\
 & \qquad\quad+\beta^{-1}\mathbb{E}\left[\mathrm{Tr}\left(\nabla f\left(\boldsymbol{\theta}_{t}\right)^{\top}\nabla f\left(\boldsymbol{\theta}_{t}\right)\right)\right]\\
 & \qquad\quad+\beta^{-1}\mathbb{E}\left[\sum_{i=1}^{n}r_{i}\left(\boldsymbol{\theta}_{t}\right)\Delta f_{i}\left(\boldsymbol{\theta}_{t}\right)\right]\\
 & \qquad\quad-\beta^{-1}\mathbb{E}\left[\mathbf{r}\left(\boldsymbol{\theta}_{t}\right)^{\top}\nabla f\left(\boldsymbol{\theta}_{t}\right)\boldsymbol{\Lambda}\boldsymbol{\theta}_{t}\right]\biggr)\,.
 \label{eq:the_four_terms_in_appendix}
\end{aligned}
\end{equation}
We wish to bound every term in the right-hand side of \cref{eq:the_four_terms_in_appendix}, using \cref{cor:second_law_for_langevin}.
In order to use \cref{cor:second_law_for_langevin}, we must bound $\prn{\mathbb{E}_{p_{0}}\left[e^{2\left(\alpha-1\right) \beta \mathcal{L} \prn{\boldsymbol{\theta}_0}}\right]}^{\frac{1}{2\alpha}}$.
Under the bounded activation assumption (\cref{ass:activation_bounded}), we show in \cref{prop:mgf_of_init_loss} that the subgaussianity of the network outputs implies the following bound on the exponential moment: 
\[
\mathbb{E}_{p_0}\left[e^{2 \prn{\alpha-1} \beta \mathcal{L} \left(\boldsymbol{\theta}_{0}\right)}\right]
\leq 2e^{2 \prn{\alpha-1} \beta R_{y}^{2}}\,,
\]
provided that $2 \prn{\alpha-1} \beta \leq L_\psi^{-1}$, where $L_\psi \triangleq cM_{\phi}^{2}\frac{s_{L+1}^{2}m_L}{a_{L+1}}=cM_{\phi}^{2} \kappa^2$.
That is, the moment is only guaranteed to exist for $\alpha \leq 1+\frac{1}{2cM_{\phi}^{2} \kappa^2 \beta}$. However, we also require that $2e^{2 \prn{\alpha-1} \beta R_{y}^{2}}=O \prn{1}$, thus limiting ourselves only to $\alpha \leq 1+\frac{1}{2cM_{\phi}^{2} \beta}$. Taken together, by using $\alpha=1+\frac{1}{2cM_{\phi}^{2} \max \prn{\kappa^2, 1} \beta}$, we ensure $\prn{\mathbb{E}_{p_{0}}\left[e^{2\left(\alpha-1\right) \beta \mathcal{L} \prn{\boldsymbol{\theta}_0}}\right]}^{\frac{1}{2\alpha}}$ exists and is $O\prn{1}$. Denoting $\bar{\kappa}\triangleq \max \prn{\kappa, 1}$, we get that
\[
\alpha = 1+ \frac{1}{2cM_{\phi}^{2} {\bar{\kappa}}^2 \beta}\,,
\]
and
\begin{align*}
    \prn{\mathbb{E}_{p_{0}}\left[e^{2\left(\alpha-1\right) \beta \mathcal{L} \prn{\boldsymbol{\theta}_0}}\right]}^{\frac{1}{2\alpha}} 
    &\leq \prn{2e^{2 \prn{\alpha-1} \beta R_{y}^{2}}}^{\frac{1}{2\alpha}} \leq \sqrt{2}e^{\frac{\beta R_y^2}{2cM_\phi^2{\bar{\kappa}}^2 \beta}}=\sqrt{2}e^{\frac{R_y^2}{2cM_\phi^2{\bar{\kappa}}^2 }}=O \prn{1}\,,   
\end{align*}
since $\bar{\kappa} \geq 1$. We denote from here on $\eta \triangleq \sqrt{2}e^{\frac{R_y^2}{2cM_\phi^2{\bar{\kappa}}^2 }}=O \prn{1}$.

Given this choice of $\alpha$, we have 
\begin{align*}
    \frac{\alpha}{\alpha - 1} = 1+2cM_{\phi}^{2} {\bar{\kappa}}^2 \beta = c_\beta {\bar{\kappa}}^2 \beta\,,
\end{align*}
where we defined $c_\beta \triangleq 2cM_\phi^2 + \frac{1}{{\bar{\kappa}}^2 \beta}$. 
This parametrization is useful in the regime \(\bar\kappa^2\beta\gg1\), which holds under our assumptions. Then, $c_\beta = O \prn{1}$.
With this choice of \(\alpha\), \cref{cor:second_law_for_langevin} gives for any \emph{non-negative} measurable function $\varphi:\real^{N_p}\to\mathbb{R}$,
\begin{align}
\mathbb{E}_{p_{t}} \varphi\left(\boldsymbol{\theta}_t\right)  \leq \eta
\norm{\varphi \prn{\boldsymbol{\theta}_0}}_{L^{c_\beta {\bar{\kappa}}^{2} \beta}} .
\label{eq:second_law_with_eta}
\end{align}
We proceed to bound each of the four terms in \cref{eq:the_four_terms_in_appendix} using the change-of-measure inequality \cref{eq:second_law_with_eta}.

\begin{remark}
Throughout the proof, inequalities involving both descent and error terms are interpreted in the following one-sided asymptotic sense: an estimate of the form
\[
X\lesssim -A+B
\]
means that there exist universal constants \(c,C>0\), independent of the problem parameters, such that
\[
X\leq -cA+CB.
\]
Thus, positive error terms may be enlarged and descent coefficients may be reduced by universal constant factors. These constants are absorbed into the final constants \(c_1,c_2\) and into the implicit constants in \(\mathcal{L}_\infty\).

Similarly, when writing \(\mathcal L_\infty\asymp F\), we mean that the loss floor in the upper bound may be chosen as \(\mathcal L_\infty=CF\) for a sufficiently large universal constant \(C>0\). Thus, \(\asymp\) records the parameter dependence of the upper-bound floor and is not meant as a matching lower bound on the optimal possible floor.
\end{remark}
\subsection{First term}
We have
\begin{align*}
-n^{-1}\mathbb{E}\left[\mathbf{r}\left(\boldsymbol{\theta}_{t}\right)^{\top}\mathbf{K}_{\mathrm{NTK}}\left(\boldsymbol{\theta}_{t}\right)\mathbf{r}\left(\boldsymbol{\theta}_{t}\right)\right] & \leq-n^{-1}\mathbb{E}\left[\left\Vert \mathbf{r}\left(\boldsymbol{\theta}_{t}\right)\right\Vert _{2}^{2}\lambda_{\min}\left(\mathbf{K}_{\mathrm{NTK}}\left(\boldsymbol{\theta}_{t}\right)\right)\right]\\
 & =-n^{-1}\mathbb{E}\left[2n\mathcal{L} \left(\boldsymbol{\theta}_{t}\right)\lambda_{\min}\left(\mathbf{K}_{\mathrm{NTK}}\left(\boldsymbol{\theta}_{t}\right)\right)\right]\\
 & =-2\mathbb{E}\left[\mathcal{L} \left(\boldsymbol{\theta}_{t}\right)\lambda_{\min}\left(\mathbf{K}_{\mathrm{NTK}}\left(\boldsymbol{\theta}_{t}\right)\right)\right]\,.
\end{align*}
By \cref{thm:lower_bound_ntk_deep}, if $m=\Omega\left(\frac{n}{\lambda_{1}}\ln\frac{n}{\lambda_{1}}\ln n\right)$ then $\lambda_{\min}\left(\mathbf{K}_{\mathrm{NTK}}\left(\boldsymbol{\theta}_{0}\right)\right)\geq c_{0}\lambda_{1}s_{2}^{2}m$
with probability at least $1-\exp\left(-\tilde{c}_{1}\frac{m}{\frac{n}{\lambda_{1}}\ln\frac{n}{\lambda_{1}}}\right)$. This width condition is implied by our assumptions.
For the remainder of this section, denote $\lambda_{\star}\triangleq c_{0}\lambda_{1}s_{2}^{2}m$ and $\delta\triangleq\exp\left(-\tilde{c}_{1}\frac{m}{\frac{n}{\lambda_{1}}\ln\frac{n}{\lambda_{1}}}\right)$. Now,
\begin{align*}
& -n^{-1}\mathbb{E}\left[\mathbf{r}\left(\boldsymbol{\theta}_{t}\right)^{\top}\mathbf{K}_{\mathrm{NTK}}\left(\boldsymbol{\theta}_{t}\right)\mathbf{r}\left(\boldsymbol{\theta}_{t}\right)\right] \\
& \leq-2\mathbb{E}\left[\mathcal{L} \left(\boldsymbol{\theta}_{t}\right)\lambda_{\min}\left(\mathbf{K}_{\mathrm{NTK}}\left(\boldsymbol{\theta}_{t}\right)\right)\right]\\
 & =-2\lambda_{\star}\mathbb{E}\left[\mathcal{L} \left(\boldsymbol{\theta}_{t}\right)\right]+2\mathbb{E}\left[\mathcal{L} \left(\boldsymbol{\theta}_{t}\right)\left(\lambda_{\star}-\lambda_{\min}\left(\mathbf{K}_{\mathrm{NTK}}\left(\boldsymbol{\theta}_{t}\right)\right)\right)\right]\\
 & \leq-2\lambda_{\star}\mathbb{E}\left[\mathcal{L} \left(\boldsymbol{\theta}_{t}\right)\right]+2\mathbb{E}\left[\mathcal{L} \left(\boldsymbol{\theta}_{t}\right)\max\left(0,\,\lambda_{\star}-\lambda_{\min}\left(\mathbf{K}_{\mathrm{NTK}}\left(\boldsymbol{\theta}_{t}\right)\right)\right)\right]\,.
\end{align*}
We upper bound $\mathbb{E}\left[\mathcal{L} \left(\boldsymbol{\theta}_{t}\right)\max\left(0,\,\lambda_{\star}-\lambda_{\min}\left(\mathbf{K}_{\mathrm{NTK}}\left(\boldsymbol{\theta}_{t}\right)\right)\right)\right]$
using our second law change of measure theorem (\cref{eq:second_law_with_eta}):
\begin{align*}
& \mathbb{E}\left[\mathcal{L} \left(\boldsymbol{\theta}_{t}\right)\max\left(0,\,\lambda_{\star}-\lambda_{\min}\left(\mathbf{K}_{\mathrm{NTK}}\left(\boldsymbol{\theta}_{t}\right)\right)\right)\right] \\
& \leq\eta\left\Vert \mathcal{L} \left(\boldsymbol{\theta}_{0}\right)\max\left(0,\,\lambda_{\star}-\lambda_{\min}\left(\mathbf{K}_{\mathrm{NTK}}\left(\boldsymbol{\theta}_{0}\right)\right)\right)\right\Vert_{L^{c_{\beta}{\bar{\kappa}}^{2}\beta}}\\
\left[\text{H\"older, \cref{claim:holder_Lp_norm}}\right] & \leq\eta\left\Vert \mathcal{L} \left(\boldsymbol{\theta}_{0}\right)\right\Vert_{L^{2c_{\beta}{\bar{\kappa}}^{2}\beta}}\left\Vert \max\left(0,\,\lambda_{\star}-\lambda_{\min}\left(\mathbf{K}_{\mathrm{NTK}}\left(\boldsymbol{\theta}_{0}\right)\right)\right)\right\Vert_{L^{2c_{\beta}{\bar{\kappa}}^{2}\beta}}\,.
\end{align*}
From \cref{prop:mgf_of_init_loss},
\begin{align*}
\left\Vert \mathcal{L} \left(\boldsymbol{\theta}_{0}\right)\right\Vert _{L^{2c_{\beta}{\bar{\kappa}}^{2}\beta}} & \leq\left\Vert R_{y}^{2}+\frac{1}{n}\sum_{i\in\left[n\right]}f_{i}^{2}\left(\boldsymbol{\theta}_{0}\right)\right\Vert _{L^{2c_{\beta}{\bar{\kappa}}^{2}\beta}}\\
\left[\text{Minkowski's inequality}\right] & \leq\left\Vert R_{y}^{2}\right\Vert_{L^{2c_{\beta}{\bar{\kappa}}^{2}\beta}}+\left\Vert \frac{1}{n}\sum_{i\in\left[n\right]}f_{i}^{2}\left(\boldsymbol{\theta}_{0}\right)\right\Vert _{L^{2c_{\beta}{\bar{\kappa}}^{2}\beta}}\\
\text{\citep[Proposition 2.8.1]{vershynin2018high}} & \leq R_{y}^{2}+C\cdot2c_{\beta}{\bar{\kappa}}^{2}\beta\left\Vert \frac{1}{n}\sum_{i\in\left[n\right]}f_{i}^{2}\left(\boldsymbol{\theta}_{0}\right)\right\Vert _{\psi_{1}}\\
 & \leq R_{y}^{2}+C_{2}c_{\beta}{\bar{\kappa}}^{4}\beta\,.
\end{align*}
Splitting according to the high-probability event
\[
\mathcal E=\left\{\lambda_\star\leq\lambda_{\min}\left(\mathbf K_{\mathrm{NTK}}(\boldsymbol\theta_0)\right)\right\}\,,
\]
we have for the second term,
\begin{align*}
&\left\Vert \max\left(0,\,\lambda_{\star}-\lambda_{\min}\left(\mathbf{K}_{\mathrm{NTK}}\left(\boldsymbol{\theta}_{0}\right)\right)\right)\right\Vert_{L^{2c_{\beta}{\bar{\kappa}}^{2}\beta}} \\
& =\left(\mathbb{E}\left[\max\left(0,\,\lambda_{\star}-\lambda_{\min}\left(\mathbf{K}_{\mathrm{NTK}}\left(\boldsymbol{\theta}_{0}\right)\right)\right)^{2c_{\beta}{\bar{\kappa}}^{2}\beta}\right]\right)^{\nicefrac{1}{2c_{\beta}{\bar{\kappa}}^{2}\beta}}\\
 & \leq\left(0\cdot\mathbb{P}\left(\lambda_{\star}\leq\lambda_{\min}\left(\mathbf{K}_{\mathrm{NTK}}\left(\boldsymbol{\theta}_{0}\right)\right)\right)+\lambda_{\star}^{2c_{\beta}{\bar{\kappa}}^{2}\beta}\mathbb{P}\left(\lambda_{\star}>\lambda_{\min}\left(\mathbf{K}_{\mathrm{NTK}}\left(\boldsymbol{\theta}_{0}\right)\right)\right)\right)^{\nicefrac{1}{2c_{\beta}{\bar{\kappa}}^{2}\beta}}\\
 & \leq\lambda_{\star}\delta^{\nicefrac{1}{2c_{\beta}{\bar{\kappa}}^{2}\beta}}=c_{0}\lambda_{1}s_{2}^{2}m\exp\left(-\tilde{c}_{2}\frac{m}{c_{\beta}{\bar{\kappa}}^{2}\beta\frac{n}{\lambda_{1}}\ln\frac{n}{\lambda_{1}}}\right)\,,
\end{align*}
where in the first transition we used the fact that the empirical NTK matrix is PSD, hence $\lambda_{\min}\prn{\K_\mathrm{NTK}} \geq 0$. Therefore:
\begin{align*}
&-n^{-1}\mathbb{E}\left[\mathbf{r}\left(\boldsymbol{\theta}_{t}\right)^{\top}\mathbf{K}_{\mathrm{NTK}}\left(\boldsymbol{\theta}_{t}\right)\mathbf{r}\left(\boldsymbol{\theta}_{t}\right)\right] \\
& \leq-2c_{0}\lambda_{1}s_{2}^{2}m\mathbb{E}\left[\mathcal{L} \left(\boldsymbol{\theta}_{t}\right)\right]+2 \eta c_{0}\lambda_{1}s_{2}^{2}m\left(R_{y}^{2}+C_{2}c_{\beta}{\bar{\kappa}}^{4}\beta\right)\exp\left(-\tilde{c}_{2}\frac{m}{c_{\beta}{\bar{\kappa}}^{2}\beta\frac{n}{\lambda_{1}}\ln\frac{n}{\lambda_{1}}}\right)\\
 & \lesssim-\lambda_{1}s_{2}^{2}m\mathbb{E}\left[\mathcal{L} \left(\boldsymbol{\theta}_{t}\right)\right]+\lambda_{1}s_{2}^{2}m\left(1+{\bar{\kappa}}^{4}\beta\right)\exp\left(-\tilde{c}_{2}\frac{m}{c_{\beta}{\bar{\kappa}}^{2}\beta\frac{n}{\lambda_{1}}\ln\frac{n}{\lambda_{1}}}\right)\,.
\end{align*}
\subsection{Second term}
Observe that,
\begin{align*}
\beta^{-1}\mathbb{E}\mathrm{Tr}\left(\nabla f\left(\boldsymbol{\theta}_{t}\right)^{\top}\nabla f\left(\boldsymbol{\theta}_{t}\right)\right) & =\beta^{-1}\mathbb{E}\left[\sum_{j=1}^{N_{p}}\sum_{i=1}^{n}\left(\left(\left(\nabla f_{i}\left(\boldsymbol{\theta}_{t}\right)\right)_{j}\right)^{2}\right)\right]\\
 & =\beta^{-1}\sum_{i=1}^{n}\mathbb{E}\left[\left\Vert \nabla f_{i}\left(\boldsymbol{\theta}_{t}\right)\right\Vert ^{2}\right]\,.
\end{align*}
Now, for all $i\in\left[n\right]$, we denote $\boldsymbol{\alpha}\left(\mathbf{x}_{i}\right)\triangleq\phi\left(s_{1}\mathbf{W}\mathbf{x}_{i}\right)$,
such that
\begin{align*}
f_{i}\left(\boldsymbol{\theta}\right) & =s_{2}\mathbf{v}^{\top}\boldsymbol{\alpha}\left(\mathbf{x}_{i}\right)\\
\Longrightarrow\frac{\partial f_{i}\left(\boldsymbol{\theta}\right)}{\partial\mathbf{v}} & =s_{2}\boldsymbol{\alpha}\left(\mathbf{x}_{i}\right)=s_{2}\phi\left(s_{1}\mathbf{W}\mathbf{x}_{i}\right)\,,\\
\forall j\in\left[m\right],\quad\frac{\partial f_{i}\left(\boldsymbol{\theta}\right)}{\partial\mathbf{w}_{j}} & =s_{2}v_{j}\frac{\partial \left(\boldsymbol{\alpha}\left(\mathbf{x}_{i}\right)\right)_{j}}{\partial\mathbf{w}_{j}}\,,\\
\frac{\partial \left(\boldsymbol{\alpha}\left(\mathbf{x}_{i}\right)\right)_{j}}{\partial\mathbf{w}_{j}} & =\phi'\left(s_{1}\mathbf{w}_{j}^{\top}\mathbf{x}_{i}\right)s_{1}\mathbf{x}_{i}\\
\Longrightarrow\forall j\in\left[m\right],\quad\frac{\partial f_{i}\left(\boldsymbol{\theta}\right)}{\partial\mathbf{w}_{j}} & =s_{2}v_{j}\phi'\left(s_{1}\mathbf{w}_{j}^{\top}\mathbf{x}_{i}\right)s_{1}\mathbf{x}_{i}\,.
\end{align*}
Then we have:
\begin{align*}
\left\Vert \nabla f_{i}\left(\boldsymbol{\theta}\right)\right\Vert ^{2} & =\left\Vert s_{2}\phi\left(s_{1}\mathbf{W}\mathbf{x}_{i}\right)\right\Vert ^{2}+\sum_{j=1}^{m}\left\Vert s_{2}v_{j}\phi'\left(s_{1}\mathbf{w}_{j}^{\top}\mathbf{x}_{i}\right)s_{1}\mathbf{x}_{i}\right\Vert ^{2}\\
\left[\abs{\phi\left(\cdot\right)}\leq\alpha_{\phi}\left|\cdot\right|,\,\left|\phi'\right|\le\alpha_{\phi}\right] & \leq\alpha_{\phi}^{2}s_{2}^{2}s_{1}^{2}\left(\left\Vert \mathbf{W}\mathbf{x}_{i}\right\Vert ^{2}+\sum_{j=1}^{m}v_{j}^{2}\left\Vert \mathbf{x}_{i}\right\Vert ^{2}\right)\\
 & =\alpha_{\phi}^{2}s_{2}^{2}s_{1}^{2}c_{\phi,\sigma_{0}}d\left(\frac{\nu_{0}^{2}}{a_{1}}\left\Vert \mathbf{z}^{\left(1\right)}\right\Vert ^{2}+\frac{1}{a_{2}}\left\Vert \mathbf{z}^{\left(2\right)}\right\Vert ^{2}\right)\,,
\end{align*}
where $\mathbf{z}^{\left(1\right)},\mathbf{z}^{\left(2\right)}\sim\mathcal{N}\left(\mathbf{0}_{m},\mathbf{I}_{m}\right)$
(under $p_{0}$).

Observe that for all $p\geq1$, from \cref{claim:moment of power} and \cref{prop:bounded moments of gaussian vector},
\begin{align*}
\left\Vert \left\Vert \mathbf{z}\right\Vert ^{2}\right\Vert _{L^{p}} & =\left\Vert \left\Vert \mathbf{z}\right\Vert \right\Vert _{L^{2p}}^{2}\leq\left(C\sqrt{2p}+\sqrt{m}\right)^{2}\leq4C^{2}p+2m\,.
\end{align*}
So, at initialization, using Minkowski's inequality,
\begin{align*}
\left\Vert \left\Vert \nabla f_{i}\left(\boldsymbol{\theta}\right)\right\Vert ^{2}\right\Vert _{L^{p}} & \leq\alpha_{\phi}^{2}s_{2}^{2}s_{1}^{2}c_{\phi,\sigma_{0}}d\left(\frac{\nu_{0}^{2}}{a_{1}}\left\Vert \left\Vert \mathbf{z}^{\left(1\right)}\right\Vert ^{2}\right\Vert _{L^{p}}+\frac{1}{a_{2}}\left\Vert \left\Vert \mathbf{z}^{\left(2\right)}\right\Vert ^{2}\right\Vert _{L^{p}}\right)\\
 & \leq\alpha_{\phi}^{2}c_{\phi,\sigma_{0}}s_{2}^{2}s_{1}^{2}d\left(\frac{\nu_{0}^{2}}{a_{1}}+\frac{1}{a_{2}}\right)\left(4C^{2}p+2m\right)\,.
\end{align*}
Using the second law (\cref{eq:second_law_with_eta}), we get that at time $t\geq0$,
for all $i\in\left[n\right]$, 
\begin{align*}
\mathbb{E}\left[\left\Vert \nabla f_{i}\left(\boldsymbol{\theta}_{t}\right)\right\Vert ^{2}\right] & \leq\eta\left\Vert \left\Vert \nabla f_{i}\left(\boldsymbol{\theta}_{0}\right)\right\Vert ^{2}\right\Vert _{L^{c_{\beta}{\bar{\kappa}}^{2}\beta}}\leq\eta\alpha_{\phi}^{2}c_{\phi,\sigma_{0}}s_{2}^{2}s_{1}^{2}d\left(\frac{\nu_{0}^{2}}{a_{1}}+\frac{1}{a_{2}}\right)\left(4C^{2}c_{\beta}{\bar{\kappa}}^{2}\beta+2m\right)\,.
\end{align*}
Summing over \(i\),
\begin{align*}
\beta^{-1}\mathbb{E}\mathrm{Tr}\left(\nabla f\left(\boldsymbol{\theta}_{t}\right)^{\top}\nabla f\left(\boldsymbol{\theta}_{t}\right)\right) & =\beta^{-1}\sum_{i=1}^{n}\mathbb{E}\left[\left\Vert \nabla f_{i}\left(\boldsymbol{\theta}_{t}\right)\right\Vert ^{2}\right]\\
 & \leq\eta\alpha_{\phi}^{2}c_{\phi,\sigma_{0}}\beta^{-1}ns_{2}^{2}s_{1}^{2}d\left(\frac{\nu_{0}^{2}}{a_{1}}+\frac{1}{a_{2}}\right)\left(C_{3}c_{\beta}{\bar{\kappa}}^{2}\beta+2m\right)\,.\\
 & \lesssim\beta^{-1}ns_{2}^{2}s_{1}^{2}d\left(\frac{1}{a_{1}}+\frac{1}{a_{2}}\right)\left({\bar{\kappa}}^{2}\beta+m\right)\,.
\end{align*}
\subsection{Third term}
We have
\begin{align*}
\beta^{-1}\mathbb{E}\left[\sum_{i=1}^{n}r_{i}\left(\boldsymbol{\theta}_{t}\right)\Delta f_{i}\left(\boldsymbol{\theta}_{t}\right)\right] & =\beta^{-1}\sum_{i=1}^{n}\mathbb{E}\left[r_{i}\left(\boldsymbol{\theta}_{t}\right)\Delta f_{i}\left(\boldsymbol{\theta}_{t}\right)\right]\\
 & \leq\beta^{-1}\sum_{i=1}^{n}\mathbb{E}\left[\left|r_{i}\left(\boldsymbol{\theta}_{t}\right)\Delta f_{i}\left(\boldsymbol{\theta}_{t}\right)\right|\right]\\
\left[\text{Cauchy-Schwarz}\right] & \leq\beta^{-1}\sum_{i=1}^{n}\sqrt{\mathbb{E}\left[\left|r_{i}\left(\boldsymbol{\theta}_{t}\right)\right|^{2}\right]\mathbb{E}\left[\left|\Delta f_{i}\left(\boldsymbol{\theta}_{t}\right)\right|^{2}\right]}\\
 & =\beta^{-1}\sum_{i=1}^{n}\sqrt{2\mathbb{E}\left[\ell_{i}\left(\boldsymbol{\theta}_{t}\right)\right]\mathbb{E}\left[\left|\Delta f_{i}\left(\boldsymbol{\theta}_{t}\right)\right|^{2}\right]}\,,
\end{align*}
where $\ell_{i}\left(\boldsymbol{\theta}\right)=\frac{1}{2}\left|r_{i}\left(\boldsymbol{\theta}\right)\right|^{2}$.

Note that, for all $i\in\left[n\right]$,
\[
\Delta f_{i}\left(\boldsymbol{\theta}\right)=\sum_{j\in\left[m\right]}\frac{\partial^{2}f_{i}\left(\boldsymbol{\theta}\right)}{\partial v_{j}^{2}}+\sum_{j\in\left[m\right]}\sum_{k\in\left[d\right]}\frac{\partial^{2}f_{i}\left(\boldsymbol{\theta}\right)}{\partial W_{jk}^{2}}\,.
\]
Observe that for all $i\in\left[n\right]$, $f_{i}\left(\boldsymbol{\theta}\right)=s_{2}\mathbf{v}^{\top}\boldsymbol{\alpha}\left(\mathbf{x}_{i}\right)$,
where $\boldsymbol{\alpha}\left(\mathbf{x}_{i}\right)$ does not depend
on $\mathbf{v}$, so for all $j\in\left[m\right]:$
\begin{align*}
\frac{\partial f_{i}\left(\boldsymbol{\theta}\right)}{\partial v_{j}} & =s_{2}\alpha_{j}\left(\mathbf{x}_{i}\right)\,,\\
\frac{\partial^{2}f_{i}\left(\boldsymbol{\theta}\right)}{\partial v_{j}^{2}} & =0\,.
\end{align*}
 In terms of the derivatives w.r.t. $W_{jk}$, $\forall j\in\left[m\right],\,k\in\left[d\right]$,
\begin{align*}
\frac{\partial f_{i}\left(\boldsymbol{\theta}\right)}{\partial W_{jk}} & =s_{2}\mathbf{v}^{\top}\frac{\partial\boldsymbol{\alpha}\left(\mathbf{x}_{i}\right)}{\partial W_{jk}}=s_{2}\sum_{j'\in\left[m\right]}v_{j'}\frac{\partial\alpha_{j'}\left(\mathbf{x}_{i}\right)}{\partial W_{jk}}\,,
\end{align*}
where
\[
\alpha_{j'}\left(\mathbf{x}_{i}\right)=\phi\left(s_{1}\mathbf{w}_{j'}^{\top}\mathbf{x}_{i}\right)\,,
\]
denoting $\mathbf{w}_{j'}\in\mathbb{R}^{d}$ as the $j'$-th row of
$\mathbf{W}$ (transposed). Thus,
\begin{align*}
\forall j'\neq j,\quad & \frac{\partial\alpha_{j'}\left(\mathbf{x}_{i}\right)}{\partial W_{jk}}=0\,,\\
 & \frac{\partial\alpha_{j}\left(\mathbf{x}_{i}\right)}{\partial W_{jk}}=\phi'\left(s_{1}\mathbf{w}_{j}^{\top}\mathbf{x}_{i}\right)s_{1}x_{i,k}\,.
\end{align*}
We then have
\begin{align}
\frac{\partial f_{i}\left(\boldsymbol{\theta}\right)}{\partial W_{jk}} & =s_{2}s_{1}v_{j}x_{i,k}\phi'\left(s_{1}\mathbf{w}_{j}^{\top}\mathbf{x}_{i}\right)\label{eq:df_dW_jk}\\
\Longrightarrow\frac{\partial^{2}f_{i}\left(\boldsymbol{\theta}\right)}{\partial W_{jk}^{2}} & =s_{2}s_{1}v_{j}x_{i,k}\phi''\left(s_{1}\mathbf{w}_{j}^{\top}\mathbf{x}_{i}\right)s_{1}x_{i,k}\nonumber \\
 & =s_{2}s_{1}^{2}v_{j}x_{i,k}^{2}\phi''\left(s_{1}\mathbf{w}_{j}^{\top}\mathbf{x}_{i}\right)\,.\nonumber 
\end{align}
Calculating the Laplacian, we get
\begin{align*}
\Delta f_{i}\left(\boldsymbol{\theta}\right) & =\sum_{j\in\left[m\right]}\frac{\partial^{2}f_{i}\left(\boldsymbol{\theta}\right)}{\partial v_{j}^{2}}+\sum_{j\in\left[m\right]}\sum_{k\in\left[d\right]}\frac{\partial^{2}f_{i}\left(\boldsymbol{\theta}\right)}{\partial W_{jk}^{2}}\\
 & =\sum_{j\in\left[m\right]}\sum_{k\in\left[d\right]}s_{2}s_{1}^{2}v_{j}x_{i,k}^{2}\phi''\left(s_{1}\mathbf{w}_{j}^{\top}\mathbf{x}_{i}\right)\\
 & =s_{2}s_{1}^{2}\left(\sum_{k\in\left[d\right]}x_{i,k}^{2}\right)\left(\sum_{j\in\left[m\right]}v_{j}\phi''\left(s_{1}\mathbf{w}_{j}^{\top}\mathbf{x}_{i}\right)\right)\\
\left[\left\Vert \mathbf{x}_{i}\right\Vert ^{2}=c_{\phi,\sigma_{0}}d\right] & =s_{2}s_{1}^{2}c_{\phi,\sigma_{0}}d\mathbf{v}^{\top}\phi''\left(s_{1}\mathbf{W}\mathbf{x}_{i}\right)\,.
\end{align*}
In order to bound $\mathbb{E}\left[\left|\Delta f_{i}\left(\boldsymbol{\theta}_{t}\right)\right|^{2}\right]$,
we will use the change-of-measure inequality (\cref{eq:second_law_with_eta}), employing the notation $\norm{X}_{L^p,\,q} \triangleq \prn{\expect{q}{\abs{X}^p}}^{\nicefrac{1}{p}}$ for clarity:
\begin{align*}
\mathbb{E}_{p_{t}}\left[\left|\Delta f_{i}\left(\boldsymbol{\theta}_{t}\right)\right|^{2}\right] & =\mathbb{E}_{p_{t}}\left[\left|s_{2}s_{1}^{2}c_{\phi,\sigma_{0}}d\mathbf{v}^{\top}\phi''\left(s_{1}\mathbf{W}\mathbf{x}_{i}\right)\right|^{2}\right] \\
& \leq\eta\left(s_{2}s_{1}^{2}c_{\phi,\sigma_{0}}d\right)^{2}\left\Vert \left|\mathbf{v}^{\top}\phi''\left(s_{1}\mathbf{W}\mathbf{x}_{i}\right)\right|^{2}\right\Vert _{L^{c_{\beta}{\bar{\kappa}}^{2}\beta},p_{0}}\,.
\end{align*}
We have
\[
\mathbf{v}^{\top}\phi''\left(s_{1}\mathbf{W}\mathbf{x}_{i}\right)=\sum_{j\in\left[m\right]}v_{j}\phi''\left(s_{1}\mathbf{w}_{j}^{\top}\mathbf{x}_{i}\right)\,.
\]
This is a sum of zero-mean, independent and subgaussian random variables, each of them satisfying
\begin{align*}
\left|v_{j}\phi''\left(s_{1}\mathbf{w}_{j}^{\top}\mathbf{x}_{i}\right)\right| & \leq\beta_{\phi}\left|v_{j}\right|\\
\left\Vert \left|v_{j}\phi''\left(s_{1}\mathbf{w}_{j}^{\top}\mathbf{x}_{i}\right)\right|\right\Vert _{\psi_{2}} & \leq\beta_{\phi}\text{\ensuremath{\left\Vert \left|v_{j}\right|\right\Vert }}_{\psi_{2}}\leq C\beta_{\phi}\frac{1}{\sqrt{a_{2}}}\,.
\end{align*}
Then, by \citet[Proposition 2.7.1]{vershynin2018high},
\begin{align*}
\left\Vert \left|\mathbf{v}^{\top}\phi''\left(s_{1}\mathbf{W}\mathbf{x}_{i}\right)\right|^{2}\right\Vert _{\psi_{1}} & =\left\Vert \left(\mathbf{v}^{\top}\phi''\left(s_{1}\mathbf{W}\mathbf{x}_{i}\right)\right)\right\Vert _{\psi_{2}}^{2}\\
 & \leq C_{1}m\left(C\beta_{\phi}\frac{1}{\sqrt{a_{2}}}\right)^{2}=C_{2}\beta_{\phi}^{2}\frac{m}{a_{2}}\,.
\end{align*}
Thus, by \citet[Proposition 2.8.1]{vershynin2018high},
\[
\left\Vert \left|\mathbf{v}^{\top}\phi''\left(s_{1}\mathbf{W}\mathbf{x}_{i}\right)\right|^{2}\right\Vert _{L^{c_{\beta}{\bar{\kappa}}^{2}\beta}}\leq C_{3}C_{2}\beta_{\phi}^{2}\frac{m}{a_{2}}c_{\beta}{\bar{\kappa}}^{2}\beta\,.
\]
Consequently,
\begin{align*}
\beta^{-1}\mathbb{E}\left[\sum_{i=1}^{n}r_{i}\left(\boldsymbol{\theta}_{t}\right)\Delta f_{i}\left(\boldsymbol{\theta}_{t}\right)\right] & \leq\beta^{-1}\sum_{i=1}^{n}\sqrt{2\mathbb{E}\left[\ell_{i}\left(\boldsymbol{\theta}_{t}\right)\right]\mathbb{E}\left[\left|\Delta f_{i}\left(\boldsymbol{\theta}_{t}\right)\right|^{2}\right]}\\
 & \leq\beta^{-1}s_{2}s_{1}^{2}c_{\phi,\sigma_{0}}d\sqrt{2\eta C_{3}C_{2}\beta_{\phi}^{2}\frac{m}{a_{2}}c_{\beta}{\bar{\kappa}}^{2}\beta}\sum_{i=1}^{n}\sqrt{\mathbb{E}\left[\ell_{i}\left(\boldsymbol{\theta}_{t}\right)\right]}\\
\left[\text{Jensen}\right] & \leq\beta^{-1/2}s_{2}s_{1}^{2}c_{\phi,\sigma_{0}}d\sqrt{2\eta C_{3}C_{2}\beta_{\phi}^{2}\frac{m}{a_{2}}c_{\beta}{\bar{\kappa}}^{2}}n\sqrt{\frac{1}{n}\sum_{i=1}^{n}\mathbb{E}\left[\ell_{i}\left(\boldsymbol{\theta}_{t}\right)\right]}\\
 & =\beta^{-1/2}ns_{2}s_{1}^{2}c_{\phi,\sigma_{0}}d\sqrt{2\eta C_{3}C_{2}\beta_{\phi}^{2}\frac{m}{a_{2}}c_{\beta}{\bar{\kappa}}^{2}}\sqrt{\mathbb{E}\mathcal{L} \left(\boldsymbol{\theta}_{t}\right)}\\
 & \lesssim\beta^{-1/2}ns_{2}s_{1}^{2}d\sqrt{\frac{m}{a_{2}}{\bar{\kappa}}^{2}}\sqrt{\mathbb{E}\mathcal{L} \left(\boldsymbol{\theta}_{t}\right)}\,.
\end{align*}
\subsection{Fourth term}
Note that under $\phi\left(0\right)=0$ and $\left|\phi'\right|\le\alpha_{\phi}$,
we get $\left|\phi\left(\cdot\right)\right|\leq\alpha_{\phi}\left|\cdot\right|$,
and also $\forall x\in\mathbb{R}$:
\[
\left|\phi\left(x\right)-x\phi'\left(x\right)\right|\leq\left|\phi\left(x\right)\right|+\left|x\phi'\left(x\right)\right|\leq2\alpha_{\phi}\left|x\right|\,,
\]
Thus, setting \(C_\phi=2\alpha_\phi\), the following bound follows from the standing assumptions.
\begin{claim}
There exists a constant $C_{\phi}>0$ such that for all $x\in\mathbb{R}$,
$\left|\phi\left(x\right)-x\phi'\left(x\right)\right|\leq C_{\phi}\left|x\right|$.\footnote{In practice, $C_{\phi}$ is often much smaller than
$2\alpha_{\phi}$.}
\end{claim}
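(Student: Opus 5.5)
The statement is a pointwise inequality with no probabilistic content, so I will prove it directly from the standing assumptions on $\phi$. The argument has two steps, applied to a fixed $x\in\real$.

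\textbf{Step 1.} Bound $\abs{\phi(x)}$. By \cref{ass:activation_bounded_gradients}, $\phi$ is differentiable with $\abs{\phi'}\leq\alpha_\phi$, so the mean value theorem makes $\phi$ $\alpha_\phi$-Lipschitz. Combined with $\phi(0)=0$ (\cref{ass:activation_at_origin}), this gives
\[
\abs{\phi(x)}=\abs{\phi(x)-\phi(0)}\leq\alpha_\phi\abs{x}.
\]

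\textbf{Step 2.} Bound the derivative term directly from \cref{ass:activation_bounded_gradients}:
\[
\abs{x\phi'(x)}\leq\alpha_\phi\abs{x}.
\]

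Combining the two steps with the triangle inequality yields
\[
\abs{\phi(x)-x\phi'(x)}\leq\abs{\phi(x)}+\abs{x\phi'(x)}\leq2\alpha_\phi\abs{x}.
\]
Hence the claim holds with $C_\phi=2\alpha_\phi>0$. This is exactly the computation recorded just before the claim, so no further work is needed.

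I do not expect any step to be an obstacle. The only point to check is that the Lipschitz bound uses only $\phi\in C^1$ with bounded derivative, not the $C^2$ part of \cref{ass:activation_bounded_gradients}. The claim therefore also holds under the $C^1$ relaxation in \cref{footnote:C1_activations}. A sharper constant is possible, since $\phi(x)-x\phi'(x)=-\int_0^x t\,\phi''(t)\,\mathrm{d}t$ gives a bound of order $\beta_\phi x^2$ near the origin, but this is unnecessary for the claim as stated.
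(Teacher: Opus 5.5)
Your proposal is correct and is essentially the paper's argument: you bound $\left|\phi(x)\right|\leq\alpha_\phi\left|x\right|$ using the Lipschitz property together with $\phi(0)=0$, bound $\left|x\phi'(x)\right|\leq\alpha_\phi\left|x\right|$ directly, and combine the two by the triangle inequality to get $C_\phi=2\alpha_\phi$. Your observation that only the $C^1$ part of the activation assumption is used is also accurate.
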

We now have:
\begin{align*}
-\beta^{-1}\mathbb{E}\left[\mathbf{r}\left(\boldsymbol{\theta}_{t}\right)^{\top}\nabla f\left(\boldsymbol{\theta}_{t}\right)\boldsymbol{\Lambda}\boldsymbol{\theta}_{t}\right] & =-\beta^{-1}\sum_{i=1}^{n}\mathbb{E}\left[r_{i}\left(\boldsymbol{\theta}_{t}\right)\nabla f_{i}\left(\boldsymbol{\theta}_{t}\right)^{\top}\boldsymbol{\Lambda}\boldsymbol{\theta}_{t}\right]\,.
\end{align*}
Observe for each $i\in\left[n\right]$:
\begin{align*}
f_{i}\left(\boldsymbol{\theta}\right) & =s_{2}\mathbf{v}^{\top}\boldsymbol{\alpha}\left(\mathbf{x}_{i}\right)\\
\boldsymbol{\alpha}\left(\mathbf{x}_{i}\right) & =\phi\left(s_{1}\mathbf{W}\mathbf{x}_{i}\right)\\
\frac{\partial f_{i}\left(\boldsymbol{\theta}\right)}{\partial\mathbf{v}} & =s_{2}\boldsymbol{\alpha}\left(\mathbf{x}_{i}\right)\,,
\end{align*}
Recall from \cref{eq:df_dW_jk} that
\begin{align*}
\frac{\partial f_{i}\left(\boldsymbol{\theta}\right)}{\partial W_{jk}} & =s_{2}s_{1}v_{j}x_{i,k}\phi'\left(s_{1}\mathbf{w}_{j}^{\top}\mathbf{x}_{i}\right)\\
\Longrightarrow\frac{\partial f_{i}\left(\boldsymbol{\theta}\right)}{\partial\mathbf{w}_{j}} & =s_{2}s_{1}v_{j}\phi'\left(s_{1}\mathbf{w}_{j}^{\top}\mathbf{x}_{i}\right)\mathbf{x}_{i}\,.
\end{align*}
We then have,
{
\allowdisplaybreaks
\begin{align*}
\nabla f_{i}\left(\boldsymbol{\theta}\right)^{\top}\boldsymbol{\Lambda}\boldsymbol{\theta} & =s_{2}\phi\left(s_{1}\mathbf{W}\mathbf{x}_{i}\right)^{\top}a_{2}\mathbf{I}_{m}\mathbf{v}+\sum_{j=1}^{m}s_{2}v_{j}\phi'\left(s_{1}\mathbf{w}_{j}^{\top}\mathbf{x}_{i}\right)s_{1}\mathbf{x}_{i}^{\top}\frac{a_{1}}{\nu_{0}^{2}}\mathbf{I}_{d}\mathbf{w}_{j}\\
\left[\text{define }g\left(x\right)=\phi'\left(x\right)x\right] & =s_{2}\left(a_{2}\mathbf{v}^{\top}\phi\left(s_{1}\mathbf{W}\mathbf{x}_{i}\right)+\frac{a_{1}}{\nu_{0}^{2}}\sum_{j=1}^{m}v_{j}g\left(s_{1}\mathbf{w}_{j}^{\top}\mathbf{x}_{i}\right)\right)\\
 & =s_{2}\left(a_{2}\mathbf{v}^{\top}\phi\left(s_{1}\mathbf{W}\mathbf{x}_{i}\right)+\frac{a_{1}}{\nu_{0}^{2}}\mathbf{v}^{\top}g\left(s_{1}\mathbf{W}\mathbf{x}_{i}\right)\right)\\
 & =s_{2}\mathbf{v}^{\top}\left(a_{2}\phi\left(s_{1}\mathbf{W}\mathbf{x}_{i}\right)+\frac{a_{1}}{\nu_{0}^{2}}g\left(s_{1}\mathbf{W}\mathbf{x}_{i}\right)\right)\,.
\end{align*}
Defining $\gamma\left(x\right)\triangleq \phi\left(x\right)-g\left(x\right)$, we continue:
\begin{align*}
 r_{i}\left(\boldsymbol{\theta}\right)\nabla f_{i}\left(\boldsymbol{\theta}\right)^{\top}\boldsymbol{\Lambda}\boldsymbol{\theta} & =s_{2}\left(s_{2}\mathbf{v}^{\top}\phi\left(s_{1}\mathbf{W}\mathbf{x}_{i}\right)-y_{i}\right)\mathbf{v}^{\top}\left(a_{2}\phi\left(s_{1}\mathbf{W}\mathbf{x}_{i}\right)+\frac{a_{1}}{\nu_{0}^{2}}g\left(s_{1}\mathbf{W}\mathbf{x}_{i}\right)\right)\\
 & =s_{2}\left(s_{2}\mathbf{v}^{\top}\phi\left(s_{1}\mathbf{W}\mathbf{x}_{i}\right)-y_{i}\right)
\\&\qquad\cdot\mathbf{v}^{\top}\left(a_{2}\phi\left(s_{1}\mathbf{W}\mathbf{x}_{i}\right)+\frac{a_{1}}{\nu_{0}^{2}}\left(g\left(s_{1}\mathbf{W}\mathbf{x}_{i}\right)-\phi\left(s_{1}\mathbf{W}\mathbf{x}_{i}\right)+\phi\left(s_{1}\mathbf{W}\mathbf{x}_{i}\right)\right)\right)\\
 & =s_{2}\left(s_{2}\mathbf{v}^{\top}\phi\left(s_{1}\mathbf{W}\mathbf{x}_{i}\right)-y_{i}\right)
 \\&\qquad\cdot \mathbf{v}^{\top}\left(\left(a_{2}+\frac{a_{1}}{\nu_{0}^{2}}\right)\phi\left(s_{1}\mathbf{W}\mathbf{x}_{i}\right)-\frac{a_{1}}{\nu_{0}^{2}}\gamma\left(s_{1}\mathbf{W}\mathbf{x}_{i}\right)\right)\\
 & =\left(s_{2}\mathbf{v}^{\top}\phi\left(s_{1}\mathbf{W}\mathbf{x}_{i}\right)-y_{i}\right)
 \\&\qquad\cdot\left[\left(a_{2}+\frac{a_{1}}{\nu_{0}^{2}}\right)\left(s_{2}\mathbf{v}^{\top}\phi\left(s_{1}\mathbf{W}\mathbf{x}_{i}\right)-y_{i}+y_{i}\right)-s_{2}\frac{a_{1}}{\nu_{0}^{2}}\mathbf{v}^{\top}\gamma\left(s_{1}\mathbf{W}\mathbf{x}_{i}\right)\right]\\
 & =\Biggl(\left(a_{2}+\frac{a_{1}}{\nu_{0}^{2}}\right)\left(s_{2}\mathbf{v}^{\top}\phi\left(s_{1}\mathbf{W}\mathbf{x}_{i}\right)-y_{i}\right)^{2}
 \\ & \qquad +\left(s_{2}\mathbf{v}^{\top}\phi\left(s_{1}\mathbf{W}\mathbf{x}_{i}\right)-y_{i}\right)\left(\left(a_{2}+\frac{a_{1}}{\nu_{0}^{2}}\right)y_{i}-s_{2}\frac{a_{1}}{\nu_{0}^{2}}\mathbf{v}^{\top}\gamma\left(s_{1}\mathbf{W}\mathbf{x}_{i}\right)\right)\Biggr)\\
 & =\left(a_{2}+\frac{a_{1}}{\nu_{0}^{2}}\right)2\ell_{i}\left(\boldsymbol{\theta}\right)+r_{i}\left(\boldsymbol{\theta}\right)\left(\left(a_{2}+\frac{a_{1}}{\nu_{0}^{2}}\right)y_{i}-s_{2}\frac{a_{1}}{\nu_{0}^{2}}\mathbf{v}^{\top}\gamma\left(s_{1}\mathbf{W}\mathbf{x}_{i}\right)\right)\,,
\end{align*}
where $\ell_{i}\left(\boldsymbol{\theta}\right)=\frac{1}{2}\left|r_{i}\left(\boldsymbol{\theta}\right)\right|^{2}$.
\begin{align*}
\Longrightarrow&-\beta^{-1}\mathbb{E}\left[\mathbf{r}\left(\boldsymbol{\theta}_{t}\right)^{\top}\nabla f\left(\boldsymbol{\theta}_{t}\right)\boldsymbol{\Lambda}\boldsymbol{\theta}_{t}\right] \\
& =-\beta^{-1}\sum_{i=1}^{n}\mathbb{E}\left[r_{i}\left(\boldsymbol{\theta}_{t}\right)\nabla f_{i}\left(\boldsymbol{\theta}_{t}\right)^{\top}\boldsymbol{\Lambda}\boldsymbol{\theta}_{t}\right]\\
 & =-\beta^{-1}\Biggl[2\left(a_{2}+\frac{a_{1}}{\nu_{0}^{2}}\right)n\mathbb{E}\mathcal{L} \left(\boldsymbol{\theta}_{t}\right) + \\
 & \qquad \qquad \qquad +\sum_{i=1}^{n}\mathbb{E}\left[r_{i}\left(\boldsymbol{\theta}_t\right)\left(\left(a_{2}+\frac{a_{1}}{\nu_{0}^{2}}\right)y_{i}-s_{2}\frac{a_{1}}{\nu_{0}^{2}}\mathbf{v}^{\top}\gamma\left(s_{1}\mathbf{W}\mathbf{x}_{i}\right)\right)\right]\Biggr]\\
 & \leq-2\beta^{-1}n\left(a_{2}+\frac{a_{1}}{\nu_{0}^{2}}\right)\mathbb{E}\mathcal{L} \left(\boldsymbol{\theta}_{t}\right)
\\&\qquad+\beta^{-1}\sum_{i=1}^{n}\mathbb{E}\left[\left|r_{i}\left(\boldsymbol{\theta}_t\right)\left(\left(a_{2}+\frac{a_{1}}{\nu_{0}^{2}}\right)y_{i}-s_{2}\frac{a_{1}}{\nu_{0}^{2}}\mathbf{v}^{\top}\gamma\left(s_{1}\mathbf{W}\mathbf{x}_{i}\right)\right)\right|\right]\\
\left[(1)\right] & \leq-2\beta^{-1}n\left(a_{2}+\frac{a_{1}}{\nu_{0}^{2}}\right)\mathbb{E}\mathcal{L} \left(\boldsymbol{\theta}_{t}\right)
\\&\qquad+\beta^{-1}\sum_{i=1}^{n}\sqrt{\mathbb{E}\left[\left|r_{i}\left(\boldsymbol{\theta}_{t}\right)\right|^{2}\right]\mathbb{E}\left[\left|\left(a_{2}+\frac{a_{1}}{\nu_{0}^{2}}\right)y_{i}-s_{2}\frac{a_{1}}{\nu_{0}^{2}}\mathbf{v}^{\top}\gamma\left(s_{1}\mathbf{W}\mathbf{x}_{i}\right)\right|^{2}\right]}\\
\left[(2)\right] & \leq-2\beta^{-1}n\left(a_{2}+\frac{a_{1}}{\nu_{0}^{2}}\right)\mathbb{E}\mathcal{L} \left(\boldsymbol{\theta}_{t}\right)
\\&\qquad+\beta^{-1}\sum_{i=1}^{n}\sqrt{2\mathbb{E}\left[\ell_{i}\left(\boldsymbol{\theta}_{t}\right)\right]\mathbb{E}\left[2\left(a_{2}+\frac{a_{1}}{\nu_{0}^{2}}\right)^{2}y_{i}^{2}+2s_{2}^{2}\frac{a_{1}^{2}}{\nu_{0}^{4}}\left|\mathbf{v}^{\top}\gamma\left(s_{1}\mathbf{W}\mathbf{x}_{i}\right)\right|^{2}\right]}\\
\left[(3)\right] & \leq -2 \beta^{-1}n\left(a_{2}+\frac{a_{1}}{\nu_{0}^{2}}\right)\mathbb{E}\mathcal{L} \left(\boldsymbol{\theta}_{t}\right)
\\&\qquad +2\beta^{-1}\sum_{i=1}^{n}\sqrt{\mathbb{E}\left[\ell_{i}\left(\boldsymbol{\theta}_{t}\right)\right]}\left(R_{y}\left(a_{2}+\frac{a_{1}}{\nu_{0}^{2}}\right)+s_{2}\frac{a_{1}}{\nu_{0}^{2}}\sqrt{\mathbb{E}_{p_{t}}\left[\left|\mathbf{v}^{\top}\gamma\left(s_{1}\mathbf{W}\mathbf{x}_{i}\right)\right|^{2}\right]}\right)\,,
\end{align*}
}
where in transition (1) we used Cauchy-Schwarz, in transition (2) we used $\left(a-b\right)^{2}\leq2a^{2}+2b^{2}$, and in transition (3) we used $\sqrt{a+b}\leq\sqrt{a}+\sqrt{b}$.
\subsubsection{Bounding $\mathbb{E}_{p_{t}}\left[\left|\mathbf{v}^{\top}\gamma\left(s_{1}\mathbf{W}\mathbf{x}_{i}\right)\right|^{2}\right]$}
Conditioned on $\mathbf{W}$, $\gamma\left(s_{1}\mathbf{W}\mathbf{x}_{i}\right)$
is fixed and thus $\mathbf{v}^{\top}\gamma\left(s_{1}\mathbf{W}\mathbf{x}_{i}\right)$
is Gaussian at initialization (recall: $\mathbf{v}_{p_{0}}\sim\mathcal{N}\left(\mathbf{0}_{m},\frac{1}{a_{2}}\mathbf{I}_{m}\right)$),
and 
\[
\mathbf{v}^{\top}\gamma\left(s_{1}\mathbf{W}\mathbf{x}_{i}\right)\overset{d}{=}\frac{1}{\sqrt{a_{2}}}\left\Vert \gamma\left(s_{1}\mathbf{W}\mathbf{x}_{i}\right)\right\Vert _{2}G\,,
\]
where $G\sim\mathcal{N}\left(0,1\right)$, independent of 
$\mathbf{W}$, and $\overset{d}{=}$ denotes equality in distribution. Thus, for any $p\geq1$:
\begin{align*}
\left\Vert \mathbf{v}^{\top}\gamma\left(s_{1}\mathbf{W}\mathbf{x}_{i}\right)\right\Vert _{L^{p}\mid\mathbf{W}} & =\frac{1}{\sqrt{a_{2}}}\left\Vert \gamma\left(s_{1}\mathbf{W}\mathbf{x}_{i}\right)\right\Vert _{2}\left\Vert G\right\Vert _{L^{p}}\,.\\
\left[\text{\citep[Proposition 2.6.6]{vershynin2018high}}\right] & \leq\frac{1}{\sqrt{a_{2}}}\left\Vert \gamma\left(s_{1}\mathbf{W}\mathbf{x}_{i}\right)\right\Vert _{2}C\sqrt{p}\,.
\end{align*}
By using the tower property for \(p\)-th moments (\cref{claim:moments total expectation}), we get:
\[
\left\Vert \mathbf{v}^{\top}\gamma\left(s_{1}\mathbf{W}\mathbf{x}_{i}\right)\right\Vert _{L^{p}}
=\left\Vert \left\Vert \mathbf{v}^{\top}\gamma\left(s_{1}\mathbf{W}\mathbf{x}_{i}\right) \mid\mathbf{W} \right\Vert _{L^{p}}\right\Vert _{L^{p}}
\leq\frac{1}{\sqrt{a_{2}}}C\sqrt{p}\left\Vert \left\Vert \gamma\left(s_{1}\mathbf{W}\mathbf{x}_{i}\right)\right\Vert _{2}\right\Vert _{L^{p}}\,.
\]
Note that
\begin{align*}
\left\Vert \gamma\left(s_{1}\mathbf{W}\mathbf{x}_{i}\right)\right\Vert _{2} & \leq C_{\phi}\left\Vert s_{1}\mathbf{W}\mathbf{x}_{i}\right\Vert _{2}=C_{\phi}s_{1}\sqrt{\sum_{j=1}^{m}\left(\mathbf{w}_{j}^{\top}\mathbf{x}_{i}\right)^{2}}=C_{\phi}s_{1}\sqrt{c_{\phi,\sigma_{0}}d\frac{\nu_{0}^{2}}{a_{1}}}\left\Vert \mathbf{z}\right\Vert _{2}\,,
\end{align*}
where
\[
\mathbf z\triangleq
\frac{1}{\sqrt{c_{\phi,\sigma_0}d\,\nu_0^2/a_1}}
\mathbf W\mathbf x_i,
\]
so that \(\mathbf z\sim\mathcal N(\mathbf 0_m,\mathbf I_m)\) at initialization.
As above, \cref{prop:bounded moments of gaussian vector} gives:
\[
\left\Vert \left\Vert \gamma\left(s_{1}\mathbf{W}\mathbf{x}_{i}\right)\right\Vert _{2}\right\Vert _{L^{p}}\leq C_{\phi}s_{1}\sqrt{c_{\phi,\sigma_{0}}d\frac{\nu_{0}^{2}}{a_{1}}}\left\Vert \left\Vert \mathbf{z}\right\Vert _{2}\right\Vert _{L^{p}}\leq C_{\phi}s_{1}\sqrt{c_{\phi,\sigma_{0}}d\frac{\nu_{0}^{2}}{a_{1}}}\left(C\sqrt{p}+\sqrt{m}\right).
\]
Applying \cref{eq:second_law_with_eta} to this nonnegative quantity gives, for all \(t\geq0\),
\begin{align*}
\mathbb{E}_{p_{t}}\left[\left|\mathbf{v}^{\top}\gamma\left(s_{1}\mathbf{W}\mathbf{x}_{i}\right)\right|^{2}\right] & \leq\eta\left\Vert \left|\mathbf{v}^{\top}\gamma\left(s_{1}\mathbf{W}\mathbf{x}_{i}\right)\right|^{2}\right\Vert _{L^{c_{\beta}{\bar{\kappa}}^{2}\beta},p_{0}}\\
\left[\text{\cref{claim:moment of power}}\right] & =\eta\left\Vert \left|\mathbf{v}^{\top}\gamma\left(s_{1}\mathbf{W}\mathbf{x}_{i}\right)\right|\right\Vert _{L^{2c_{\beta}{\bar{\kappa}}^{2}\beta},p_{0}}^{2}\\
 & \leq\eta\left(\frac{1}{\sqrt{a_{2}}}C_{1}\sqrt{2c_{\beta}{\bar{\kappa}}^{2}\beta}\left\Vert \left\Vert \gamma\left(s_{1}\mathbf{W}\mathbf{x}_{i}\right)\right\Vert _{2}\right\Vert _{L^{2c_{\beta}{\bar{\kappa}}^{2}\beta}}\right)^{2}\\
 & \leq\eta\left(\frac{1}{\sqrt{a_{2}}}C_{1}\sqrt{2c_{\beta}{\bar{\kappa}}^{2}\beta}C_{\phi}s_{1}\sqrt{c_{\phi,\sigma_{0}}d\frac{\nu_{0}^{2}}{a_{1}}}\left(C_{2}\sqrt{2c_{\beta}{\bar{\kappa}}^{2}\beta}+\sqrt{m}\right)\right)^{2}\\
 & \leq\eta C_{3}c_{\phi,\sigma_{0}}\nu_{0}^{2}s_{1}^{2}\frac{d}{a_{2}a_{1}}c_{\beta}{\bar{\kappa}}^{2}\beta\left(C_{4}c_{\beta}{\bar{\kappa}}^{2}\beta+m\right)\\
 & \lesssim s_{1}^{2}\frac{d}{a_{2}a_{1}}{\bar{\kappa}}^{2}\beta\left({\bar{\kappa}}^{2}\beta+m\right)\,.
\end{align*}
\subsubsection{Summary of fourth term}
\begin{align*}
&-\beta^{-1}\mathbb{E}\left[\mathbf{r}\left(\boldsymbol{\theta}_{t}\right)^{\top}\nabla f\left(\boldsymbol{\theta}_{t}\right)\boldsymbol{\Lambda}\boldsymbol{\theta}_{t}\right] \\
& \leq-2\beta^{-1}n\left(a_{2}+\frac{a_{1}}{\nu_{0}^{2}}\right)\mathbb{E}\mathcal{L} \left(\boldsymbol{\theta}_{t}\right) \\
& \qquad +2\beta^{-1}\sum_{i=1}^{n}\sqrt{\mathbb{E}\left[\ell_{i}\left(\boldsymbol{\theta}_{t}\right)\right]}\left(R_{y}\left(a_{2}+\frac{a_{1}}{\nu_{0}^{2}}\right)+s_{2}\frac{a_{1}}{\nu_{0}^{2}}\sqrt{\mathbb{E}_{p_{t}}\left[\left|\mathbf{v}^{\top}\gamma\left(s_{1}\mathbf{W}\mathbf{x}_{i}\right)\right|^{2}\right]}\right)\\
 & \lesssim-\beta^{-1}n\left(a_{2}+a_{1}\right)\mathbb{E}\mathcal{L} \left(\boldsymbol{\theta}_{t}\right)\\
 & \qquad +\beta^{-1}\sum_{i=1}^{n}\sqrt{\mathbb{E}\left[\ell_{i}\left(\boldsymbol{\theta}_{t}\right)\right]}\left(a_{2}+a_{1}+s_{2}a_{1}\sqrt{s_{1}^{2}\frac{d}{a_{2}a_{1}}{\bar{\kappa}}^{2}\beta\left({\bar{\kappa}}^{2}\beta+m\right)}\right)\\
\left[\text{Jensen}\right] & \lesssim-\beta^{-1}n\left(a_{2}+a_{1}\right)\mathbb{E}\mathcal{L} \left(\boldsymbol{\theta}_{t}\right) \\
& \qquad+\beta^{-1}n\left(a_{2}+a_{1}+s_{2}a_{1}\sqrt{s_{1}^{2}\frac{d}{a_{2}a_{1}}{\bar{\kappa}}^{2}\beta\left({\bar{\kappa}}^{2}\beta+m\right)}\right)\sqrt{\frac{1}{n}\sum_{i=1}^{n}\mathbb{E}\left[\ell_{i}\left(\boldsymbol{\theta}_{t}\right)\right]}\\
 & \lesssim-\beta^{-1}n\left(a_{2}+a_{1}\right)\mathbb{E}\mathcal{L} \left(\boldsymbol{\theta}_{t}\right) \\
 & \qquad +\beta^{-1}n\left(a_{2}+a_{1}+s_{2}s_{1}\sqrt{\frac{da_{1}}{a_{2}}}\sqrt{{\bar{\kappa}}^{2}\beta\left({\bar{\kappa}}^{2}\beta+m\right)}\right)\sqrt{\mathbb{E}\mathcal{L} \left(\boldsymbol{\theta}_{t}\right)}\,.
\end{align*}
Importantly, observe that for $\mathbb{E}\mathcal{L} \left(\boldsymbol{\theta}_{t}\right)\leq1$,
\begin{align*}
& -\beta^{-1}n\left(a_{2}+a_{1}\right)\mathbb{E}\mathcal{L} \left(\boldsymbol{\theta}_{t}\right)+\beta^{-1}n\left(a_{2}+a_{1}\right)\sqrt{\mathbb{E}\mathcal{L} \left(\boldsymbol{\theta}_{t}\right)} \\
& =\beta^{-1}n\left(a_{2}+a_{1}\right)\sqrt{\mathbb{E}\mathcal{L} \left(\boldsymbol{\theta}_{t}\right)}\left(1-\sqrt{\mathbb{E}\mathcal{L} \left(\boldsymbol{\theta}_{t}\right)}\right)\geq0\,,
\end{align*}
thus this upper bound does not yield a descent contribution from the regularization term in the small-loss regime.
\subsection{Bounding the entire right-hand side}
Recall:
\begin{align*}
\frac{\mathrm{d}}{\mathrm{d}t}\mathbb{E}\mathcal{L} \left(\boldsymbol{\theta}_{t}\right) & =\frac{1}{n}\biggl(-n^{-1}\mathbb{E}\left[\mathbf{r}\left(\boldsymbol{\theta}_{t}\right)^{\top}\mathbf{K}_{\mathrm{NTK}}\left(\boldsymbol{\theta}_{t}\right)\mathbf{r}\left(\boldsymbol{\theta}_{t}\right)\right]\\
 & \qquad\quad+\beta^{-1}\mathbb{E}\left[\mathrm{Tr}\left(\nabla f\left(\boldsymbol{\theta}_{t}\right)^{\top}\nabla f\left(\boldsymbol{\theta}_{t}\right)\right)\right]\\
 & \qquad\quad+\beta^{-1}\mathbb{E}\left[\sum_{i=1}^{n}r_{i}\left(\boldsymbol{\theta}_{t}\right)\Delta f_{i}\left(\boldsymbol{\theta}_{t}\right)\right]\\
 & \qquad\quad-\beta^{-1}\mathbb{E}\left[\mathbf{r}\left(\boldsymbol{\theta}_{t}\right)^{\top}\nabla f\left(\boldsymbol{\theta}_{t}\right)\boldsymbol{\Lambda}\boldsymbol{\theta}_{t}\right]\biggr)\,.
\end{align*}
Bounding each term, we obtained:
\begin{align*}
-n^{-1}\mathbb{E}\left[\mathbf{r}\left(\boldsymbol{\theta}_{t}\right)^{\top}\mathbf{K}_{\mathrm{NTK}}\left(\boldsymbol{\theta}_{t}\right)\mathbf{r}\left(\boldsymbol{\theta}_{t}\right)\right] & \lesssim-\lambda_{1}s_{2}^{2}m\mathbb{E}\left[\mathcal{L} \left(\boldsymbol{\theta}_{t}\right)\right] \\
& \quad +\lambda_{1}s_{2}^{2}m\left(1+{\bar{\kappa}}^{4}\beta\right)\exp\left(-\tilde{c}_{2}\frac{m}{c_{\beta}{\bar{\kappa}}^{2}\beta\frac{n}{\lambda_{1}}\ln\frac{n}{\lambda_{1}}}\right)\,,\\
+\beta^{-1}\mathbb{E}\left[\mathrm{Tr}\left(\nabla f\left(\boldsymbol{\theta}_{t}\right)^{\top}\nabla f\left(\boldsymbol{\theta}_{t}\right)\right)\right] & \lesssim\beta^{-1}ns_{2}^{2}s_{1}^{2}d\left(\frac{1}{a_{1}}+\frac{1}{a_{2}}\right)\left({\bar{\kappa}}^{2}\beta+m\right)\,,\\
+\beta^{-1}\mathbb{E}\left[\sum_{i=1}^{n}r_{i}\left(\boldsymbol{\theta}_{t}\right)\Delta f_{i}\left(\boldsymbol{\theta}_{t}\right)\right] & \lesssim\beta^{-1/2}ns_{2}s_{1}^{2}d\sqrt{\frac{m}{a_{2}}{\bar{\kappa}}^{2}}\sqrt{\mathbb{E}\mathcal{L} \left(\boldsymbol{\theta}_{t}\right)}\,,\\
-\beta^{-1}\mathbb{E}\left[\mathbf{r}\left(\boldsymbol{\theta}_{t}\right)^{\top}\nabla f\left(\boldsymbol{\theta}_{t}\right)\boldsymbol{\Lambda}\boldsymbol{\theta}_{t}\right] & \lesssim-\beta^{-1}n\left(a_{2}+a_{1}\right)\mathbb{E}\mathcal{L} \left(\boldsymbol{\theta}_{t}\right)
 +\beta^{-1}n \cdot  \\&\quad \cdot \left(a_{2}+a_{1}+s_{2}s_{1}\sqrt{\frac{da_{1}}{a_{2}}}\sqrt{{\bar{\kappa}}^{2}\beta\left({\bar{\kappa}}^{2}\beta+m\right)}\right)\sqrt{\mathbb{E}\mathcal{L} \left(\boldsymbol{\theta}_{t}\right)}\,.
\end{align*}
Together, we can bound $\frac{\mathrm{d}}{\mathrm{d}t}\mathbb{E}\mathcal{L} \left(\boldsymbol{\theta}_{t}\right)$
at all times $t\geq0$ by
\begin{align*}
& \frac{\mathrm{d}}{\mathrm{d}t}\mathbb{E}\mathcal{L} \left(\boldsymbol{\theta}_{t}\right) \\
& \lesssim\frac{1}{n}\Biggl(-\left(\lambda_{1}s_{2}^{2}m+\beta^{-1}n\!\left(a_{2}+a_{1}\right)\right)\mathbb{E}\mathcal{L} \left(\boldsymbol{\theta}_{t}\right)\\
 & \qquad\ +\left[\beta^{-1}n\left(a_{2}+a_{1}+s_{2}s_{1}\sqrt{\frac{da_{1}}{a_{2}}}\sqrt{{\bar{\kappa}}^{2}\beta\left({\bar{\kappa}}^{2}\beta+m\right)}\right)\!+\!\beta^{-1/2}ns_{2}s_{1}^{2}d\sqrt{\frac{m}{a_{2}}{\bar{\kappa}}^{2}}\right] \! \sqrt{\mathbb{E}\mathcal{L} \left(\boldsymbol{\theta}_{t}\right)}\\
 & \qquad\ +\lambda_{1}s_{2}^{2}m\left(1+{\bar{\kappa}}^{4}\beta\right)\exp\left(-\tilde{c}_{2}\frac{m}{c_{\beta}{\bar{\kappa}}^{2}\beta\frac{n}{\lambda_{1}}\ln\frac{n}{\lambda_{1}}}\right)+\beta^{-1}ns_{2}^{2}s_{1}^{2}d\left(\frac{1}{a_{1}}+\frac{1}{a_{2}}\right)\left({\bar{\kappa}}^{2}\beta+m\right)\Biggr).
\end{align*}
The regularization contribution does not provide a useful descent term in the small-loss regime: for \(u=\mathbb{E}\mathcal{L}(\boldsymbol{\theta}_t)\leq1\),
\[
-\beta^{-1}n(a_2+a_1)u+\beta^{-1}n(a_2+a_1)\sqrt u
=
\beta^{-1}n(a_2+a_1)\sqrt u(1-\sqrt u)\geq0\,.
\]
Therefore, in closing the differential inequality, we rely only on the NTK term for the linear descent and treat all \(\sqrt u\)-terms as error terms, to be absorbed by Young's inequality.
Therefore, discarding the additional negative regularization contribution, we obtain
\begin{align*}
& \frac{\mathrm{d}}{\mathrm{d}t}\mathbb{E}\mathcal{L} \left(\boldsymbol{\theta}_{t}\right) \\
& \lesssim\frac{1}{n}\Biggl(-\lambda_{1}s_{2}^{2}m\mathbb{E}\mathcal{L} \left(\boldsymbol{\theta}_{t}\right)\\
 & \qquad\ +\left[\beta^{-1}n\!\left(a_{2}+a_{1}+s_{2}s_{1}\sqrt{\frac{da_{1}}{a_{2}}}\sqrt{{\bar{\kappa}}^{2}\beta\left({\bar{\kappa}}^{2}\beta+m\right)}\right) \! + \! \beta^{-1/2}ns_{2}s_{1}^{2}d\sqrt{\frac{m}{a_{2}}{\bar{\kappa}}^{2}}\right] \! \sqrt{\mathbb{E}\mathcal{L} \left(\boldsymbol{\theta}_{t}\right)}\\
 & \qquad\ +\lambda_{1}s_{2}^{2}m\left(1+{\bar{\kappa}}^{4}\beta\right)\exp\left(-\tilde{c}_{2}\frac{m}{c_{\beta}{\bar{\kappa}}^{2}\beta\frac{n}{\lambda_{1}}\ln\frac{n}{\lambda_{1}}}\right)+\beta^{-1}ns_{2}^{2}s_{1}^{2}d\left(\frac{1}{a_{1}}+\frac{1}{a_{2}}\right)\left({\bar{\kappa}}^{2}\beta+m\right)\Biggr).
\end{align*}
Next, in order to have
\[
\left(1+{\bar{\kappa}}^{4}\beta\right)\exp\left(-\tilde{c}_{2}\frac{m}{c_{\beta}{\bar{\kappa}}^{2}\beta\frac{n}{\lambda_{1}}\ln\frac{n}{\lambda_{1}}}\right)\ll 1\,,
\]
it is enough to assume,
\[
m =\Omega \prn{ \bar\kappa^2 \beta \frac{n}{\lambda_1} \ln \frac{n}{\lambda_1}\ln \prn{\bar\kappa^4 \beta}}\,,
\]
since $\beta=\Omega \prn{\frac{n}{\lambda_1}} \gg 1$, and $\bar\kappa\geq1$. This matches the assumptions of the theorem.

In particular, this implies $m\gg {\bar{\kappa}}^{2}\beta$, thus we can simplify
further:
\begin{align*}
& \frac{\mathrm{d}}{\mathrm{d}t}\mathbb{E}\mathcal{L} \left(\boldsymbol{\theta}_{t}\right) \\
& \lesssim\frac{1}{n}\Biggl(-\lambda_{1}s_{2}^{2}m\mathbb{E}\mathcal{L} \left(\boldsymbol{\theta}_{t}\right)\\
 & \qquad\ +\left[\beta^{-1}n\left(a_{2}+a_{1}+s_{2}s_{1}\sqrt{\frac{da_{1}}{a_{2}}}\sqrt{{\bar{\kappa}}^{2}\beta m}\right)+\beta^{-1/2}ns_{2}s_{1}^{2}d\sqrt{\frac{m}{a_{2}}{\bar{\kappa}}^{2}}\right]\sqrt{\mathbb{E}\mathcal{L} \left(\boldsymbol{\theta}_{t}\right)}\\
 & \qquad\ +\lambda_{1}s_{2}^{2}m\left(1+{\bar{\kappa}}^{4}\beta\right)\exp\left(-\tilde{c}_{2}\frac{m}{c_{\beta}{\bar{\kappa}}^{2}\beta\frac{n}{\lambda_{1}}\ln\frac{n}{\lambda_{1}}}\right)+\beta^{-1}ns_{2}^{2}s_{1}^{2}md\left(\frac{1}{a_{1}}+\frac{1}{a_{2}}\right)\Biggr)\,.
\end{align*}
By Young's inequality, we have for any $\varepsilon>0$,
\[
b\sqrt{\mathbb{E}\mathcal{L} \left(\boldsymbol{\theta}_{t}\right)}\leq\frac{b^{2}}{2\varepsilon}+\frac{\varepsilon}{2}\mathbb{E}\mathcal{L} \left(\boldsymbol{\theta}_{t}\right)\,.
\]
Choosing $\varepsilon=\lambda_{1}s_{2}^{2}m$, we have:
\begin{align*}
& \frac{\mathrm{d}}{\mathrm{d}t}\mathbb{E}\mathcal{L} \left(\boldsymbol{\theta}_{t}\right) \\
& \lesssim\frac{1}{n}\Biggl(-\frac{\lambda_{1}}{2}s_{2}^{2}m\mathbb{E}\mathcal{L} \left(\boldsymbol{\theta}_{t}\right)\\
 & \qquad\ +\frac{\left[\beta^{-1}n\left(a_{2}+a_{1}+s_{2}s_{1}\sqrt{\frac{da_{1}}{a_{2}}}\sqrt{{\bar{\kappa}}^{2}\beta m}\right)+\beta^{-1/2}ns_{2}s_{1}^{2}d\sqrt{\frac{m}{a_{2}}{\bar{\kappa}}^{2}}\right]^{2}}{2\lambda_{1}s_{2}^{2}m}\\
 & \qquad\ +\lambda_{1}s_{2}^{2}m\left(1+{\bar{\kappa}}^{4}\beta\right)\exp\left(-\tilde{c}_{2}\frac{m}{c_{\beta}{\bar{\kappa}}^{2}\beta\frac{n}{\lambda_{1}}\ln\frac{n}{\lambda_{1}}}\right)+\beta^{-1}ns_{2}^{2}s_{1}^{2}md\left(\frac{1}{a_{1}}+\frac{1}{a_{2}}\right)\Biggr)\\
 & \lesssim\frac{1}{n}\Biggl(-\lambda_{1}s_{2}^{2}m\mathbb{E}\mathcal{L} \left(\boldsymbol{\theta}_{t}\right)\\
 & \qquad\ +\frac{\beta^{-2}n^{2}\left(a_{2}^{2}+a_{1}^{2}+s_{2}^{2}s_{1}^{2}\frac{da_{1}}{a_{2}}{\bar{\kappa}}^{2}\beta m\right)+\beta^{-1}n^{2}s_{2}^{2}s_{1}^{4}d^{2}\frac{m}{a_{2}}{\bar{\kappa}}^{2}}{\lambda_{1}s_{2}^{2}m}\\
 & \qquad\ +\lambda_{1}s_{2}^{2}m\left(1+{\bar{\kappa}}^{4}\beta\right)\exp\left(-\tilde{c}_{2}\frac{m}{c_{\beta}{\bar{\kappa}}^{2}\beta\frac{n}{\lambda_{1}}\ln\frac{n}{\lambda_{1}}}\right)+\beta^{-1}ns_{2}^{2}s_{1}^{2}md\left(\frac{1}{a_{1}}+\frac{1}{a_{2}}\right)\Biggr)\,.
\end{align*}
Then, from \cref{claim:ode_inequality} we have
\[
\mathbb{E}\mathcal{L} \left(\boldsymbol{\theta}_{t}\right)\leq\left(\mathbb{E}\mathcal{L} \left(\boldsymbol{\theta}_{0}\right)-\mathcal{L}_\infty\right)e^{-cs_{2}^{2}\frac{\lambda_{1}m}{n}t}+\mathcal{L}_\infty\,,
\]
where the loss floor may be chosen, up to universal multiplicative constants, as
\begin{align*}
\mathcal{L}_\infty & \asymp \frac{1}{\frac{\lambda_{1}}{2}s_{2}^{2}m}\Biggl(\frac{\beta^{-2}n^{2}\left(a_{2}^{2}+a_{1}^{2}+s_{2}^{2}s_{1}^{2}\frac{da_{1}}{a_{2}}{\bar{\kappa}}^{2}\beta m\right)+\beta^{-1}n^{2}s_{2}^{2}s_{1}^{4}d^{2}\frac{m}{a_{2}}{\bar{\kappa}}^{2}}{\lambda_{1}s_{2}^{2}m}\\
 & \qquad\qquad\ +\lambda_{1}s_{2}^{2}m\left(1+{\bar{\kappa}}^{4}\beta\right)\exp\left(-\tilde{c}_{2}\frac{m}{c_{\beta}{\bar{\kappa}}^{2}\beta\frac{n}{\lambda_{1}}\ln\frac{n}{\lambda_{1}}}\right)+\beta^{-1}ns_{2}^{2}s_{1}^{2}md\left(\frac{1}{a_{1}}+\frac{1}{a_{2}}\right)\Biggr)\\
 & \asymp\beta^{-2}\frac{n^{2}}{\lambda_{1}^{2}}\left(\frac{a_{2}^{2}}{s_{2}^{4}m^{2}}+\frac{a_{1}^{2}}{s_{2}^{4}m^{2}}\right)+\beta^{-1}\frac{n^{2}}{\lambda_{1}^{2}}\frac{s_{1}^{2}d}{s_{2}^{2}a_{2}m}{\bar{\kappa}}^{2}\left(a_{1}+s_{1}^{2}d\right)\\
 & \qquad+\left(1+{\bar{\kappa}}^{4}\beta\right)\exp\left(-\tilde{c}_{2}\frac{m}{c_{\beta}{\bar{\kappa}}^{2}\beta\frac{n}{\lambda_{1}}\ln\frac{n}{\lambda_{1}}}\right)+\beta^{-1}\frac{n}{\lambda_{1}}s_{1}^{2}d\left(\frac{1}{a_{1}}+\frac{1}{a_{2}}\right)\,.
\end{align*}
\subsection{Variance preservation and output scaling}
In the proof of the lower bound for the minimal eigenvalue of the
NTK (\cref{thm:lower_bound_ntk_deep}), we demanded $\frac{a_{1}}{s_{1}^{2}}=d$, to achieve
the necessary variance-preservation. For the last layer, this demand does not
hold, and we use the scaling parameter. Then,
\begin{equation}
\begin{aligned}
\mathcal{L}_\infty& \asymp\left(1+\bar\kappa^{4}\beta\right)\exp\left(-\tilde{c}_{2}\frac{m}{c_{\beta}\bar\kappa^{2}\beta\frac{n}{\lambda_{1}}\ln\frac{n}{\lambda_{1}}}\right)+\beta^{-2}\frac{n^{2}}{\lambda_{1}^{2}}\left(\frac{1}{\kappa^{4}}+\frac{s_{1}^{4}d^{2}}{s_{2}^{4}m^{2}}\right) \\ 
& \qquad +\beta^{-1}\frac{n^{2}}{\lambda_{1}^{2}}\bar\kappa^{2} \kappa^2 \frac{s_{1}^{4}d^{2}}{s_{2}^{4}m^{2}}+\beta^{-1}\frac{n}{\lambda_{1}}\left(1+\kappa^{2}\frac{s_{1}^{2}d}{s_{2}^{2}m}\right)\,.
\end{aligned}
\end{equation}
Finally, since \(c_\beta=O(1)\), we absorb it into the universal constant in the exponent and obtain
\begin{equation}
\begin{aligned}
\mathcal{L}_\infty& \asymp\left(1+\bar\kappa^{4}\beta\right)\exp\left(-{c_{2}}\frac{m}{\bar\kappa^{2}\beta\frac{n}{\lambda_{1}}\ln\frac{n}{\lambda_{1}}}\right)+\beta^{-2}\frac{n^{2}}{\lambda_{1}^{2}}\left(\frac{1}{\kappa^{4}}+\frac{s_{1}^{4}d^{2}}{s_{2}^{4}m^{2}}\right) \\ 
& \qquad +\beta^{-1}\frac{n^{2}}{\lambda_{1}^{2}}\bar\kappa^{2} \kappa^2 \frac{s_{1}^{4}d^{2}}{s_{2}^{4}m^{2}}+\beta^{-1}\frac{n}{\lambda_{1}}\left(1+\kappa^{2}\frac{s_{1}^{2}d}{s_{2}^{2}m}\right)\,.\label{eq:L_infty_for_analysis}
\end{aligned}
\end{equation}
\end{proof}
\newpage

\section{Proof of global convergence of regularized CLD for deep networks}
\label{app:langevin_proof_deep}

\subsection{Setting}

\textbf{Network. }We consider the neural network defined in \cref{sec:setting_deep_nn}. We denote $\ensuremath{m\triangleq\min_{l\in\left[L\right]}m_{l}}$.

\textbf{Intermediate quantities.} Throughout the analysis, for every
$l\in\left[L\right]$, we define $\tilde{\boldsymbol{\alpha}}^{\left(l\right)}\left(\mathbf{x}\right)\in\mathbb{R}^{m_{l}}$
by
\begin{align*}
\tilde{\boldsymbol{\alpha}}^{\left(l\right)}\left(\mathbf{x}\right) & \triangleq s_{l}\mathbf{W}^{\left(l\right)}\boldsymbol{\alpha}^{\left(l-1\right)}\left(\mathbf{x}\right)
\end{align*}
and $\boldsymbol{\delta}^{\left(l\right)}\left(\mathbf{x}\right)\in\mathbb{R}^{m_{l}}$
by
\begin{align*}
\boldsymbol{\delta}^{\left(l\right)}\left(\mathbf{x}\right) & \triangleq\left(\frac{\partial f\left(\boldsymbol{\theta};\mathbf{x}\right)}{\partial\tilde{\boldsymbol{\alpha}}^{\left(l\right)}\left(\mathbf{x}\right)}\right)^{\top}.
\end{align*}
We can express $\boldsymbol{\delta}^{\left(l\right)}\left(\mathbf{x}\right)$
recursively by
\begin{align}
\boldsymbol{\delta}^{\left(L\right)}\left(\mathbf{x}\right) & =s_{L+1}\mathrm{diag}\left(\phi'\left(\tilde{\boldsymbol{\alpha}}^{\left(L\right)}\left(\mathbf{x}\right)\right)\right)\mathbf{v},\nonumber \\
\forall l\in\left[L-1\right],\ \boldsymbol{\delta}^{\left(l\right)}\left(\mathbf{x}\right) & =\left(\frac{\partial f\left(\boldsymbol{\theta};\mathbf{x}\right)}{\partial\tilde{\boldsymbol{\alpha}}^{\left(l+1\right)}\left(\mathbf{x}\right)}\frac{\partial\tilde{\boldsymbol{\alpha}}^{\left(l+1\right)}\left(\mathbf{x}\right)}{\partial\boldsymbol{\alpha}^{\left(l\right)}\left(\mathbf{x}\right)}\frac{\partial\boldsymbol{\alpha}^{\left(l\right)}\left(\mathbf{x}\right)}{\partial\tilde{\boldsymbol{\alpha}}^{\left(l\right)}\left(\mathbf{x}\right)}\right)^{\top}\label{eq:delta}\\
 & =\mathrm{diag}\left(\phi'\left(\tilde{\boldsymbol{\alpha}}^{\left(l\right)}\left(\mathbf{x}\right)\right)\right)s_{l+1}\left(\mathbf{W}^{\left(l+1\right)}\right)^{\top}\boldsymbol{\delta}^{\left(l+1\right)}\left(\mathbf{x}\right).\nonumber 
\end{align}
Note that for every $l\in\left[L\right]$, the matrix $\frac{\partial f\left(\boldsymbol{\theta};\mathbf{x}\right)}{\partial\mathbf{W}^{\left(l\right)}}\in\mathbb{R}^{m_{l}\times m_{l-1}}$
satisfies
\begin{align}
\frac{\partial f\left(\boldsymbol{\theta};\mathbf{x}\right)}{\partial\mathbf{W}^{\left(l\right)}} & =s_{l}\boldsymbol{\delta}^{\left(l\right)}\left(\mathbf{x}\right)\left(\boldsymbol{\alpha}^{\left(l-1\right)}\left(\mathbf{x}\right)\right)^{\top}.\label{eq:nabla_f_Wl}
\end{align}

\textbf{Matrix norms. }Throughout this section, unless stated otherwise,
$\left\Vert \cdot\right\Vert $ denotes the spectral norm when used
with a matrix. 

\textbf{Sums and products.} The analysis of the neural network's gradient
and Laplacian includes many sums and products over ranges of layers.
To make the math as compact as possible, we sometimes combine terms
in a way that makes a sum's (or product's) initial running index greater
than its final index. We define such occurrences to be equal to 0
for sums and to 1 for products. 

\textbf{Use of $\lesssim$ and $\asymp$.} Throughout the proof, inequalities
involving both descent and error terms are interpreted in the following
one-sided asymptotic sense: an estimate of the form
\begin{align*}
X\lesssim-A+B
\end{align*}

means that there exist universal constants $c,C>0$, independent of
the problem parameters, such that 
\begin{align*}
X\leq-cA+CB.
\end{align*}

Thus, positive error terms may be enlarged and descent coefficients
may be reduced by universal constant factors. These constants are
absorbed into the final constants $c_{1},c_{2}$ and into the implicit
constants in $\mathcal{L}_{\infty}$.

Similarly, when writing $\mathcal{L}_{\infty}\asymp F$, we mean that
the loss floor in the upper bound may be chosen as $\mathcal{L}_{\infty}=CF$
for a sufficiently large universal constant $C>0$. Thus, $\asymp$
records the parameter dependence of the upper-bound floor and is not
meant as a matching lower bound on the optimal possible floor.

For $\mathbf{a}\in\mathbb{R}^{p}$ and $\mathbf{b}\in\mathbb{R}^{q}$
we define $\frac{\partial\mathbf{a}}{\partial\mathbf{b}}\in\mathbb{R}^{p\times q}$
to be the matrix given by $\left(\frac{\partial\mathbf{a}}{\partial\mathbf{b}}\right)_{pq}=\frac{\partial a_{p}}{\partial b_{q}}$.
Whenever $p=1$ we treat this as a row vector and whenever $q=1$
we treat this as a column vector.

\subsection{Supporting propositions}

\begin{proposition}
\label{prop:param_norm_bounds} There exists a constant $c>0$ such
that for any $p\geq1$, the parameters at initialization satisfy
\begin{align*}
\left\Vert \left\Vert \mathbf{W}^{\left(l\right)}\right\Vert ^{2}\right\Vert _{L^{p}} & \leq c\frac{\nu_{0}^{2}}{a_{l}}\left(m_{l-1}+m_{l}+p\right),\quad\forall l\in\left[L\right];\\
\left\Vert \left\Vert \mathbf{v}\right\Vert ^{2}\right\Vert _{L^{p}} & \leq c\frac{1}{a_{L+1}}\left(p+m_{L}\right).
\end{align*}
\end{proposition}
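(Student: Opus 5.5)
The plan is to reduce both bounds to \cref{claim:bounded moments of gaussian matrix} and \cref{prop:bounded moments of gaussian vector} by factoring out the initialization variance.

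For the weight matrices, \cref{eq:Wv_initialization} lets us write $\mathbf{W}^{(l)}=\frac{\nu_0}{\sqrt{a_l}}\mathbf{G}^{(l)}$, where $\mathbf{G}^{(l)}\in\real^{m_l\times m_{l-1}}$ has i.i.d.\ standard Gaussian entries. Then $\norm{\mathbf{W}^{(l)}}^2=\frac{\nu_0^2}{a_l}\norm{\mathbf{G}^{(l)}}^2$, and homogeneity of the $L^p$ norm gives
\[
\norm{\norm{\mathbf{W}^{(l)}}^2}_{L^p}=\frac{\nu_0^2}{a_l}\norm{\norm{\mathbf{G}^{(l)}}^2}_{L^p}.
\]
The second part of \cref{claim:bounded moments of gaussian matrix} bounds the right-hand factor by $c'(m_l+m_{l-1}+p)$, which is the first claim.

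For the last layer, write $\mathbf{v}=a_{L+1}^{-1/2}\mathbf{z}$ with $\mathbf{z}\sim\mathcal{N}(\mathbf{0}_{m_L},\mathbf{I}_{m_L})$; here the norm is the Euclidean norm of a vector. By \cref{claim:moment of power} and \cref{prop:bounded moments of gaussian vector},
\[
\norm{\norm{\mathbf{z}}^2}_{L^p}=\norm{\norm{\mathbf{z}}}_{L^{2p}}^2\leq\prn{C\sqrt{2p}+\sqrt{m_L}}^2\leq 4C^2p+2m_L .
\]
Multiplying by $1/a_{L+1}$ gives the second bound.

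Finally, we take $c$ to be the maximum of $c'$, $4C^2$ and $2$, so one global constant covers both bounds.

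There is no real obstacle: the statement is a direct rescaling of the earlier Gaussian moment claims. The only care needed is tracking the variance factor $\nu_0^2/a_l$ versus $1/a_{L+1}$, and checking that $p\geq1$ so the cited claims apply.
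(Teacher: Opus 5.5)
Your proposal is correct and takes the same route as the paper: rescale $\mathbf{W}^{(l)}$ and $\mathbf{v}$ to standard Gaussians, then invoke \cref{claim:bounded moments of gaussian matrix}, \cref{prop:bounded moments of gaussian vector} and \cref{claim:moment of power}. The only cosmetic difference is in the matrix case, where the paper passes through $\norm{\norm{\mathbf{G}^{(l)}}}_{L^{2p}}^2$ and you apply the second part of the matrix claim directly; the two are equivalent.
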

\begin{proof}
Let $\mathbf{G}\in\mathbb{R}^{m\times n}$ be a matrix with i.i.d.
standard Gaussian entries. By \cref{claim:moment of power,claim:bounded moments of gaussian matrix}, there exists
$c'>0$ such that for any $l\in\left[L\right]$ and any $p\geq1$,
\begin{align*}
\left\Vert \left\Vert \mathbf{W}^{\left(l\right)}\right\Vert ^{2}\right\Vert _{L^{p}} & \leq\left\Vert \left\Vert \mathbf{W}^{\left(l\right)}\right\Vert \right\Vert _{L^{2p}}^{2}=\frac{\nu_{0}^{2}}{a_{l}}\left\Vert \left\Vert \mathbf{G}^{\left(l\right)}\right\Vert \right\Vert _{L^{2p}}^{2}\leq c'\frac{\nu_{0}^{2}}{a_{l}}\left(m_{l-1}+m_{l}+p\right).
\end{align*}
Additionally, denoting $\mathbf{z}^{\left(L+1\right)}\sim\mathcal{N}\left(\mathbf{0}_{m_{L}},\mathbf{I}_{m_{L}}\right)$,
by \cref{claim:moment of power,prop:bounded moments of gaussian vector}, there exists $c'>0$ such that 
\begin{align*}
\left\Vert \left\Vert \mathbf{v}\right\Vert ^{2}\right\Vert _{L^{p}}\leq\left\Vert \left\Vert \mathbf{v}\right\Vert \right\Vert _{L^{2p}}^{2} & =\frac{1}{a_{L+1}}\left\Vert \left\Vert \mathbf{z}^{\left(L+1\right)}\right\Vert \right\Vert _{L^{2p}}^{2}\\
 & \leq\frac{1}{a_{L+1}}\left(c\sqrt{2p}+\sqrt{m_{L}}\right)^{2}\leq4c'\frac{1}{a_{L+1}}\left(p+m_{L}\right).
\end{align*}
\end{proof}
\begin{proposition}
\label{prop:bound_delta}It holds that for all $l\in\left[L\right]$,
\begin{align*}
\left\Vert \boldsymbol{\delta}^{\left(l\right)}\left(\mathbf{x}\right)\right\Vert ^{2} & \leq\alpha_{\phi}^{2(L+1-l)}\left(\prod_{k=l+1}^{L}s_{k}^{2}\left\Vert \mathbf{W}^{\left(k\right)}\right\Vert ^{2}\right)s_{L+1}^{2}\left\Vert \mathbf{v}\right\Vert ^{2}.
\end{align*}
In addition, at initialization, there exists a constant $c>0$ such
that for all $p\geq1$, 
\begin{align*}
\left\Vert \left\Vert \boldsymbol{\delta}^{\left(l\right)}\left(\mathbf{x}\right)\right\Vert ^{2}\right\Vert _{L^{p}} & \leq\left(c\nu_{0}\alpha_{\phi}\right)^{2(L+1-l)}\left(\prod_{k=l+1}^{L}\frac{s_{k}^{2}}{a_{k}}\left(m_{k-1}+m_{k}+p\right)\right)\frac{s_{L+1}^{2}}{a_{L+1}}\left(p+m_{L}\right).
\end{align*}
As a direct consequence, at initialization, 
\begin{align*}
\left\Vert \left\Vert \boldsymbol{\delta}^{\left(l\right)}\left(\mathbf{x}\right)\right\Vert \right\Vert _{L^{p}} & \leq\left(c\nu_{0}\alpha_{\phi}\right)^{L+1-l}\left(\prod_{k=l+1}^{L}\sqrt{\frac{s_{k}^{2}}{a_{k}}\left(m_{k-1}+m_{k}+p\right)}\right)\sqrt{\frac{s_{L+1}^{2}}{a_{L+1}}\left(p+m_{L}\right)}.
\end{align*}
\end{proposition}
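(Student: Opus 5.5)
We prove the deterministic bound first, using the backward recursion \cref{eq:delta}, and then take moments at initialization.

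\textbf{Deterministic bound.} We induct downward from $l=L$. For the base case, $\boldsymbol{\delta}^{(L)}=s_{L+1}\mathrm{diag}(\phi'(\tilde{\boldsymbol{\alpha}}^{(L)}))\mathbf{v}$. Since $\abs{\phi'}\leq\alpha_\phi$, the diagonal matrix has spectral norm at most $\alpha_\phi$, so $\norm{\boldsymbol{\delta}^{(L)}}^2\leq\alpha_\phi^2 s_{L+1}^2\norm{\mathbf{v}}^2$. This is the claim with an empty product, which equals $1$ by our convention. For the inductive step, we use submultiplicativity of the spectral norm:
\[
\norm{\boldsymbol{\delta}^{(l)}}^2\leq\alpha_\phi^2 s_{l+1}^2\norm{\mathbf{W}^{(l+1)}}^2\norm{\boldsymbol{\delta}^{(l+1)}}^2 .
\]
Substituting the hypothesis for $l+1$ adds one factor $\alpha_\phi^2 s_{l+1}^2\norm{\mathbf{W}^{(l+1)}}^2$, which gives exactly the stated product with exponent $2(L+1-l)$.

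\textbf{Moment bound.} The main obstacle is taking the $L^p$ norm of a product of the random factors $\norm{\mathbf{W}^{(k)}}^2$, $k=l+1,\dots,L$, and $\norm{\mathbf{v}}^2$.

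\begin{itemize}
\item These factors are independent at initialization, because the layers are initialized independently.
\item Hence the $L^p$ norm factorizes exactly: $\norm{\prod_j X_j}_{L^p}=\prod_j\norm{X_j}_{L^p}$ for independent nonnegative $X_j$, since $\mathbb{E}\prod X_j^p=\prod\mathbb{E}X_j^p$. No H\"older step is needed, so there is no $p\mapsto kp$ inflation.
\item Each factor is bounded by \cref{prop:param_norm_bounds}:
\[
\norm{\norm{\mathbf{W}^{(k)}}^2}_{L^p}\leq c\frac{\nu_0^2}{a_k}(m_{k-1}+m_k+p), \qquad \norm{\norm{\mathbf{v}}^2}_{L^p}\leq c\frac{1}{a_{L+1}}(p+m_L).
\]
\end{itemize}

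Multiplying by the deterministic prefactors $\alpha_\phi^{2(L+1-l)}$, $s_k^2$ and $s_{L+1}^2$ gives a product containing $L-l$ factors of $c\nu_0^2$ and one factor of $c$. We absorb these into $(c\nu_0\alpha_\phi)^{2(L+1-l)}$ by enlarging $c$ so that $c\geq1$ and $c^2\nu_0^2\geq 1$. This is possible because $\nu_0=\Theta(1)$. Since $L=O(1)$, the resulting constant stays global.

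\textbf{Consequence for the first moment.} By \cref{claim:moment of power}, $\norm{\norm{\boldsymbol{\delta}^{(l)}}}_{L^p}=\norm{\norm{\boldsymbol{\delta}^{(l)}}^2}_{L^{p/2}}^{1/2}$. For $p\geq 2$ we apply the bound just proved with $p/2\leq p$ and take the square root. For $1\leq p<2$, we use monotonicity of $L^p$ norms, $\norm{\cdot}_{L^p}\leq\norm{\cdot}_{L^2}$, together with $2\leq 2p$, and adjust the constant. Either way we obtain the square-rooted form of the bound.
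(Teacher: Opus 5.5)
Your proposal is correct and follows the paper's proof. Both unroll the recursion \cref{eq:delta} using $\norm{\mathrm{diag}(\phi')}\leq\alpha_\phi$ and submultiplicativity, factor the $L^p$ norm of the product via independence of the layer blocks at initialization, insert \cref{prop:param_norm_bounds}, and absorb the leftover constant into $(c\nu_0\alpha_\phi)^{2(L+1-l)}$ using $\nu_0=\Theta(1)$.

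The only difference is in the last claim. There the paper writes $\norm{\norm{\boldsymbol{\delta}^{(l)}(\mathbf{x})}}_{L^p}\leq\norm{\norm{\boldsymbol{\delta}^{(l)}(\mathbf{x})}}_{L^{2p}}=\norm{\norm{\boldsymbol{\delta}^{(l)}(\mathbf{x})}^2}_{L^p}^{1/2}$. This handles every $p\geq1$ at once, so your case split, and any constant adjustment, is unnecessary.
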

\begin{proof}
By \cref{eq:delta}, properties of the
Euclidean norm and the spectral norm, and \cref{ass:activation_bounded_gradients}, we have that
\begin{align*}
\left\Vert \boldsymbol{\delta}^{\left(L\right)}\left(\mathbf{x}\right)\right\Vert ^{2} & \leq s_{L+1}^{2}\left\Vert \mathrm{diag}\left(\phi'\left(\tilde{\boldsymbol{\alpha}}^{\left(L\right)}\left(\mathbf{x}\right)\right)\right)\right\Vert ^{2}\left\Vert \mathbf{v}\right\Vert ^{2}\leq\alpha_{\phi}^{2}s_{L+1}^{2}\left\Vert \mathbf{v}\right\Vert ^{2}.
\end{align*}
In addition, for all $l\in\left[L-1\right]$, 
\begin{align*}
\left\Vert \boldsymbol{\delta}^{\left(l\right)}\left(\mathbf{x}\right)\right\Vert ^{2} & \leq s_{l+1}^{2}\left\Vert \mathrm{diag}\left(\phi'\left(\tilde{\boldsymbol{\alpha}}^{\left(l\right)}\left(\mathbf{x}\right)\right)\right)\right\Vert ^{2}\left\Vert \mathbf{W}^{\left(l+1\right)}\right\Vert ^{2}\left\Vert \boldsymbol{\delta}^{\left(l+1\right)}\left(\mathbf{x}\right)\right\Vert ^{2}\\
 & \leq\alpha_{\phi}^{2}s_{l+1}^{2}\left\Vert \mathbf{W}^{\left(l+1\right)}\right\Vert ^{2}\left\Vert \boldsymbol{\delta}^{\left(l+1\right)}\left(\mathbf{x}\right)\right\Vert ^{2}.
\end{align*}
Continuing recursively, we get
\begin{align*}
\left\Vert \boldsymbol{\delta}^{\left(l\right)}\left(\mathbf{x}\right)\right\Vert ^{2} & \leq\alpha_{\phi}^{2(L+1-l)}\left(\prod_{k=l+1}^{L}s_{k}^{2}\left\Vert \mathbf{W}^{\left(k\right)}\right\Vert ^{2}\right)s_{L+1}^{2}\left\Vert \mathbf{v}\right\Vert ^{2}.
\end{align*}
Together with \cref{prop:param_norm_bounds},
we get
\begin{align*}
\left\Vert \left\Vert \boldsymbol{\delta}^{\left(l\right)}\left(\mathbf{x}\right)\right\Vert ^{2}\right\Vert _{L^{p}} & \leq\alpha_{\phi}^{2(L+1-l)}\left(\prod_{k=l+1}^{L}c_{3}\nu_{0}^{2}\frac{s_{k}^{2}}{a_{k}}\left(m_{k-1}+m_{k}+p\right)\right)c_{4}\frac{s_{L+1}^{2}}{a_{L+1}}\left(p+m_{L}\right).
\end{align*}
Therefore, there exists a constant $c_{5}>0$ such that 
\begin{align*}
\left\Vert \left\Vert \boldsymbol{\delta}^{\left(l\right)}\left(\mathbf{x}\right)\right\Vert ^{2}\right\Vert _{L^{p}} & \leq\left(c_{5}\nu_{0}^{2}\alpha_{\phi}^{2}\right)^{L+1-l}\left(\prod_{k=l+1}^{L}\frac{s_{k}^{2}}{a_{k}}\left(m_{k-1}+m_{k}+p\right)\right)\frac{s_{L+1}^{2}}{a_{L+1}}\left(p+m_{L}\right).
\end{align*}
Therefore,
\begin{align*}
\left\Vert \left\Vert \boldsymbol{\delta}^{\left(l\right)}\left(\mathbf{x}\right)\right\Vert \right\Vert _{L^{p}} & \overset{(i)}{\leq}\left\Vert \left\Vert \boldsymbol{\delta}^{\left(l\right)}\left(\mathbf{x}\right)\right\Vert \right\Vert _{L^{2p}}\\
 & \overset{(ii)}{=}\sqrt{\left\Vert \left\Vert \boldsymbol{\delta}^{\left(l\right)}\left(\mathbf{x}\right)\right\Vert ^{2}\right\Vert _{L^{p}}}\\
 & \leq\left(\sqrt{c_{5}}\nu_{0}\alpha_{\phi}\right)^{L+1-l}\left(\prod_{k=l+1}^{L}\sqrt{\frac{s_{k}^{2}}{a_{k}}\left(m_{k-1}+m_{k}+p\right)}\right)\sqrt{\frac{s_{L+1}^{2}}{a_{L+1}}\left(p+m_{L}\right)},
\end{align*}
where $(i)$ holds by the monotonicity of $L^{p}$ norms and $(ii)$
holds since
\begin{align*}
\sqrt{\left\Vert \left\Vert \boldsymbol{\delta}^{\left(l\right)}\left(\mathbf{x}\right)\right\Vert ^{2}\right\Vert _{L^{p}}} & =\sqrt{\left(\mathbb{E}\left[\left\Vert \boldsymbol{\delta}^{\left(l\right)}\left(\mathbf{x}\right)\right\Vert ^{2p}\right]\right)^{1/p}}=\left(\mathbb{E}\left[\left\Vert \boldsymbol{\delta}^{\left(l\right)}\left(\mathbf{x}\right)\right\Vert ^{2p}\right]\right)^{1/2p}=\left\Vert \left\Vert \boldsymbol{\delta}^{\left(l\right)}\left(\mathbf{x}\right)\right\Vert \right\Vert _{L^{2p}}.
\end{align*}
\end{proof}
\begin{proposition}
\label{prop:activation_norm_bound} For every $l\in\left[L\right]$,
there exists a constant $c>0$ such that for all $p\geq1$,
\begin{align*}
\left\Vert \left\Vert \boldsymbol{\alpha}^{\left(l\right)}\left(\mathbf{x}\right)\right\Vert ^{2}\right\Vert _{L^{p}} & \leq\left(c\alpha_{\phi}^{2}\nu_{0}^{2}\right)^{l}\left\Vert \mathbf{x}\right\Vert ^{2}\prod_{k=1}^{l}\frac{s_{k}^{2}}{a_{k}}\left(p+m_{k}\right).
\end{align*}
\end{proposition}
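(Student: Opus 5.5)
We argue by induction on the layer index $l$, peeling off one weight matrix at a time via its spectral norm, and then use the independence of the layers at initialization to factor the $L^p$ norm of the resulting product. The first step is a deterministic one-layer inequality. Since $\phi(0)=0$ and $\abs{\phi'}\leq\alpha_\phi$ (\cref{ass:activation_at_origin,ass:activation_bounded_gradients}), we have $\abs{\phi(\cdot)}\leq\alpha_\phi\abs{\cdot}$ coordinatewise. Hence
\[
\norm{\boldsymbol{\alpha}^{(l)}(\mathbf{x})}^2\leq\alpha_\phi^2 s_l^2\norm{\mathbf{W}^{(l)}\boldsymbol{\alpha}^{(l-1)}(\mathbf{x})}^2\leq \alpha_\phi^2 s_l^2\norm{\mathbf{W}^{(l)}}^2\norm{\boldsymbol{\alpha}^{(l-1)}(\mathbf{x})}^2 .
\]
Iterating from $\boldsymbol{\alpha}^{(0)}(\mathbf{x})=\mathbf{x}$ gives the pathwise bound
\[
\norm{\boldsymbol{\alpha}^{(l)}(\mathbf{x})}^2\leq\alpha_\phi^{2l}\norm{\mathbf{x}}^2\prod_{k=1}^l s_k^2\norm{\mathbf{W}^{(k)}}^2 .
\]

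The second step takes $L^p$ norms. At initialization the matrices $\mathbf{W}^{(1)},\ldots,\mathbf{W}^{(l)}$ are independent, so the $p$-th moment of the product factors exactly into the product of the individual $p$-th moments. This gives
\[
\norm{\prod_{k}\norm{\mathbf{W}^{(k)}}^2}_{L^p}=\prod_k\norm{\norm{\mathbf{W}^{(k)}}^2}_{L^p},
\]
with no H\"older loss. Plugging in \cref{prop:param_norm_bounds} then yields a factor $c\,\nu_0^2 a_k^{-1}(m_{k-1}+m_k+p)$ for each layer $k$.

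The remaining issue is that the target bound contains $(p+m_k)$ rather than $(m_{k-1}+m_k+p)$. The crude spectral bound above loses a factor of $m_{k-1}$ at layer $k$; for the first layer this factor is $d$, which need not be small compared with $m_1$. This is the main obstacle, and we resolve it by avoiding the spectral norm altogether. Condition on $\mathbf{A}^{(k-1)}$, i.e., on the previous layers. Then $\mathbf{W}^{(k)}\boldsymbol{\alpha}^{(k-1)}(\mathbf{x})\overset{d}{=}\frac{\nu_0}{\sqrt{a_k}}\norm{\boldsymbol{\alpha}^{(k-1)}(\mathbf{x})}\,\mathbf{z}$ with $\mathbf{z}\sim\mathcal{N}(\mathbf{0}_{m_k},\mathbf{I}_{m_k})$ independent of the conditioning. \cref{prop:bounded moments of gaussian vector} together with \cref{claim:moment of power} then gives
\[
\norm{\norm{\boldsymbol{\alpha}^{(k)}(\mathbf{x})}^2 \,\Big|\, \mathbf{A}^{(k-1)}}_{L^p}\leq c\,\alpha_\phi^2\nu_0^2\frac{s_k^2}{a_k}(p+m_k)\norm{\boldsymbol{\alpha}^{(k-1)}(\mathbf{x})}^2 .
\]

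The final step removes the conditioning. By the tower property (\cref{claim:moments total expectation}), the unconditional $L^p$ norm of $\norm{\boldsymbol{\alpha}^{(k)}(\mathbf{x})}^2$ equals the $L^p$ norm of the conditional one, and the conditional bound above has a deterministic prefactor. Hence
\[
\norm{\norm{\boldsymbol{\alpha}^{(k)}(\mathbf{x})}^2}_{L^p}\leq c\,\alpha_\phi^2\nu_0^2\frac{s_k^2}{a_k}(p+m_k)\norm{\norm{\boldsymbol{\alpha}^{(k-1)}(\mathbf{x})}^2}_{L^p}.
\]
Inducting from $k=1$, where the base case is $\norm{\mathbf{x}}^2$ and is deterministic, up to $k=l$ produces exactly $(c\alpha_\phi^2\nu_0^2)^l\norm{\mathbf{x}}^2\prod_{k=1}^l \frac{s_k^2}{a_k}(p+m_k)$, which is the claimed bound.
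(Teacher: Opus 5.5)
Your final argument is correct and is essentially the paper's proof. Conditioning on the previous layers makes $s_k\mathbf{W}^{(k)}\boldsymbol{\alpha}^{(k-1)}(\mathbf{x})$ an isotropic Gaussian scaled by $\|\boldsymbol{\alpha}^{(k-1)}(\mathbf{x})\|$; you then bound $\|\|\mathbf{z}\|^2\|_{L^p}\lesssim p+m_k$, remove the conditioning with the tower property, and induct over layers. The initial spectral-norm detour is unnecessary (as you observe, it only yields $m_{k-1}+m_k+p$) and can simply be dropped.
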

\begin{proof}
For any $k\in\left[L\right]$, let $\mathcal{F}_{k}\triangleq\sigma\left(\mathbf{W}^{\left(1\right)},\ldots,\mathbf{W}^{\left(k\right)}\right)$
be the $\sigma$-algebra generated by $\left(\mathbf{W}^{\left(1\right)},\ldots,\mathbf{W}^{\left(k\right)}\right)$
and let $\mathcal{F}_{0}$ be the trivial $\sigma$-algebra. Let $\left\{ \mathbf{z}^{\left(k\right)}\in\mathbb{R}^{m_{k}}\right\} _{k=1}^{L}$
be random vectors, independent of one another and of $\boldsymbol{\theta}$,
such that $\mathbf{z}^{\left(k\right)}\sim\mathcal{N}\left(\mathbf{0}_{m_{k}},\mathbf{I}_{m_{k}}\right)$
for all $k\in\left[L\right]$. 

Let $l\in\left[L\right]$. It holds that
\begin{align*}
\tilde{\boldsymbol{\alpha}}^{\left(l\right)}\left(\mathbf{x}\right)\mid\mathcal{F}_{l-1} & \sim\mathcal{N}\left(\mathbf{0}_{m_{l}},\nu_{0}^{2}\frac{s_{l}^{2}}{a_{l}}\left\Vert \boldsymbol{\alpha}^{\left(l-1\right)}\left(\mathbf{x}\right)\right\Vert ^{2}\mathbf{I}_{m_{l}}\right).
\end{align*}
Therefore, 
\begin{align*}
\tilde{\boldsymbol{\alpha}}^{\left(l\right)}\left(\mathbf{x}\right)\mid\mathcal{F}_{l-1} & \overset{d}{=}\nu_{0}\frac{s_{l}}{\sqrt{a_{l}}}\left\Vert \boldsymbol{\alpha}^{\left(l-1\right)}\left(\mathbf{x}\right)\right\Vert \mathbf{z}^{\left(l\right)},
\end{align*}
where $\overset{d}{=}$ means equal in distribution. Therefore 
\begin{align*}
\left\Vert \tilde{\boldsymbol{\alpha}}^{\left(l\right)}\left(\mathbf{x}\right)\right\Vert ^{2}\mid\mathcal{F}_{l-1} & \overset{d}{=}\nu_{0}^{2}\frac{s_{l}^{2}}{a_{l}}\left\Vert \mathbf{z}^{\left(l\right)}\right\Vert ^{2}\left\Vert \boldsymbol{\alpha}^{\left(l-1\right)}\left(\mathbf{x}\right)\right\Vert ^{2}.
\end{align*}
Together with \cref{ass:activation_bounded_gradients,ass:activation_at_origin,claim:holder_Lp_norm},
\begin{align*}
\left\Vert \left\Vert \boldsymbol{\alpha}^{\left(l\right)}\left(\mathbf{x}\right)\right\Vert ^{2}\right\Vert _{L^{p}\mid\mathcal{F}_{l-1}} & \leq\left\Vert \alpha_{\phi}^{2}\left\Vert \tilde{\boldsymbol{\alpha}}^{\left(l\right)}\left(\mathbf{x}\right)\right\Vert ^{2}\right\Vert _{L^{p}\mid\mathcal{F}_{l-1}}\\
 & \leq\alpha_{\phi}^{2}\nu_{0}^{2}\frac{s_{l}^{2}}{a_{l}}\left\Vert \left\Vert \mathbf{z}^{\left(l\right)}\right\Vert ^{2}\right\Vert _{L^{p}}\left\Vert \left\Vert \boldsymbol{\alpha}^{\left(l-1\right)}\left(\mathbf{x}\right)\right\Vert ^{2}\right\Vert _{L^{p}\mid\mathcal{F}_{l-1}}\\
 & =\alpha_{\phi}^{2}\nu_{0}^{2}\frac{s_{l}^{2}}{a_{l}}\left\Vert \left\Vert \mathbf{z}^{\left(l\right)}\right\Vert ^{2}\right\Vert _{L^{p}}\left\Vert \boldsymbol{\alpha}^{\left(l-1\right)}\left(\mathbf{x}\right)\right\Vert ^{2}.
\end{align*}
By \cref{claim:moments total expectation}, 
\begin{align*}
\left\Vert \left\Vert \boldsymbol{\alpha}^{\left(l\right)}\left(\mathbf{x}\right)\right\Vert ^{2}\right\Vert _{L^{p}} & \leq\left\Vert \left\Vert \left\Vert \boldsymbol{\alpha}^{\left(l\right)}\left(\mathbf{x}\right)\right\Vert ^{2}\right\Vert _{L^{p}\mid\mathcal{F}_{l-1}}\right\Vert _{L^{p}}\\
 & =\left\Vert \alpha_{\phi}^{2}\nu_{0}^{2}\frac{s_{l}^{2}}{a_{l}}\left\Vert \left\Vert \mathbf{z}^{\left(l\right)}\right\Vert ^{2}\right\Vert _{L^{p}}\left\Vert \boldsymbol{\alpha}^{\left(l-1\right)}\left(\mathbf{x}\right)\right\Vert ^{2}\right\Vert _{L^{p}}\\
 & =\alpha_{\phi}^{2}\nu_{0}^{2}\frac{s_{l}^{2}}{a_{l}}\left\Vert \left\Vert \mathbf{z}^{\left(l\right)}\right\Vert ^{2}\right\Vert _{L^{p}}\left\Vert \left\Vert \boldsymbol{\alpha}^{\left(l-1\right)}\left(\mathbf{x}\right)\right\Vert ^{2}\right\Vert _{L^{p}}.
\end{align*}
Continuing recursively, recalling that $\boldsymbol{\alpha}^{\left(0\right)}\left(\mathbf{x}\right)=\mathbf{x}$
and using \cref{prop:bounded moments of gaussian vector,claim:moments total expectation}, we get that there exists
a constant $c_{4}>0$ such that 
\begin{align*}
\left\Vert \left\Vert \boldsymbol{\alpha}^{\left(l\right)}\left(\mathbf{x}\right)\right\Vert ^{2}\right\Vert _{L^{p}} & \leq\left\Vert \mathbf{x}\right\Vert ^{2}\prod_{k=1}^{l}\alpha_{\phi}^{2}\nu_{0}^{2}\frac{s_{k}^{2}}{a_{k}}\left\Vert \left\Vert \mathbf{z}^{\left(k\right)}\right\Vert ^{2}\right\Vert _{L^{p}}\\
 & \leq\left\Vert \mathbf{x}\right\Vert ^{2}\prod_{k=1}^{l}\alpha_{\phi}^{2}\nu_{0}^{2}\frac{s_{k}^{2}}{a_{k}}\left(c_{4}\sqrt{2p}+\sqrt{m_{k}}\right)^{2}\\
 & \leq\left\Vert \mathbf{x}\right\Vert ^{2}\prod_{k=1}^{l}4c_{4}\alpha_{\phi}^{2}\nu_{0}^{2}\frac{s_{k}^{2}}{a_{k}}\left(p+m_{k}\right)\\
 & =\left(4c_{4}\alpha_{\phi}^{2}\nu_{0}^{2}\right)^{l}\left\Vert \mathbf{x}\right\Vert ^{2}\prod_{k=1}^{l}\frac{s_{k}^{2}}{a_{k}}\left(p+m_{k}\right).
\end{align*}
\end{proof}
\begin{proposition}
\label{prop:prop6}There exists a constant $c$ such that for any
$l\in\left[L\right]$ and any $p\geq1$, at initialization, 
\begin{align*}
 & \left\Vert \left\Vert \boldsymbol{\delta}^{\left(l\right)}\left(\mathbf{x}\right)\right\Vert ^{2}\left\Vert \boldsymbol{\alpha}^{\left(l-1\right)}\left(\mathbf{x}\right)\right\Vert ^{2}\right\Vert _{L^{p}}\\
 & \leq\left(c\alpha_{\phi}^{2}\nu_{0}^{2}\right)^{2L}\left\Vert \mathbf{x}\right\Vert ^{2}\left(\prod_{k=1}^{l-1}\frac{s_{k}^{2}}{a_{k}}\left(m_{k}+p\right)\right)\left(\prod_{k=l+1}^{L}\frac{s_{k}^{2}}{a_{k}}\left(m_{k-1}+m_{k}+p\right)\right)\frac{s_{L+1}^{2}}{a_{L+1}}\left(m_{L}+p\right).
\end{align*}
\end{proposition}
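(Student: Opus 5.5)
The plan is to bound the product by a Hölder/Cauchy--Schwarz split and then apply the two moment bounds already established at initialization: \cref{prop:bound_delta} for $\boldsymbol{\delta}^{(l)}$ and \cref{prop:activation_norm_bound} for $\boldsymbol{\alpha}^{(l-1)}$. Concretely, by \cref{claim:holder_Lp_norm} with $r=s=2$,
\[
\left\Vert \left\Vert \boldsymbol{\delta}^{(l)}(\mathbf{x})\right\Vert^2\left\Vert \boldsymbol{\alpha}^{(l-1)}(\mathbf{x})\right\Vert^2\right\Vert_{L^p}\leq\left\Vert \left\Vert \boldsymbol{\delta}^{(l)}(\mathbf{x})\right\Vert^2\right\Vert_{L^{2p}}\left\Vert \left\Vert \boldsymbol{\alpha}^{(l-1)}(\mathbf{x})\right\Vert^2\right\Vert_{L^{2p}}.
\]
This avoids handling the dependence between the two factors, since $\boldsymbol{\delta}^{(l)}$ depends on layers $l,\dots,L+1$ through $\phi'(\tilde{\boldsymbol{\alpha}}^{(l)})$, which involves $\boldsymbol{\alpha}^{(l-1)}$.

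Next, insert the bounds with $p$ replaced by $2p$. From \cref{prop:bound_delta}, the first factor is at most $(c\nu_0\alpha_\phi)^{2(L+1-l)}\prod_{k=l+1}^{L}\frac{s_k^2}{a_k}(m_{k-1}+m_k+2p)\cdot\frac{s_{L+1}^2}{a_{L+1}}(2p+m_L)$. From \cref{prop:activation_norm_bound} applied at layer $l-1$, the second factor is at most $(c\alpha_\phi^2\nu_0^2)^{l-1}\Vert\mathbf{x}\Vert^2\prod_{k=1}^{l-1}\frac{s_k^2}{a_k}(2p+m_k)$. For $l=1$ this product is empty and the factor is simply $\Vert\mathbf{x}\Vert^2$, consistent with the paper's convention for empty products.

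Finally, absorb constants. Each $2p$ is bounded by twice the corresponding $p$ term, costing a factor $2$ per layer, i.e. at most $2^{L+1}$ overall. The combined constant exponents sum to $(L+1-l)+(l-1)=L$ powers of $c\alpha_\phi^2\nu_0^2$. To reach the stated $(c\alpha_\phi^2\nu_0^2)^{2L}$, I would enlarge $c$ so that $c^{L}2^{L+1}\le c'^{2L}$. This step, and matching the exponent exactly, requires the implicit assumption $c\alpha_\phi^2\nu_0^2\ge1$ (otherwise one states the bound with exponent $L$). Since the claimed constant is allowed to be chosen, I would take $c\ge\max(1,1/(\alpha_\phi^2\nu_0^2))$ so that raising the power is monotone.

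The only delicate point is this bookkeeping of constants and the exponent $2L$. The probabilistic content is entirely inherited from the two earlier propositions. No independence argument is needed because of the Hölder split, which is the main choice that makes the proof short.
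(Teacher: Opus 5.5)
Your argument is correct, but it takes a different route from the paper at the key step.

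\textbf{The paper's route.} The paper does not split by H\"older. It first replaces $\|\boldsymbol{\delta}^{(l)}(\mathbf{x})\|^2$ by the pointwise majorant from \cref{prop:bound_delta},
$\alpha_\phi^{2(L+1-l)}\bigl(\prod_{k=l+1}^{L}s_k^2\|\mathbf{W}^{(k)}\|^2\bigr)s_{L+1}^2\|\mathbf{v}\|^2$.
This majorant depends only on $\mathbf{W}^{(l+1)},\dots,\mathbf{W}^{(L)},\mathbf{v}$, while $\boldsymbol{\alpha}^{(l-1)}(\mathbf{x})$ depends only on $\mathbf{W}^{(1)},\dots,\mathbf{W}^{(l-1)}$. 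The $L^p$ norm of this product of independent nonnegative factors therefore factorizes exactly. \Cref{prop:param_norm_bounds,prop:activation_norm_bound} are then applied at the same order $p$.

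So the dependence you flag, through $\phi'(\tilde{\boldsymbol{\alpha}}^{(l)})$, is genuine for $\boldsymbol{\delta}^{(l)}$ itself. It disappears once one passes to the majorant.

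\textbf{Your route.} Your Cauchy--Schwarz split treats the two earlier moment bounds as black boxes and never invokes independence. The price is doubling the moment order to $2p$. This costs only a factor $2^{L}$, which is absorbed into $c^{2L}$.

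\textbf{Comparison.} The paper's route avoids this (harmless) loss but relies on the layerwise independence structure. Yours is more modular and would survive in settings where only the marginal moment bounds are available.

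In both versions the raw constant carries only $L$ powers of $\alpha_\phi^2\nu_0^2$. The stated $(c\alpha_\phi^2\nu_0^2)^{2L}$ is reached by the same enlargement of $c$ that you describe; the paper performs this step implicitly.
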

\begin{proof}
By \cref{prop:bound_delta}, we have 
\begin{align*}
\left\Vert \boldsymbol{\delta}^{\left(l\right)}\left(\mathbf{x}\right)\right\Vert ^{2}\left\Vert \boldsymbol{\alpha}^{\left(l-1\right)}\left(\mathbf{x}\right)\right\Vert ^{2} & \leq\alpha_{\phi}^{2(L+1-l)}\left(\prod_{k=l+1}^{L}s_{k}^{2}\left\Vert \mathbf{W}^{\left(k\right)}\right\Vert ^{2}\right)s_{L+1}^{2}\left\Vert \mathbf{v}\right\Vert ^{2}\left\Vert \boldsymbol{\alpha}^{\left(l-1\right)}\left(\mathbf{x}\right)\right\Vert ^{2}.
\end{align*}
Note that $\left\Vert \boldsymbol{\alpha}^{\left(l-1\right)}\left(\mathbf{x}\right)\right\Vert ^{2}$
is independent of the parameters of higher layers. Therefore, together
with properties of spectral and Euclidean norms, it holds that
\begin{align*}
 & \left\Vert \left\Vert \boldsymbol{\delta}^{\left(l\right)}\left(\mathbf{x}\right)\right\Vert ^{2}\left\Vert \boldsymbol{\alpha}^{\left(l-1\right)}\left(\mathbf{x}\right)\right\Vert ^{2}\right\Vert _{L^{p}}\\
 & \leq\alpha_{\phi}^{2(L+1-l)}\left(\prod_{k=l+1}^{L}s_{k}^{2}\left\Vert \left\Vert \mathbf{W}^{\left(k\right)}\right\Vert ^{2}\right\Vert _{L^{p}}\right)s_{L+1}^{2}\left\Vert \left\Vert \mathbf{v}\right\Vert ^{2}\right\Vert _{L^{p}}\left\Vert \left\Vert \boldsymbol{\alpha}^{\left(l-1\right)}\left(\mathbf{x}\right)\right\Vert ^{2}\right\Vert _{L^{p}}.
\end{align*}

Using \cref{prop:param_norm_bounds,prop:activation_norm_bound},
we get that there exists a constant $c>0$ such that
\begin{align*}
 & \left\Vert \left\Vert \boldsymbol{\delta}^{\left(l\right)}\left(\mathbf{x}\right)\right\Vert ^{2}\left\Vert \boldsymbol{\alpha}^{\left(l-1\right)}\left(\mathbf{x}\right)\right\Vert ^{2}\right\Vert _{L^{p}}\\
 & \leq\alpha_{\phi}^{2(L+1-l)}\left(\prod_{k=l+1}^{L}c\nu_{0}^{2}\frac{s_{k}^{2}}{a_{k}}\left(m_{k-1}+m_{k}+p\right)\right)\\
 & \quad\cdot c\frac{s_{L+1}^{2}}{a_{L+1}}\left(p+m_{L}\right)\left\Vert \mathbf{x}\right\Vert ^{2}\left(c\alpha_{\phi}^{2}\nu_{0}^{2}\right)^{l-1}\prod_{k=1}^{l-1}\frac{s_{k}^{2}}{a_{k}}\left(p+m_{k}\right)\\
 & \leq\left(c\alpha_{\phi}^{2}\nu_{0}^{2}\right)^{2L}\left\Vert \mathbf{x}\right\Vert ^{2}\frac{s_{L+1}^{2}}{a_{L+1}}\left(p+m_{L}\right)\left(\prod_{k=l+1}^{L}\frac{s_{k}^{2}}{a_{k}}\left(m_{k-1}+m_{k}+p\right)\right)\prod_{k=1}^{l-1}\frac{s_{k}^{2}}{a_{k}}\left(p+m_{k}\right)\\
 & \leq\left(c\alpha_{\phi}^{2}\nu_{0}^{2}\right)^{2L}\left\Vert \mathbf{x}\right\Vert ^{2}\left(\prod_{k=1}^{l-1}\frac{s_{k}^{2}}{a_{k}}\left(m_{k}+p\right)\right)\left(\prod_{k=l+1}^{L}\frac{s_{k}^{2}}{a_{k}}\left(m_{k-1}+m_{k}+p\right)\right)\frac{s_{L+1}^{2}}{a_{L+1}}\left(m_{L}+p\right).
\end{align*}
\end{proof}

\subsection{Introduction and goal}

Recalling \cref{eq:the_four_terms} from \cref{sec:convergence_proof_outline}, we have that 
\begin{equation}
\begin{aligned}\frac{\mathrm{d}}{\mathrm{d}t}\mathbb{E}\left[\mathcal{L}\left(\boldsymbol{\theta}_{t}\right)\right] & =\frac{1}{n}\biggl(-n^{-1}\mathbb{E}\left[\mathbf{r}\left(\boldsymbol{\theta}_{t}\right)^{\top}\mathbf{K}_{\mathrm{NTK}}\left(\boldsymbol{\theta}_{t}\right)\mathbf{r}\left(\boldsymbol{\theta}_{t}\right)\right]\\
 & \qquad\quad+\beta^{-1}\mathbb{E}\left[\mathrm{Tr}\left(\nabla f\left(\boldsymbol{\theta}_{t}\right)^{\top}\nabla f\left(\boldsymbol{\theta}_{t}\right)\right)\right]\\
 & \qquad\quad+\beta^{-1}\mathbb{E}\left[\sum_{i=1}^{n}r_{i}\left(\boldsymbol{\theta}_{t}\right)\Delta f_{i}\left(\boldsymbol{\theta}_{t}\right)\right]\\
 & \qquad\quad-\beta^{-1}\mathbb{E}\left[\mathbf{r}\left(\boldsymbol{\theta}_{t}\right)^{\top}\nabla f\left(\boldsymbol{\theta}_{t}\right)\boldsymbol{\Lambda}\boldsymbol{\theta}_{t}\right]\biggr)\,.
\end{aligned}
\label{eq:the_four_terms_app_deep}
\end{equation}

We wish to bound every term in the right-hand side of this equation
using \cref{cor:second_law_for_langevin}.
In order to use \cref{cor:second_law_for_langevin},
we must bound $\left(\mathbb{E}_{p_{0}}\left[e^{2\left(\alpha-1\right)\beta\mathcal{L}\left(\boldsymbol{\theta}_{0}\right)}\right]\right)^{\frac{1}{2\alpha}}$.
Under the bounded activation assumption (\cref{ass:activation_bounded}),
we show in \cref{prop:mgf_of_init_loss}
that the subgaussianity of the network outputs implies the following
bound on the exponential moment: 
\[
\mathbb{E}_{p_{0}}\left[e^{2\left(\alpha-1\right)\beta\mathcal{L}\left(\boldsymbol{\theta}_{0}\right)}\right]\leq2e^{2\left(\alpha-1\right)\beta R_{y}^{2}}\,,
\]
provided that $2\left(\alpha-1\right)\beta\leq L_{\psi}^{-1}$, where
$L_{\psi}\triangleq cM_{\phi}^{2}\frac{s_{L+1}^{2}m_{L}}{a_{L+1}}=cM_{\phi}^{2}\kappa^{2}$.
I.e., the moment is only guaranteed to exist for $\alpha\leq1+\frac{1}{2cM_{\phi}^{2}\kappa^{2}\beta}$.
However, we also require that $2e^{2\left(\alpha-1\right)\beta R_{y}^{2}}=O\left(1\right)$,
thus limiting ourselves only to $\alpha\leq1+\frac{1}{2cM_{\phi}^{2}\beta}$.
Taken together, by using $\alpha=1+\frac{1}{2cM_{\phi}^{2}\max\left(\kappa^{2},1\right)\beta}$,
we ensure $\left(\mathbb{E}_{p_{0}}\left[e^{2\left(\alpha-1\right)\beta\mathcal{L}\left(\boldsymbol{\theta}_{0}\right)}\right]\right)^{\frac{1}{2\alpha}}$
exists and is $O\left(1\right)$. Denoting $\bar{\kappa}\triangleq\max\left(\kappa,1\right)$,
we get that
\[
\alpha=1+\frac{1}{2cM_{\phi}^{2}\bar{\kappa}^{2}\beta}\,,
\]

and
\begin{align*}
\left(\mathbb{E}_{p_{0}}\left[e^{2\left(\alpha-1\right)\beta\mathcal{L}\left(\boldsymbol{\theta}_{0}\right)}\right]\right)^{\frac{1}{2\alpha}} & \leq\left(2e^{2\left(\alpha-1\right)\beta R_{y}^{2}}\right)^{\frac{1}{2\alpha}}\leq\sqrt{2}e^{\frac{\beta R_{y}^{2}}{2cM_{\phi}^{2}\bar{\kappa}^{2}\beta}}=\sqrt{2}e^{\frac{R_{y}^{2}}{2cM_{\phi}^{2}\bar{\kappa}^{2}}}=O\left(1\right)\,,
\end{align*}

since $\bar{\kappa}\geq1$. We denote from here on $\eta\triangleq\sqrt{2}e^{\frac{R_{y}^{2}}{2cM_{\phi}^{2}\bar{\kappa}^{2}}}=O\left(1\right)$.

Given this choice of $\alpha$, we have 
\begin{align*}
\frac{\alpha}{\alpha-1}=1+2cM_{\phi}^{2}\bar{\kappa}^{2}\beta=c_{\beta}\bar{\kappa}^{2}\beta\,,
\end{align*}
where we defined $c_{\beta}\triangleq2cM_{\phi}^{2}+\frac{1}{\bar{\kappa}^{2}\beta}$.
This presentation is more convenient when $\bar{\kappa}^{2}\beta\gg1$,
which is a demand we later show is indeed necessary, and then $c_{\beta}=O\left(1\right)$.
Under this, \cref{cor:second_law_for_langevin}
becomes for any \emph{non-negative} measurable function $\varphi:\mathbb{R}^{N_{p}}\to\mathbb{R}$,
\begin{equation}
\mathbb{E}_{p_{t}}\left[\varphi\left(\boldsymbol{\theta}_{t}\right)\right]\leq\eta\left\Vert \varphi\left(\boldsymbol{\theta}_{0}\right)\right\Vert _{L^{c_{\beta}\bar{\kappa}^{2}\beta}}.\label{eq:second_law_with_eta_deep}
\end{equation}

We proceed to bound each of the four terms in \cref{eq:the_four_terms} using the change-of-measure
inequality \cref{eq:second_law_with_eta_deep}.

\subsection{First term}

Observe that
\begin{align*}
-n^{-1}\mathbb{E}\left[\mathbf{r}\left(\boldsymbol{\theta}_{t}\right)^{\top}\mathbf{K}_{\mathrm{NTK}}\left(\boldsymbol{\theta}_{t}\right)\mathbf{r}\left(\boldsymbol{\theta}_{t}\right)\right] & \leq-n^{-1}\mathbb{E}\left[\left\Vert \mathbf{r}\left(\boldsymbol{\theta}_{t}\right)\right\Vert ^{2}\lambda_{\min}\left(\mathbf{K}_{\mathrm{NTK}}\left(\boldsymbol{\theta}_{t}\right)\right)\right]\\
 & =-n^{-1}\mathbb{E}\left[2n\mathcal{L}\left(\boldsymbol{\theta}_{t}\right)\lambda_{\min}\left(\mathbf{K}_{\mathrm{NTK}}\left(\boldsymbol{\theta}_{t}\right)\right)\right]\\
 & =-2\mathbb{E}\left[\mathcal{L}\left(\boldsymbol{\theta}_{t}\right)\lambda_{\min}\left(\mathbf{K}_{\mathrm{NTK}}\left(\boldsymbol{\theta}_{t}\right)\right)\right].
\end{align*}
From \cref{thm:lower_bound_ntk_deep} we have
that if $m=\Omega\left(\frac{n}{\lambda_{1}}\ln\frac{n}{\lambda_{1}}\ln n\right)$
then $\lambda_{\min}\left(\mathbf{K}_{\mathrm{NTK}}\left(\boldsymbol{\theta}_{0}\right)\right)\geq c_{0}\lambda_{1}s_{L+1}^{2}m$
with probability at least $1-\exp\left(-\tilde{c_{1}}m\left(\frac{n}{\lambda_{1}}\ln\frac{n}{\lambda_{1}}\right)^{-1}\right)$.
For the remainder of this section, denote $\lambda_{\star}=c_{0}\lambda_{1}s_{L+1}^{2}m$
and $\delta=\exp\left(-\tilde{c_{1}}m\left(\frac{n}{\lambda_{1}}\ln\frac{n}{\lambda_{1}}\right)^{-1}\right)$.

Observe that 
\begin{align*}
 & -n^{-1}\mathbb{E}\left[\mathbf{r}\left(\boldsymbol{\theta}_{t}\right)^{\top}\mathbf{K}_{\mathrm{NTK}}\left(\boldsymbol{\theta}_{t}\right)\mathbf{r}\left(\boldsymbol{\theta}_{t}\right)\right]\\
 & \leq-2\mathbb{E}\left[\mathcal{L}\left(\boldsymbol{\theta}_{t}\right)\lambda_{\min}\left(\mathbf{K}_{\mathrm{NTK}}\left(\boldsymbol{\theta}_{t}\right)\right)\right]\\
 & =-2\lambda_{\star}\mathbb{E}\left[\mathcal{L}\left(\boldsymbol{\theta}_{t}\right)\right]+2\mathbb{E}\left[\mathcal{L}\left(\boldsymbol{\theta}_{t}\right)\left(\lambda_{\star}-\lambda_{\min}\left(\mathbf{K}_{\mathrm{NTK}}\left(\boldsymbol{\theta}_{t}\right)\right)\right)\right]\\
 & \leq-2\lambda_{\star}\mathbb{E}\left[\mathcal{L}\left(\boldsymbol{\theta}_{t}\right)\right]+2\mathbb{E}\left[\mathcal{L}\left(\boldsymbol{\theta}_{t}\right)\max\left(0,\lambda_{\star}-\lambda_{\min}\left(\mathbf{K}_{\mathrm{NTK}}\left(\boldsymbol{\theta}_{t}\right)\right)\right)\right].
\end{align*}
We upper bound $\mathbb{E}\left[\mathcal{L}\left(\boldsymbol{\theta}_{t}\right)\max\left(0,\left(\lambda_{\star}-\lambda_{\min}\left(\mathbf{K}_{\mathrm{NTK}}\left(\boldsymbol{\theta}_{t}\right)\right)\right)\right)\right]$
using our second law change of measure theorem (\cref{eq:second_law_with_eta_deep}) and \cref{claim:holder_Lp_norm}:
\begin{align*}
 & \mathbb{E}\left[\mathcal{L}\left(\boldsymbol{\theta}_{t}\right)\max\left(0,\left(\lambda_{\star}-\lambda_{\min}\left(\mathbf{K}_{\mathrm{NTK}}\left(\boldsymbol{\theta}_{t}\right)\right)\right)\right)\right]\\
 & \leq\eta\left\Vert \mathcal{L}\left(\boldsymbol{\theta}_{0}\right)\max\left(0,\left(\lambda_{\star}-\lambda_{\min}\left(\mathbf{K}_{\mathrm{NTK}}\left(\boldsymbol{\theta}_{0}\right)\right)\right)\right)\right\Vert _{L^{c_{\beta}\bar{\kappa}^{2}\beta}}\\
 & \leq\eta\left\Vert \mathcal{L}\left(\boldsymbol{\theta}_{0}\right)\right\Vert _{L^{2c_{\beta}\bar{\kappa}^{2}\beta}}\left\Vert \max\left(0,\lambda_{\star}-\lambda_{\min}\left(\mathbf{K}_{\mathrm{NTK}}\left(\boldsymbol{\theta}_{0}\right)\right)\right)\right\Vert _{L^{2c_{\beta}\bar{\kappa}^{2}\beta}}.
\end{align*}
From \cref{prop:mgf_of_init_loss},
\begin{align*}
\left\Vert \mathcal{L}\left(\boldsymbol{\theta}_{0}\right)\right\Vert _{L^{2c_{\beta}\bar{\kappa}^{2}\beta}} & \leq\left\Vert R_{y}^{2}+\frac{1}{n}\sum_{i=1}^{n}f_{i}^{2}\left(\boldsymbol{\theta}_{0}\right)\right\Vert _{L^{2c_{\beta}\bar{\kappa}^{2}\beta}}\\
 & \overset{(i)}{\leq}\left\Vert R_{y}^{2}\right\Vert _{L^{2c_{\beta}\bar{\kappa}^{2}\beta}}+\left\Vert \frac{1}{n}\sum_{i=1}^{n}f_{i}^{2}\left(\boldsymbol{\theta}_{0}\right)\right\Vert _{L^{2c_{\beta}\bar{\kappa}^{2}\beta}}\\
 & \overset{(ii)}{\leq}R_{y}^{2}+C\cdot2c_{\beta}\bar{\kappa}^{2}\beta\left\Vert \frac{1}{n}\sum_{i=1}^{n}f_{i}^{2}\left(\boldsymbol{\theta}_{0}\right)\right\Vert _{\psi_{1}}\\
 & \leq R_{y}^{2}+C\cdot2c_{\beta}\bar{\kappa}^{4}\beta,
\end{align*}
where in $(i)$ we use Minkowski's inequality and in $(ii)$ we use
\citet[Proposition 2.8.1]{vershynin2018high}. For the
other term, by splitting according to the high-probability NTK event,
we get 
\begin{align*}
 & \left\Vert \max\left(0,\lambda_{\star}-\lambda_{\min}\left(\mathbf{K}_{\mathrm{NTK}}\left(\boldsymbol{\theta}_{0}\right)\right)\right)\right\Vert _{L^{2c_{\beta}\bar{\kappa}^{2}\beta}}\\
 & =\mathbb{E}\left[\left(\max\left(0,\lambda_{\star}-\lambda_{\min}\left(\mathbf{K}_{\mathrm{NTK}}\left(\boldsymbol{\theta}_{0}\right)\right)\right)\right)^{2c_{\beta}\bar{\kappa}^{2}\beta}\right]^{\nicefrac{1}{2c_{\beta}\bar{\kappa}^{2}\beta}}\\
 & \leq\left(0\cdot\mathbb{P}\left(\lambda_{\star}\leq\lambda_{\min}\left(\mathbf{K}_{\mathrm{NTK}}\left(\boldsymbol{\theta}_{0}\right)\right)\right)+\lambda_{\star}^{2c_{\beta}\bar{\kappa}^{2}\beta}\mathbb{P}\left(\lambda_{\star}>\lambda_{\min}\left(\mathbf{K}_{\mathrm{NTK}}\left(\boldsymbol{\theta}_{0}\right)\right)\right)\right)^{\nicefrac{1}{2c_{\beta}\bar{\kappa}^{2}\beta}}\\
 & \leq\lambda_{\star}\delta^{\nicefrac{1}{2c_{\beta}\bar{\kappa}^{2}\beta}}=c_{0}\lambda_{1}s_{L+1}^{2}m\exp\left(-\tilde{c_{1}}m\left(2c_{\beta}\bar{\kappa}^{2}\beta\frac{n}{\lambda_{1}}\ln\frac{n}{\lambda_{1}}\right)^{-1}\right),
\end{align*}
where in the first transition we used the fact that the empirical
NTK matrix is PSD, hence $\lambda_{\min}\left(\mathbf{K}_{\mathrm{NTK}}\left(\boldsymbol{\theta}_{0}\right)\right)\geq0$.
To conclude:
\begin{align*}
 & -n^{-1}\mathbb{E}\left[\mathbf{r}\left(\boldsymbol{\theta}_{t}\right)^{\top}\mathbf{K}_{\mathrm{NTK}}\left(\boldsymbol{\theta}_{t}\right)\mathbf{r}\left(\boldsymbol{\theta}_{t}\right)\right]\\
 & \leq-2c_{0}\lambda_{1}s_{L+1}^{2}m\mathbb{E}\left[\mathcal{L}\left(\boldsymbol{\theta}_{t}\right)\right]\\
 & \ +2\eta c_{0}\lambda_{1}s_{L+1}^{2}m\left(R_{y}^{2}+C\cdot2c_{\beta}\bar{\kappa}^{4}\beta\right)\exp\left(-\tilde{c_{1}}m\left(2c_{\beta}\bar{\kappa}^{2}\beta\frac{n}{\lambda_{1}}\ln\frac{n}{\lambda_{1}}\right)^{-1}\right)\\
 & \lesssim-\lambda_{1}s_{L+1}^{2}m\mathbb{E}\left[\mathcal{L}\left(\boldsymbol{\theta}_{t}\right)\right]+\eta\lambda_{1}s_{L+1}^{2}m\left(1+c_{\beta}\bar{\kappa}^{4}\beta\right)\exp\left(-\tilde{c_{1}}m\left(2c_{\beta}\bar{\kappa}^{2}\beta\frac{n}{\lambda_{1}}\ln\frac{n}{\lambda_{1}}\right)^{-1}\right).
\end{align*}

\subsection{Second term }

Observe that 
\begin{align*}
\beta^{-1}\mathbb{E}\left[\mathrm{Tr}\left(\nabla f\left(\boldsymbol{\theta}_{t}\right)^{\top}\nabla f\left(\boldsymbol{\theta}_{t}\right)\right)\right] & =\beta^{-1}\mathbb{E}\left[\sum_{j=1}^{N_{p}}\sum_{i=1}^{n}\left(\nabla f_{i}\left(\boldsymbol{\theta}_{t}\right)\right)_{j}^{2}\right]=\beta^{-1}\sum_{i=1}^{n}\mathbb{E}\left[\left\Vert \nabla f_{i}\left(\boldsymbol{\theta}_{t}\right)\right\Vert ^{2}\right].
\end{align*}
Let $i\in\left[n\right]$. We now analyze $\nabla f_{i}\left(\boldsymbol{\theta}\right)$
at initialization. We have
\begin{align*}
\left\Vert \nabla f_{i}\left(\boldsymbol{\theta}\right)\right\Vert ^{2} & =\left\Vert \frac{\partial f_{i}\left(\boldsymbol{\theta}\right)}{\partial\mathbf{v}}\right\Vert ^{2}+\sum_{l=1}^{L}\left\Vert \frac{\partial f_{i}\left(\boldsymbol{\theta}\right)}{\partial\mathbf{W}^{\left(l\right)}}\right\Vert _{\mathrm{F}}^{2}\\
 & \overset{(i)}{\leq}s_{L+1}^{2}\left\Vert \boldsymbol{\alpha}^{\left(L\right)}\left(\mathbf{x}_{i}\right)\right\Vert ^{2}+\sum_{l=1}^{L}s_{l}^{2}\left\Vert \boldsymbol{\delta}^{\left(l\right)}\left(\mathbf{x}_{i}\right)\left(\boldsymbol{\alpha}^{\left(l-1\right)}\left(\mathbf{x}_{i}\right)\right)^{\top}\right\Vert _{\mathrm{F}}^{2}\\
 & \overset{(ii)}{=}s_{L+1}^{2}\left\Vert \boldsymbol{\alpha}^{\left(L\right)}\left(\mathbf{x}_{i}\right)\right\Vert ^{2}+\sum_{l=1}^{L}s_{l}^{2}\left\Vert \boldsymbol{\delta}^{\left(l\right)}\left(\mathbf{x}_{i}\right)\right\Vert ^{2}\left\Vert \boldsymbol{\alpha}^{\left(l-1\right)}\left(\mathbf{x}_{i}\right)\right\Vert ^{2},
\end{align*}
where in $(i)$ we use \cref{eq:nabla_f_Wl},
and in $(ii)$ we use the identity $\left\Vert \mathbf{u}\mathbf{v}^{\top}\right\Vert _{\mathrm{F}}=\left\Vert \mathbf{u}\right\Vert \left\Vert \mathbf{v}\right\Vert $
which holds for any pair of column vectors.

By \cref{prop:activation_norm_bound},
there exists a constant $c_{4}>0$ such that for any $p\geq1$, at
initialization,
\begin{align*}
\left\Vert \left\Vert \boldsymbol{\alpha}^{\left(L\right)}\left(\mathbf{x}_{i}\right)\right\Vert ^{2}\right\Vert _{L^{p}} & \leq\left(c_{4}\alpha_{\phi}^{2}\nu_{0}^{2}\right)^{L}\left\Vert \mathbf{x}_{i}\right\Vert ^{2}\prod_{k=1}^{L}\frac{s_{k}^{2}}{a_{k}}\left(m_{k}+p\right).
\end{align*}

By \cref{prop:prop6}, for every $l\in\left[L\right]$
there exists a constant $c_{5}>0$ such that, at initialization,
\begin{align*}
 & \left\Vert \left\Vert \boldsymbol{\delta}^{\left(l\right)}\left(\mathbf{x}_{i}\right)\right\Vert ^{2}\left\Vert \boldsymbol{\alpha}^{\left(l-1\right)}\left(\mathbf{x}_{i}\right)\right\Vert ^{2}\right\Vert _{L^{p}}\\
 & \leq\left(c_{5}\alpha_{\phi}^{2}\nu_{0}^{2}\right)^{2L}\left\Vert \mathbf{x}_{i}\right\Vert ^{2}\left(\prod_{k=1}^{l-1}\frac{s_{k}^{2}}{a_{k}}\left(m_{k}+p\right)\right)\left(\prod_{k=l+1}^{L}\frac{s_{k}^{2}}{a_{k}}\left(m_{k-1}+m_{k}+p\right)\right)\frac{s_{L+1}^{2}}{a_{L+1}}\left(m_{L}+p\right).
\end{align*}
Thus, using Minkowski's inequality we get that at initialization,
\begin{align*}
 & \left\Vert \left\Vert \nabla f_{i}\left(\boldsymbol{\theta}\right)\right\Vert ^{2}\right\Vert _{L^{p}}\\
 & \leq s_{L+1}^{2}\left\Vert \left\Vert \boldsymbol{\alpha}^{\left(L\right)}\left(\mathbf{x}_{i}\right)\right\Vert ^{2}\right\Vert _{L^{p}}+\sum_{l=1}^{L}s_{l}^{2}\left\Vert \left\Vert \boldsymbol{\delta}^{\left(l\right)}\left(\mathbf{x}_{i}\right)\right\Vert ^{2}\left\Vert \boldsymbol{\alpha}^{\left(l-1\right)}\left(\mathbf{x}_{i}\right)\right\Vert ^{2}\right\Vert _{L^{p}}\\
 & \leq s_{L+1}^{2}\left(c_{4}\alpha_{\phi}^{2}\nu_{0}^{2}\right)^{L}\left\Vert \mathbf{x}_{i}\right\Vert ^{2}\prod_{k=1}^{L}\frac{s_{k}^{2}}{a_{k}}\left(m_{k}+p\right)\\
 & \ +\sum_{l=1}^{L}s_{l}^{2}\left(c_{5}\alpha_{\phi}^{2}\nu_{0}^{2}\right)^{2L}\left\Vert \mathbf{x}_{i}\right\Vert ^{2}\\
 & \quad\quad\quad\quad\cdot\left(\prod_{k=1}^{l-1}\frac{s_{k}^{2}}{a_{k}}\left(m_{k}+p\right)\right)\left(\prod_{k=l+1}^{L}\frac{s_{k}^{2}}{a_{k}}\left(m_{k-1}+m_{k}+p\right)\right)\frac{s_{L+1}^{2}}{a_{L+1}}\left(m_{L}+p\right).
\end{align*}
Denoting $C=\left(c_{4}+c_{5}\right)\alpha_{\phi}^{2}\nu_{0}^{2}$,
using \cref{ass:normalized_data} and noticing the
shared factor of $\left(m_{L}+p\right)$, we get 
\begin{align*}
\left\Vert \left\Vert \nabla f_{i}\left(\boldsymbol{\theta}_{0}\right)\right\Vert ^{2}\right\Vert _{L^{p}}\leq & c_{\phi,\sigma_{0}}C^{2L}ds_{L+1}^{2}\left(m_{L}+p\right)\\
\cdot & \left(\frac{s_{L}^{2}}{a_{L}}\prod_{k=1}^{L-1}\frac{s_{k}^{2}}{a_{k}}\left(m_{k}+p\right)\right.\\
 & \left.\quad+\frac{1}{a_{L+1}}\sum_{l=1}^{L}\left(\prod_{k=1}^{l-1}\frac{s_{k}^{2}}{a_{k}}\left(m_{k}+p\right)\right)s_{l}^{2}\left(\prod_{k=l+1}^{L}\frac{s_{k}^{2}}{a_{k}}\left(m_{k-1}+m_{k}+p\right)\right)\right).
\end{align*}
Using this together with \cref{eq:second_law_with_eta_deep}, we eventually
get
\begin{align*}
\cdots\leq & \beta^{-1}\sum_{i=1}^{n}\mathbb{E}\left[\left\Vert \nabla f_{i}\left(\boldsymbol{\theta}_{t}\right)\right\Vert ^{2}\right]\\
\leq & \beta^{-1}\sum_{i=1}^{n}\eta\left\Vert \left\Vert \nabla f_{i}\left(\boldsymbol{\theta}_{0}\right)\right\Vert ^{2}\right\Vert _{L^{c_{\beta}\bar{\kappa}^{2}\beta}}\\
\leq & c_{\phi,\sigma_{0}}\left(\left(c_{4}+c_{5}\right)\alpha_{\phi}^{2}\nu_{0}^{2}\right)^{2L}\eta\beta^{-1}nds_{L+1}^{2}\left(m_{L}+c_{\beta}\bar{\kappa}^{2}\beta\right)\\
 & \cdot\left(\frac{s_{L}^{2}}{a_{L}}\prod_{k=1}^{L-1}\frac{s_{k}^{2}}{a_{k}}\left(m_{k}+c_{\beta}\bar{\kappa}^{2}\beta\right)\right.\\
 & \left.\quad+\frac{1}{a_{L+1}}\sum_{l=1}^{L}\left(\prod_{k=1}^{l-1}\frac{s_{k}^{2}}{a_{k}}\left(m_{k}+c_{\beta}\bar{\kappa}^{2}\beta\right)\right)s_{l}^{2}\left(\prod_{k=l+1}^{L}\frac{s_{k}^{2}}{a_{k}}\left(m_{k-1}+m_{k}+c_{\beta}\bar{\kappa}^{2}\beta\right)\right)\right).
\end{align*}

\subsection{Third term}

\textbf{A remark on the second derivative of the activations.} In
this section, when we work toward bounding the $L^{p}$-norm of quantities
at initialization, we encounter expressions of the form $\phi''\left(\tilde{\boldsymbol{\alpha}}^{\left(l\right)}\left(\mathbf{x}\right)\right)$.
In \cref{ass:activation_bounded_gradients} we add a footnote
claiming the assumption can be relaxed to once continuously differentiable
activations with a bounded and $\beta_{\phi}$-Lipschitz gradient.
This calls for further justification here. By Rademacher's theorem,
whenever $\phi''$ exists we have $\left|\phi''\right|\leq\beta_{\phi}$
a.e. Thus, for bounding $L^{p}$-norms, we may replace $\phi''$ by
a measurable representative that agrees with the classical second
derivative wherever it exists and is defined as $0$ on the exceptional
null set. This representative satisfies $\left|\phi''\right|\leq\beta_{\phi}$
everywhere. We therefore use $\left|\phi''\right|\leq\beta_{\phi}$
freely throughout this section.

Observe that 
\begin{align*}
\beta^{-1}\mathbb{E}\left[\sum_{i=1}^{n}r_{i}\left(\boldsymbol{\theta}_{t}\right)\Delta f_{i}\left(\boldsymbol{\theta}_{t}\right)\right] & =\beta^{-1}\sum_{i=1}^{n}\mathbb{E}\left[r_{i}\left(\boldsymbol{\theta}_{t}\right)\Delta f_{i}\left(\boldsymbol{\theta}_{t}\right)\right]\\
 & \leq\beta^{-1}\sum_{i=1}^{n}\mathbb{E}\left[\left|r_{i}\left(\boldsymbol{\theta}_{t}\right)\Delta f_{i}\left(\boldsymbol{\theta}_{t}\right)\right|\right]\\
 & \leq\beta^{-1}\sum_{i=1}^{n}\sqrt{\mathbb{E}\left[\left|r_{i}\left(\boldsymbol{\theta}_{t}\right)\right|^{2}\right]\mathbb{E}\left[\left|\Delta f_{i}\left(\boldsymbol{\theta}_{t}\right)\right|^{2}\right]}\\
 & \leq\beta^{-1}\sum_{i=1}^{n}\sqrt{2\mathbb{E}\left[\ell_{i}\left(\boldsymbol{\theta}_{t}\right)\right]\mathbb{E}\left[\left|\Delta f_{i}\left(\boldsymbol{\theta}_{t}\right)\right|^{2}\right]},
\end{align*}
where $\ell_{i}\left(\boldsymbol{\theta}\right)=\frac{1}{2}\left|r_{i}\left(\boldsymbol{\theta}\right)\right|^{2}$.
If we find some deterministic $Q>0$ such that 
\begin{align*}
\mathbb{E}\left[\left|\Delta f_{i}\left(\boldsymbol{\theta}_{t}\right)\right|^{2}\right] & \leq\eta Q^{2},
\end{align*}
then we get 
\begin{align*}
\beta^{-1}\mathbb{E}\left[\sum_{i=1}^{n}r_{i}\left(\boldsymbol{\theta}_{t}\right)\Delta f_{i}\left(\boldsymbol{\theta}_{t}\right)\right] & \leq\beta^{-1}\sqrt{\eta}Q\sum_{i=1}^{n}\sqrt{2\mathbb{E}\left[\ell_{i}\left(\boldsymbol{\theta}_{t}\right)\right]}\\
 & =\beta^{-1}n\sqrt{\eta}Q\frac{1}{n}\sum_{i=1}^{n}\sqrt{2\mathbb{E}\left[\ell_{i}\left(\boldsymbol{\theta}_{t}\right)\right]}\\
 & \leq\beta^{-1}n\sqrt{\eta}Q\sqrt{\frac{2}{n}\sum_{i=1}^{n}\mathbb{E}\left[\ell_{i}\left(\boldsymbol{\theta}_{t}\right)\right]}=\beta^{-1}n\sqrt{\eta}\cdot2Q\sqrt{\mathbb{E}\left[\mathcal{L}\left(\boldsymbol{\theta}_{t}\right)\right]}.
\end{align*}

We move on to obtain such $Q$. The main task here is bounding the
squared Laplacian at initialization. Let $l\in\left[L\right]$. Recall
that 
\begin{align*}
\frac{\partial f_{i}\left(\boldsymbol{\theta}\right)}{\partial\mathbf{W}^{\left(l\right)}} & =s_{l}\boldsymbol{\delta}^{\left(l\right)}\left(\mathbf{x}_{i}\right)\left(\boldsymbol{\alpha}^{\left(l-1\right)}\left(\mathbf{x}_{i}\right)\right)^{\top}.
\end{align*}
Therefore, for any $j\in\left[m_{l}\right]$ and $k\in\left[m_{l-1}\right]$,
\begin{align*}
\frac{\partial f_{i}\left(\boldsymbol{\theta}\right)}{\partial W_{jk}^{\left(l\right)}} & =s_{l}\alpha_{k}^{\left(l-1\right)}\left(\mathbf{x}_{i}\right)\delta_{j}^{\left(l\right)}\left(\mathbf{x}_{i}\right).
\end{align*}
Observing that $\left(\boldsymbol{\alpha}^{\left(l-1\right)}\left(\mathbf{x}_{i}\right)\right)^{\top}$
is not a function of $\mathbf{W}^{\left(l\right)}$, we get
\begin{align*}
\frac{\partial^{2}f_{i}\left(\boldsymbol{\theta}\right)}{\partial\left(W_{jk}^{\left(l\right)}\right)^{2}} & =s_{l}\alpha_{k}^{\left(l-1\right)}\left(\mathbf{x}_{i}\right)\frac{\partial}{\partial W_{jk}^{\left(l\right)}}\delta_{j}^{\left(l\right)}\left(\mathbf{x}_{i}\right)\\
 & =s_{l}\alpha_{k}^{\left(l-1\right)}\left(\mathbf{x}_{i}\right)\frac{\partial\delta_{j}^{\left(l\right)}\left(\mathbf{x}_{i}\right)}{\partial\tilde{\alpha}_{j}^{\left(l\right)}\left(\mathbf{x}_{i}\right)}\frac{\partial\tilde{\alpha}_{j}^{\left(l\right)}\left(\mathbf{x}_{i}\right)}{\partial W_{jk}^{\left(l\right)}}=s_{l}^{2}\left(\alpha_{k}^{\left(l-1\right)}\left(\mathbf{x}_{i}\right)\right)^{2}\frac{\partial\delta_{j}^{\left(l\right)}\left(\mathbf{x}_{i}\right)}{\partial\tilde{\alpha}_{j}^{\left(l\right)}\left(\mathbf{x}_{i}\right)}.
\end{align*}
Therefore, the contribution of layer $l$ to the Laplacian is
\begin{align*}
\left|\Delta_{\mathbf{W}^{\left(l\right)}}f_{i}\left(\boldsymbol{\theta}\right)\right| & =\left|\sum_{j=1}^{m_{l}}\sum_{k=1}^{m_{l-1}}\frac{\partial^{2}f_{i}\left(\boldsymbol{\theta}\right)}{\partial\left(W_{jk}^{\left(l\right)}\right)^{2}}\right|\\
 & =\left|s_{l}^{2}\sum_{j=1}^{m_{l}}\frac{\partial\delta_{j}^{\left(l\right)}\left(\mathbf{x}_{i}\right)}{\partial\tilde{\alpha}_{j}^{\left(l\right)}\left(\mathbf{x}_{i}\right)}\sum_{k=1}^{m_{l-1}}\left(\alpha_{k}^{\left(l-1\right)}\left(\mathbf{x}_{i}\right)\right)^{2}\right|\\
 & =s_{l}^{2}\left\Vert \boldsymbol{\alpha}^{\left(l-1\right)}\left(\mathbf{x}_{i}\right)\right\Vert ^{2}\left|\mathrm{Tr}\left(\frac{\partial\boldsymbol{\delta}^{\left(l\right)}\left(\mathbf{x}_{i}\right)}{\partial\tilde{\boldsymbol{\alpha}}^{\left(l\right)}\left(\mathbf{x}_{i}\right)}\right)\right|\\
 & =s_{l}^{2}\left\Vert \boldsymbol{\alpha}^{\left(l-1\right)}\left(\mathbf{x}_{i}\right)\right\Vert ^{2}\left|\mathrm{Tr}\left(\frac{\partial^{2}f_{i}\left(\boldsymbol{\theta}\right)}{\partial\left(\tilde{\boldsymbol{\alpha}}^{\left(l\right)}\left(\mathbf{x}_{i}\right)\right)^{2}}\right)\right|.
\end{align*}
Let $p\geq1$. By H\"older's inequality it holds that 
\begin{align*}
\left\Vert \Delta_{\mathbf{W}^{\left(l\right)}}f_{i}\left(\boldsymbol{\theta}\right)\right\Vert _{L^{p}}\leq & s_{l}^{2}\left\Vert \left\Vert \boldsymbol{\alpha}^{\left(l-1\right)}\left(\mathbf{x}_{i}\right)\right\Vert ^{2}\left|\mathrm{Tr}\left(\frac{\partial^{2}f_{i}\left(\boldsymbol{\theta}\right)}{\partial\left(\tilde{\boldsymbol{\alpha}}^{\left(l\right)}\left(\mathbf{x}_{i}\right)\right)^{2}}\right)\right|\right\Vert _{L^{p}}\\
 & \leq s_{l}^{2}\left\Vert \left\Vert \boldsymbol{\alpha}^{\left(l-1\right)}\left(\mathbf{x}_{i}\right)\right\Vert ^{2}\right\Vert _{L^{2p}}\left\Vert \left|\mathrm{Tr}\left(\frac{\partial^{2}f_{i}\left(\boldsymbol{\theta}\right)}{\partial\left(\tilde{\boldsymbol{\alpha}}^{\left(l\right)}\left(\mathbf{x}_{i}\right)\right)^{2}}\right)\right|\right\Vert _{L^{2p}}.
\end{align*}
Summing over all weight matrices,
\begin{align*}
\left\Vert \Delta f_{i}\left(\boldsymbol{\theta}_{0}\right)\right\Vert _{L^{p}}\leq & \sum_{l=1}^{L}s_{l}^{2}\left\Vert \left\Vert \boldsymbol{\alpha}^{\left(l-1\right)}\left(\mathbf{x}_{i}\right)\right\Vert ^{2}\right\Vert _{L^{2p}}\left\Vert \left|\mathrm{Tr}\left(\frac{\partial^{2}f_{i}\left(\boldsymbol{\theta}\right)}{\partial\left(\tilde{\boldsymbol{\alpha}}^{\left(l\right)}\left(\mathbf{x}_{i}\right)\right)^{2}}\right)\right|\right\Vert _{L^{2p}}.
\end{align*}
By \cref{eq:second_law_with_eta_deep,claim:moment of power},
\begin{align*}
\mathbb{E}\left[\left|\Delta f_{i}\left(\boldsymbol{\theta}_{t}\right)\right|^{2}\right] & \leq\eta\left\Vert \left|\Delta f_{i}\left(\boldsymbol{\theta}_{0}\right)\right|^{2}\right\Vert _{L^{c_{\beta}\bar{\kappa}^{2}\beta}}\\
 & \leq\eta\left\Vert \left|\Delta f_{i}\left(\boldsymbol{\theta}_{0}\right)\right|\right\Vert _{L^{2c_{\beta}\bar{\kappa}^{2}\beta}}^{2}\\
 & \leq\eta\left(\sum_{l=1}^{L}s_{l}^{2}\left\Vert \left\Vert \boldsymbol{\alpha}^{\left(l-1\right)}\left(\mathbf{x}_{i}\right)\right\Vert ^{2}\right\Vert _{L^{4c_{\beta}\bar{\kappa}^{2}\beta}}\left\Vert \left|\mathrm{Tr}\left(\frac{\partial^{2}f_{i}\left(\boldsymbol{\theta}\right)}{\partial\left(\tilde{\boldsymbol{\alpha}}^{\left(l\right)}\left(\mathbf{x}_{i}\right)\right)^{2}}\right)\right|\right\Vert _{L^{4c_{\beta}\bar{\kappa}^{2}\beta}}\right)^{2}.
\end{align*}
Therefore, a valid choice for $Q$ is any deterministic upper bound
on 
\begin{align*}
 & \sum_{l=1}^{L}s_{l}^{2}\left\Vert \left\Vert \boldsymbol{\alpha}^{\left(l-1\right)}\left(\mathbf{x}_{i}\right)\right\Vert ^{2}\right\Vert _{L^{4c_{\beta}\bar{\kappa}^{2}\beta}}\left\Vert \left|\mathrm{Tr}\left(\frac{\partial^{2}f_{i}\left(\boldsymbol{\theta}\right)}{\partial\left(\tilde{\boldsymbol{\alpha}}^{\left(l\right)}\left(\mathbf{x}_{i}\right)\right)^{2}}\right)\right|\right\Vert _{L^{4c_{\beta}\bar{\kappa}^{2}\beta}}.
\end{align*}
We can bound the first $L^{4c_{\beta}\bar{\kappa}^{2}\beta}$ norm
using \cref{prop:activation_norm_bound}.
It remains to bound the second $L^{4c_{\beta}\bar{\kappa}^{2}\beta}$
norm.

Let us discuss the above trace in the context of any $\mathbf{x}$
with $\left\Vert \mathbf{x}\right\Vert ^{2}=c_{\phi,\sigma_{0}}d$.
The trace can be expressed as 
\begin{align*}
\mathrm{Tr}\left(\left(\frac{\partial^{2}f\left(\boldsymbol{\theta};\mathbf{x}\right)}{\partial\left(\tilde{\boldsymbol{\alpha}}^{\left(l\right)}\left(\mathbf{x}\right)\right)^{2}}\right)\right) & =\sum_{j=1}^{m_{l}}\frac{\partial^{2}f\left(\boldsymbol{\theta};\mathbf{x}\right)}{\partial\left(\tilde{\alpha}_{j}^{\left(l\right)}\left(\mathbf{x}\right)\right)^{2}}\\
 & =\sum_{j=1}^{m_{l}}s_{L+1}\mathbf{v}^{\top}\frac{\partial^{2}\boldsymbol{\alpha}^{\left(L\right)}\left(\mathbf{x}\right)}{\partial\left(\tilde{\alpha}_{j}^{\left(l\right)}\left(\mathbf{x}\right)\right)^{2}}\\
 & =s_{L+1}\sum_{j=1}^{m_{l}}\sum_{r=1}^{m_{L}}v_{r}\frac{\partial^{2}\alpha_{r}^{\left(L\right)}\left(\mathbf{x}\right)}{\partial\left(\tilde{\alpha}_{j}^{\left(l\right)}\left(\mathbf{x}\right)\right)^{2}}=s_{L+1}\sum_{r=1}^{m_{L}}v_{r}\sum_{j=1}^{m_{l}}\frac{\partial^{2}\alpha_{r}^{\left(L\right)}\left(\mathbf{x}\right)}{\partial\left(\tilde{\alpha}_{j}^{\left(l\right)}\left(\mathbf{x}\right)\right)^{2}}.
\end{align*}
Let us denote, for any $l\in\left[L\right]$, $r\in\left[m_{L}\right]$
and $\mathbf{M}\in\mathbb{R}^{m_{l}\times m_{l}}$,
\begin{align*}
\mathbf{H}_{r}^{\left(l\right)}\in\mathbb{R}^{m_{l}\times m_{l}}, & \mathbf{H}_{r}^{\left(l\right)}\triangleq\frac{\partial^{2}\alpha_{r}^{\left(L\right)}\left(\mathbf{x}\right)}{\partial\left(\tilde{\boldsymbol{\alpha}}^{\left(l\right)}\left(\mathbf{x}\right)\right)^{2}};\\
\mathbf{T}^{\left(l\right)}\left(\mathbf{M}\right)\in\mathbb{R}^{m_{L}}, & \mathbf{T}^{\left(l\right)}\left(\mathbf{M}\right)\triangleq\left(\mathrm{Tr}\left(\mathbf{H}_{1}^{\left(l\right)}\mathbf{M}\right),\ldots,\mathrm{Tr}\left(\mathbf{H}_{m_{L}}^{\left(l\right)}\mathbf{M}\right)\right)^{\top};\\
\mathbf{D}^{\left(l\right)}\in\mathbb{R}^{m_{l}\times m_{l}}, & \mathbf{D}^{\left(l\right)}=\mathrm{diag}\left(\phi'\left(\tilde{\boldsymbol{\alpha}}^{\left(l\right)}\left(\mathbf{x}\right)\right)\right)
\end{align*}
(these are functions of $\mathbf{x}$, but we omit this in favor of
readability). Then 
\begin{align*}
\mathrm{Tr}\left(\frac{\partial^{2}f\left(\boldsymbol{\theta};\mathbf{x}\right)}{\partial\left(\tilde{\boldsymbol{\alpha}}^{\left(l\right)}\left(\mathbf{x}\right)\right)^{2}}\right) & =s_{L+1}\sum_{r=1}^{m_{L}}v_{r}\sum_{j=1}^{m_{l}}\left(H_{r}^{\left(l\right)}\right)_{jj}\\
&=s_{L+1}\sum_{r=1}^{m_{L}}v_{r}\mathrm{Tr}\left(\mathbf{H}_{r}^{\left(l\right)}\right)=s_{L+1}\mathbf{v}^{\top}\mathbf{T}^{\left(l\right)}\left(\mathbf{I}_{m_{l}}\right).
\end{align*}
Let $\mathcal{F}\triangleq\sigma\left(\mathbf{W}^{\left(1\right)},\ldots,\mathbf{W}^{\left(L\right)}\right)$
be the $\sigma$-algebra generated by $\sigma\left(\mathbf{W}^{\left(1\right)},\ldots,\mathbf{W}^{\left(L\right)}\right)$.
Then
\begin{align*}
\mathrm{Tr}\left(\frac{\partial^{2}f\left(\boldsymbol{\theta};\mathbf{x}\right)}{\partial\left(\tilde{\boldsymbol{\alpha}}^{\left(l\right)}\left(\mathbf{x}\right)\right)^{2}}\right)\mid\mathcal{F} & \sim\mathcal{N}\left(0,\frac{s_{L+1}^{2}}{a_{L+1}}\left\Vert \mathbf{T}^{\left(l\right)}\left(\mathbf{I}_{m_{l}}\right)\right\Vert ^{2}\right).
\end{align*}
Therefore, for any $p\geq1$, there exists a constant $c>0$ such
that 
\begin{align*}
\left\Vert \mathrm{Tr}\left(\frac{\partial^{2}f\left(\boldsymbol{\theta};\mathbf{x}\right)}{\partial\left(\tilde{\boldsymbol{\alpha}}^{\left(l\right)}\left(\mathbf{x}\right)\right)^{2}}\right)\right\Vert _{L^{p}\mid\mathcal{F}} & \leq c\left\Vert \mathbf{T}^{\left(l\right)}\left(\mathbf{I}_{m_{l}}\right)\right\Vert \sqrt{\frac{s_{L+1}^{2}}{a_{L+1}}p},
\end{align*}
and together with \cref{claim:moments total expectation}, 
\begin{align*}
\left\Vert \mathrm{Tr}\left(\frac{\partial^{2}f\left(\boldsymbol{\theta};\mathbf{x}\right)}{\partial\left(\tilde{\boldsymbol{\alpha}}^{\left(l\right)}\left(\mathbf{x}\right)\right)^{2}}\right)\right\Vert _{L^{p}} & \leq c\left\Vert \left\Vert \mathbf{T}^{\left(l\right)}\left(\mathbf{I}_{m_{l}}\right)\right\Vert \right\Vert _{L^{p}}\sqrt{\frac{s_{L+1}^{2}}{a_{L+1}}p}.
\end{align*}
Let us consider $l=L$. It holds that 
\begin{align*}
\mathbf{H}_{r}^{\left(L\right)} & =\phi''\left(\tilde{\alpha}_{r}^{\left(L\right)}\left(\mathbf{x}\right)\right)\mathbf{e}_{r}\mathbf{e}_{r}^{\top},
\end{align*}
Thus, for every $\mathbf{M}$ with $\left\Vert \mathbf{M}\right\Vert \leq1$,
\begin{align*}
\left\Vert \mathbf{T}^{\left(L\right)}\left(\mathbf{M}\right)\right\Vert  & =\left\Vert \left(\mathrm{Tr}\left(\mathbf{H}_{1}^{\left(L\right)}\mathbf{M}\right),\ldots,\mathrm{Tr}\left(\mathbf{H}_{m_{L}}^{\left(L\right)}\mathbf{M}\right)\right)\right\Vert \\
 & =\left\Vert \left(\phi''\left(\tilde{\alpha}_{1}^{\left(L\right)}\left(\mathbf{x}\right)\right)M_{11},\ldots,\phi''\left(\tilde{\alpha}_{m_{L}}^{\left(L\right)}\left(\mathbf{x}\right)\right)M_{m_{L}m_{L}}\right)\right\Vert \leq\beta_{\phi}\sqrt{m_{L}}.
\end{align*}
Now consider $l\in\left[L-1\right]$. We aim to express $\mathbf{T}^{\left(l\right)}\left(\mathbf{M}\right)$
recursively. Each coordinate $r$ of $\mathbf{T}^{\left(l\right)}\left(\mathbf{M}\right)$
satisfies 
\begin{align*}
\mathrm{Tr}\left(\mathbf{H}_{r}^{\left(l\right)}\mathbf{M}\right)= & \sum_{j=1}^{m_{l}}\left(\mathbf{H}_{r}^{\left(l\right)}\mathbf{M}\right)_{jj}=\sum_{j=1}^{m_{l}}\left(\left(\mathbf{H}_{r}^{\left(l\right)}\right)_{j:}\left(\mathbf{M}\right)_{:j}\right).
\end{align*}
Using \citet[Theorem 2.1]{constantine1996multivariate} with layer
$l+1$ as the intermediate function, it holds that  
\begin{align*}
\left(\mathbf{H}_{r}^{\left(l\right)}\right)_{jk}= & \frac{\partial^{2}\alpha_{r}^{\left(L\right)}\left(\mathbf{x}\right)}{\partial\tilde{\alpha}_{j}^{\left(l\right)}\left(\mathbf{x}\right)\partial\tilde{\alpha}_{k}^{\left(l\right)}\left(\mathbf{x}\right)}\\
= & \frac{\partial\alpha_{r}^{\left(L\right)}\left(\mathbf{x}\right)}{\partial\tilde{\boldsymbol{\alpha}}^{\left(l+1\right)}\left(\mathbf{x}\right)}\frac{\partial^{2}\tilde{\boldsymbol{\alpha}}^{\left(l+1\right)}\left(\mathbf{x}\right)}{\partial\tilde{\alpha}_{j}^{\left(l\right)}\left(\mathbf{x}\right)\partial\tilde{\alpha}_{k}^{\left(l\right)}\left(\mathbf{x}\right)}\\
 & +\left(\frac{\partial\tilde{\boldsymbol{\alpha}}^{\left(l+1\right)}\left(\mathbf{x}\right)}{\partial\tilde{\alpha}_{j}^{\left(l\right)}\left(\mathbf{x}\right)}\right)^{\top}\frac{\partial^{2}\alpha_{r}^{\left(L\right)}\left(\mathbf{x}\right)}{\partial\left(\tilde{\boldsymbol{\alpha}}^{\left(l+1\right)}\left(\mathbf{x}\right)\right)^{2}}\frac{\partial\tilde{\boldsymbol{\alpha}}^{\left(l+1\right)}\left(\mathbf{x}\right)}{\partial\tilde{\alpha}_{k}^{\left(l\right)}\left(\mathbf{x}\right)}\\
= & \frac{\partial\alpha_{r}^{\left(L\right)}\left(\mathbf{x}\right)}{\partial\tilde{\boldsymbol{\alpha}}^{\left(l+1\right)}\left(\mathbf{x}\right)}s_{l+1}\mathbf{W}_{:j}^{\left(l+1\right)}\phi''\left(\tilde{\alpha}_{j}^{\left(l\right)}\left(\mathbf{x}\right)\right)\mathds{1}_{\left\{ j=k\right\} }\\
 & +s_{l+1}^{2}\left(\phi'\left(\tilde{\alpha}_{j}^{\left(l\right)}\left(\mathbf{x}\right)\right)\mathbf{W}_{:j}^{\left(l+1\right)}\right)^{\top}\mathbf{H}_{r}^{\left(l+1\right)}\left(\phi'\left(\tilde{\alpha}_{k}^{\left(l\right)}\left(\mathbf{x}\right)\right)\mathbf{W}_{:k}^{\left(l+1\right)}\right)\\
= & s_{l+1}\phi''\left(\tilde{\alpha}_{j}^{\left(l\right)}\left(\mathbf{x}\right)\right)\frac{\partial\alpha_{r}^{\left(L\right)}\left(\mathbf{x}\right)}{\partial\tilde{\boldsymbol{\alpha}}^{\left(l+1\right)}\left(\mathbf{x}\right)}\mathbf{W}_{:j}^{\left(l+1\right)}\mathds{1}_{\left\{ j=k\right\} }\\
 & +s_{l+1}^{2}\left[\mathbf{D}^{\left(l\right)}\left(\mathbf{W}^{\left(l+1\right)}\right)^{\top}\mathbf{H}_{r}^{\left(l+1\right)}\mathbf{W}^{\left(l+1\right)}\mathbf{D}^{\left(l\right)}\right]_{jk}.
\end{align*}
Thus, denoting $\mathrm{diag}\left(\mathbf{M}\right)\triangleq\left(M_{11},\ldots,M_{m_{l}m_{l}}\right)^{\top}$,
we get 
\begin{align*}
\mathrm{Tr}\left(\mathbf{H}_{r}^{\left(l\right)}\mathbf{M}\right)= & \sum_{j=1}^{m_{l}}s_{l+1}\phi''\left(\tilde{\alpha}_{j}^{\left(l\right)}\left(\mathbf{x}\right)\right)M_{jj}\frac{\partial\alpha_{r}^{\left(L\right)}\left(\mathbf{x}\right)}{\partial\tilde{\boldsymbol{\alpha}}^{\left(l+1\right)}\left(\mathbf{x}\right)}\mathbf{W}_{:j}^{\left(l+1\right)}\\
 & +\sum_{j=1}^{m_{l}}s_{l+1}^{2}\left[\mathbf{D}^{\left(l\right)}\left(\mathbf{W}^{\left(l+1\right)}\right)^{\top}\mathbf{H}_{r}^{\left(l+1\right)}\mathbf{W}^{\left(l+1\right)}\mathbf{D}^{\left(l\right)}\right]_{j:}\left(\mathbf{M}\right)_{:j}\\
= & s_{l+1}\sum_{j=1}^{m_{l}}\left(\phi''\left(\tilde{\boldsymbol{\alpha}}^{\left(l\right)}\left(\mathbf{x}\right)\right)\odot\mathrm{diag}\left(\mathbf{M}\right)\right)_{j}\left(\frac{\partial\alpha_{r}^{\left(L\right)}\left(\mathbf{x}\right)}{\partial\tilde{\boldsymbol{\alpha}}^{\left(l+1\right)}\left(\mathbf{x}\right)}\mathbf{W}^{\left(l+1\right)}\right)_{j}\\
 & +s_{l+1}^{2}\mathrm{Tr}\left(\mathbf{D}^{\left(l\right)}\left(\mathbf{W}^{\left(l+1\right)}\right)^{\top}\mathbf{H}_{r}^{\left(l+1\right)}\mathbf{W}^{\left(l+1\right)}\mathbf{D}^{\left(l\right)}\mathbf{M}\right)\\
= & s_{l+1}\frac{\partial\alpha_{r}^{\left(L\right)}\left(\mathbf{x}\right)}{\partial\tilde{\boldsymbol{\alpha}}^{\left(l+1\right)}\left(\mathbf{x}\right)}\mathbf{W}^{\left(l+1\right)}\left(\phi''\left(\tilde{\boldsymbol{\alpha}}^{\left(l\right)}\left(\mathbf{x}\right)\right)\odot\mathrm{diag}\left(\mathbf{M}\right)\right)\\
 & +s_{l+1}^{2}\mathrm{Tr}\left(\mathbf{H}_{r}^{\left(l+1\right)}\mathbf{W}^{\left(l+1\right)}\mathbf{D}^{\left(l\right)}\mathbf{M}\mathbf{D}^{\left(l\right)}\left(\mathbf{W}^{\left(l+1\right)}\right)^{\top}\right)\\
= & s_{l+1}\left[\frac{\partial\boldsymbol{\alpha}^{\left(L\right)}\left(\mathbf{x}\right)}{\partial\tilde{\boldsymbol{\alpha}}^{\left(l+1\right)}\left(\mathbf{x}\right)}\mathbf{W}^{\left(l+1\right)}\left(\phi''\left(\tilde{\boldsymbol{\alpha}}^{\left(l\right)}\left(\mathbf{x}\right)\right)\odot\mathrm{diag}\left(\mathbf{M}\right)\right)\right]_{r}\\
 & +s_{l+1}^{2}\mathrm{Tr}\left(\mathbf{H}_{r}^{\left(l+1\right)}\mathbf{W}^{\left(l+1\right)}\mathbf{D}^{\left(l\right)}\mathbf{M}\mathbf{D}^{\left(l\right)}\left(\mathbf{W}^{\left(l+1\right)}\right)^{\top}\right).
\end{align*}
Recalling the definition of $\mathbf{T}^{\left(l\right)}$, we get
\begin{align*}
\mathbf{T}^{\left(l\right)}\left(\mathbf{M}\right)= & s_{l+1}\frac{\partial\boldsymbol{\alpha}^{\left(L\right)}\left(\mathbf{x}\right)}{\partial\tilde{\boldsymbol{\alpha}}^{\left(l+1\right)}\left(\mathbf{x}\right)}\mathbf{W}^{\left(l+1\right)}\left(\phi''\left(\tilde{\boldsymbol{\alpha}}^{\left(l\right)}\left(\mathbf{x}\right)\right)\odot\mathrm{diag}\left(\mathbf{M}\right)\right)\\
 & +s_{l+1}^{2}\mathbf{T}^{\left(l+1\right)}\left(\mathbf{W}^{\left(l+1\right)}\mathbf{D}^{\left(l\right)}\mathbf{M}\mathbf{D}^{\left(l\right)}\left(\mathbf{W}^{\left(l+1\right)}\right)^{\top}\right).
\end{align*}
Hence, by the triangle inequality and other properties of the spectral
norm,
\begin{align*}
\left\Vert \mathbf{T}^{\left(l\right)}\left(\mathbf{M}\right)\right\Vert \leq & s_{l+1}\left\Vert \frac{\partial\boldsymbol{\alpha}^{\left(L\right)}\left(\mathbf{x}\right)}{\partial\tilde{\boldsymbol{\alpha}}^{\left(l+1\right)}\left(\mathbf{x}\right)}\right\Vert \left\Vert \mathbf{W}^{\left(l+1\right)}\right\Vert \left\Vert \phi''\left(\tilde{\boldsymbol{\alpha}}^{\left(l\right)}\left(\mathbf{x}\right)\right)\odot\mathrm{diag}\left(\mathbf{M}\right)\right\Vert \\
 & +s_{l+1}^{2}\left\Vert \mathbf{T}^{\left(l+1\right)}\left(\mathbf{W}^{\left(l+1\right)}\mathbf{D}^{\left(l\right)}\mathbf{M}\mathbf{D}^{\left(l\right)}\left(\mathbf{W}^{\left(l+1\right)}\right)^{\top}\right)\right\Vert .
\end{align*}
It holds that 
\begin{align*}
\left\Vert \frac{\partial\boldsymbol{\alpha}^{\left(L\right)}\left(\mathbf{x}\right)}{\partial\tilde{\boldsymbol{\alpha}}^{\left(l+1\right)}\left(\mathbf{x}\right)}\right\Vert  & =\left\Vert \frac{\partial\boldsymbol{\alpha}^{\left(L\right)}\left(\mathbf{x}\right)}{\partial\tilde{\boldsymbol{\alpha}}^{\left(L\right)}\left(\mathbf{x}\right)}\frac{\partial\tilde{\boldsymbol{\alpha}}^{\left(L\right)}\left(\mathbf{x}\right)}{\partial\boldsymbol{\alpha}^{\left(L-1\right)}\left(\mathbf{x}\right)}\cdots\frac{\partial\boldsymbol{\alpha}^{\left(l+2\right)}\left(\mathbf{x}\right)}{\partial\tilde{\boldsymbol{\alpha}}^{\left(l+2\right)}\left(\mathbf{x}\right)}\frac{\partial\tilde{\boldsymbol{\alpha}}^{\left(l+2\right)}\left(\mathbf{x}\right)}{\partial\boldsymbol{\alpha}^{\left(l+1\right)}\left(\mathbf{x}\right)}\frac{\partial\boldsymbol{\alpha}^{\left(l+1\right)}\left(\mathbf{x}\right)}{\partial\tilde{\boldsymbol{\alpha}}^{\left(l+1\right)}\left(\mathbf{x}\right)}\right\Vert \\
 & =\left\Vert \mathbf{D}^{\left(L\right)}s_{L}\mathbf{W}^{\left(L\right)}\cdots\mathbf{D}^{\left(l+2\right)}s_{l+2}\mathbf{W}^{\left(l+2\right)}\mathbf{D}^{\left(l+1\right)}\right\Vert \\
 & \leq\alpha_{\phi}^{L-l}\prod_{t=l+2}^{L}s_{t}\left\Vert \mathbf{W}^{\left(t\right)}\right\Vert .
\end{align*}
Let us consider $\mathbf{M}$ with $\left\Vert \mathbf{M}\right\Vert \leq1$.
Each entry on the diagonal of $\mathbf{M}$ has absolute value at
most 1, and thus 
\begin{align*}
\left\Vert \phi''\left(\tilde{\boldsymbol{\alpha}}^{\left(l\right)}\left(\mathbf{x}\right)\right)\odot\mathrm{diag}\left(\mathbf{M}\right)\right\Vert \leq & \left\Vert \phi''\left(\tilde{\boldsymbol{\alpha}}^{\left(l\right)}\left(\mathbf{x}\right)\right)\right\Vert \leq\beta_{\phi}\sqrt{m_{l}}.
\end{align*}
Additionally, denoting 
\begin{align*}
\mathbf{A}= & \mathbf{W}^{\left(l+1\right)}\mathbf{D}^{\left(l\right)}\mathbf{M}\mathbf{D}^{\left(l\right)}\left(\mathbf{W}^{\left(l+1\right)}\right)^{\top},
\end{align*}
we have that $\left\Vert \mathbf{T}^{\left(l+1\right)}\left(\mathbf{A}\right)\right\Vert $
is trivially bounded when $\mathbf{A}=0$, and when $\mathbf{A}\neq0$,
\begin{align*}
\left\Vert \mathbf{T}^{\left(l+1\right)}\left(\mathbf{A}\right)\right\Vert = & \left\Vert \mathbf{A}\right\Vert \left\Vert \mathbf{A}\right\Vert ^{-1}\left\Vert \mathbf{T}^{\left(l+1\right)}\left(\mathbf{A}\right)\right\Vert \\
= & \left\Vert \mathbf{A}\right\Vert \left\Vert \left\Vert \mathbf{A}\right\Vert ^{-1}\mathbf{T}^{\left(l+1\right)}\left(\mathbf{A}\right)\right\Vert \\
= & \left\Vert \mathbf{A}\right\Vert \left\Vert \mathbf{T}^{\left(l+1\right)}\left(\left\Vert \mathbf{A}\right\Vert ^{-1}\mathbf{A}\right)\right\Vert \\
\leq & \left\Vert \mathbf{A}\right\Vert \sup_{\left\Vert \mathbf{M}'\right\Vert \leq1}\left\Vert \mathbf{T}^{\left(l+1\right)}\left(\mathbf{M}'\right)\right\Vert \leq\alpha_{\phi}^{2}\left\Vert \mathbf{W}^{\left(l+1\right)}\right\Vert ^{2}\sup_{\left\Vert \mathbf{M}'\right\Vert \leq1}\left\Vert \mathbf{T}^{\left(l+1\right)}\left(\mathbf{M}'\right)\right\Vert .
\end{align*}
Since the above holds for every $\mathbf{M}$ with $\left\Vert \mathbf{M}\right\Vert \leq1$,
taken all together, we get 
\begin{align*}
\sup_{\left\Vert \mathbf{M}\right\Vert \leq1}\left\Vert \mathbf{T}^{\left(l\right)}\left(\mathbf{M}\right)\right\Vert \leq & s_{l+1}\left(\alpha_{\phi}^{L-l}\prod_{t=l+2}^{L}s_{t}\left\Vert \mathbf{W}^{\left(t\right)}\right\Vert \right)\left\Vert \mathbf{W}^{\left(l+1\right)}\right\Vert \beta_{\phi}\sqrt{m_{l}}\\
 & +s_{l+1}^{2}\alpha_{\phi}^{2}\left\Vert \mathbf{W}^{\left(l+1\right)}\right\Vert ^{2}\sup_{\left\Vert \mathbf{M}'\right\Vert \leq1}\left\Vert \mathbf{T}^{\left(l+1\right)}\left(\mathbf{M}'\right)\right\Vert \\
= & \beta_{\phi}\alpha_{\phi}^{L-l}\sqrt{m_{l}}\prod_{t=l+1}^{L}s_{t}\left\Vert \mathbf{W}^{\left(t\right)}\right\Vert \\
 & +s_{l+1}^{2}\alpha_{\phi}^{2}\left\Vert \mathbf{W}^{\left(l+1\right)}\right\Vert ^{2}\sup_{\left\Vert \mathbf{M}'\right\Vert \leq1}\left\Vert \mathbf{T}^{\left(l+1\right)}\left(\mathbf{M}'\right)\right\Vert .
\end{align*}
Unrolling this recurrence and absorbing powers of $\alpha_{\phi}$
and $\beta_{\phi}$ into a new constant $C>0$, we get 
\begin{align*}
\sup_{\left\Vert \mathbf{M}\right\Vert \leq1}\left\Vert \mathbf{T}^{\left(l\right)}\left(\mathbf{M}\right)\right\Vert \leq & C^{L}\sum_{q=l}^{L}\left(\prod_{t=l+1}^{q}s_{t}^{2}\left\Vert \mathbf{W}^{\left(t\right)}\right\Vert ^{2}\right)\sqrt{m_{q}}\left(\prod_{t=q+1}^{L}s_{t}\left\Vert \mathbf{W}^{\left(t\right)}\right\Vert \right).
\end{align*}
In particular, the above holds for $\mathbf{M}=\mathbf{I}_{m_{l}}$.
Thus, we finally get 
\begin{align*}
 & \left\Vert \left\Vert \mathbf{T}^{\left(l\right)}\left(\mathbf{I}_{m_{l}}\right)\right\Vert \right\Vert _{L^{p}} \\
 & \leq \left\Vert C^{L}\sum_{q=l}^{L}\left(\prod_{t=l+1}^{q}s_{t}^{2}\left\Vert \mathbf{W}^{\left(t\right)}\right\Vert ^{2}\right)\sqrt{m_{q}}\left(\prod_{t=q+1}^{L}s_{t}\left\Vert \mathbf{W}^{\left(t\right)}\right\Vert \right)\right\Vert _{L^{p}}\\
& \overset{(i)}{\leq} C^{L}\sum_{q=l}^{L}\left(\prod_{t=l+1}^{q}s_{t}^{2}\left\Vert \left\Vert \mathbf{W}^{\left(t\right)}\right\Vert ^{2}\right\Vert _{L^{p}}\right)\sqrt{m_{q}}\left(\prod_{t=q+1}^{L}s_{t}\left\Vert \left\Vert \mathbf{W}^{\left(t\right)}\right\Vert \right\Vert _{L^{p}}\right)\\
& \overset{(ii)}{\leq} C^{L}\sum_{q=l}^{L}\left(\prod_{t=l+1}^{q}c'\nu_{0}^{2}\frac{s_{t}^{2}}{a_{t}}\left(m_{t-1}+m_{t}+p\right)\right)\sqrt{m_{q}}\left(\prod_{t=q+1}^{L}\sqrt{c'\nu_{0}^{2}\frac{s_{t}^{2}}{a_{t}}\left(m_{t-1}+m_{t}+p\right)}\right),
\end{align*}
where in $(i)$ we use the triangle inequality and the independence
of the weight matrices, and in $(ii)$ we use \cref{prop:param_norm_bounds}, and $c'>0$ is some constant. Absorbing
constants once again into $C$ (w.l.o.g), we get 
\begin{align*}
& \left\Vert \left\Vert \mathbf{T}^{\left(l\right)}\left(\mathbf{I}_{m_{l}}\right)\right\Vert \right\Vert _{L^{p}} \\
& \quad \leq C^{2L}\sum_{q=l}^{L}\left(\prod_{t=l+1}^{q}\frac{s_{t}^{2}}{a_{t}}\left(m_{t-1}+m_{t}+p\right)\right)\sqrt{m_{q}}\left(\prod_{t=q+1}^{L}\sqrt{\frac{s_{t}^{2}}{a_{t}}\left(m_{t-1}+m_{t}+p\right)}\right).
\end{align*}
Therefore, 
\begin{align*}
& \left\Vert \mathrm{Tr}\left(\frac{\partial^{2}f\left(\boldsymbol{\theta};\mathbf{x}\right)}{\partial\left(\tilde{\boldsymbol{\alpha}}^{\left(l\right)}\left(\mathbf{x}\right)\right)^{2}}\right)\right\Vert _{L^{p}} \\
& \quad \leq cC^{2L}\sum_{q=l}^{L}\left(\prod_{t=l+1}^{q}\frac{s_{t}^{2}}{a_{t}}\left(m_{t-1}+m_{t}+p\right)\right)\sqrt{m_{q}}\left(\prod_{t=q+1}^{L}\sqrt{\frac{s_{t}^{2}}{a_{t}}\left(m_{t-1}+m_{t}+p\right)}\right)\sqrt{\frac{s_{L+1}^{2}}{a_{L+1}}p}.
\end{align*}
Recall from \cref{prop:activation_norm_bound}
that there exists a constant $c''>0$ such that 
\begin{align*}
\left\Vert \left\Vert \boldsymbol{\alpha}^{\left(l-1\right)}\left(\mathbf{x}\right)\right\Vert ^{2}\right\Vert _{L^{p}} & \leq\left(c''\right)^{l-1}\left\Vert \mathbf{x}\right\Vert ^{2}\prod_{t=1}^{l-1}\frac{s_{t}^{2}}{a_{t}}\left(m_{t}+p\right).
\end{align*}
W.l.o.g $c''=c=C$. Together, we get 
\begin{align*}
 & \sum_{l=1}^{L}s_{l}^{2}\left\Vert \left\Vert \boldsymbol{\alpha}^{\left(l-1\right)}\left(\mathbf{x}_{i}\right)\right\Vert ^{2}\right\Vert _{L^{4c_{\beta}\bar{\kappa}^{2}\beta}}\left\Vert \left|\mathrm{Tr}\left(\frac{\partial^{2}f_{i}\left(\boldsymbol{\theta}\right)}{\partial\left(\tilde{\boldsymbol{\alpha}}^{\left(l\right)}\left(\mathbf{x}_{i}\right)\right)^{2}}\right)\right|\right\Vert _{L^{4c_{\beta}\bar{\kappa}^{2}\beta}}\\
 & \leq\sum_{l=1}^{L}s_{l}^{2}c^{l-1}\left\Vert \mathbf{x}\right\Vert ^{2}\left(\prod_{t=1}^{l-1}\frac{s_{t}^{2}}{a_{t}}\left(m_{t}+4c_{\beta}\bar{\kappa}^{2}\beta\right)\right)\\
 & \quad\cdot C^{2L+1}\sum_{q=l}^{L}\left(\prod_{t=l+1}^{q}\frac{s_{t}^{2}}{a_{t}}\left(m_{t-1}+m_{t}+4c_{\beta}\bar{\kappa}^{2}\beta\right)\right)\sqrt{m_{q}}\\
 & \qquad\qquad\cdot\left(\prod_{t=q+1}^{L}\sqrt{\frac{s_{t}^{2}}{a_{t}}\left(m_{t-1}+m_{t}+4c_{\beta}\bar{\kappa}^{2}\beta\right)}\right)\sqrt{\frac{s_{L+1}^{2}}{a_{L+1}}4c_{\beta}\bar{\kappa}^{2}\beta}\\
 & \lesssim C^{3L}d\sum_{l=1}^{L}\sum_{q=l}^{L}\left(\prod_{t=1}^{l-1}\frac{s_{t}^{2}}{a_{t}}\left(m_{t-1}+m_{t}+4c_{\beta}\bar{\kappa}^{2}\beta\right)\right)s_{l}^{2}\left(\prod_{t=l+1}^{q}\frac{s_{t}^{2}}{a_{t}}\left(m_{t-1}+m_{t}+4c_{\beta}\bar{\kappa}^{2}\beta\right)\right)\\
 & \qquad\qquad\qquad\cdot\sqrt{m_{q}}\left(\prod_{t=q+1}^{L}\sqrt{\frac{s_{t}^{2}}{a_{t}}\left(m_{t-1}+m_{t}+4c_{\beta}\bar{\kappa}^{2}\beta\right)}\right)\sqrt{\frac{s_{L+1}^{2}}{a_{L+1}}c_{\beta}\bar{\kappa}^{2}\beta},
\end{align*}
where in the last step we use \cref{ass:normalized_data}. This
is the deterministic upper bound we set out to find, which means it
is a valid upper bound on $Q$. Therefore, 
\begin{align*}
 & \beta^{-1}\mathbb{E}\left[\sum_{i=1}^{n}r_{i}\left(\boldsymbol{\theta}_{t}\right)\Delta f_{i}\left(\boldsymbol{\theta}_{t}\right)\right]\\
 & \lesssim C^{3L}\sqrt{\eta}\beta^{-1}nd\\
 & \cdot\left(\sum_{l=1}^{L}\sum_{q=l}^{L}\left(\prod_{t=1}^{l-1}\frac{s_{t}^{2}}{a_{t}}\left(m_{t-1}+m_{t}+4c_{\beta}\bar{\kappa}^{2}\beta\right)\right)s_{l}^{2}\left(\prod_{t=l+1}^{q}\frac{s_{t}^{2}}{a_{t}}\left(m_{t-1}+m_{t}+4c_{\beta}\bar{\kappa}^{2}\beta\right)\right)\right.\\
 & \left.\qquad\qquad\cdot\sqrt{m_{q}}\left(\prod_{t=q+1}^{L}\sqrt{\frac{s_{t}^{2}}{a_{t}}\left(m_{t-1}+m_{t}+4c_{\beta}\bar{\kappa}^{2}\beta\right)}\right)\sqrt{\frac{s_{L+1}^{2}}{a_{L+1}}c_{\beta}\bar{\kappa}^{2}\beta}\right)\sqrt{\mathbb{E}\left[\mathcal{L}\left(\boldsymbol{\theta}_{t}\right)\right]}.
\end{align*}

\subsection{Fourth term}

\label{subsec:ito-term-4-deep}Let $i\in\left[n\right]$. It holds
that 
\begin{align*}
\nabla f_{i}\left(\boldsymbol{\theta}\right)^{\top}\boldsymbol{\Lambda}\boldsymbol{\theta} & =\sum_{l=1}^{L}\frac{\partial f_{i}\left(\boldsymbol{\theta}\right)}{\partial\mathrm{vec}\left(\mathbf{W}^{\left(l\right)}\right)}\left(\frac{a_{l}}{\nu_{0}^{2}}\mathbf{I}_{m_{l-1}m_{l}}\right)\mathrm{vec}\left(\mathbf{W}^{\left(l\right)}\right)+\frac{\partial f_{i}\left(\boldsymbol{\theta}\right)}{\partial\mathbf{v}}\left(a_{L+1}\mathbf{I}_{m_{L}}\right)\mathbf{v}\\
 & =\frac{1}{\nu_{0}^{2}}\sum_{l=1}^{L}a_{l}\frac{\partial f_{i}\left(\boldsymbol{\theta}\right)}{\partial\mathrm{vec}\left(\mathbf{W}^{\left(l\right)}\right)}\mathrm{vec}\left(\mathbf{W}^{\left(l\right)}\right)+a_{L+1}\frac{\partial f_{i}}{\partial\mathbf{v}}\mathbf{v}.
\end{align*}
For the first sum, observe that 
\begin{align*}
\frac{\partial\tilde{\boldsymbol{\alpha}}^{\left(l\right)}\left(\mathbf{x}_{i}\right)}{\partial\mathrm{vec}\left(\mathbf{W}^{\left(l\right)}\right)}\mathrm{vec}\left(\mathbf{W}^{\left(l\right)}\right) & =\frac{\partial\tilde{\boldsymbol{\alpha}}^{\left(l\right)}\left(\mathbf{x}_{i}\right)}{\partial\mathbf{W}^{\left(l\right)}}\frac{\partial\mathbf{W}^{\left(l\right)}}{\partial\mathrm{vec}\left(\mathbf{W}^{\left(l\right)}\right)}\mathrm{vec}\left(\mathbf{W}^{\left(l\right)}\right)\\
 & =\frac{\partial\tilde{\boldsymbol{\alpha}}^{\left(l\right)}\left(\mathbf{x}_{i}\right)}{\partial\mathbf{W}^{\left(l\right)}}\mathbf{W}^{\left(l\right)}\\
 & =\tilde{\boldsymbol{\alpha}}^{\left(l\right)}\left(\mathbf{x}_{i}\right).
\end{align*}
Using the above and denoting 
\begin{align*}
\mathbf{h}^{\left(l\right)}\left(\mathbf{x}\right) & \triangleq\frac{\partial\boldsymbol{\alpha}^{\left(L\right)}\left(\mathbf{x}\right)}{\partial\tilde{\boldsymbol{\alpha}}^{\left(l\right)}\left(\mathbf{x}\right)}\tilde{\boldsymbol{\alpha}}^{\left(l\right)}\left(\mathbf{x}\right),
\end{align*}
we get
\begin{align*}
\frac{\partial f_{i}\left(\boldsymbol{\theta}\right)}{\partial\mathrm{vec}\left(\mathbf{W}^{\left(l\right)}\right)}\mathrm{vec}\left(\mathbf{W}^{\left(l\right)}\right) & =\frac{\partial f_{i}\left(\boldsymbol{\theta}\right)}{\partial\boldsymbol{\alpha}^{\left(L\right)}\left(\mathbf{x}_{i}\right)}\frac{\partial\boldsymbol{\alpha}^{\left(L\right)}\left(\mathbf{x}_{i}\right)}{\partial\tilde{\boldsymbol{\alpha}}^{\left(l\right)}\left(\mathbf{x}_{i}\right)}\frac{\partial\tilde{\boldsymbol{\alpha}}^{\left(l\right)}\left(\mathbf{x}_{i}\right)}{\partial\mathrm{vec}\left(\mathbf{W}^{\left(l\right)}\right)}\mathrm{vec}\left(\mathbf{W}^{\left(l\right)}\right)\\
 & =s_{L+1}\mathbf{v}^{\top}\mathbf{h}^{\left(l\right)}\left(\mathbf{x}_{i}\right).
\end{align*}
For the final term in $\nabla f_{i}\left(\boldsymbol{\theta}\right)^{\top}\boldsymbol{\Lambda}\boldsymbol{\theta}$,
it holds that
\begin{align*}
a_{L+1}\frac{\partial f_{i}\left(\boldsymbol{\theta}\right)}{\partial\mathbf{v}}\mathbf{v} & =a_{L+1}s_{L+1}\mathbf{v}^{\top}\boldsymbol{\alpha}^{\left(L\right)}\left(\mathbf{x}_{i}\right)\\
 & =a_{L+1}y_{i}+a_{L+1}\left(s_{L+1}\mathbf{v}^{\top}\boldsymbol{\alpha}^{\left(L\right)}\left(\mathbf{x}_{i}\right)-y_{i}\right)=a_{L+1}y_{i}+a_{L+1}r_{i}\left(\boldsymbol{\theta}\right).
\end{align*}
Thus, we get
\begin{align*}
\nabla f_{i}\left(\boldsymbol{\theta}\right)^{\top}\boldsymbol{\Lambda}\boldsymbol{\theta} & =\frac{1}{\nu_{0}^{2}}s_{L+1}\mathbf{v}^{\top}\left(\sum_{l=1}^{L}a_{l}\mathbf{h}^{\left(l\right)}\left(\mathbf{x}_{i}\right)\right)+a_{L+1}y_{i}+a_{L+1}r_{i}\left(\boldsymbol{\theta}\right).
\end{align*}
Therefore, 
\begin{align*}
r_{i}\left(\boldsymbol{\theta}\right)\nabla f_{i}\left(\boldsymbol{\theta}\right)^{\top}\boldsymbol{\Lambda}\boldsymbol{\theta} & =r_{i}\left(\boldsymbol{\theta}\right)\left(\frac{1}{\nu_{0}^{2}}s_{L+1}\mathbf{v}^{\top}\left(\sum_{l=1}^{L}a_{l}\mathbf{h}^{\left(l\right)}\left(\mathbf{x}_{i}\right)\right)+a_{L+1}y_{i}\right)+2a_{L+1}\frac{1}{2}\left|r_{i}\left(\boldsymbol{\theta}\right)\right|^{2}.
\end{align*}
Using this, we get
\begin{align*}
 & -\beta^{-1}\mathbb{E}\left[\mathbf{r}\left(\boldsymbol{\theta}_{t}\right)^{\top}\nabla f\left(\boldsymbol{\theta}_{t}\right)\boldsymbol{\Lambda}\boldsymbol{\theta}_{t}\right]\\
 & =-\beta^{-1}\sum_{i=1}^{n}\mathbb{E}\left[r_{i}\left(\boldsymbol{\theta}_{t}\right)\nabla f_{i}\left(\boldsymbol{\theta}_{t}\right)^{\top}\boldsymbol{\Lambda}\boldsymbol{\theta}_{t}\right]\\
 & =-\beta^{-1}\sum_{i=1}^{n}\mathbb{E}\left[r_{i}\left(\boldsymbol{\theta}_{t}\right)\left(\frac{1}{\nu_{0}^{2}}s_{L+1}\mathbf{v}^{\top}\left(\sum_{l=1}^{L}a_{l}\mathbf{h}^{\left(l\right)}\left(\mathbf{x}_{i}\right)\right)+a_{L+1}y_{i}\right)\right]\\
 & \quad-\beta^{-1}\sum_{i=1}^{n}\mathbb{E}\left[2a_{L+1}\frac{1}{2}\left|r_{i}\left(\boldsymbol{\theta}_{t}\right)\right|^{2}\right]\\
 & \leq\beta^{-1}\sum_{i=1}^{n}\mathbb{E}\left[\left|r_{i}\left(\boldsymbol{\theta}_{t}\right)\left(\frac{1}{\nu_{0}^{2}}s_{L+1}\mathbf{v}^{\top}\left(\sum_{l=1}^{L}a_{l}\mathbf{h}^{\left(l\right)}\left(\mathbf{x}_{i}\right)\right)+a_{L+1}y_{i}\right)\right|\right]\\
 & \quad-\beta^{-1}\sum_{i=1}^{n}\mathbb{E}\left[2a_{L+1}\frac{1}{2}\left|r_{i}\left(\boldsymbol{\theta}_{t}\right)\right|^{2}\right]\\
 & =\beta^{-1}\sum_{i=1}^{n}\mathbb{E}\left[\left|r_{i}\left(\boldsymbol{\theta}_{t}\right)\left(\frac{1}{\nu_{0}^{2}}s_{L+1}\mathbf{v}^{\top}\left(\sum_{l=1}^{L}a_{l}\mathbf{h}^{\left(l\right)}\left(\mathbf{x}_{i}\right)\right)+a_{L+1}y_{i}\right)\right|\right]\\
 & \quad-2\beta^{-1}na_{L+1}\mathbb{E}\left[\mathcal{L}\left(\boldsymbol{\theta}_{t}\right)\right].
\end{align*}
Focusing on the first term, we have that 
\begin{align*}
 & \sum_{i=1}^{n}\mathbb{E}\left[\left|r_{i}\left(\boldsymbol{\theta}_{t}\right)\left(\frac{1}{\nu_{0}^{2}}s_{L+1}\mathbf{v}^{\top}\left(\sum_{l=1}^{L}a_{l}\mathbf{h}^{\left(l\right)}\left(\mathbf{x}_{i}\right)\right)+a_{L+1}y_{i}\right)\right|\right]\\
 & \overset{(i)}{\leq}\sum_{i=1}^{n}\sqrt{\mathbb{E}\left[\left|r_{i}\left(\boldsymbol{\theta}_{t}\right)\right|^{2}\right]}\sqrt{\mathbb{E}\left[\left(\frac{1}{\nu_{0}^{2}}s_{L+1}\mathbf{v}^{\top}\left(\sum_{l=1}^{L}a_{l}\mathbf{h}^{\left(l\right)}\left(\mathbf{x}_{i}\right)\right)+a_{L+1}y_{i}\right)^{2}\right]}\\
 & \overset{(ii)}{\leq}\sum_{i=1}^{n}\sqrt{\mathbb{E}\left[\left|r_{i}\left(\boldsymbol{\theta}_{t}\right)\right|^{2}\right]}\sqrt{2\left(a_{L+1}^{2}y_{i}^{2}+\frac{1}{\nu_{0}^{4}}s_{L+1}^{2}\mathbb{E}\left[\left(\mathbf{v}^{\top}\left(\sum_{l=1}^{L}a_{l}\mathbf{h}^{\left(l\right)}\left(\mathbf{x}_{i}\right)\right)\right)^{2}\right]\right)}\\
 & \overset{(iii)}{\leq}\sum_{i=1}^{n}\sqrt{\mathbb{E}\left[\left|r_{i}\left(\boldsymbol{\theta}_{t}\right)\right|^{2}\right]}\left(2a_{L+1}\left|y_{i}\right|+2\frac{1}{\nu_{0}^{2}}s_{L+1}\sqrt{\mathbb{E}\left[\left(\mathbf{v}^{\top}\left(\sum_{l=1}^{L}a_{l}\mathbf{h}^{\left(l\right)}\left(\mathbf{x}_{i}\right)\right)\right)^{2}\right]}\right),
\end{align*}
where $(i)$ uses Cauchy--Schwarz, $(ii)$ uses $\left(a+b\right)^{2}\leq2a^{2}+2b^{2}$
and $(iii)$ uses $\sqrt{a+b}\leq\sqrt{a}+\sqrt{b}$. If we find some
expression $Q$, independent of $i$, such that 
\begin{align*}
\mathbb{E}\left[\left(\mathbf{v}^{\top}\left(\sum_{l=1}^{L}a_{l}\mathbf{h}^{\left(l\right)}\left(\mathbf{x}_{i}\right)\right)\right)^{2}\right] & \leq Q^{2},
\end{align*}
then we get 
\begin{align*}
 & \sum_{i=1}^{n}\mathbb{E}\left[\left|r_{i}\left(\boldsymbol{\theta}_{t}\right)\left(\frac{1}{\nu_{0}^{2}}s_{L+1}\mathbf{v}^{\top}\left(\sum_{l=1}^{L}a_{l}\mathbf{h}^{\left(l\right)}\left(\mathbf{x}_{i}\right)\right)+a_{L+1}y_{i}\right)\right|\right]\\
 & \leq\sum_{i=1}^{n}\sqrt{\mathbb{E}\left[\left|r_{i}\left(\boldsymbol{\theta}_{t}\right)\right|^{2}\right]}\left(2a_{L+1}\left|y_{i}\right|+2\frac{1}{\nu_{0}^{2}}s_{L+1}Q\right)\\
 & \leq\left(2a_{L+1}R_{y}+2\frac{1}{\nu_{0}^{2}}s_{L+1}Q\right)n\frac{1}{n}\sum_{i=1}^{n}\sqrt{\mathbb{E}\left[\left|r_{i}\left(\boldsymbol{\theta}_{t}\right)\right|^{2}\right]}\\
\left[\mathrm{Jensen}\right] & \leq\left(2a_{L+1}R_{y}+2\frac{1}{\nu_{0}^{2}}s_{L+1}Q\right)n\sqrt{\frac{1}{n}\sum_{i=1}^{n}\mathbb{E}\left[\left|r_{i}\left(\boldsymbol{\theta}_{t}\right)\right|^{2}\right]}\\
 & \leq4n\sqrt{\mathbb{E}\left[\mathcal{L}\left(\boldsymbol{\theta}_{t}\right)\right]}\left(a_{L+1}R_{y}+\frac{1}{\nu_{0}^{2}}s_{L+1}Q\right),
\end{align*}
which yields 
\begin{align*}
-\beta^{-1}\mathbb{E}\left[\mathbf{r}\left(\boldsymbol{\theta}_{t}\right)^{\top}\nabla f\left(\boldsymbol{\theta}_{t}\right)\boldsymbol{\Lambda}\boldsymbol{\theta}_{t}\right] & \leq4\beta^{-1}n\sqrt{\mathbb{E}\left[\mathcal{L}\left(\boldsymbol{\theta}_{t}\right)\right]}\left(a_{L+1}R_{y}+\frac{1}{\nu_{0}^{2}}s_{L+1}Q\right)\\
 & -2\beta^{-1}na_{L+1}\mathbb{E}\left[\mathcal{L}\left(\boldsymbol{\theta}_{t}\right)\right].
\end{align*}
Let us find such $Q$. Let $i\in\left[n\right]$. Let $\mathcal{F}\triangleq\sigma\left(\mathbf{W}^{\left(1\right)},\ldots,\mathbf{W}^{\left(L\right)}\right)$
be the $\sigma$-algebra generated by $\sigma\left(\mathbf{W}^{\left(1\right)},\ldots,\mathbf{W}^{\left(L\right)}\right)$.
It holds that
\begin{align*}
\mathbf{v}^{\top}\left(\sum_{l=1}^{L}a_{l}\mathbf{h}^{\left(l\right)}\left(\mathbf{x}_{i}\right)\right)\mid\mathcal{F} & \sim\mathcal{N}\left(0,\frac{1}{a_{L+1}}\left\Vert \sum_{l=1}^{L}a_{l}\mathbf{h}^{\left(l\right)}\left(\mathbf{x}_{i}\right)\right\Vert ^{2}\right).
\end{align*}
Therefore,
\begin{align*}
\left\Vert \mathbf{v}^{\top}\left(\sum_{l=1}^{L}a_{l}\mathbf{h}^{\left(l\right)}\left(\mathbf{x}_{i}\right)\right)\right\Vert _{L^{p}\mid\mathcal{F}} & \leq\frac{1}{\sqrt{a_{L+1}}}\left\Vert \sum_{l=1}^{L}a_{l}\mathbf{h}^{\left(l\right)}\left(\mathbf{x}_{i}\right)\right\Vert \sqrt{p}.
\end{align*}
Using \cref{claim:moments total expectation},
applying the triangle inequality and applying Minkowski's inequality,
we get
\begin{align*}
\left\Vert \mathbf{v}^{\top}\left(\sum_{l=1}^{L}a_{l}\mathbf{h}^{\left(l\right)}\left(\mathbf{x}_{i}\right)\right)\right\Vert _{L^{p}} & \leq\frac{1}{\sqrt{a_{L+1}}}\sqrt{p}\left\Vert \left\Vert \sum_{l=1}^{L}a_{l}\mathbf{h}^{\left(l\right)}\left(\mathbf{x}_{i}\right)\right\Vert \right\Vert _{L^{p}}\\
 & \leq\frac{1}{\sqrt{a_{L+1}}}\sqrt{p}\left\Vert \sum_{l=1}^{L}a_{l}\left\Vert \mathbf{h}^{\left(l\right)}\left(\mathbf{x}_{i}\right)\right\Vert \right\Vert _{L^{p}}\\
 & \leq\frac{1}{\sqrt{a_{L+1}}}\sqrt{p}\sum_{l=1}^{L}a_{l}\left\Vert \left\Vert \mathbf{h}^{\left(l\right)}\left(\mathbf{x}_{i}\right)\right\Vert \right\Vert _{L^{p}}.
\end{align*}
Observe that for every $l\in\left[L\right]$,
\begin{align*}
\mathbf{h}^{\left(l\right)}\left(\mathbf{x}_{i}\right) & =\frac{\partial\boldsymbol{\alpha}^{\left(L\right)}\left(\mathbf{x}_{i}\right)}{\partial\tilde{\boldsymbol{\alpha}}^{\left(l\right)}\left(\mathbf{x}_{i}\right)}\tilde{\boldsymbol{\alpha}}^{\left(l\right)}\left(\mathbf{x}_{i}\right)\\
 & =\left(\prod_{t=l+1}^{L}\left(\frac{\partial\boldsymbol{\alpha}^{\left(t\right)}\left(\mathbf{x}_{i}\right)}{\partial\boldsymbol{\alpha}^{\left(t-1\right)}\left(\mathbf{x}_{i}\right)}\right)^{\top}\right)^{\top}\frac{\partial\boldsymbol{\alpha}^{\left(l\right)}\left(\mathbf{x}_{i}\right)}{\partial\tilde{\boldsymbol{\alpha}}^{\left(l\right)}\left(\mathbf{x}_{i}\right)}\tilde{\boldsymbol{\alpha}}^{\left(l\right)}\left(\mathbf{x}_{i}\right)\\
 & =\left(\prod_{t=l+1}^{L}s_{t}\left(\mathbf{W}^{\left(t\right)}\right)^{\top}\mathrm{diag}\left(\phi'\left(\tilde{\boldsymbol{\alpha}}^{\left(t\right)}\left(\mathbf{x}_{i}\right)\right)\right)\right)^{\top}\mathrm{diag}\left(\phi'\left(\tilde{\boldsymbol{\alpha}}^{\left(l\right)}\left(\mathbf{x}_{i}\right)\right)\right)\tilde{\boldsymbol{\alpha}}^{\left(l\right)}\left(\mathbf{x}_{i}\right).
\end{align*}
Therefore, 
\begin{align*}
\left\Vert \mathbf{h}^{\left(l\right)}\left(\mathbf{x}_{i}\right)\right\Vert  & \leq\alpha_{\phi}^{L-l+1}\left(\prod_{t=l+1}^{L}s_{t}\left\Vert \mathbf{W}^{\left(t\right)}\right\Vert \right)\left\Vert \tilde{\boldsymbol{\alpha}}^{\left(l\right)}\left(\mathbf{x}_{i}\right)\right\Vert \\
 & \leq\alpha_{\phi}^{L-l+1}\left(\prod_{t=l}^{L}s_{t}\left\Vert \mathbf{W}^{\left(t\right)}\right\Vert \right)\left\Vert \boldsymbol{\alpha}^{\left(l-1\right)}\left(\mathbf{x}_{i}\right)\right\Vert .
\end{align*}
Since $\left\Vert \boldsymbol{\alpha}^{\left(l-1\right)}\left(\mathbf{x}_{i}\right)\right\Vert $
is independent of weights in upper layers, it holds that
\begin{align*}
\left\Vert \left\Vert \mathbf{h}^{\left(l\right)}\left(\mathbf{x}_{i}\right)\right\Vert \right\Vert _{L^{p}} & \leq\alpha_{\phi}^{L-l+1}\left(\prod_{t=l}^{L}s_{t}\left\Vert \left\Vert \mathbf{W}^{\left(t\right)}\right\Vert \right\Vert _{L^{p}}\right)\left\Vert \left\Vert \boldsymbol{\alpha}^{\left(l-1\right)}\left(\mathbf{x}_{i}\right)\right\Vert \right\Vert _{L^{p}}.
\end{align*}
Using \cref{prop:param_norm_bounds,prop:activation_norm_bound},
we get that there exists a constant $c>0$ such that, at initialization,
\begin{align*}
\left\Vert \left\Vert \mathbf{h}^{\left(l\right)}\left(\mathbf{x}_{i}\right)\right\Vert \right\Vert _{L^{p}} & \leq\sqrt{\left(c\alpha_{\phi}^{2}\nu_{0}^{2}\right)^{L}\left\Vert \mathbf{x}_{i}\right\Vert ^{2}\left(\prod_{t=1}^{l-1}\frac{s_{t}^{2}}{a_{t}}\left(m_{t}+p\right)\right)\left(\prod_{t=l}^{L}\frac{s_{t}^{2}}{a_{t}}\left(m_{t-1}+m_{t}+p\right)\right)}.
\end{align*}
Thus, we get that
\begin{align*}
 & \left\Vert \mathbf{v}^{\top}\left(\sum_{l=1}^{L}a_{l}\mathbf{h}^{\left(l\right)}\left(\mathbf{x}_{i}\right)\right)\right\Vert _{L^{p}}\\
 & \leq\frac{1}{\sqrt{a_{L+1}}}\sqrt{p}\sum_{l=1}^{L}a_{l}\sqrt{\left(c\alpha_{\phi}^{2}\nu_{0}^{2}\right)^{L}\left\Vert \mathbf{x}_{i}\right\Vert ^{2}\left(\prod_{t=1}^{l-1}\frac{s_{t}^{2}}{a_{t}}\left(m_{t}+p\right)\right)\left(\prod_{t=l}^{L}\frac{s_{t}^{2}}{a_{t}}\left(m_{t-1}+m_{t}+p\right)\right)}\\
 & =\sum_{l=1}^{L}\sqrt{\left(c\alpha_{\phi}^{2}\nu_{0}^{2}\right)^{L}\left\Vert \mathbf{x}_{i}\right\Vert ^{2}a_{l}^{2}\frac{1}{a_{L+1}}p\left(\prod_{t=1}^{l-1}\frac{s_{t}^{2}}{a_{t}}\left(m_{t}+p\right)\right)\left(\prod_{t=l}^{L}\frac{s_{t}^{2}}{a_{t}}\left(m_{t-1}+m_{t}+p\right)\right)}\\
 & =L\frac{1}{L}\sum_{l=1}^{L}\sqrt{\left(c\alpha_{\phi}^{2}\nu_{0}^{2}\right)^{L}\left\Vert \mathbf{x}_{i}\right\Vert ^{2}a_{l}^{2}\frac{1}{a_{L+1}}p\left(\prod_{t=1}^{l-1}\frac{s_{t}^{2}}{a_{t}}\left(m_{t}+p\right)\right)\left(\prod_{t=l}^{L}\frac{s_{t}^{2}}{a_{t}}\left(m_{t-1}+m_{t}+p\right)\right)}\\
 & \overset{(*)}{\leq}L\sqrt{\frac{1}{L}\sum_{l=1}^{L}\left(c\alpha_{\phi}^{2}\nu_{0}^{2}\right)^{L}\left\Vert \mathbf{x}_{i}\right\Vert ^{2}a_{l}^{2}\frac{1}{a_{L+1}}p\left(\prod_{t=1}^{l-1}\frac{s_{t}^{2}}{a_{t}}\left(m_{t}+p\right)\right)\left(\prod_{t=l}^{L}\frac{s_{t}^{2}}{a_{t}}\left(m_{t-1}+m_{t}+p\right)\right)}\\
 & =L^{1/2}\sqrt{\left(c\alpha_{\phi}^{2}\nu_{0}^{2}\right)^{L}\left\Vert \mathbf{x}_{i}\right\Vert ^{2}\frac{1}{a_{L+1}}p\sum_{l=1}^{L}a_{l}^{2}\left(\prod_{t=1}^{l-1}\frac{s_{t}^{2}}{a_{t}}\left(m_{t}+p\right)\right)\left(\prod_{t=l}^{L}\frac{s_{t}^{2}}{a_{t}}\left(m_{t-1}+m_{t}+p\right)\right)},
\end{align*}
where in $(*)$ we use Jensen's inequality. Using \cref{claim:moment of power}, we get that at initialization, 
\begin{align*}
 & \left\Vert \left(\mathbf{v}^{\top}\left(\sum_{l=1}^{L}a_{l}\mathbf{h}^{\left(l\right)}\left(\mathbf{x}_{i}\right)\right)\right)^{2}\right\Vert _{L^{p}}\\
 & \leq\left\Vert \mathbf{v}^{\top}\left(\sum_{l=1}^{L}a_{l}\mathbf{h}^{\left(l\right)}\left(\mathbf{x}_{i}\right)\right)\right\Vert _{L^{2p}}^{2}\\
 & \leq L\left(c\alpha_{\phi}^{2}\nu_{0}^{2}\right)^{L}\left\Vert \mathbf{x}_{i}\right\Vert ^{2}\frac{1}{a_{L+1}}2p\sum_{l=1}^{L}a_{l}^{2}\left(\prod_{t=1}^{l-1}\frac{s_{t}^{2}}{a_{t}}\left(m_{t}+2p\right)\right)\left(\prod_{t=l}^{L}\frac{s_{t}^{2}}{a_{t}}\left(m_{t-1}+m_{t}+2p\right)\right)\\
 & \leq c_{\phi,\sigma_{0}}L\left(c\alpha_{\phi}^{2}\nu_{0}^{2}\right)^{L}d\frac{1}{a_{L+1}}2p\sum_{l=1}^{L}a_{l}^{2}\left(\prod_{t=1}^{l-1}\frac{s_{t}^{2}}{a_{t}}\left(m_{t}+2p\right)\right)\left(\prod_{t=l}^{L}\frac{s_{t}^{2}}{a_{t}}\left(m_{t-1}+m_{t}+2p\right)\right).
\end{align*}
Together with \cref{eq:second_law_with_eta_deep}, we get
\begin{align*}
 & \mathbb{E}_{p_{t}}\left[\left(\mathbf{v}^{\top}\left(\sum_{l=1}^{L}a_{l}\mathbf{h}^{\left(l\right)}\left(\mathbf{x}_{i}\right)\right)\right)^{2}\right]\\
 & \leq\eta c_{\phi,\sigma_{0}}L\left(c\alpha_{\phi}^{2}\nu_{0}^{2}\right)^{L}d\frac{1}{a_{L+1}}2c_{\beta}\bar{\kappa}^{2}\beta\\
 & \quad\cdot\sum_{l=1}^{L}a_{l}^{2}\left(\prod_{t=1}^{l-1}\frac{s_{t}^{2}}{a_{t}}\left(m_{t}+2c_{\beta}\bar{\kappa}^{2}\beta\right)\right)\left(\prod_{t=l}^{L}\frac{s_{t}^{2}}{a_{t}}\left(m_{t-1}+m_{t}+2c_{\beta}\bar{\kappa}^{2}\beta\right)\right).
\end{align*}
Thus, the right-hand side is a valid choice for $Q^{2}$. Back to
the main inequality, we get,
\begin{align*}
 & -\beta^{-1}\mathbb{E}\left[\mathbf{r}\left(\boldsymbol{\theta}_{t}\right)^{\top}\nabla f\left(\boldsymbol{\theta}_{t}\right)\boldsymbol{\Lambda}\boldsymbol{\theta}_{t}\right]\\
 & \lesssim\beta^{-1}n\sqrt{\mathbb{E}\left[\mathcal{L}\left(\boldsymbol{\theta}_{t}\right)\right]}\\
 & \quad\cdot\Bigg(a_{L+1}R_{y}+\frac{1}{\nu_{0}^{2}}s_{L+1}\sqrt{\eta L\left(c\alpha_{\phi}^{2}\nu_{0}^{2}\right)^{L}d\frac{1}{a_{L+1}}2c_{\beta}\bar{\kappa}^{2}\beta}\\
 & \quad\quad\quad\cdot\sqrt{\sum_{l=1}^{L}a_{l}^{2}\left(\prod_{t=1}^{l-1}\frac{s_{t}^{2}}{a_{t}}\left(m_{t}+2c_{\beta}\bar{\kappa}^{2}\beta\right)\right)\left(\prod_{t=l}^{L}\frac{s_{t}^{2}}{a_{t}}\left(m_{t-1}+m_{t}+2c_{\beta}\bar{\kappa}^{2}\beta\right)\right)}\Bigg)\\
 & \quad-2\beta^{-1}na_{L+1}\mathbb{E}\left[\mathcal{L}\left(\boldsymbol{\theta}_{t}\right)\right].
\end{align*}

\subsection{Bounding the entire right-hand side}

Recall that 
\begin{align*}
\frac{\mathrm{d}}{\mathrm{d}t}\mathbb{E}\left[\mathcal{L}\left(\boldsymbol{\theta}_{t}\right)\right]=\frac{1}{n}\biggl( & \underbrace{-n^{-1}\mathbb{E}\left[\mathbf{r}\left(\boldsymbol{\theta}_{t}\right)^{\top}\mathbf{K}_{\mathrm{NTK}}\left(\boldsymbol{\theta}_{t}\right)\mathbf{r}\left(\boldsymbol{\theta}_{t}\right)\right]}_{T_{1}}\\
 & +\underbrace{\beta^{-1}\mathbb{E}\left[\mathrm{Tr}\left(\nabla f\left(\boldsymbol{\theta}_{t}\right)^{\top}\nabla f\left(\boldsymbol{\theta}_{t}\right)\right)\right]}_{T_{2}}\\
 & +\underbrace{\beta^{-1}\mathbb{E}\left[\sum_{i=1}^{n}r_{i}\left(\boldsymbol{\theta}_{t}\right)\Delta f_{i}\left(\boldsymbol{\theta}_{t}\right)\right]}_{T_{3}}\\
 & \underbrace{-\beta^{-1}\mathbb{E}\left[\mathbf{r}\left(\boldsymbol{\theta}_{t}\right)^{\top}\nabla f\left(\boldsymbol{\theta}_{t}\right)\boldsymbol{\Lambda}\boldsymbol{\theta}_{t}\right]\biggr)}_{T_{4}}.
\end{align*}
We have shown that there exists a constant $C>0$ (enlarging $C$
if necessary, the same constant may be used in all four bounds) such
that
\begin{align*}
T_{1}\lesssim & -\lambda_{1}s_{L+1}^{2}m\mathbb{E}\left[\mathcal{L}\left(\boldsymbol{\theta}_{t}\right)\right]+\eta\lambda_{1}s_{L+1}^{2}m\left(1+c_{\beta}\bar{\kappa}^{4}\beta\right)\exp\left(-\tilde{c_{1}}m\left(c_{\beta}\bar{\kappa}^{2}\beta\frac{n}{\lambda_{1}}\ln\frac{n}{\lambda_{1}}\right)^{-1}\right);\\
T_{2}\lesssim & C^{2L}\eta\beta^{-1}nds_{L+1}^{2}\left(m_{L}+c_{\beta}\bar{\kappa}^{2}\beta\right)\\
 & \cdot\left(\frac{s_{L}^{2}}{a_{L}}\prod_{k=1}^{L-1}\frac{s_{k}^{2}}{a_{k}}\left(m_{k}+c_{\beta}\bar{\kappa}^{2}\beta\right)\right.\\
 & \left.\quad+\frac{1}{a_{L+1}}\sum_{l=1}^{L}\left(\prod_{k=1}^{l-1}\frac{s_{k}^{2}}{a_{k}}\left(m_{k}+c_{\beta}\bar{\kappa}^{2}\beta\right)\right)s_{l}^{2}\left(\prod_{k=l+1}^{L}\frac{s_{k}^{2}}{a_{k}}\left(m_{k-1}+m_{k}+c_{\beta}\bar{\kappa}^{2}\beta\right)\right)\right);\\
T_{3}\lesssim & C^{3L}\sqrt{\eta}\beta^{-1}nd\\
 & \cdot\sum_{l=1}^{L}\sum_{q=l}^{L}\left(\prod_{t=0}^{l-2}\frac{s_{t+1}^{2}}{a_{t+1}}\left(m_{t}+m_{t+1}+4c_{\beta}\bar{\kappa}^{2}\beta\right)\right)s_{l}^{2}\\
 & \quad\quad\quad\quad\cdot\left(\prod_{t=l}^{q-1}\frac{s_{t+1}^{2}}{a_{t+1}}\left(m_{t}+m_{t+1}+4c_{\beta}\bar{\kappa}^{2}\beta\right)\right)\\
 & \quad\quad\quad\quad\cdot\sqrt{m_{q}}\left(\prod_{t=q}^{L-1}\sqrt{\frac{s_{t+1}^{2}}{a_{t+1}}\left(m_{t}+m_{t+1}+4c_{\beta}\bar{\kappa}^{2}\beta\right)}\right)\\
 & \quad\quad\quad\quad\cdot\sqrt{\frac{s_{L+1}^{2}}{a_{L+1}}c_{\beta}\bar{\kappa}^{2}\beta}\sqrt{\mathbb{E}\left[\mathcal{L}\left(\boldsymbol{\theta}_{t}\right)\right]};\\
T_{4} & \lesssim\beta^{-1}n\sqrt{\mathbb{E}\left[\mathcal{L}\left(\boldsymbol{\theta}_{t}\right)\right]}\\
 & \quad\cdot\Bigg(a_{L+1}+s_{L+1}\sqrt{\eta LC^{L}d\frac{1}{a_{L+1}}2c_{\beta}\bar{\kappa}^{2}\beta}\\
 & \quad\quad\quad\cdot\sqrt{\sum_{l=1}^{L}a_{l}^{2}\left(\prod_{t=1}^{l-1}\frac{s_{t}^{2}}{a_{t}}\left(m_{t}+2c_{\beta}\bar{\kappa}^{2}\beta\right)\right)\left(\prod_{t=l}^{L}\frac{s_{t}^{2}}{a_{t}}\left(m_{t-1}+m_{t}+2c_{\beta}\bar{\kappa}^{2}\beta\right)\right)}\Bigg)\\
 & \quad-\beta^{-1}na_{L+1}\mathbb{E}\left[\mathcal{L}\left(\boldsymbol{\theta}_{t}\right)\right].
\end{align*}
From now on we shall assume $m_{1}=\cdots=m_{L}=m$, $d\leq m$ and
$m>4c_{\beta}\bar{\kappa}^{2}\beta$ (the third will be required later
anyway). We shall also treat the depth $L$ as a fixed constant, thereby
making $C^{L}$ a constant. Lastly, we apply the substitution $\kappa=\sqrt{\frac{s_{L+1}^{2}m}{a_{L+1}}}$
wherever applicable. This allows us to simplify the bounds above to
\begin{align*}
T_{1}\lesssim & -\lambda_{1}s_{L+1}^{2}m\mathbb{E}\left[\mathcal{L}\left(\boldsymbol{\theta}_{t}\right)\right]+\eta\lambda_{1}s_{L+1}^{2}m\left(1+c_{\beta}\bar{\kappa}^{4}\beta\right)\exp\left(-\tilde{c_{1}}m\left(c_{\beta}\bar{\kappa}^{2}\beta\frac{n}{\lambda_{1}}\ln\frac{n}{\lambda_{1}}\right)^{-1}\right);\\
T_{2}\lesssim & \eta\beta^{-1}nds_{L+1}^{2}m\left(\frac{s_{L}^{2}}{a_{L}}\prod_{k=1}^{L-1}\frac{s_{k}^{2}}{a_{k}}m+\frac{1}{a_{L+1}}\sum_{l=1}^{L}\left(\prod_{k=1}^{l-1}\frac{s_{k}^{2}}{a_{k}}m\right)s_{l}^{2}\left(\prod_{k=l+1}^{L}\frac{s_{k}^{2}}{a_{k}}m\right)\right);\\
T_{3}\lesssim & \sqrt{\eta c_{\beta}\kappa^{2}\bar{\kappa}^{2}}\beta^{-1/2}nd\\
 & \cdot\sum_{l=1}^{L}\left(\prod_{t=0}^{l-2}\frac{s_{t+1}^{2}}{a_{t+1}}m\right)s_{l}^{2}\sum_{q=l}^{L}\left(\prod_{t=l}^{q-1}\frac{s_{t+1}^{2}}{a_{t+1}}m\right)\left(\prod_{t=q}^{L-1}\sqrt{\frac{s_{t+1}^{2}}{a_{t+1}}m}\right)\sqrt{\mathbb{E}\left[\mathcal{L}\left(\boldsymbol{\theta}_{t}\right)\right]};\\
T_{4} & \lesssim\beta^{-1}n\left(a_{L+1}+\sqrt{\eta c_{\beta}d\kappa^{2}\bar{\kappa}^{2}\frac{\beta}{m}}\sqrt{\left(\prod_{t=1}^{L}\frac{s_{t}^{2}}{a_{t}}m\right)\sum_{l=1}^{L}a_{l}^{2}}\right)\sqrt{\mathbb{E}\left[\mathcal{L}\left(\boldsymbol{\theta}_{t}\right)\right]}\\
 & -\beta^{-1}na_{L+1}\mathbb{E}\left[\mathcal{L}\left(\boldsymbol{\theta}_{t}\right)\right].
\end{align*}
Substituting these bounds, we get 
\begin{align}
 & \frac{\mathrm{d}}{\mathrm{d}t}\mathbb{E}\left[\mathcal{L}\left(\boldsymbol{\theta}_{t}\right)\right]\nonumber \\
 & \lesssim\frac{1}{n}\biggl(-\lambda_{1}s_{L+1}^{2}m\mathbb{E}\left[\mathcal{L}\left(\boldsymbol{\theta}_{t}\right)\right]\nonumber \\
 & \quad+\eta\lambda_{1}s_{L+1}^{2}m\left(1+c_{\beta}\bar{\kappa}^{4}\beta\right)\exp\left(-\tilde{c_{1}}m\left(c_{\beta}\bar{\kappa}^{2}\beta\frac{n}{\lambda_{1}}\ln\frac{n}{\lambda_{1}}\right)^{-1}\right)\nonumber \\
 & \quad+\eta\beta^{-1}nds_{L+1}^{2}m\left(\frac{s_{L}^{2}}{a_{L}}\prod_{k=1}^{L-1}\frac{s_{k}^{2}}{a_{k}}m+\frac{1}{a_{L+1}}\sum_{l=1}^{L}\left(\prod_{k=1}^{l-1}\frac{s_{k}^{2}}{a_{k}}m\right)s_{l}^{2}\left(\prod_{k=l+1}^{L}\frac{s_{k}^{2}}{a_{k}}m\right)\right)\nonumber \\
 & \quad+\sqrt{\eta c_{\beta}\kappa^{2}\bar{\kappa}^{2}}\beta^{-\frac{1}{2}}nd\sum_{l=1}^{L}\left(\prod_{t=0}^{l-2}\frac{s_{t+1}^{2}}{a_{t+1}}m\right)s_{l}^{2}\sum_{q=l}^{L}\left(\prod_{t=l}^{q-1}\frac{s_{t+1}^{2}}{a_{t+1}}m\right)\left(\prod_{t=q}^{L-1}\sqrt{\frac{s_{t+1}^{2}}{a_{t+1}}m}\right)\sqrt{\mathbb{E}\left[\mathcal{L}\left(\boldsymbol{\theta}_{t}\right)\right]}\label{eq:4term}\\
 & \quad+\beta^{-1}n\left(a_{L+1}+\sqrt{\eta c_{\beta}d\kappa^{2}\bar{\kappa}^{2}\frac{\beta}{m}}\sqrt{\left(\prod_{t=1}^{L}\frac{s_{t}^{2}}{a_{t}}m\right)\sum_{l=1}^{L}a_{l}^{2}}\right)\sqrt{\mathbb{E}\left[\mathcal{L}\left(\boldsymbol{\theta}_{t}\right)\right]}\nonumber \\
 & \quad-\beta^{-1}na_{L+1}\mathbb{E}\left[\mathcal{L}\left(\boldsymbol{\theta}_{t}\right)\right]\Bigg).\nonumber 
\end{align}
From now on we shall also assume $\frac{a_{l}}{s_{l}^{2}}=m$ for
$2\leq l\leq L$. We shall deal with sums over layers carefully, as
the first summand sometimes does not incur a factor of $\frac{s_{1}^{2}}{a_{1}}$
where the rest of the terms do (as a result of some products being
empty, and thus defined as 1). Specifically, the fourth term \cref{eq:4term} satisfies
\begin{align*}
 & \sqrt{\eta c_{\beta}\kappa^{2}\bar{\kappa}^{2}}\beta^{-1/2}nd\sum_{l=1}^{L}\underbrace{\left(\prod_{t=0}^{l-2}\frac{s_{t+1}^{2}}{a_{t+1}}m\right)}_{=1\textrm{ for }l=1}s_{l}^{2}\sum_{q=l}^{L}\left(\prod_{t=l}^{q-1}\frac{s_{t+1}^{2}}{a_{t+1}}m\right)\left(\prod_{t=q}^{L-1}\sqrt{\frac{s_{t+1}^{2}}{a_{t+1}}m}\right)\sqrt{\mathbb{E}\left[\mathcal{L}\left(\boldsymbol{\theta}_{t}\right)\right]}\\
 & =\sqrt{\eta c_{\beta}\kappa^{2}\bar{\kappa}^{2}}\beta^{-1/2}nd\left(s_{1}^{2}+\frac{s_{1}^{2}}{a_{1}}m\sum_{l=2}^{L}s_{l}^{2}\right)\sqrt{\mathbb{E}\left[\mathcal{L}\left(\boldsymbol{\theta}_{t}\right)\right]}.
\end{align*}
This allows us to simplify to
\begin{align*}
\lesssim\frac{1}{n}\biggl( & \overbrace{-\lambda_{1}s_{L+1}^{2}m\mathbb{E}\left[\mathcal{L}\left(\boldsymbol{\theta}_{t}\right)\right]}^{N_{1}}\\
 & +\eta\lambda_{1}s_{L+1}^{2}m\left(1+c_{\beta}\bar{\kappa}^{4}\beta\right)\exp\left(-\tilde{c_{1}}m\left(c_{\beta}\bar{\kappa}^{2}\beta\frac{n}{\lambda_{1}}\ln\frac{n}{\lambda_{1}}\right)^{-1}\right)\\
 & +\eta\beta^{-1}nds_{L+1}^{2}m\left(\frac{s_{1}^{2}}{a_{1}}+\frac{1}{a_{L+1}}s_{1}^{2}+\frac{s_{1}^{2}}{a_{1}}m\frac{1}{a_{L+1}}\sum_{l=2}^{L}s_{l}^{2}\right)\\
 & +\sqrt{\eta c_{\beta}\kappa^{2}\bar{\kappa}^{2}}\beta^{-1/2}nd\left(s_{1}^{2}+\frac{s_{1}^{2}}{a_{1}}m\sum_{l=2}^{L}s_{l}^{2}\right)\sqrt{\mathbb{E}\left[\mathcal{L}\left(\boldsymbol{\theta}_{t}\right)\right]}\\
 & +\beta^{-1}n\left(a_{L+1}+\sqrt{\eta c_{\beta}d\kappa^{2}\bar{\kappa}^{2}\frac{\beta}{m}}\sqrt{\frac{s_{1}^{2}}{a_{1}}m\sum_{l=1}^{L}a_{l}^{2}}\right)\sqrt{\mathbb{E}\left[\mathcal{L}\left(\boldsymbol{\theta}_{t}\right)\right]}\\
 & \underbrace{-\beta^{-1}na_{L+1}\mathbb{E}\left[\mathcal{L}\left(\boldsymbol{\theta}_{t}\right)\right]}_{N_{2}}\Bigg).
\end{align*}
The negative regularization contribution $N_{2}$ is not useful for
closing the small-loss differential inequality. Observe that the term
above $N_{2}$ includes $\beta^{-1}na_{L+1}\sqrt{\mathbb{E}\left[\mathcal{L}(\boldsymbol{\theta}_{t})\right]}$.
For $u=\mathbb{E}\left[\mathcal{L}(\boldsymbol{\theta}_{t})\right]\leq1$,
the sum
\begin{align*}
-\beta^{-1}na_{L+1}u+\beta^{-1}na_{L+1}\sqrt{u}
\end{align*}

is nonnegative. We therefore close the inequality using only the NTK
descent term $N_{1}$, while treating all $\sqrt{u}$-terms as error
terms to be absorbed by Young's inequality. Discarding $N_{2}$ gives
\begin{align*}
\cdots\lesssim\frac{1}{n}\Bigg[ & -\lambda_{1}s_{L+1}^{2}m\mathbb{E}\left[\mathcal{L}\left(\boldsymbol{\theta}_{t}\right)\right]\\
 & +\beta^{-1}n\left(\sqrt{\eta c_{\beta}\kappa^{2}\bar{\kappa}^{2}}\beta^{1/2}d\left(s_{1}^{2}+\frac{s_{1}^{2}}{a_{1}}m\sum_{l=2}^{L}s_{l}^{2}\right)+a_{L+1}+\sqrt{\eta c_{\beta}\kappa^{2}\bar{\kappa}^{2}\beta\frac{s_{1}^{2}}{a_{1}}d\sum_{l=1}^{L}a_{l}^{2}}\right)\\
 & \quad\cdot\sqrt{\mathbb{E}\left[\mathcal{L}\left(\boldsymbol{\theta}_{t}\right)\right]}\\
 & +\eta\beta^{-1}nds_{L+1}^{2}m\left(\frac{s_{1}^{2}}{a_{1}}+\frac{1}{a_{L+1}}s_{1}^{2}+\frac{s_{1}^{2}}{a_{1}}m\frac{1}{a_{L+1}}\sum_{l=2}^{L}s_{l}^{2}\right)\\
 & +\lambda_{1}\eta s_{L+1}^{2}m\left(1+c_{\beta}\bar{\kappa}^{4}\beta\right)\exp\left(-\tilde{c_{1}}m\left(c_{\beta}\bar{\kappa}^{2}\beta\frac{n}{\lambda_{1}}\ln\frac{n}{\lambda_{1}}\right)^{-1}\right)\Bigg].
\end{align*}
Using Young's inequality $ab\leq\frac{a^{2}}{2\varepsilon}+\frac{\varepsilon}{2}b^{2}$
with $\varepsilon=\lambda_{1}s_{L+1}^{2}m$, $b=\sqrt{\mathbb{E}\left[\mathcal{L}\left(\boldsymbol{\theta}_{t}\right)\right]}$
and $a$ being the coefficient of $\sqrt{\mathbb{E}\left[\mathcal{L}\left(\boldsymbol{\theta}_{t}\right)\right]}$,
we get
\begin{align*}
\cdots\lesssim\frac{1}{n}\Bigg[ & -\frac{\lambda_{1}s_{L+1}^{2}m}{2}\mathbb{E}\left[\mathcal{L}\left(\boldsymbol{\theta}_{t}\right)\right]\\
 & +\frac{\beta^{-2}n^{2}}{2\lambda_{1}s_{L+1}^{2}m}\left(\sqrt{\eta c_{\beta}\kappa^{2}\bar{\kappa}^{2}}\beta^{1/2}d\left(s_{1}^{2}+\frac{s_{1}^{2}}{a_{1}}m\sum_{l=2}^{L}s_{l}^{2}\right)\right.\\
 & \qquad\qquad\qquad\qquad\qquad\qquad\left.+a_{L+1}+\sqrt{\eta c_{\beta}\kappa^{2}\bar{\kappa}^{2}\beta\frac{s_{1}^{2}}{a_{1}}d\sum_{l=1}^{L}a_{l}^{2}}\right)^{2}\\
 & +\eta\beta^{-1}nds_{L+1}^{2}m\left(\frac{s_{1}^{2}}{a_{1}}+\frac{1}{a_{L+1}}s_{1}^{2}+\frac{s_{1}^{2}}{a_{1}}m\frac{1}{a_{L+1}}\sum_{l=2}^{L}s_{l}^{2}\right)\\
 & +\lambda_{1}\eta s_{L+1}^{2}m\left(1+c_{\beta}\bar{\kappa}^{4}\beta\right)\exp\left(-\tilde{c_{1}}m\left(c_{\beta}\bar{\kappa}^{2}\beta\frac{n}{\lambda_{1}}\ln\frac{n}{\lambda_{1}}\right)^{-1}\right)\Bigg]\\
\lesssim\frac{1}{n}\Bigg[ & -\lambda_{1}s_{L+1}^{2}m\mathbb{E}\left[\mathcal{L}\left(\boldsymbol{\theta}_{t}\right)\right]\\
 & +\frac{\beta^{-2}n^{2}}{2\lambda_{1}s_{L+1}^{2}m}\left(c_{\beta}\kappa^{2}\bar{\kappa}^{2}\beta d^{2}\left(s_{1}^{4}+\left(\frac{s_{1}^{2}}{a_{1}}m\right)^{2}\left(\sum_{l=2}^{L}s_{l}^{2}\right)^{2}\right)\right.\\
 & \qquad\qquad\qquad\qquad\qquad\qquad\left.+a_{L+1}^{2}+c_{\beta}\kappa^{2}\bar{\kappa}^{2}\beta\frac{s_{1}^{2}}{a_{1}}d\sum_{l=1}^{L}a_{l}^{2}\right)\\
 & +\beta^{-1}ns_{L+1}^{2}m\left(\frac{s_{1}^{2}}{a_{1}}d+\frac{1}{a_{L+1}}s_{1}^{2}d+\frac{s_{1}^{2}}{a_{1}}dm\frac{1}{a_{L+1}}\sum_{l=2}^{L}s_{l}^{2}\right)\\
 & +\lambda_{1}s_{L+1}^{2}m\left(1+c_{\beta}\bar{\kappa}^{4}\beta\right)\exp\left(-\tilde{c_{1}}m\left(c_{\beta}\bar{\kappa}^{2}\beta\frac{n}{\lambda_{1}}\ln\frac{n}{\lambda_{1}}\right)^{-1}\right)\Bigg].
\end{align*}
Then, from \cref{claim:ode_inequality} we get
\begin{align*}
\mathbb{E}\left[\mathcal{L}\left(\boldsymbol{\theta}_{t}\right)\right] & \leq\left(\mathbb{E}\left[\mathcal{L}\left(\boldsymbol{\theta}_{0}\right)\right]-\mathcal{L}_{\infty}\right)\exp\left(-\frac{1}{2}\lambda_{1}s_{L+1}^{2}\frac{m}{n}t\right)+\mathcal{L}_{\infty},
\end{align*}
where
\begin{align*}
\mathcal{L}_{\infty}\asymp\frac{1}{2\lambda_{1}s_{L+1}^{2}m}\Bigg[ & \frac{\beta^{-2}n^{2}}{2\lambda_{1}s_{L+1}^{2}m}\left(c_{\beta}\kappa^{2}\bar{\kappa}^{2}\beta d^{2}\left(s_{1}^{4}+\left(\frac{s_{1}^{2}}{a_{1}}m\right)^{2}\left(\sum_{l=2}^{L}s_{l}^{2}\right)^{2}\right)\right.\\
 & \qquad\qquad\qquad\qquad\qquad\qquad\left.+a_{L+1}^{2}+c_{\beta}\kappa^{2}\bar{\kappa}^{2}\beta\frac{s_{1}^{2}}{a_{1}}d\sum_{l=1}^{L}a_{l}^{2}\right)\\
 & +\beta^{-1}ns_{L+1}^{2}m\left(\frac{s_{1}^{2}}{a_{1}}d+\frac{1}{a_{L+1}}s_{1}^{2}d+\frac{s_{1}^{2}}{a_{1}}dm\frac{1}{a_{L+1}}\sum_{l=2}^{L}s_{l}^{2}\right)\\
 & +\lambda_{1}s_{L+1}^{2}m\left(1+c_{\beta}\bar{\kappa}^{4}\beta\right)\exp\left(-\tilde{c_{1}}m\left(c_{\beta}\bar{\kappa}^{2}\beta\frac{n}{\lambda_{1}}\ln\frac{n}{\lambda_{1}}\right)^{-1}\right)\Bigg].
\end{align*}
Opening the parenthesis, we get
\begin{align*}
\mathcal{L}_{\infty}\asymp & \left(\frac{\beta^{-1}n}{\lambda_{1}s_{L+1}^{2}m}\right)^{2}\left(c_{\beta}\kappa^{2}\bar{\kappa}^{2}\beta d^{2}\left(s_{1}^{4}+\left(\frac{s_{1}^{2}}{a_{1}}m\right)^{2}\left(\sum_{l=2}^{L}s_{l}^{2}\right)^{2}\right)\right.\\
 & \qquad\qquad\qquad\qquad\qquad\qquad\left.+a_{L+1}^{2}+c_{\beta}\kappa^{2}\bar{\kappa}^{2}\beta\frac{s_{1}^{2}}{a_{1}}d\sum_{l=1}^{L}a_{l}^{2}\right)\\
 & +\frac{\beta^{-1}n}{\lambda_{1}}\left(\frac{s_{1}^{2}}{a_{1}}d+\frac{1}{a_{L+1}}s_{1}^{2}d+\frac{s_{1}^{2}}{a_{1}}dm\frac{1}{a_{L+1}}\sum_{l=2}^{L}s_{l}^{2}\right)\\
 & +\left(1+c_{\beta}\bar{\kappa}^{4}\beta\right)\exp\left(-\tilde{c_{1}}m\left(c_{\beta}\bar{\kappa}^{2}\beta\frac{n}{\lambda_{1}}\ln\frac{n}{\lambda_{1}}\right)^{-1}\right).
\end{align*}
Splitting the first term into two, we get 
\begin{align*}
\mathcal{L}_{\infty}\asymp & \left(\frac{\beta^{-1}na_{L+1}}{\lambda_{1}s_{L+1}^{2}m}\right)^{2}\\
 & +\left(\frac{\beta^{-1/2}n}{\lambda_{1}s_{L+1}^{2}m}\right)^{2}c_{\beta}\kappa^{2}\bar{\kappa}^{2}\left(d^{2}\left(s_{1}^{4}+\left(\frac{s_{1}^{2}}{a_{1}}m\right)^{2}\left(\sum_{l=2}^{L}s_{l}^{2}\right)^{2}\right)+\frac{s_{1}^{2}}{a_{1}}d\sum_{l=1}^{L}a_{l}^{2}\right)\\
 & +\frac{\beta^{-1}n}{\lambda_{1}}\left(\frac{s_{1}^{2}}{a_{1}}d+\frac{1}{a_{L+1}}s_{1}^{2}d+\frac{s_{1}^{2}}{a_{1}}dm\frac{1}{a_{L+1}}\sum_{l=2}^{L}s_{l}^{2}\right)\\
 & +\left(1+c_{\beta}\bar{\kappa}^{4}\beta\right)\exp\left(-\tilde{c_{1}}m\left(c_{\beta}\bar{\kappa}^{2}\beta\frac{n}{\lambda_{1}}\ln\frac{n}{\lambda_{1}}\right)^{-1}\right).
\end{align*}

\subsubsection{An equivalent formulation of the loss floor}

Let us rewrite the loss floor by using more of our assumptions, substituting
$a_{1},...,a_{L+1}$ with equivalent expressions, and expressing things
in terms of $\kappa$ whenever possible. The first term is equal to
$\left(\frac{\beta^{-1}n}{\lambda_{1}\kappa^{2}}\right)^{2}$. The
second term satisfies
\begin{align*}
 & \left(\frac{\beta^{-1/2}n}{\lambda_{1}s_{L+1}^{2}m}\right)^{2}c_{\beta}\kappa^{2}\bar{\kappa}^{2}\left(d^{2}\left(s_{1}^{4}+\left(\frac{s_{1}^{2}}{a_{1}}m\right)^{2}\left(\sum_{l=2}^{L}s_{l}^{2}\right)^{2}\right)+\frac{s_{1}^{2}}{a_{1}}d\sum_{l=1}^{L}a_{l}^{2}\right)\\
 & =\left(\frac{\beta^{-1/2}n}{\lambda_{1}}\right)^{2}c_{\beta}\kappa^{2}\bar{\kappa}^{2}\left(\frac{d^{2}}{s_{L+1}^{4}m^{2}}\left(s_{1}^{4}+\left(\frac{s_{1}^{2}}{a_{1}}m\right)^{2}\left(\sum_{l=2}^{L}s_{l}^{2}\right)^{2}\right)+\frac{s_{1}^{2}}{a_{1}}\frac{d}{s_{L+1}^{4}m^{2}}\sum_{l=1}^{L}a_{l}^{2}\right)\\
 & \overset{(i)}{=}\left(\frac{\beta^{-1/2}n}{\lambda_{1}}\right)^{2}c_{\beta}\kappa^{2}\bar{\kappa}^{2}\left(\frac{d^{2}}{s_{L+1}^{4}m^{2}}\left(s_{1}^{4}+\left(\frac{m}{d}\right)^{2}\left(\sum_{l=2}^{L}s_{l}^{2}\right)^{2}\right)+\frac{1}{s_{L+1}^{4}m^{2}}\sum_{l=1}^{L}a_{l}^{2}\right)\\
 & =\left(\frac{\beta^{-1/2}n}{\lambda_{1}}\right)^{2}c_{\beta}\kappa^{2}\bar{\kappa}^{2}\left(\frac{s_{1}^{4}}{s_{L+1}^{4}}\left(\frac{d}{m}\right)^{2}+\frac{1}{s_{L+1}^{4}}\left(\sum_{l=2}^{L}s_{l}^{2}\right)^{2}+\frac{1}{s_{L+1}^{4}m^{2}}\sum_{l=1}^{L}a_{l}^{2}\right)\\
 & \overset{(ii)}{=}\left(\kappa\bar{\kappa}\frac{\beta^{-1/2}n}{\lambda_{1}}\right)^{2}c_{\beta}\left(\frac{s_{1}^{4}}{s_{L+1}^{4}}\left(\frac{d}{m}\right)^{2}+\frac{1}{s_{L+1}^{4}}\left(\sum_{l=2}^{L}s_{l}^{2}\right)^{2}+\frac{s_{1}^{4}}{s_{L+1}^{4}}\left(\frac{d}{m}\right)^{2}+\frac{1}{s_{L+1}^{4}}\sum_{l=2}^{L}s_{l}^{4}\right)\\
 & \asymp\left(\kappa\bar{\kappa}\frac{\beta^{-1/2}n}{\lambda_{1}}\right)^{2}c_{\beta}\left(\frac{s_{1}^{4}}{s_{L+1}^{4}}\left(\frac{d}{m}\right)^{2}+\frac{1}{s_{L+1}^{4}}\left(\sum_{l=2}^{L}s_{l}^{2}\right)^{2}+\frac{1}{s_{L+1}^{4}}\sum_{l=2}^{L}s_{l}^{4}\right),
\end{align*}
where in $(i)$ we use $\frac{a_{1}}{s_{1}^{2}}=m_{0}=d$ and in $(ii)$
we use $a_{l}=s_{l}^{2}m_{l-1}$ for all $l\in\left[L\right]$. The
third term satisfies 
\begin{align*}
 & \frac{\beta^{-1}n}{\lambda_{1}}\left(\frac{s_{1}^{2}}{a_{1}}d+\frac{1}{a_{L+1}}s_{1}^{2}d+\frac{s_{1}^{2}}{a_{1}}dm\frac{1}{a_{L+1}}\sum_{l=2}^{L}s_{l}^{2}\right)\\
 & \overset{(i)}{=}\frac{\beta^{-1}n}{\lambda_{1}}\left(1+\frac{1}{a_{L+1}}s_{1}^{2}d+m\frac{1}{a_{L+1}}\sum_{l=2}^{L}s_{l}^{2}\right)\overset{(ii)}{=}\frac{\beta^{-1}n}{\lambda_{1}}\left(1+\kappa^{2}\frac{s_{1}^{2}}{s_{L+1}^{2}}\frac{d}{m}+\frac{\kappa^{2}}{s_{L+1}^{2}}\sum_{l=2}^{L}s_{l}^{2}\right),
\end{align*}
where $(i)$ is as in the previous equation and $(ii)$ uses $a_{L+1}=\frac{s_{L+1}^{2}m}{\kappa^{2}}$.
Overall, we get 
\begin{align*}
\mathcal{L}_{\infty}\asymp & \left(\frac{\beta^{-1}n}{\lambda_{1}\kappa^{2}}\right)^{2}\\
 & +\left(\kappa\bar{\kappa}\frac{\beta^{-1/2}n}{\lambda_{1}}\right)^{2}c_{\beta}\left(\frac{s_{1}^{4}}{s_{L+1}^{4}}\left(\frac{d}{m}\right)^{2}+\frac{1}{s_{L+1}^{4}}\left(\sum_{l=2}^{L}s_{l}^{2}\right)^{2}+\frac{1}{s_{L+1}^{4}}\sum_{l=2}^{L}s_{l}^{4}\right)\\
 & +\frac{\beta^{-1}n}{\lambda_{1}}\left(1+\kappa^{2}\frac{s_{1}^{2}}{s_{L+1}^{2}}\frac{d}{m}+\frac{\kappa^{2}}{s_{L+1}^{2}}\sum_{l=2}^{L}s_{l}^{2}\right)\\
 & +\left(1+c_{\beta}\bar{\kappa}^{4}\beta\right)\exp\left(-\tilde{c_{1}}m\left(c_{\beta}\bar{\kappa}^{2}\beta\frac{n}{\lambda_{1}}\ln\frac{n}{\lambda_{1}}\right)^{-1}\right).
\end{align*}
Finally, we observe that in order to achieve $\mathcal{L}_{\infty}<1$,
the third term here requires $\beta>\frac{n}{\lambda_{1}}$, establishing
that $\bar{\kappa}^{2}\beta\geq\beta\gg1$, and then $c_{\beta}=2cM_{\phi}^{2}+\frac{1}{\bar{\kappa}^{2}\beta}=\Theta\left(1\right)$,
allowing us to absorb this constant and yield 
\begin{align*}
\mathcal{L}_{\infty}\asymp & \left(\frac{\beta^{-1}n}{\lambda_{1}\kappa^{2}}\right)^{2}\\
 & +\left(\kappa\bar{\kappa}\frac{\beta^{-1/2}n}{\lambda_{1}}\right)^{2}\left(\frac{s_{1}^{4}}{s_{L+1}^{4}}\left(\frac{d}{m}\right)^{2}+\frac{1}{s_{L+1}^{4}}\left(\sum_{l=2}^{L}s_{l}^{2}\right)^{2}+\frac{1}{s_{L+1}^{4}}\sum_{l=2}^{L}s_{l}^{4}\right)\\
 & +\frac{\beta^{-1}n}{\lambda_{1}}\left(1+\kappa^{2}\frac{s_{1}^{2}}{s_{L+1}^{2}}\frac{d}{m}+\frac{\kappa^{2}}{s_{L+1}^{2}}\sum_{l=2}^{L}s_{l}^{2}\right)\\
 & +\left(1+\bar{\kappa}^{4}\beta\right)\exp\left(-\tilde{c_{1}}m\left(\bar{\kappa}^{2}\beta\frac{n}{\lambda_{1}}\ln\frac{n}{\lambda_{1}}\right)^{-1}\right).
\end{align*}
Rearranging this, we get
\begin{align*}
\mathcal{L}_{\infty}\asymp & \left(1+\bar{\kappa}^{4}\beta\right)\exp\left(-\tilde{c_{1}}m\left(\bar{\kappa}^{2}\beta\frac{n}{\lambda_{1}}\ln\frac{n}{\lambda_{1}}\right)^{-1}\right)\\
 & +\frac{\beta^{-1}n}{\lambda_{1}}\left(1+\kappa^{2}\frac{s_{1}^{2}}{s_{L+1}^{2}}\frac{d}{m}+\frac{\kappa^{2}}{s_{L+1}^{2}}\sum_{l=2}^{L}s_{l}^{2}\right)\\
 & +\left(\frac{\beta^{-1/2}n}{\lambda_{1}}\right)^{2}\kappa^{2}\bar{\kappa}^{2}\left(\frac{s_{1}^{4}}{s_{L+1}^{4}}\left(\frac{d}{m}\right)^{2}+\frac{1}{s_{L+1}^{4}}\left(\sum_{l=2}^{L}s_{l}^{2}\right)^{2}+\frac{1}{s_{L+1}^{4}}\sum_{l=2}^{L}s_{l}^{4}\right)\\
 & +\left(\frac{\beta^{-1}n}{\lambda_{1}}\right)^{2}\left(\frac{1}{\kappa^{4}}\right).
\end{align*}

\subsection{Standard parameterization}

Under standard parameterization ($\forall l\in\left[L+1\right]:s_{l}=1$,
$a_{1}=d$, $\forall l>1:a_{l}=m$, $\kappa=1$), we have 
\begin{align*}
\mathcal{L}_{\infty}\asymp & \left(\frac{\beta^{-1}n}{\lambda_{1}}\right)^{2}+\left(\frac{\beta^{-1/2}n}{\lambda_{1}}\right)^{2}\left(\left(\frac{d}{m}\right)^{2}+L^{2}+L\right)\\
 & +\frac{\beta^{-1}n}{\lambda_{1}}\left(1+\frac{d}{m}+L\right)+\left(1+\beta\right)\exp\left(-\tilde{c_{1}}m\left(\beta\frac{n}{\lambda_{1}}\ln\frac{n}{\lambda_{1}}\right)^{-1}\right)\\
\overset{(i)}{\asymp} & \left(\frac{\beta^{-1}n}{\lambda_{1}}\right)^{2}+\left(\frac{\beta^{-1/2}n}{\lambda_{1}}\right)^{2}+\frac{\beta^{-1}n}{\lambda_{1}}+\left(1+\beta\right)\exp\left(-\tilde{c_{1}}m\left(\beta\frac{n}{\lambda_{1}}\ln\frac{n}{\lambda_{1}}\right)^{-1}\right)\\
\overset{(ii)}{\asymp} & \left(\beta^{-1}n\right)^{2}+\left(\beta^{-1/2}n\right)^{2}+\beta^{-1}n+\left(1+\beta\right)\exp\left(-\tilde{c_{1}}\frac{m}{\beta n\ln n}\right),
\end{align*}
where in $(i)$ we use $d\leq m$ and in $(ii)$ we treat $\lambda_{1}$
as $\Theta\left(1\right)$. Demands to achieve $\mathcal{L}_{\infty}\leq\mathcal{L}_{g}$
for some $\mathcal{L}_{g}\leq1$ are then 
\begin{enumerate}
\item $m=\Omega\left(\beta n\ln n\ln\left(m\slash\mathcal{L}_{g}\right)\right)$.
\item $\beta=\Omega\left(n\slash\mathcal{L}_{g}\right)$.
\item $\beta=\Omega\left(n^{2}\slash\mathcal{L}_{g}\right)$.
\end{enumerate}
This can be achieved with $m=\tilde{\Omega}\left(n^{3}\slash\mathcal{L}_{g}\right)$
and $\beta=\tilde{\Omega}\left(n^{2}\slash\mathcal{L}_{g}\right)$.

\textbf{Comparing the results to the case of shallow networks.} For
shallow networks, the analysis establishes convergence to a small
loss floor under milder conditions. The stronger requirements in the
deep-network case come from sums over layers $l=2,\ldots,L$: for
shallow networks they are 0, yet for deep networks they result in
new, dominant terms.

\subsection{NTK parameterization}

Under NTK parameterization ($s_{1}=d^{-1/2}$, $\forall l>1:s_{l}=m^{-1/2}$,
$\forall l\in\left[L+1\right]:a_{l}=1$, $\kappa=1$), we have 
\begin{align*}
\mathcal{L}_{\infty}\asymp & \left(\frac{\beta^{-1}n}{\lambda_{1}}\right)^{2}\\
 & +\left(\frac{\beta^{-1/2}n}{\lambda_{1}}\right)^{2}\left(\frac{m^{2}}{d^{2}}\left(\frac{d}{m}\right)^{2}+m^{2}\left(\frac{L}{m}\right)^{2}+m^{2}\frac{L}{m^{2}}\right)\\
 & +\frac{\beta^{-1}n}{\lambda_{1}}\left(1+\frac{m}{d}\frac{d}{m}+m\frac{L}{m}\right)\\
 & +\left(1+\beta\right)\exp\left(-\tilde{c_{1}}m\left(\beta\frac{n}{\lambda_{1}}\ln\frac{n}{\lambda_{1}}\right)^{-1}\right)\\
\asymp & \left(\frac{\beta^{-1}n}{\lambda_{1}}\right)^{2}+\left(\frac{\beta^{-1/2}n}{\lambda_{1}}\right)^{2}\left(1+L^{2}+L\right)+\frac{\beta^{-1}n}{\lambda_{1}}\left(1+1+L\right)\\
 & +\left(1+\beta\right)\exp\left(-\tilde{c_{1}}m\left(\beta\frac{n}{\lambda_{1}}\ln\frac{n}{\lambda_{1}}\right)^{-1}\right)\\
\asymp & \left(\beta^{-1}n\right)^{2}+\left(\beta^{-1/2}n\right)^{2}+\beta^{-1}n+\left(1+\beta\right)\exp\left(-\tilde{c_{1}}\frac{m}{\beta n\ln n}\right)
\end{align*}
where in the last step we treat $\lambda_{1},L$ as $\Theta\left(1\right)$.
Demands to achieve $\mathcal{L}_{\infty}\leq\mathcal{L}_{g}$ for
some $\mathcal{L}_{g}\leq1$ are then the same as under standard parameterization.

\newpage

\section{Convergence with gradient flow from stopped CLD}
\label{app:convergence_with_gf_from_stopped_cld}
This appendix proves \cref{prop:convergence_to_zero_loss_shallow_stopped_cld},
the two-stage scheduling guarantee: regularized CLD is run until a deterministic
time $\tau$, at which both the noise and the regularization are switched off,
and gradient flow from that point on drives the training loss of a shallow
network to zero.
\begin{recall}[\cref{prop:convergence_to_zero_loss_shallow_stopped_cld}][Convergence to zero loss for shallow networks]
For shallow networks, \ie $L=1$, suppose that CLD is stopped at a deterministic
time $\tau=\tilde{\Omega}\left(\frac{n}{s_{2}^{2}\lambda_{1}m}\right)$, and that
training then proceeds by gradient flow, as in \cref{eq:gradient_flow}. Let
$K>2$. Then, the network converges to zero loss with probability at least
\mbox{$
1-\eta\exp\left(-c_{4}\frac{m}{c_{\beta}\bar\kappa^2\beta\frac{n}{\lambda_{1}}\ln\left(\frac{n}{\lambda_{1}}\right)}\right)-\frac{2}{K}
$}, provided that parameters are chosen such that
\[
\mathcal{L}_{\infty}^{\prn{L=1}}\lesssim\frac{s_{2}^{2}m^{2}}{Kn^{2}d^{2}s_{1}^{4}\left(\frac{1}{ds_{1}^{2}}+\frac{\kappa^{2}}{ms_{2}^{2}}\right)\left(\frac{\kappa^{2}s_{1}^{2}d}{s_{2}^{2}}+\sqrt{nm}\right)}\,,
\]
as in \cref{eq:L_infty_for_song_convergence}.
\end{recall}
The recall above is not verbatim. The original statement of
\cref{prop:convergence_to_zero_loss_shallow_stopped_cld} closes with a worked
example, which instantiates this demand under the standard parameterization and
concludes that, with $d=o\prn{\sqrt{n}}$ and $K=\Theta\prn{\sqrt{n}/d}$, the
loss vanishes with probability $1-o\prn{1}$ at $\beta=\tilde{\Omega}\prn{n}$ and
$m=\tilde{\Omega}\prn{n^{2}}$. That example is a consequence of the proof rather
than a part of it, and so is stated here without it; it is derived in
\cref{sec:song_result_under_parameterizations}, immediately after the proof.

The proof combines a deterministic ingredient with a probabilistic one. The
deterministic ingredient, developed in \cref{sec:gf_zero_loss_criterion}, is a
criterion under which gradient flow started at a point
$\boldsymbol{\theta}_{0}$ converges to zero loss: the Jacobian there must be a
near-isometry, the network must be smooth on a ball around that point, and the
loss there must be small relative to those two quantities. The probabilistic
ingredient is our change-of-measure inequality
(\cref{thm:second_law_change_of_measure}), which transfers high-probability
properties of the initialization to \emph{every} time along the CLD trajectory,
together with the expected-loss bound of
\cref{thm:upper_bound_on_expected_loss}. \Cref{sec:stopped_cld_init_properties}
collects the initialization estimates that this transfer acts upon, and
\cref{sec:stopped_cld_zero_loss_proof} carries out the transfer, verifies the
criterion at $\vtheta_{\mathrm{CLD}}\left(\tau\right)$, and closes the proof.
\Cref{sec:song_result_under_parameterizations} then works out the resulting
demand on the loss floor under the standard and NTK parameterizations.
\subsection{Gradient flow convergence by proximity to good initialization}
\label{sec:gf_zero_loss_criterion}
Our analysis here closely follows that of \citet{song2021subquadratic}, with adaptation
to our notations and settings. Importantly, we directly analyze without
the bounded second layer assumption (which is part of their Assumption
2), similar to the (somewhat informal) treatment in their Appendix E.

We begin by citing some useful definitions and Lemmas from \citet{song2021subquadratic}, adapted to our notations. Importantly, we restrict the smoothness
definition to a specific convex set.
\begin{definition}[Smoothness]
A function $f:\mathbb{R}^{N_{p}}\to\mathbb{R}^{n}$ is $\beta_{f}$-smooth
in a convex set $\Theta\subseteq\mathbb{R}^{N_{p}}$ if for some $\beta_{f}\geq0$
and for all $\boldsymbol{\theta}_{1},\boldsymbol{\theta}_{2}\in\Theta$,
\[
\left\Vert \nabla f\left(\boldsymbol{\theta}_{1}\right)-\nabla f\left(\boldsymbol{\theta}_{2}\right)\right\Vert _{2}\leq\beta_{f}\left\Vert \boldsymbol{\theta}_{1}-\boldsymbol{\theta}_{2}\right\Vert \,,
\]
where $\nabla f\left(\boldsymbol{\theta}\right)\in\mathbb{R}^{n\times N_{p}}$
is the Jacobian of $f$, evaluated at $\boldsymbol{\theta}$.
\end{definition}
\begin{definition}[Near-isometry]
A matrix $\mathbf{A}$ is $\left(\mu,\nu\right)$-near-isometry if
there exist $0<\mu\leq\nu$ such that
\[
\mu\leq\sigma_{\min}\left(\mathbf{A}\right)\leq\sigma_{\max}\left(\mathbf{A}\right)\leq\nu\,,
\]
where $\sigma_{\min}\left(\mathbf{A}\right),\,\sigma_{\max}\left(\mathbf{A}\right)$
are the minimum and maximum singular values of $\mathbf{A}$, respectively.
\end{definition}
We now proceed to cite a slightly altered version of their Lemma 1,
assuming smoothness only in the relevant ball. The proof is identical
to theirs, and we do not repeat it here.
\begin{lemma}
\label{lem:near-isometry-maintained}Given some $\boldsymbol{\theta}_{0}\in\mathbb{R}^{N_{p}}$
and $f:\mathbb{R}^{N_{p}}\to\mathbb{R}^{n}$ which is $\beta_{f}$-smooth
in $\Theta=\left\{ \boldsymbol{\theta}:\,\left\Vert \boldsymbol{\theta}-\boldsymbol{\theta}_{0}\right\Vert \leq\frac{\mu}{2\beta_{f}}\right\} $,
and assuming $\nabla f\left(\boldsymbol{\theta}_{0}\right)$ is $\left(\mu,\nu\right)$-near-isometry,
then for all $\boldsymbol{\theta}\in\Theta$, $\nabla f\left(\boldsymbol{\theta}\right)$
is $\left(\frac{1}{2}\mu,\frac{3}{2}\nu\right)$-near-isometry.
\end{lemma}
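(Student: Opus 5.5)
The plan is a perturbation argument on the singular values of the Jacobian, carried out along straight segments from $\boldsymbol{\theta}_{0}$; convexity of $\Theta$ is what makes this legitimate. Fix any $\boldsymbol{\theta}\in\Theta$. The segment $\boldsymbol{\theta}_{0}+s\left(\boldsymbol{\theta}-\boldsymbol{\theta}_{0}\right)$, $s\in\left[0,1\right]$, lies in the ball $\Theta$, since a ball is convex. Because both endpoints lie in $\Theta$, the smoothness assumption restricted to $\Theta$ applies directly to the pair $\boldsymbol{\theta},\boldsymbol{\theta}_{0}$:
\[
\left\Vert \nabla f\left(\boldsymbol{\theta}\right)-\nabla f\left(\boldsymbol{\theta}_{0}\right)\right\Vert _{2}\leq\beta_{f}\left\Vert \boldsymbol{\theta}-\boldsymbol{\theta}_{0}\right\Vert \leq\beta_{f}\cdot\frac{\mu}{2\beta_{f}}=\frac{\mu}{2}\,.
\]
Strictly speaking, no integration along the segment is needed; it is enough that both points belong to the convex set on which smoothness is assumed.

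Next we invoke Weyl's inequality for singular values. For matrices $\mathbf{A},\mathbf{B}$ of the same shape, $\left|\sigma_{k}\left(\mathbf{A}\right)-\sigma_{k}\left(\mathbf{B}\right)\right|\leq\left\Vert \mathbf{A}-\mathbf{B}\right\Vert _{2}$ for every $k$. Here the Jacobian is $n\times N_{p}$ with $n\leq N_{p}$, and $\sigma_{\min}$ is understood as the $n$-th singular value, which is the quantity relevant to $\lambda_{\min}\left(\mathbf{K}_{\mathrm{NTK}}\right)$. Apply this with $\mathbf{A}=\nabla f\left(\boldsymbol{\theta}\right)$ and $\mathbf{B}=\nabla f\left(\boldsymbol{\theta}_{0}\right)$, using $\mu\leq\sigma_{\min}\left(\mathbf{B}\right)$ and $\sigma_{\max}\left(\mathbf{B}\right)\leq\nu$. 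This gives
\[
\sigma_{\min}\left(\nabla f\left(\boldsymbol{\theta}\right)\right)\geq\mu-\tfrac{\mu}{2}=\tfrac{\mu}{2}\,,\qquad\sigma_{\max}\left(\nabla f\left(\boldsymbol{\theta}\right)\right)\leq\nu+\tfrac{\mu}{2}\leq\tfrac{3}{2}\nu\,,
\]
where the last step uses $\mu\leq\nu$. Since $\frac{\mu}{2}\leq\frac{3}{2}\nu$ and both are positive, $\nabla f\left(\boldsymbol{\theta}\right)$ is a $\left(\frac{1}{2}\mu,\frac{3}{2}\nu\right)$-near-isometry, as claimed.

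There is no real obstacle; the only point requiring care is the interpretation of $\sigma_{\min}$ for a wide rectangular matrix. Weyl's bound holds index-wise for the $n$ nonzero-indexed singular values, so the argument goes through verbatim. The main conceptual change from \citet{song2021subquadratic} is that smoothness is only available on $\Theta$, and the argument never leaves $\Theta$.
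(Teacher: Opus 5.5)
Your argument is correct and matches the one the paper relies on. The paper does not reprove the lemma but defers to Lemma~1 of \citet{song2021subquadratic}, which combines the same bound $\left\Vert \nabla f(\boldsymbol{\theta})-\nabla f(\boldsymbol{\theta}_0)\right\Vert_2\leq\beta_f\cdot\frac{\mu}{2\beta_f}=\frac{\mu}{2}$ on $\Theta$ with Weyl's inequality for singular values and $\mu\leq\nu$, just as you do.
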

\begin{definition}[Polyak Łojasiewicz (PL) condition]
A non-negative function $\mathcal{F}:\mathbb{R}^{n}\to\mathbb{R}$
satisfies the PL condition with $\alpha_{\mathcal{F}}>0$ if for all
$\mathbf{z}\in\mathbb{R}^{n}$,
\[
\mathcal{F}\left(\mathbf{z}\right)\leq\frac{\left\Vert \nabla\mathcal{F}\left(\mathbf{z}\right)\right\Vert ^{2}}{2\alpha_{\mathcal{F}}}\,.
\]
\end{definition}
We now proceed by analyzing gradient flow of $\boldsymbol{\theta}\in\mathbb{R}^{N_{p}}$
on the loss function $\mathcal{L}:\mathbb{R}^{N_{p}}\to\mathbb{R}$
defined as $\mathcal{L}\left(\boldsymbol{\theta}\right)\triangleq\mathcal{F}\left(f\left(\boldsymbol{\theta}\right)\right)$,
with $f:\mathbb{R}^{N_{p}}\to\mathbb{R}^{n}$, $\mathcal{F}:\mathbb{R}^{n}\to\mathbb{R}$:
\begin{equation}
\frac{\mathrm{d}}{\mathrm{d}t}\boldsymbol{\theta}\left(t\right)=-\nabla\mathcal{L}\left(\boldsymbol{\theta}\left(t\right)\right)\,.\label{eq:gradient_flow_appendix}
\end{equation}
We cite \citet[Lemma 2]{song2021subquadratic}, adapted to our notations. The
proof remains identical, hence we do not repeat it.
\begin{lemma}
\label{lem:trajectory-length-bound}Let $\mathcal{L}\left(\boldsymbol{\theta}\right)\triangleq\mathcal{F}\left(f\left(\boldsymbol{\theta}\right)\right)$
such that $\mathcal{F}\geq0$ satisfies the PL condition with $\alpha_{\mathcal{F}}>0$,
$f$ is $\beta_{f}$-smooth in $\Theta=\left\{ \boldsymbol{\theta}:\,\left\Vert \boldsymbol{\theta}-\boldsymbol{\theta}_{0}\right\Vert \leq\frac{\mu}{2\beta_{f}}\right\} $,
and $\nabla f\left(\boldsymbol{\theta}_{0}\right)$ is $\left(\mu,\nu\right)$-near-isometry.
Given the parameters $\boldsymbol{\theta}\left(t\right)$ evolve by
gradient flow (\cref{eq:gradient_flow_appendix}) initialized by $\boldsymbol{\theta}\left(0\right)=\boldsymbol{\theta}_{0}$,
we denote $\tau_{\mathrm{b}}=\inf\left\{ t>0:\,\left\Vert \boldsymbol{\theta}\left(t\right)-\boldsymbol{\theta}_{0}\right\Vert >\frac{\mu}{2\beta_{f}}\right\} $
. Then, for all $t<\tau_{\mathrm{b}}$, the length of the trajectory
followed by gradient flow is upper bounded by
\[
c\frac{\nu\sqrt{\mathcal{L}\left(\boldsymbol{\theta}_{0}\right)}}{\mu^{2}\sqrt{\alpha_{\mathcal{F}}}}\,,
\]
where $c>0$ is a global constant.
\end{lemma}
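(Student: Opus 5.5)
The argument follows the continuous-time proof of \citet[Lemma 2]{song2021subquadratic}: bound the speed of $\vtheta(t)$ by the square root of the loss, show the loss decays exponentially while the trajectory stays in $\Theta$, and integrate the speed.

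\textbf{Step 1 (near-isometry on $\Theta$).} For all $t<\tau_{\mathrm b}$ we have $\vtheta(t)\in\Theta$. By \cref{lem:near-isometry-maintained}, $\nabla f(\vtheta(t))$ is then a $\left(\frac{\mu}{2},\frac{3\nu}{2}\right)$-near-isometry. In particular,
\[
\sigma_{\min}\left(\nabla f(\vtheta(t))^\top\right)\geq \tfrac{\mu}{2},\qquad \norm{\nabla f(\vtheta(t))}\leq \tfrac{3\nu}{2}.
\]

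\textbf{Step 2 (linear decay of the loss).} Write $\mathbf z=f(\vtheta)$. The chain rule gives $\nabla\mathcal L=\nabla f^\top\nabla\mathcal F(\mathbf z)$. Along the flow,
\[
\frac{\mathrm d}{\mathrm dt}\mathcal L
=-\norm{\nabla f^\top\nabla\mathcal F}^2
\leq -\tfrac{\mu^2}{4}\norm{\nabla\mathcal F}^2
\leq -\tfrac{\mu^2\alpha_{\mathcal F}}{2}\mathcal L .
\]
The first inequality uses Step 1 and the second uses the PL condition. Gr\"onwall's inequality (\cref{claim:ode_inequality} with $a=0$) then yields
\[
\mathcal L(\vtheta(t))\leq \mathcal L(\vtheta_0)\exp\!\left(-\tfrac{\mu^2\alpha_{\mathcal F}}{2}t\right).
\]

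\textbf{Step 3 (speed bound and integration).} This is the step that needs the most care. A naive bound $\norm{\nabla\mathcal F}\lesssim\sqrt{\mathcal L}$ would need a smoothness upper bound on $\mathcal F$, which is not assumed. The trick is to avoid it by bounding the speed through the decay rate of $\sqrt{\mathcal L}$. By Step 1,
\[
\norm{\dot\vtheta}=\norm{\nabla f^\top\nabla\mathcal F}\leq\tfrac{3\nu}{2}\norm{\nabla\mathcal F}.
\]
Dividing the first identity of Step 2 by $2\sqrt{\mathcal L}$,
\[
-\frac{\mathrm d}{\mathrm dt}\sqrt{\mathcal L}
=\frac{\norm{\nabla f^\top\nabla\mathcal F}^2}{2\sqrt{\mathcal L}}
\geq \frac{\mu^2\norm{\nabla\mathcal F}^2}{8\sqrt{\mathcal L}}
\geq \frac{\mu^2\sqrt{\alpha_{\mathcal F}}}{8\sqrt2}\norm{\nabla\mathcal F}.
\]
The last inequality uses $\sqrt{\mathcal L}\leq\norm{\nabla\mathcal F}/\sqrt{2\alpha_{\mathcal F}}$ from PL. Combining the two displays,
\[
\norm{\dot\vtheta}\leq \frac{12\sqrt2\,\nu}{\mu^2\sqrt{\alpha_{\mathcal F}}}\left(-\frac{\mathrm d}{\mathrm dt}\sqrt{\mathcal L}\right).
\]
Integrating from $0$ to $t<\tau_{\mathrm b}$ and dropping the nonnegative term $\sqrt{\mathcal L(\vtheta(t))}$, the trajectory length is at most
\[
\frac{12\sqrt2\,\nu\sqrt{\mathcal L(\vtheta_0)}}{\mu^2\sqrt{\alpha_{\mathcal F}}},
\]
which is the stated bound with the global constant $c=12\sqrt2$.

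The points where $\mathcal L=0$ need a short remark. There $\nabla\mathcal F=0$, because a nonnegative function has zero gradient at a minimizer, so the trajectory is stationary from then on and adds no length. The integration can therefore be restricted to times before the loss first hits zero. The same argument also shows that the length stays bounded as $t\to\infty$.
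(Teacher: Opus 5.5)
Your proof is correct, and it is essentially the argument of \citet[Lemma 2]{song2021subquadratic}, which the paper cites without repeating. As there, you bound the speed by $\frac{3\nu}{2}\norm{\nabla\mathcal F}$, lower-bound $-\frac{\mathrm d}{\mathrm dt}\sqrt{\mathcal L}$ by a multiple of $\mu^2\sqrt{\alpha_{\mathcal F}}\norm{\nabla\mathcal F}$ via the PL condition, integrate, and note that zero-loss times add no length. One minor point: your PL step actually gives $\frac{\mu^2\sqrt{2\alpha_{\mathcal F}}}{8}\norm{\nabla\mathcal F}$, so $c=6\sqrt2$ suffices, though your weaker constant $12\sqrt2$ is still valid.
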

We now extend \citet[Theorem 1]{song2021subquadratic}, which proves convergence
to a global minimum, by stating the convergence rate.
\begin{theorem}
\label{thm:song-convergence}Let $\mathcal{L}\left(\boldsymbol{\theta}\right)\triangleq\mathcal{F}\left(f\left(\boldsymbol{\theta}\right)\right)$
such that $\mathcal{F}\geq0$ satisfies the PL condition with $\alpha_{\mathcal{F}}>0$,
$f$ is $\beta_{f}$-smooth in $\Theta=\left\{ \boldsymbol{\theta}:\,\left\Vert \boldsymbol{\theta}-\boldsymbol{\theta}_{0}\right\Vert \leq\frac{\mu}{2\beta_{f}}\right\} $,
$\nabla f\left(\boldsymbol{\theta}_{0}\right)$ is $\left(\mu,\nu\right)$-near-isometry,
and $\mathcal{L}\left(\boldsymbol{\theta}_{0}\right)\leq\frac{1}{16c^{2}}\frac{\alpha_{\mathcal{F}}\mu^{6}}{\beta_{f}^{2}\nu^{2}}$,
where $c$ is the constant from \cref{lem:trajectory-length-bound}.
Then, gradient flow (\cref{eq:gradient_flow_appendix}) initialized by
$\boldsymbol{\theta}\left(0\right)=\boldsymbol{\theta}_{0}$ converges
to zero loss such that
\[
\mathcal{L}\left(\boldsymbol{\theta}\left(t\right)\right)\leq\mathcal{L}\left(\boldsymbol{\theta}_{0}\right)e^{-\frac{1}{2}\alpha_{\mathcal{F}}\mu^{2}t}\,.
\]
\end{theorem}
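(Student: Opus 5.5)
The plan is a bootstrap argument on the exit time $\tau_{\mathrm{b}}$ from the ball $\Theta$, combined with a PL-type differential inequality along the flow. First I would show that the trajectory never leaves $\Theta$, i.e.\ $\tau_{\mathrm{b}}=\infty$, and then integrate the loss decay.

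For the descent step, fix $t<\tau_{\mathrm{b}}$. Then $\boldsymbol{\theta}(t)\in\Theta$, and \cref{lem:near-isometry-maintained} gives that $\nabla f(\boldsymbol{\theta}(t))$ is a $\left(\frac12\mu,\frac32\nu\right)$-near-isometry. By the chain rule, $\nabla\mathcal{L}(\boldsymbol{\theta})=\nabla f(\boldsymbol{\theta})^{\top}\nabla\mathcal{F}(f(\boldsymbol{\theta}))$. Hence
\[
\frac{\mathrm{d}}{\mathrm{d}t}\mathcal{L}(\boldsymbol{\theta}(t))=-\norm{\nabla f^{\top}\nabla\mathcal{F}}^{2}\leq-\sigma_{\min}\left(\nabla f\right)^{2}\norm{\nabla\mathcal{F}}^{2}\leq-\tfrac14\mu^{2}\cdot2\alpha_{\mathcal{F}}\mathcal{L}(\boldsymbol{\theta}(t)).
\]
Here $\sigma_{\min}$ bounds $\norm{\nabla f^{\top}\mathbf{u}}$ from below because $\nabla f\in\mathbb{R}^{n\times N_p}$ has full row rank by the near-isometry, and the last inequality is the PL condition. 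Gr\"onwall's inequality (\cref{claim:ode_inequality} with $a=0$) then yields $\mathcal{L}(\boldsymbol{\theta}(t))\leq\mathcal{L}(\boldsymbol{\theta}_0)e^{-\frac12\alpha_{\mathcal{F}}\mu^{2}t}$ for all $t<\tau_{\mathrm{b}}$.

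For the confinement step, suppose for contradiction that $\tau_{\mathrm{b}}<\infty$. By \cref{lem:trajectory-length-bound}, for every $t<\tau_{\mathrm{b}}$ we have $\norm{\boldsymbol{\theta}(t)-\boldsymbol{\theta}_0}\leq c\nu\sqrt{\mathcal{L}(\boldsymbol{\theta}_0)}/(\mu^{2}\sqrt{\alpha_{\mathcal{F}}})$. The assumption $\mathcal{L}(\boldsymbol{\theta}_0)\leq\frac{1}{16c^{2}}\frac{\alpha_{\mathcal{F}}\mu^{6}}{\beta_f^{2}\nu^{2}}$ makes this at most $\frac{\mu}{4\beta_f}$. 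By continuity of the gradient-flow path, the bound extends to $t=\tau_{\mathrm{b}}$, and then to a neighbourhood beyond it. This contradicts the definition of $\tau_{\mathrm{b}}$ as an infimum of times where the distance exceeds $\frac{\mu}{2\beta_f}$, since the strict margin $\frac{\mu}{4\beta_f}<\frac{\mu}{2\beta_f}$ keeps the path inside. Hence $\tau_{\mathrm{b}}=\infty$, and the decay bound holds for all $t\geq0$, giving convergence to zero loss.

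The main obstacle is justifying these global statements. One needs the gradient flow to exist on $[0,\tau_{\mathrm{b}}]$; this holds because $\nabla\mathcal{L}$ is locally Lipschitz (the activation is $C^{2}$) and the path stays bounded, which rules out blow-up. One also needs the strict margin in the confinement step, so the exit-time contradiction is rigorous rather than circular. I would handle both by working on the maximal existence interval intersected with $[0,\tau_{\mathrm{b}})$ and appealing to continuity.
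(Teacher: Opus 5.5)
Your proposal is correct and follows essentially the same route as the paper. In both, the flow is confined to $\Theta$ using \cref{lem:trajectory-length-bound} together with the loss assumption, which leaves the margin $\frac{\mu}{4\beta_f}<\frac{\mu}{2\beta_f}$; then \cref{lem:near-isometry-maintained}, the chain rule, the PL condition and Gr\"onwall give the rate $\frac12\alpha_{\mathcal F}\mu^2$. You only swap the order of the two steps and spell out the continuity and existence details that the paper leaves implicit.
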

\begin{proof}
We show that under the stated conditions, $\boldsymbol{\theta}\left(t\right)$
remains in $\Theta$ indefinitely, i.e., $\tau_{\mathrm{b}}=\infty$.
Suppose $\tau_{\mathrm{b}}<\infty$, then the previous Lemma bounds
the trajectory length up to $\tau_{\mathrm{b}}$. We show that under
the assumptions here, the trajectory could not have exited the ball,
by bounding the trajectory length by the ball's radius:
\[
c\frac{\nu\sqrt{\mathcal{L}\left(\boldsymbol{\theta}_{0}\right)}}{\mu^{2}\sqrt{\alpha_{\mathcal{F}}}}\leq c\frac{\nu\sqrt{\frac{1}{16c^{2}}\frac{\alpha_{\mathcal{F}}\mu^{6}}{\beta_{f}^{2}\nu^{2}}}}{\mu^{2}\sqrt{\alpha_{\mathcal{F}}}}=\frac{\mu}{4\beta_{f}}<\frac{\mu}{2\beta_{f}},
\]
where we employed the loss bound at $\boldsymbol{\theta}_{0}$. Therefore,
$\tau_{\mathrm{b}}=\infty$.

Then, by \cref{lem:near-isometry-maintained}, we have that for
all $t\geq0$, $\nabla f\left(\boldsymbol{\theta}\left(t\right)\right)$
is $\left(\frac{1}{2}\mu,\frac{3}{2}\nu\right)$-near-isometry. Hence,
via chain rule,
\begin{align*}
\frac{\mathrm{d}}{\mathrm{d}t}\mathcal{L}\left(\boldsymbol{\theta}\left(t\right)\right) & =\nabla\mathcal{L}\left(\boldsymbol{\theta}\left(t\right)\right)^{\top}\frac{\mathrm{d}}{\mathrm{d}t}\boldsymbol{\theta}\left(t\right)=-\left\Vert \nabla\mathcal{L}\left(\boldsymbol{\theta}\left(t\right)\right)\right\Vert ^{2}\\
 & =-\left\Vert \nabla\mathcal{F}\left(f\left(\boldsymbol{\theta}\left(t\right)\right)\right)^{\top}\nabla f\left(\boldsymbol{\theta}\left(t\right)\right)\right\Vert ^{2}\\
 & \leq-\left\Vert \nabla\mathcal{F}\left(f\left(\boldsymbol{\theta}\left(t\right)\right)\right)\right\Vert ^{2}\sigma_{\min}\left(\nabla f\left(\boldsymbol{\theta}\left(t\right)\right)\right)^{2}\\
 & \leq-2\alpha_{\mathcal{F}}\mathcal{F}\left(f\left(\boldsymbol{\theta}\left(t\right)\right)\right)\left(\frac{1}{2}\mu\right)^{2}=-\frac{1}{2}\alpha_{\mathcal{F}}\mu^{2}\mathcal{L}\left(\boldsymbol{\theta}\left(t\right)\right)\,,
\end{align*}
where in the last inequality transition we used the PL condition satisfied
by $\mathcal{F}$ and that $\nabla f\left(\boldsymbol{\theta}\left(t\right)\right)$
is $\left(\frac{1}{2}\mu,\frac{3}{2}\nu\right)$-near-isometry. Then,
via Grönwall's inequality, we get
\[
\mathcal{L}\left(\boldsymbol{\theta}\left(t\right)\right)\leq\mathcal{L}\left(\boldsymbol{\theta}_{0}\right)e^{-\frac{1}{2}\alpha_{\mathcal{F}}\mu^{2}t}\,.
\]
\end{proof}
\Cref{thm:song-convergence} is entirely deterministic, and asks for three
things at the point from which gradient flow is started: a near-isometric
Jacobian, smoothness of $f$ on the surrounding ball of radius
$\frac{\mu}{2\beta_{f}}$, and a loss below
$\frac{1}{16c^{2}}\frac{\alpha_{\mathcal{F}}\mu^{6}}{\beta_{f}^{2}\nu^{2}}$.
The rest of this appendix establishes these three requirements at
$\vtheta_{\mathrm{CLD}}\left(\tau\right)$.
\subsection{The shallow network at initialization}
\label{sec:stopped_cld_init_properties}
We follow the shallow neural network setting, employing the assumptions and initialization
presented in \cref{sec:setting_deep_nn} with $L=1$, such that 
\[
f\left(\boldsymbol{\theta}\right)=\left(s_{2}\mathbf{v}^{\top}\phi\left(s_{1}\mathbf{W}\mathbf{X}^{\top}\right)\right)^{\top}\,,
\]
where $\boldsymbol{\theta}=\left\{ \mathbf{W},\mathbf{v}\right\} $,
and the MSE loss 
\[
\tilde{\mathcal{L}}\left(\mathbf{z}\right)\triangleq\frac{1}{2n}\left\Vert \mathbf{z}-\mathbf{y}\right\Vert ^{2}\,,
\]
such that
\[
\mathcal{L}\left(\boldsymbol{\theta}\right)=\tilde{\mathcal{L}}\left(f\left(\boldsymbol{\theta}\right)\right)\,.
\]
\Cref{thm:song-convergence} is thus applied with $\mathcal{F}=\tilde{\mathcal{L}}$,
and we now collect the estimates its three requirements call for. The PL
constant of the MSE loss (\cref{claim:mse-loss-pl-condition}) is deterministic
and holds everywhere. The near-isometry constants
(\cref{prop:mu_at_initialization,prop:nu_at_initialization}) and the smoothness
constant
(\cref{prop:hessian_global_bound,prop:network_beta_smooth_in_ball,prop:max_v_at_initialization})
are instead controlled at initialization with high probability---precisely the
form on which the change-of-measure argument of
\cref{sec:stopped_cld_zero_loss_proof} acts.
\begin{claim}
\label{claim:mse-loss-pl-condition}The MSE loss $\tilde{\mathcal{L}}:\mathbb{R}^{n}\to\mathbb{R}$,
defined as $\tilde{\mathcal{L}}\left(\mathbf{z}\right)\triangleq\frac{1}{2n}\left\Vert \mathbf{z}-\mathbf{y}\right\Vert ^{2}$
for some predetermined $\mathbf{y}\in\mathbb{R}^{n}$, satisfies the
PL condition with $\alpha_{\mathcal{L}}=n^{-1}$.
\end{claim}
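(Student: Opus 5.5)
The plan is to verify the PL inequality directly from the definitions, since $\tilde{\mathcal{L}}$ is an explicit quadratic. First compute the gradient of $\tilde{\mathcal{L}}$ with respect to $\mathbf{z}$:
\[
\nabla\tilde{\mathcal{L}}\left(\mathbf{z}\right)=\frac{1}{n}\left(\mathbf{z}-\mathbf{y}\right),
\]
so that
\[
\left\Vert \nabla\tilde{\mathcal{L}}\left(\mathbf{z}\right)\right\Vert ^{2}=\frac{1}{n^{2}}\left\Vert \mathbf{z}-\mathbf{y}\right\Vert ^{2}=\frac{2}{n}\tilde{\mathcal{L}}\left(\mathbf{z}\right).
\]

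Next, substitute $\alpha_{\mathcal{L}}=n^{-1}$ into the right-hand side of the PL condition. This gives
\[
\frac{\left\Vert \nabla\tilde{\mathcal{L}}\left(\mathbf{z}\right)\right\Vert ^{2}}{2\alpha_{\mathcal{L}}}=\frac{n}{2}\cdot\frac{2}{n}\tilde{\mathcal{L}}\left(\mathbf{z}\right)=\tilde{\mathcal{L}}\left(\mathbf{z}\right).
\]
The PL inequality therefore holds with equality for every $\mathbf{z}\in\mathbb{R}^{n}$. Non-negativity of $\tilde{\mathcal{L}}$ is immediate from its definition, and $\alpha_{\mathcal{L}}=n^{-1}>0$.

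There is no real obstacle. The only point needing care is the normalization: the factor $\frac{1}{2n}$ in the loss yields a $\frac{1}{n}$ in the gradient, and this is what makes the constant $n^{-1}$ rather than $1$. The result is sharp, since equality holds.
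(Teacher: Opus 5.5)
Your proposal is correct and follows the paper's proof essentially verbatim: compute $\nabla\tilde{\mathcal{L}}(\mathbf{z})=\frac{1}{n}(\mathbf{z}-\mathbf{y})$, observe $\|\nabla\tilde{\mathcal{L}}(\mathbf{z})\|^{2}=\frac{2}{n}\tilde{\mathcal{L}}(\mathbf{z})$, and read off that the PL inequality holds with equality for $\alpha_{\mathcal{L}}=n^{-1}$.
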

\begin{proof}
This is shown by simple algebra:
\begin{align*}
\nabla\tilde{\mathcal{L}}\left(\mathbf{z}\right) & =\frac{1}{n}\left(\mathbf{z}-\mathbf{y}\right)\,,\\
\Longrightarrow\left\Vert \nabla\tilde{\mathcal{L}}\left(\mathbf{z}\right)\right\Vert ^{2} & =\frac{1}{n^{2}}\left\Vert \mathbf{z}-\mathbf{y}\right\Vert ^{2}=\frac{2}{n}\frac{1}{2n}\left\Vert \mathbf{z}-\mathbf{y}\right\Vert ^{2}=\frac{2}{n}\tilde{\mathcal{L}}\left(\mathbf{z}\right)\\
\Longrightarrow\tilde{\mathcal{L}}\left(\mathbf{z}\right) & =\frac{\left\Vert \nabla\tilde{\mathcal{L}}\left(\mathbf{z}\right)\right\Vert ^{2}}{2n^{-1}}\,.
\end{align*}
\end{proof}
The two near-isometry constants are read off from the spectrum of the Jacobian
at initialization: the NTK minimum eigenvalue bound of
\cref{thm:lower_bound_ntk_deep} gives $\mu$, and a trace bound gives $\nu$.
\begin{proposition}
\label{prop:mu_at_initialization}At initialization, for $m=\Omega\left(\frac{n}{\lambda_{1}}\ln\left(\frac{n}{\lambda_{1}}\right)\ln n\right)$,
we have with probability at least $1-\exp\left(-c_{1}\frac{m}{\frac{n}{\lambda_{1}}\ln\left(\frac{n}{\lambda_{1}}\right)}\right)$,
\begin{align*}
\mu & \triangleq\sigma_{\min}\left(\nabla f\left(\boldsymbol{\theta}_{0}\right)\right)\geq\tilde{c}_{1}\sqrt{\lambda_{1}s_{2}^{2}m}\,.
\end{align*}
\end{proposition}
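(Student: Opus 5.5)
The plan is to read $\mu$ off the NTK minimum eigenvalue bound of \cref{thm:lower_bound_ntk_deep}, specialised to $L=1$. The Jacobian $\nabla f\left(\boldsymbol{\theta}_{0}\right)\in\mathbb{R}^{n\times N_{p}}$ has $n\leq N_{p}$ rows, so its $n$ singular values are the square roots of the eigenvalues of $\mathbf{K}_{\mathrm{NTK}}\left(\boldsymbol{\theta}_{0}\right)=\nabla f\left(\boldsymbol{\theta}_{0}\right)\nabla f\left(\boldsymbol{\theta}_{0}\right)^{\top}$. Hence
\[
\sigma_{\min}\left(\nabla f\left(\boldsymbol{\theta}_{0}\right)\right)=\sqrt{\lambda_{\min}\left(\mathbf{K}_{\mathrm{NTK}}\left(\boldsymbol{\theta}_{0}\right)\right)}\,,
\]
where $\sigma_{\min}$ denotes the $n$-th (smallest nontrivial) singular value, as in the near-isometry definition. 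It therefore suffices to lower bound $\lambda_{\min}\left(\mathbf{K}_{\mathrm{NTK}}\right)$ at initialization.

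Next, apply \cref{thm:lower_bound_ntk_deep} with $L=1$ and $m_{1}=m$. The variance-preservation demand is $a_{1}/s_{1}^{2}=d$, which is assumed in \cref{sec:setting_deep_nn}. For $m=\Omega\left(\frac{n}{\lambda_{1}}\ln\frac{n}{\lambda_{1}}\ln n\right)$, this gives, with probability at least $1-\exp\left(-\tilde{c}_{1}\frac{m}{\frac{n}{\lambda_{1}}\ln\frac{n}{\lambda_{1}}}\right)$ over $\mathbf{W}$,
\[
\lambda_{\min}\left(\mathbf{K}_{\mathrm{NTK}}\left(\boldsymbol{\theta}_{0}\right)\right)\geq c_{0}s_{2}^{2}m\lambda_{1}\,.
\]
That lemma requires no assumption on the initialization of $\mathbf{v}$. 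Indeed, its proof lower bounds $\mathbf{K}_{\mathrm{NTK}}\succeq s_{2}^{2}\mathbf{A}^{\left(1\right)}\left(\mathbf{A}^{\left(1\right)}\right)^{\top}$ using only the $\mathbf{v}$-block, and $\mathbf{A}^{\left(1\right)}=\phi\left(s_{1}\mathbf{X}\mathbf{W}^{\top}\right)$ depends on $\mathbf{W}$ alone. So the event is measurable with respect to $\mathbf{W}$, and its probability is unchanged when $\mathbf{v}$ is included.

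Taking square roots gives $\mu\geq\sqrt{c_{0}}\sqrt{\lambda_{1}s_{2}^{2}m}$, so we set $\tilde{c}_{1}\triangleq\sqrt{c_{0}}$ and rename the probability constant to $c_{1}$.

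The only point needing care is that the constants of \cref{thm:lower_bound_ntk_deep} stay global when $L=1$. This holds because they depend only on $h_{C}(1)$, $\alpha_{\phi}$ and $\nu_{0}$, all of which are $\Theta(1)$. Beyond that, the statement is essentially a direct corollary.
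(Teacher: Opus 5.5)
Your proposal is correct and follows the paper's own proof: invoke \cref{thm:lower_bound_ntk_deep} at $L=1$ to get $\lambda_{\min}\left(\mathbf{K}_{\mathrm{NTK}}\left(\boldsymbol{\theta}_{0}\right)\right)\geq c_{0}\lambda_{1}s_{2}^{2}m$, then take square roots using $\mathbf{K}_{\mathrm{NTK}}=\nabla f\,\nabla f^{\top}$. Your extra remarks, that the event depends only on $\mathbf{W}$ and that the constants stay global at $L=1$, are valid additions the paper leaves implicit.
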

\begin{proof}
From \cref{thm:lower_bound_ntk_deep}, we have for $m=\Omega\left(\frac{n}{\lambda_{1}}\ln\left(\frac{n}{\lambda_{1}}\right)\ln n\right)$
with probability at least $1-\exp\left(-c_{1}\frac{m}{\frac{n}{\lambda_{1}}\ln\left(\frac{n}{\lambda_{1}}\right)}\right)$,
\[
\lambda_{\min}\left(\mathbf{K}_{\mathrm{NTK}}\left(\boldsymbol{\theta}_{0}\right)\right)\geq c_{0}\lambda_{1}s_{2}^{2}m\,.
\]
Since $\mathbf{K}_{\mathrm{NTK}}$ is defined as
\[
\mathbf{K}_{\mathrm{NTK}}\left(\boldsymbol{\theta}\right)=\nabla f\left(\boldsymbol{\theta}\right)\nabla f\left(\boldsymbol{\theta}\right)^{\top}
\]
for all $\boldsymbol{\theta}\in\mathbb{R}^{N_{p}}$,
\[
\mu=\sigma_{\min}\left(\nabla f\left(\boldsymbol{\theta}_{0}\right)\right)=\sqrt{\lambda_{\min}\left(\mathbf{K}_{\mathrm{NTK}}\left(\boldsymbol{\theta}_{0}\right)\right)}\geq\sqrt{c_{0}\lambda_{1}s_{2}^{2}m}=\tilde{c}_{1}\sqrt{\lambda_{1}s_{2}^{2}m}\,.
\]
\end{proof}
\begin{proposition}
\label{prop:nu_at_initialization}At initialization, we have that
with probability at least $1-4n\exp\left(-\tilde{c}_{2}m\right)$,
\[
\nu\triangleq\sigma_{\max}\left(\nabla f\left(\boldsymbol{\theta}_{0}\right)\right)\leq\sqrt{4\alpha_{\phi}^{2}c_{\phi,\sigma_{0}}s_{2}^{2}s_{1}^{2}md\left(\frac{\nu_{0}^{2}}{a_{1}}+\frac{1}{a_{2}}\right)n}\,.
\]
With $m=\Omega\left(\frac{n}{\lambda_{1}}\ln\left(\frac{n}{\lambda_{1}}\right)\ln n\right)$,
there exists a suitable constant $c_{2}$ such that the above holds
with probability at least $1-\exp\left(-c_{2}m\right)$.
\end{proposition}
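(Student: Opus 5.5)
The plan is to bound the spectral norm by the Frobenius norm, reuse the per-sample gradient bound from the shallow convergence proof, and control the two resulting Gaussian norms with chi-square concentration plus a union bound over the $n$ samples. Concretely, we use
\[
\nu^{2}=\sigma_{\max}\left(\nabla f\left(\boldsymbol{\theta}_{0}\right)\right)^{2}\leq\left\Vert \nabla f\left(\boldsymbol{\theta}_{0}\right)\right\Vert _{\mathrm{F}}^{2}=\sum_{i=1}^{n}\left\Vert \nabla f_{i}\left(\boldsymbol{\theta}_{0}\right)\right\Vert ^{2}.
\]
In the second-term computation of \cref{app:langevin_proof} we already derived the deterministic bound
\[
\left\Vert \nabla f_{i}\left(\boldsymbol{\theta}\right)\right\Vert ^{2}\leq\alpha_{\phi}^{2}s_{2}^{2}s_{1}^{2}\left(\left\Vert \mathbf{W}\mathbf{x}_{i}\right\Vert ^{2}+\left\Vert \mathbf{v}\right\Vert ^{2}\left\Vert \mathbf{x}_{i}\right\Vert ^{2}\right).
\]
This follows from $\left|\phi\left(\cdot\right)\right|\leq\alpha_{\phi}\left|\cdot\right|$ and $\left|\phi'\right|\leq\alpha_{\phi}$. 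At initialization it equals
\[
\alpha_{\phi}^{2}s_{2}^{2}s_{1}^{2}c_{\phi,\sigma_{0}}d\left(\frac{\nu_{0}^{2}}{a_{1}}\left\Vert \mathbf{z}_{i}^{\left(1\right)}\right\Vert ^{2}+\frac{1}{a_{2}}\left\Vert \mathbf{z}^{\left(2\right)}\right\Vert ^{2}\right),
\]
with $\mathbf{z}_{i}^{\left(1\right)}=\mathbf{W}\mathbf{x}_{i}/\sqrt{c_{\phi,\sigma_{0}}d\nu_{0}^{2}/a_{1}}$ and $\mathbf{z}^{\left(2\right)}=\sqrt{a_{2}}\mathbf{v}$, both distributed as $\mathcal{N}\left(\mathbf{0}_{m},\mathbf{I}_{m}\right)$. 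Here we used \cref{ass:normalized_data} for $\left\Vert \mathbf{x}_{i}\right\Vert ^{2}=c_{\phi,\sigma_{0}}d$. Note that the vector $\mathbf{z}^{\left(2\right)}$ is shared across all $i$.

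The only probabilistic step is to show $\left\Vert \mathbf{z}\right\Vert ^{2}\leq 2m$ for a standard Gaussian $\mathbf{z}\in\mathbb{R}^{m}$ with probability at least $1-2\exp\left(-\tilde{c}_{2}m\right)$. The quantity $\left\Vert \mathbf{z}\right\Vert ^{2}-m$ is a sum of $m$ centered independent subexponential variables with $O(1)$ $\psi_{1}$-norm. Bernstein's inequality \citep[Theorem 2.9.1]{vershynin2018high} with deviation $t=m$ therefore gives the tail $2\exp\left(-c\min\left(m,m\right)\right)$. Equivalently, one can use the subgaussian concentration of $\left\Vert \mathbf{z}\right\Vert -\sqrt{m}$ quoted in \cref{prop:bounded moments of gaussian vector}.

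Next we take a union bound over the $n$ vectors $\mathbf{z}_{i}^{\left(1\right)}$ and the single vector $\mathbf{z}^{\left(2\right)}$. The failure probability is at most $2(n+1)\exp\left(-\tilde{c}_{2}m\right)\leq 4n\exp\left(-\tilde{c}_{2}m\right)$. On the complementary event,
\[
\left\Vert \nabla f_{i}\left(\boldsymbol{\theta}_{0}\right)\right\Vert ^{2}\leq 2\alpha_{\phi}^{2}c_{\phi,\sigma_{0}}s_{2}^{2}s_{1}^{2}md\left(\frac{\nu_{0}^{2}}{a_{1}}+\frac{1}{a_{2}}\right)
\]
for every $i$. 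Summing over $i$ and taking the square root yields the stated bound; it holds even with constant $2$ in place of $4$, so the slack is harmless.

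For the simplified probability, we note that $m=\Omega\left(\frac{n}{\lambda_{1}}\ln\frac{n}{\lambda_{1}}\ln n\right)$ and that $\lambda_{1}=O(1)$ by \cref{eq:lambda1_upper_bound}. Together these imply $m\geq C\ln n$ for a large constant $C$. Hence $4n\exp\left(-\tilde{c}_{2}m\right)=\exp\left(\ln(4n)-\tilde{c}_{2}m\right)\leq\exp\left(-c_{2}m\right)$ with, e.g., $c_{2}=\tilde{c}_{2}/2$.

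There is no real obstacle; the only care needed is remembering that $\mathbf{z}_{i}^{\left(1\right)}$ depends on $i$ and therefore requires the union bound, while the $\mathbf{v}$ term does not.
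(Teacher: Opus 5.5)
Your proposal is correct and follows essentially the same route as the paper: bound $\sigma_{\max}$ by the Frobenius norm (trace), use the per-sample bound $\left\Vert \nabla f_i(\boldsymbol{\theta}_0)\right\Vert^2\leq\alpha_\phi^2 s_2^2 s_1^2\bigl(\left\Vert \mathbf{W}\mathbf{x}_i\right\Vert^2+\left\Vert \mathbf{v}\right\Vert^2\left\Vert \mathbf{x}_i\right\Vert^2\bigr)$, apply Gaussian-norm concentration, and take a union bound over the samples. The paper instead uses the subgaussian tail of $\left\Vert \mathbf{z}\right\Vert-\sqrt{m}$ with $\epsilon=\sqrt{m}$, which gives the threshold $4m$ and $2n$ events; your Bernstein threshold $2m$ and your observation that $\mathbf{v}$ is shared across samples ($n+1$ events) are only cosmetic refinements.
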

\begin{proof}
Fix $i\in\left[n\right]$. Then,
\begin{align*}
\left\Vert \nabla f_{i}\left(\boldsymbol{\theta}_{0}\right)\right\Vert ^{2} & =\left\Vert s_{2}\phi\left(s_{1}\mathbf{W}\mathbf{x}_{i}\right)\right\Vert ^{2}+\sum_{j=1}^{m}\left\Vert s_{2}v_{j}\phi'\left(s_{1}\mathbf{w}_{j}^{\top}\mathbf{x}_{i}\right)s_{1}\mathbf{x}_{i}\right\Vert ^{2}\\
\left[\phi\left(\cdot\right)\leq\alpha_{\phi}\left|\cdot\right|,\,\left|\phi'\right|\le\alpha_{\phi}\right] & \leq\alpha_{\phi}^{2}s_{2}^{2}s_{1}^{2}\left(\left\Vert \mathbf{W}\mathbf{x}_{i}\right\Vert ^{2}+\sum_{j=1}^{m}v_{j}^{2}\left\Vert \mathbf{x}_{i}\right\Vert ^{2}\right)\,,
\end{align*}
and the RHS is distributed as 
\[
\alpha_{\phi}^{2}s_{2}^{2}s_{1}^{2}c_{\phi,\sigma_{0}}d\left(\frac{\nu_{0}^{2}}{a_{1}}\left\Vert \mathbf{z}^{\left(1\right)}\right\Vert ^{2}+\frac{1}{a_{2}}\left\Vert \mathbf{z}^{\left(2\right)}\right\Vert ^{2}\right)\,,
\]
where $\mathbf{z}^{\left(1\right)},\mathbf{z}^{\left(2\right)}\sim\mathcal{N}\left(\mathbf{0}_{m},\mathbf{I}_{m}\right)$
independently. From \citet[Theorem 3.1.1]{vershynin2018high}, we have
\[
\left\Vert \left\Vert \mathbf{z}\right\Vert _{2}-\sqrt{m}\right\Vert _{\psi_{2}}\leq C\,,
\]
thus, for any $\epsilon>0$,
\[
\mathbb{P}\left(\left|\left\Vert \mathbf{z}\right\Vert _{2}-\sqrt{m}\right|\geq\epsilon\right)\leq2\exp\left(-\frac{c}{C^{2}}\epsilon^{2}\right)=2\exp\left(-\tilde{c}\epsilon^{2}\right)\,.
\]
Observe that
\[
\mathbb{P}\left(\left|\left\Vert \mathbf{z}\right\Vert _{2}-\sqrt{m}\right|\geq\epsilon\right)\geq\mathbb{P}\left(\left\Vert \mathbf{z}\right\Vert _{2}\geq\sqrt{m}+\epsilon\right)=\mathbb{P}\left(\left\Vert \mathbf{z}\right\Vert ^{2}\geq\left(\sqrt{m}+\epsilon\right)^{2}\right)\,.
\]
Then, via union bound, with probability of at least $1-4\exp\left(-\tilde{c}\epsilon^{2}\right),$
\[
\left\Vert \nabla f_{i}\left(\boldsymbol{\theta}_{0}\right)\right\Vert ^{2}\leq\alpha_{\phi}^{2}s_{2}^{2}s_{1}^{2}c_{\phi,\sigma_{0}}d\left(\frac{\nu_{0}^{2}}{a_{1}}+\frac{1}{a_{2}}\right)\left(\sqrt{m}+\epsilon\right)^{2}\,.
\]
Choosing $\epsilon=\sqrt{m}$, we get that with probability of at
least $1-4\exp\left(-\tilde{c}m\right),$for any $i\in\left[n\right]$,
\[
\left\Vert \nabla f_{i}\left(\boldsymbol{\theta}_{0}\right)\right\Vert ^{2}\leq4\alpha_{\phi}^{2}c_{\phi,\sigma_{0}}s_{2}^{2}s_{1}^{2}md\left(\frac{\nu_{0}^{2}}{a_{1}}+\frac{1}{a_{2}}\right)\,.
\]
Via union bound, we get that with probability at least $1-4n\exp\left(-\tilde{c}m\right)$,
\begin{align*}
\nu & =\sigma_{\max}\left(\nabla f\left(\boldsymbol{\theta}_{0}\right)\right)=\sqrt{\lambda_{\max}\left(\nabla f\left(\boldsymbol{\theta}_{0}\right)^{\top}\nabla f\left(\boldsymbol{\theta}_{0}\right)\right)}\\
 & \leq\sqrt{\mathrm{Tr}\left(\nabla f\left(\boldsymbol{\theta}_{0}\right)^{\top}\nabla f\left(\boldsymbol{\theta}_{0}\right)\right)}=\sqrt{\sum_{j=1}^{N_{p}}\sum_{i=1}^{n}\left(\left(\left(\nabla f_{i}\left(\boldsymbol{\theta}_{0}\right)\right)_{j}\right)^{2}\right)}\\
 & =\sqrt{\sum_{i=1}^{n}\left\Vert \nabla f_{i}\left(\boldsymbol{\theta}_{0}\right)\right\Vert ^{2}}\leq\sqrt{4\alpha_{\phi}^{2}c_{\phi,\sigma_{0}}s_{2}^{2}s_{1}^{2}md\left(\frac{\nu_{0}^{2}}{a_{1}}+\frac{1}{a_{2}}\right)n}\,.
\end{align*}
For $m=\Omega\left(\frac{n}{\lambda_{1}}\ln\left(\frac{n}{\lambda_{1}}\right)\ln n\right)\gg\ln n$,
we can lower bound the probability by $1-\exp\left(-c_{2}m\right)$
for a suitable constant $c_{2}$.
\end{proof}
The smoothness constant is established in three steps. We first bound the
Hessian of each network output at \emph{every} parameter vector, in terms of
the largest second-layer weight (\cref{prop:hessian_global_bound}). We then
convert that bound into smoothness on a ball whose radius is itself set by the
smoothness constant; since the constant grows linearly with the radius, the two
are pinned down together by a quadratic inequality in the radius
(\cref{prop:network_beta_smooth_in_ball}). Finally, we
bound the largest second-layer weight at initialization
(\cref{prop:max_v_at_initialization}), the only one of the three steps that is
probabilistic.
\begin{proposition}
\label{prop:hessian_global_bound}For all $\boldsymbol{\theta}\in\mathbb{R}^{N_{p}},$
and all $i\in\left[n\right]$,
\[
\left\Vert \nabla^{2}f_{i}\left(\boldsymbol{\theta}\right)\right\Vert _{2}\leq s_{2}s_{1}\sqrt{c_{\phi,\sigma_{0}}d}\left(\alpha_{\phi}+\beta_{\phi}\sqrt{c_{\phi,\sigma_{0}}}s_{1}\sqrt{d}\max_{j\in\left[m\right]}\left|v_{j}\right|\right)\,.
\]
\end{proposition}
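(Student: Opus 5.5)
The plan is to compute the Hessian of $f_i(\boldsymbol{\theta})=s_2\mathbf{v}^\top\phi(s_1\mathbf{W}\mathbf{x}_i)$ explicitly in block form with respect to $(\mathbf{W},\mathbf{v})$, and then bound its spectral norm block by block. First, since $f_i$ is linear in $\mathbf{v}$, the $\mathbf{v}\mathbf{v}$ block vanishes. Second, the mixed block is determined by $\partial^2 f_i/\partial v_j\partial\mathbf{w}_{j'}=s_2s_1\phi'(s_1\mathbf{w}_j^\top\mathbf{x}_i)\mathbf{x}_i\mathds{1}_{\{j=j'\}}$. Third, the $\mathbf{W}\mathbf{W}$ block is block-diagonal over neurons, with $j$-th block $s_2s_1^2v_j\phi''(s_1\mathbf{w}_j^\top\mathbf{x}_i)\mathbf{x}_i\mathbf{x}_i^\top$; this follows from the computation already done for \cref{eq:df_dW_jk}.

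Next, for a unit vector $(\mathbf{U},\mathbf{u})$ with $\mathbf{U}\in\mathbb{R}^{m\times d}$ and rows $\mathbf{u}_j^{(W)}$, the quadratic form is
\[
Q=2s_2s_1\sum_j u_j\,\phi'(\cdot)\,\mathbf{x}_i^\top\mathbf{u}^{(W)}_j+s_2s_1^2\sum_j v_j\,\phi''(\cdot)\,(\mathbf{x}_i^\top\mathbf{u}^{(W)}_j)^2 .
\]
The spectral norm of the symmetric Hessian equals $\sup|Q|$. I would bound the two pieces separately.

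For the mixed term, use $|\phi'|\le\alpha_\phi$, Cauchy--Schwarz over $j$, and $|\mathbf{x}_i^\top\mathbf{u}_j^{(W)}|\le\|\mathbf{x}_i\|\|\mathbf{u}_j^{(W)}\|$. This gives a bound of $2s_2s_1\alpha_\phi\|\mathbf{x}_i\|\|\mathbf{u}\|\|\mathbf{U}\|_F\le s_2s_1\alpha_\phi\|\mathbf{x}_i\|$, using $2ab\le a^2+b^2=1$. For the diagonal term, use $|\phi''|\le\beta_\phi$ and $|v_j|\le\max_j|v_j|$, which bounds it by $s_2s_1^2\beta_\phi\max_j|v_j|\,\|\mathbf{x}_i\|^2\|\mathbf{U}\|_F^2\le s_2s_1^2\beta_\phi\max_j|v_j|\,\|\mathbf{x}_i\|^2$.

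Finally, substitute $\|\mathbf{x}_i\|=\sqrt{c_{\phi,\sigma_0}d}$ from \cref{ass:normalized_data}. Factoring out $s_2s_1\sqrt{c_{\phi,\sigma_0}d}$ yields exactly $\alpha_\phi+\beta_\phi\sqrt{c_{\phi,\sigma_0}}s_1\sqrt d\max_j|v_j|$, as stated. The argument is deterministic and holds for every $\boldsymbol{\theta}$.

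The only care needed is in getting the cross-term constant right: the factor $2$ from symmetry must be absorbed via $2\|\mathbf{u}\|\|\mathbf{U}\|_F\le1$ rather than by bounding each block's operator norm separately, since the latter would cost an extra constant. I expect no real obstacle beyond this bookkeeping. For the $C^1$ relaxation, $\phi''$ would be replaced by an a.e. representative bounded by $\beta_\phi$, as remarked earlier.
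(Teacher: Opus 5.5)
Your proposal is correct and is essentially the paper's argument: the paper uses the same second derivatives but organizes the Hessian as block-diagonal over neurons, bounds each neuron block $\begin{bmatrix}\mathbf{A}&\mathbf{b}\\ \mathbf{b}^\top&0\end{bmatrix}$ by $\|\mathbf{A}\|_2+\|\mathbf{b}\|_2$, and takes the maximum over $j$, which is the per-neuron version of your global quadratic-form bound. One small correction to your closing remark: the blockwise route loses no constant either, since $\left\|\begin{bmatrix}\mathbf{0}&\mathbf{b}\\ \mathbf{b}^\top&0\end{bmatrix}\right\|_2=\|\mathbf{b}\|_2$ exactly, so an extra factor $2$ appears only if you apply the triangle inequality to the two off-diagonal pieces separately.
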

\begin{proof}
Observe the shallow network definition:
\[
f_{i}\left(\boldsymbol{\theta}\right)=s_{2}\mathbf{v}^{\top}\phi\left(s_{1}\mathbf{W}\mathbf{x}_{i}\right)=\sum_{j=1}^{m}s_{2}v_{j}\phi\left(s_{1}\mathbf{w}_{j}^{\top}\mathbf{x}_{i}\right)\,.
\]
The network is additively separable over neurons, and each summand
depends only on $\left\{ v_{j},\mathbf{w}_{j}\right\} $. Hence, the
Hessian is block-diagonal across neurons. Observing some neuron $j\in\left[m\right]$:
\begin{align*}
f_{i}^{\left(j\right)}\left(v_{j},\mathbf{w}_{j}\right) & =s_{2}v_{j}\phi\left(s_{1}\mathbf{w}_{j}^{\top}\mathbf{x}_{i}\right)\,,\\
\frac{\partial f_{i}^{\left(j\right)}\left(v_{j},\mathbf{w}_{j}\right)}{\partial v_{j}} & =s_{2}\phi\left(s_{1}\mathbf{w}_{j}^{\top}\mathbf{x}_{i}\right)\,,\\
\frac{\partial^{2}f_{i}^{\left(j\right)}\left(v_{j},\mathbf{w}_{j}\right)}{\partial v_{j}^{2}} & =0\,,\\
\frac{\partial^{2}f_{i}^{\left(j\right)}\left(v_{j},\mathbf{w}_{j}\right)}{\partial v_{j}\partial\mathbf{w}_{j}} & =s_{2}\phi'\left(s_{1}\mathbf{w}_{j}^{\top}\mathbf{x}_{i}\right)s_{1}\mathbf{x}_{i}\,,\\
\frac{\partial f_{i}^{\left(j\right)}\left(v_{j},\mathbf{w}_{j}\right)}{\partial\mathbf{w}_{j}} & =s_{2}v_{j}\phi'\left(s_{1}\mathbf{w}_{j}^{\top}\mathbf{x}_{i}\right)s_{1}\mathbf{x}_{i}\,,\\
\frac{\partial^{2}f_{i}^{\left(j\right)}\left(v_{j},\mathbf{w}_{j}\right)}{\partial\mathbf{w}_{j}^{2}} & =s_{2}s_{1}^{2}v_{j}\phi''\left(s_{1}\mathbf{w}_{j}^{\top}\mathbf{x}_{i}\right)\mathbf{x}_{i}\mathbf{x}_{i}^{\top}\,.
\end{align*}
The hessian block per neuron is then
\[
\nabla^{2}f_{i}^{\left(j\right)}=\begin{bmatrix}\mathbf{A} & \mathbf{b}\\
\mathbf{b}^{\top} & 0
\end{bmatrix}\,,
\]
where 
\begin{align*}
\mathbf{A} & =s_{2}s_{1}^{2}v_{j}\phi''\left(s_{1}\mathbf{w}_{j}^{\top}\mathbf{x}_{i}\right)\mathbf{x}_{i}\mathbf{x}_{i}^{\top}\,,\\
\mathbf{b} & =s_{2}s_{1}\phi'\left(s_{1}\mathbf{w}_{j}^{\top}\mathbf{x}_{i}\right)\mathbf{x}_{i}\,.
\end{align*}
Hence, 
\begin{align*}
\left\Vert \nabla^{2}f_{i}^{\left(j\right)}\right\Vert _{2} & \leq\left\Vert \begin{bmatrix}\mathbf{A} & \mathbf{0}\\
\mathbf{0}^{\top} & 0
\end{bmatrix}\right\Vert _{2}+\left\Vert \begin{bmatrix}\mathbf{0} & \mathbf{b}\\
\mathbf{b}^{\top} & 0
\end{bmatrix}\right\Vert _{2}=\left\Vert \mathbf{A}\right\Vert _{2}+\left\Vert \mathbf{b}\right\Vert _{2}\\
 & =s_{2}s_{1}^{2}\left|v_{j}\right|\left|\phi''\left(s_{1}\mathbf{w}_{j}^{\top}\mathbf{x}_{i}\right)\right|\left\Vert \mathbf{x}_{i}\mathbf{x}_{i}^{\top}\right\Vert _{2}+s_{2}s_{1}\left|\phi'\left(s_{1}\mathbf{w}_{j}^{\top}\mathbf{x}_{i}\right)\right|\left\Vert \mathbf{x}_{i}\right\Vert _{2}\\
 & =s_{2}s_{1}\left\Vert \mathbf{x}_{i}\right\Vert \left(s_{1}\left|v_{j}\right|\left|\phi''\left(s_{1}\mathbf{w}_{j}^{\top}\mathbf{x}_{i}\right)\right|\left\Vert \mathbf{x}_{i}\right\Vert +\left|\phi'\left(s_{1}\mathbf{w}_{j}^{\top}\mathbf{x}_{i}\right)\right|\right)\\
 & \leq s_{2}s_{1}\sqrt{c_{\phi,\sigma_{0}}d}\left(\beta_{\phi}s_{1}\sqrt{c_{\phi,\sigma_{0}}d}\left|v_{j}\right|+\alpha_{\phi}\right)\,.
\end{align*}
Now, from the block structure of the entire Hessian,
\[
\left\Vert \nabla^{2}f_{i}\left(\boldsymbol{\theta}\right)\right\Vert _{2}=\max_{j\in\left[m\right]}\left\Vert \nabla^{2}f_{i}^{\left(j\right)}\right\Vert _{2}\leq s_{2}s_{1}\sqrt{c_{\phi,\sigma_{0}}d}\left(\alpha_{\phi}+\beta_{\phi}\sqrt{c_{\phi,\sigma_{0}}}s_{1}\sqrt{d}\max_{j\in\left[m\right]}\left|v_{j}\right|\right)\,.
\]
\end{proof}
\begin{proposition}
\label{prop:network_beta_smooth_in_ball}Given some $\boldsymbol{\theta}_{b}$
which satisfies $\max_{j\in\left[m\right]}\left|v_{j}\right|\leq\xi$
for some $\xi>0$, then for any $\mu>0$ we have that $f$ is $\beta_{f}$-smooth
in $\Theta=\left\{ \boldsymbol{\theta}:\,\left\Vert \boldsymbol{\theta}-\boldsymbol{\theta}_{b}\right\Vert \leq\frac{\mu}{2\beta_{f}}\right\} $,
with
\[
\beta_{f}=\sqrt{n}s_{2}s_{1}\sqrt{c_{\phi,\sigma_{0}}d}\left(\alpha_{\phi}+\beta_{\phi}\sqrt{c_{\phi,\sigma_{0}}}s_{1}\sqrt{d}\xi\right)+\sqrt{\frac{1}{2}\sqrt{n}s_{2}s_{1}^{2}c_{\phi,\sigma_{0}}d\beta_{\phi}\mu}\,.
\]
\end{proposition}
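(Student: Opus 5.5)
The plan is to bound the Jacobian's variation through the Hessian bound of \cref{prop:hessian_global_bound}, and then choose $\beta_f$ self-consistently with the ball radius. Fix $\boldsymbol{\theta}_1,\boldsymbol{\theta}_2\in\Theta$ and write $R\triangleq\frac{\mu}{2\beta_f}$. Since $\Theta$ is a Euclidean ball, it is convex, so the segment $\boldsymbol{\theta}_s=\boldsymbol{\theta}_2+s(\boldsymbol{\theta}_1-\boldsymbol{\theta}_2)$, $s\in[0,1]$, lies in $\Theta$. For each $i\in[n]$, the fundamental theorem of calculus applied to $\nabla f_i$ (valid since $\phi$ is $C^2$, or a.e.\ under the Lipschitz-derivative relaxation) gives
\[
\left\Vert \nabla f_i(\boldsymbol{\theta}_1)-\nabla f_i(\boldsymbol{\theta}_2)\right\Vert \leq \sup_{s\in[0,1]}\left\Vert \nabla^2 f_i(\boldsymbol{\theta}_s)\right\Vert_2\left\Vert \boldsymbol{\theta}_1-\boldsymbol{\theta}_2\right\Vert .
\]
Bounding the spectral norm of the Jacobian difference by its Frobenius norm, summing over the $n$ rows, yields the factor $\sqrt{n}\max_i\sup_s\Vert\nabla^2 f_i(\boldsymbol{\theta}_s)\Vert_2$.

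Next I control the Hessian uniformly on $\Theta$. For any $\boldsymbol{\theta}\in\Theta$ and any $j\in[m]$, we have $|v_j-v_{b,j}|\leq\Vert\boldsymbol{\theta}-\boldsymbol{\theta}_b\Vert\leq R$, so $\max_j|v_j|\leq\xi+R$. Plugging this into \cref{prop:hessian_global_bound} gives
\[
\sup_{\boldsymbol{\theta}\in\Theta}\max_{i}\left\Vert\nabla^2 f_i(\boldsymbol{\theta})\right\Vert_2\leq A+B(\xi+R),
\]
where $A=s_2s_1\sqrt{c_{\phi,\sigma_0}d}\,\alpha_\phi$ and $B=s_2s_1^2c_{\phi,\sigma_0}d\,\beta_\phi$. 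Hence $f$ is $\beta_f$-smooth on $\Theta$ provided
\[
\beta_f\geq\sqrt{n}\left(A+B\xi\right)+\sqrt{n}B\frac{\mu}{2\beta_f}.
\]

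The only subtle step is this circularity: the radius depends on $\beta_f$ itself. Setting $a=\sqrt{n}(A+B\xi)$ and $b=\frac12\sqrt{n}B\mu$, the requirement becomes $\beta_f^2-a\beta_f-b\geq0$. I check that the choice $\beta_f=a+\sqrt{b}$ works:
\[
(a+\sqrt b)^2-a(a+\sqrt b)-b=a\sqrt b\geq0 .
\]
Substituting $a$ and $b$ back reproduces exactly the stated $\beta_f$. This closes the argument, since the segment stays in $\Theta$ and every point of $\Theta$ obeys the Hessian bound used above.
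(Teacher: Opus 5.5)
Your proof is correct and is essentially the paper's argument: you use the same Hessian bound with $\max_j|v_j|$ at most $\xi$ plus the radius, the same Frobenius and row-summation step that produces the $\sqrt{n}$ factor, and the same resolution of the circular dependence between $\beta_f$ and the radius. The only difference is cosmetic. The paper solves the quadratic for the largest admissible radius $\rho$, bounds $\beta_0+\gamma\rho\le\beta_0+\sqrt{\gamma\mu/2}$, and then passes to the smaller ball of radius $\mu/(2\beta_f)$; you instead verify directly that $\beta_f=a+\sqrt{b}$ satisfies $\beta_f^2-a\beta_f-b\ge0$, which is slightly more direct.
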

\begin{proof}
In a ball with radius $\rho>0$ around $\boldsymbol{\theta}_{b}$,
i.e., for $\boldsymbol{\theta}$ such that $\left\Vert \boldsymbol{\theta}-\boldsymbol{\theta}_{b}\right\Vert \leq\rho$,
we have $\max_{j\in\left[m\right]}\left|v_{j}\right|\leq\xi+\rho$.
Then, in the ball, we have for all $i\in\left[n\right]$,
\[
\left\Vert \nabla^{2}f_{i}\left(\boldsymbol{\theta}\right)\right\Vert _{2}\leq s_{2}s_{1}\sqrt{c_{\phi,\sigma_{0}}d}\left(\alpha_{\phi}+\beta_{\phi}\sqrt{c_{\phi,\sigma_{0}}}s_{1}\sqrt{d}\left(\xi+\rho\right)\right)\,.
\]
Then, for every $\boldsymbol{\theta}_{1},\boldsymbol{\theta}_{2}$
in the ball, we have that
\begin{align*}
\left\Vert \nabla f\left(\boldsymbol{\theta}_{1}\right)-\nabla f\left(\boldsymbol{\theta}_{2}\right)\right\Vert _{2} & \leq\left\Vert \nabla f\left(\boldsymbol{\theta}_{1}\right)-\nabla f\left(\boldsymbol{\theta}_{2}\right)\right\Vert _{\mathrm{F}}\\
 & =\sqrt{\sum_{i\in\left[n\right]}\left\Vert \nabla f_{i}\left(\boldsymbol{\theta}_{1}\right)-\nabla f_{i}\left(\boldsymbol{\theta}_{2}\right)\right\Vert ^{2}}\\
 & \leq\sqrt{n}s_{2}s_{1}\sqrt{c_{\phi,\sigma_{0}}d}\left(\alpha_{\phi}+\beta_{\phi}\sqrt{c_{\phi,\sigma_{0}}}s_{1}\sqrt{d}\left(\xi+\rho\right)\right)\left\Vert \boldsymbol{\theta}_{1}-\boldsymbol{\theta}_{2}\right\Vert \,,
\end{align*}
where we used that, on the ball $\left\{ \boldsymbol{\theta}:\,\left\Vert \boldsymbol{\theta}-\boldsymbol{\theta}_{b}\right\Vert \leq\rho\right\} $,
the bound on $\left\Vert \nabla^{2}f_{i}\left(\boldsymbol{\theta}\right)\right\Vert _{2}$
implies that $\nabla f_{i}$ is Lipschitz with the same constant.
Denoting 
\begin{align*}
\beta_{0} & =\sqrt{n}s_{2}s_{1}\sqrt{c_{\phi,\sigma_{0}}d}\left(\alpha_{\phi}+\beta_{\phi}\sqrt{c_{\phi,\sigma_{0}}}s_{1}\sqrt{d}\xi\right)\,,\\
\gamma & =\sqrt{n}s_{2}s_{1}\sqrt{c_{\phi,\sigma_{0}}d}\beta_{\phi}\sqrt{c_{\phi,\sigma_{0}}}s_{1}\sqrt{d}\,,
\end{align*}
we observe that $f$ is $\beta\left(\rho\right)$-smooth in the ball,
with
\[
\beta\left(\rho\right)=\beta_{0}+\gamma\rho\,.
\]
We now demand that the ball's radius is no larger than $\frac{\mu}{2\beta_{f}},$i.e.,
\begin{align*}
\rho & \leq\frac{\mu}{2\left(\beta_{0}+\gamma\rho\right)}\\
\Longrightarrow & 2\gamma\rho^{2}+2\beta_{0}\rho-\mu\leq0\\
\Longrightarrow & \rho\leq\frac{\sqrt{\beta_{0}^{2}+2\gamma\mu}-\beta_{0}}{2\gamma}\,.
\end{align*}
In the ball with radius $\rho=\frac{\mu}{2\beta\left(\rho\right)}=\frac{\sqrt{\beta_{0}^{2}+2\gamma\mu}-\beta_{0}}{2\gamma}$,
we have that $f$ is $\beta\left(\rho\right)$-smooth with
\begin{align*}
\beta\left(\rho\right) & =\beta_{0}+\gamma\rho=\beta_{0}+\gamma\frac{\sqrt{\beta_{0}^{2}+2\gamma\mu}-\beta_{0}}{2\gamma}\\
 & \leq\beta_{0}+\frac{1}{2}\left(\beta_{0}+\sqrt{2\gamma\mu}-\beta_{0}\right)=\beta_{0}+\frac{1}{\sqrt{2}}\sqrt{\gamma\mu}\,.
\end{align*}
Denoting $\beta_{f}=\beta_{0}+\frac{1}{\sqrt{2}}\sqrt{\gamma\mu}\geq\beta\left(\rho\right)$
we immediately get that $f$ is also $\beta_{f}$-smooth in the ball
with radius $\rho=\frac{\mu}{2\beta\left(\rho\right)}$, and hence
that $f$ is $\beta_{f}$-smooth in the ball with the smaller radius
$\frac{\mu}{2\beta_{f}}$, concluding the proof. 
\end{proof}
\begin{proposition}
\label{prop:max_v_at_initialization}At initialization, we have that
with probability at least $1-\exp\left(-\frac{m}{n}\right)$,
\[
\max_{j\in\left[m\right]}\left|v_{j}\right|<\sqrt{\frac{2}{a_{2}}\left(\ln\left(2m\right)+\frac{m}{n}\right)}\,.
\]
With $m=\Omega\left(\frac{n}{\lambda_{1}}\ln\left(\frac{n}{\lambda_{1}}\right)\ln n\right)$
and for large enough $n$ (satisfied by $n\gg1$), we have that with
probability at least $1-\exp\left(-\frac{m}{n}\right)$,
\[
\max_{j\in\left[m\right]}\left|v_{j}\right|<\sqrt{\frac{4m}{a_{2}n}}\,.
\]
\end{proposition}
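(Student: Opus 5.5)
The plan is a Gaussian tail bound followed by a union bound over the $m$ coordinates of $\mathbf{v}$. At initialization (\cref{eq:Wv_initialization}) the entries $v_j$, $j\in\sqprn{m}$, are i.i.d.\ $\mathcal{N}\prn{0,\frac{1}{a_2}}$. Hence for every $u>0$ we have $\mathbb{P}\prn{\abs{v_j}\geq u}\leq 2\exp\prn{-\frac{a_2u^2}{2}}$, by the standard Gaussian tail estimate already used in the proof of \cref{prop:sub-gaussian-outputs-per-layer}. A union bound over $j\in\sqprn{m}$ then gives
\[
\mathbb{P}\prn{\max_{j\in\sqprn{m}}\abs{v_j}\geq u}\leq 2m\exp\prn{-\frac{a_2u^2}{2}}.
\]

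We choose the threshold $u$ so that the right-hand side equals $\exp\prn{-\frac{m}{n}}$. This requires $\frac{a_2u^2}{2}=\ln\prn{2m}+\frac{m}{n}$, that is,
\[
u=\sqrt{\frac{2}{a_2}\prn{\ln\prn{2m}+\frac{m}{n}}}.
\]
With this $u$, the event $\max_j\abs{v_j}<u$ holds with probability at least $1-\exp\prn{-\frac{m}{n}}$, which is the first claim. The Gaussian tail bound is continuous, so the strict versus non-strict inequality makes no difference.

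For the second claim, it suffices to show $\ln\prn{2m}\leq\frac{m}{n}$. Then $\frac{2}{a_2}\prn{\ln\prn{2m}+\frac{m}{n}}\leq\frac{4m}{a_2n}$, and the same event implies $\max_j\abs{v_j}<\sqrt{\frac{4m}{a_2n}}$ with the same probability. This comparison is the only mildly delicate step. The width assumption $m=\Omega\prn{\frac{n}{\lambda_1}\ln\frac{n}{\lambda_1}\ln n}$, together with $\lambda_1=O\prn{1}$ from \cref{eq:lambda1_upper_bound}, gives $\frac{m}{n}\gtrsim\ln\frac{n}{\lambda_1}\ln n\gtrsim\ln^2 n$. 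The function $x\mapsto\frac{x}{n}-\ln\prn{2x}$ is increasing for $x\geq n$, so it suffices to check the inequality at the smallest admissible $m$. There $\ln\prn{2m}=O\prn{\ln n+\ln\ln n}$, which is dominated by $\frac{m}{n}\gtrsim\ln^2 n$ once $n$ is large enough. This is exactly the ``large enough $n$'' caveat in the statement, and it completes the proof.
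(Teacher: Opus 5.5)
Your proposal is correct and follows essentially the same route as the paper. Both arguments apply a Gaussian tail bound and a union bound over the $m$ coordinates, choosing the threshold so that $2m\exp(-a_2u^2/2)=e^{-m/n}$, and then use a monotonicity argument to get $\ln(2m)<m/n$ once $m\gtrsim n\ln^2 n$ and $n$ is large; the paper uses that $\ln x/x$ is decreasing, while you use that $x/n-\ln(2x)$ is increasing for $x\geq n$, which amounts to the same thing.
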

\begin{proof}
At initialization, $v_{j}\sim\mathcal{N}\left(0,\frac{1}{a_{2}}\right),\,j\in\left[m\right]$
i.i.d. Hence, for some $\epsilon>0$, via union bound and the Gaussian
tail property \citet[Exercise 2.6]{vershynin2018high},
\[
\mathbb{P}\left(\max_{j\in\left[m\right]}\left|v_{j}\right|\geq\epsilon\right)\leq\sum_{j\in\left[m\right]}\mathbb{P}\left(\left|v_{j}\right|\geq\epsilon\right)\leq2m\exp\left(-a_{2}\frac{\epsilon^{2}}{2}\right)\,.
\]
Demanding this probability to be smaller than some $\delta>0$, we
get
\begin{align*}
2m\exp\left(-a_{2}\frac{\epsilon^{2}}{2}\right) & \leq\delta\\
\epsilon^{2} & \geq\frac{2}{a_{2}}\ln\left(\frac{2m}{\delta}\right)\,.
\end{align*}
Choosing $\delta=e^{-\nicefrac{m}{n}}$, we get that w.p. at least
$1-\exp\left(-\frac{m}{n}\right)$, 
\[
\max_{j\in\left[m\right]}\left|v_{j}\right|<\sqrt{\frac{2}{a_{2}}\left(\ln\left(2m\right)+\frac{m}{n}\right)}\,.
\]
We now assume $m=\Omega\left(\frac{n}{\lambda_{1}}\ln\left(\frac{n}{\lambda_{1}}\right)\ln n\right)=\Omega\left(n\left(\ln n\right)^{2}\right)$,
where the last equality is due to $\lambda_{1}=O\left(1\right)$.
Since for all $x>e$, $\frac{\ln x}{x}$ is monotonically decreasing,
we observe that for large enough $n$,
\begin{align*}
\ln\left(2m\right) & =m\frac{\ln\left(2m\right)}{m}\lesssim m\frac{\ln\left(2n\left(\ln n\right)^{2}\right)}{n\left(\ln n\right)^{2}}\\
 & <m\frac{\ln\left(n^{2}\right)}{n\left(\ln n\right)^{2}}=m\frac{2\ln\left(n\right)}{n\left(\ln n\right)^{2}}=\frac{m}{n}\frac{2}{\ln n}<\frac{m}{n}\,.
\end{align*}
 Hence, 
\[
\max_{j\in\left[m\right]}\left|v_{j}\right|<\sqrt{\frac{2}{a_{2}}\left(\ln\left(2m\right)+\frac{m}{n}\right)}<\sqrt{\frac{4m}{a_{2}n}}\,.
\]
\end{proof}
\subsection{Proof of the zero-loss guarantee by applying the second law change-of-measure inequality}
\label{sec:stopped_cld_zero_loss_proof}
Our change-of-measure method allows us to transfer initialization
properties to arbitrary time properties with high probability, as
well as upper bounding the expected loss at an arbitrary time by a
finite value, by our main theorem \cref{thm:upper_bound_on_expected_loss}. We will employ
this method to show that with finite, lower bounded probability,
necessary conditions for the application of \cref{thm:song-convergence}
hold when stopping CLD at a large enough deterministic time, allowing
convergence to zero loss via gradient flow from this point.
\begin{proof}[Proof of \cref{prop:convergence_to_zero_loss_shallow_stopped_cld}]
\emph{Step 1: transferring the initialization properties to time $t$.}
For an event $B$ whose probability at initialization is at most $\delta_{0}$,
we can apply the second law change-of measure inequality (\cref{eq:second_law_with_eta}) on the indicator
function $\mathds{1}_{B}$ to achieve that
\[
p_{t}\left(B\right)\leq\eta\delta_{0}^{\nicefrac{1}{c_{\beta}\bar\kappa^2\beta}}
\]
for any $t\geq0$, when evolving by regularized CLD as in \cref{eq:regularized_langevin}. Applying this to the complementary events of our
high probability bounds, we get that at time $t$, we have: 
\begin{enumerate}
\item By applying the second law with \cref{prop:mu_at_initialization},
we get that with probability at least $1-\eta\exp\left(-c_{1}\frac{m}{c_{\beta}\bar\kappa^2\beta\frac{n}{\lambda_{1}}\ln\left(\frac{n}{\lambda_{1}}\right)}\right)$,
$\mu\geq C_{1}\sqrt{\lambda_{1}s_{2}^{2}m}$, where $C_{1}$ is a
global constant.
\item By applying the second law with \cref{prop:nu_at_initialization},
we get that with probability at least $1-\eta\exp\left(-c_{2}\frac{m}{c_{\beta}\bar\kappa^2\beta}\right)$,
$\nu\leq C_{2}\sqrt{\frac{s_{2}^{2}s_{1}^{2}md}{a_{2}a_{1}}\left(a_{1}+a_{2}\right)n}$,
where $C_{2}$ is a global constant.
\item By applying the second law with \cref{prop:max_v_at_initialization},
we get that with probability at least $1-\eta\exp\left(-\frac{m}{c_{\beta}\bar\kappa^2\beta n}\right)$,
$\max_{j\in\left[m\right]}\left|v_{j}\right|\leq\sqrt{\frac{4m}{a_{2}n}}$.
Then, by applying \cref{prop:network_beta_smooth_in_ball}, we
get that $f$ is $\beta_{f}$-smooth in $\Theta=\left\{ \boldsymbol{\theta}:\,\left\Vert \boldsymbol{\theta}-\boldsymbol{\theta}_{\mathrm{CLD}}\left(t\right)\right\Vert \leq\frac{C_{1}\sqrt{\lambda_{1}s_{2}^{2}m}}{2\beta_{f}}\right\} $,
with 
\begin{align*}
\beta_{f} & \leq C_{3}\sqrt{n}s_{2}s_{1}\sqrt{d}\left(1+s_{1}\sqrt{d}\sqrt{\frac{m}{a_{2}n}}\right)+\sqrt{\sqrt{n}s_{2}s_{1}^{2}d\sqrt{\lambda_{1}s_{2}^{2}m}}\\
 & =C_{3}s_{2}s_{1}\sqrt{d}\left(\sqrt{n}+s_{1}\sqrt{\frac{dm}{a_{2}}}+\left(\lambda_{1}nm\right)^{1/4}\right)\,,
\end{align*}
where $C_{3}$ is a global constant.
\end{enumerate}
Via union bound, all of these will hold together with probability
at least
\begin{align*}
 & 1-\eta\left(\exp\left(-c_{1}\frac{m}{c_{\beta}\bar\kappa^2\beta\frac{n}{\lambda_{1}}\ln\left(\frac{n}{\lambda_{1}}\right)}\right)+\exp\left(-c_{2}\frac{m}{c_{\beta}\bar\kappa^2\beta}\right)+\exp\left(-\frac{m}{c_{\beta}\bar\kappa^2\beta n}\right)\right)\\
 & \geq1-\eta\exp\left(-c_{4}\frac{m}{c_{\beta}\bar\kappa^2\beta\frac{n}{\lambda_{1}}\ln\left(\frac{n}{\lambda_{1}}\right)}\right)\,,
\end{align*}
for a suitable constant $c_{4}$. 
This probability can be lower bounded under the condition $m=\Omega\left(\bar\kappa^2\beta\frac{n}{\lambda_{1}}\ln\left(\frac{n}{\lambda_{1}}\right)\right)$, also necessary for our expected loss bound \cref{thm:upper_bound_on_expected_loss}.

\emph{Step 2: the loss at the stopping time.}
Our convergence theorem (\cref{thm:upper_bound_on_expected_loss}) bounds the loss's expected value
(over the randomness of the initialization and the noise) at time
$t$ by 
\[
\mathbb{E}\mathcal{L}\left(\boldsymbol{\theta}_{\mathrm{CLD}}\left(t\right)\right)\leq\left(\mathbb{E}\mathcal{L}\left(\boldsymbol{\theta}_{0}\right)-\mathcal{L}_{\infty}\right)e^{-cs_{2}^{2}\frac{\lambda_{1}m}{n}t}+\mathcal{L}_{\infty}\,.
\]
Throughout this appendix, $\mathcal{L}_{\infty}$ abbreviates the
shallow-network loss floor $\mathcal{L}_{\infty}^{\prn{L=1}}$ of
\cref{eq:L_infty_raw}.
Then, for any $t\geq0$, $\mathcal{L}_{g}>0$ we have via Markov's
inequality that:
\begin{align*}
\mathbb{P}\left(\mathcal{L}\left(\boldsymbol{\theta}_{\mathrm{CLD}}\left(t\right)\right)\geq\mathcal{L}_{g}\right) & \leq\frac{\mathbb{E}\mathcal{L}\left(\boldsymbol{\theta}_{\mathrm{CLD}}\left(t\right)\right)}{\mathcal{L}_{g}}\\
 & \leq\frac{\left(\mathbb{E}\mathcal{L}\left(\boldsymbol{\theta}_{0}\right)-\mathcal{L}_{\infty}\right)e^{-cs_{2}^{2}\frac{\lambda_{1}m}{n}t}+\mathcal{L}_{\infty}}{\mathcal{L}_{g}}\,.
\end{align*}
For $t=\tau$ large enough (up to logarithmic terms, it is enough
to demand $\tau=\tilde{\Omega}\left(\frac{n}{s_{2}^{2}\lambda_{1}m}\right)$),
\[
\left(\mathbb{E}\mathcal{L}\left(\boldsymbol{\theta}_{0}\right)-\mathcal{L}_{\infty}\right)e^{-cs_{2}^{2}\frac{\lambda_{1}m}{n}\tau}\leq\mathcal{L}_{\infty}\,,
\]
then
\[
\mathbb{E}\mathcal{L}\left(\boldsymbol{\theta}_{\mathrm{CLD}}\left(\tau\right)\right)\leq2\mathcal{L}_{\infty}\,.
\]
By Markov's inequality, we have that for any $K>2$, 
\[
\mathbb{P}\prn{\mathcal{L}\left(\boldsymbol{\theta}_{\mathrm{CLD}}\left(\tau\right)\right) > K \mathcal{L}_\infty} \leq \frac{\mathbb{E}\mathcal{L}\left(\boldsymbol{\theta}_{\mathrm{CLD}}\left(\tau\right)\right)}{K \mathcal{L}_\infty}\leq\frac{2}{K}\,.
\]

\emph{Step 3: verifying the criterion at $\boldsymbol{\theta}_{\mathrm{CLD}}\left(\tau\right)$.}
Via union bound, we will have at time $\tau$ that with probability
at least \[
1-\eta\exp\left(-c_{4}\frac{m}{c_{\beta}\bar\kappa^2\beta\frac{n}{\lambda_{1}}\ln\left(\frac{n}{\lambda_{1}}\right)}\right)-\frac{2}{K}
\]
over the randomness
of the initialization and the noise, that $f$ is $\beta_{f}$-smooth
in $\Theta=\left\{ \boldsymbol{\theta}:\,\left\Vert \boldsymbol{\theta}-\boldsymbol{\theta}_{\mathrm{CLD}}\left(\tau\right)\right\Vert \leq\frac{\mu}{2\beta_{f}}\right\} $
and $\nabla f\left(\boldsymbol{\theta}_{\mathrm{CLD}}\left(\tau\right)\right)$
is $\left(\mu,\nu\right)$-near-isometry, and the loss is bounded,
with 
\begin{align*}
\mathcal{L}\left(\boldsymbol{\theta}_{\mathrm{CLD}}\left(\tau\right)\right) & <K \mathcal{L}_{\infty}\,,\\
\mu & =C_{1}\sqrt{\lambda_{1}s_{2}^{2}m}\,,\\
\nu & =C_{2}\sqrt{\frac{s_{2}^{2}s_{1}^{2}md}{a_{2}a_{1}}\left(a_{1}+a_{2}\right)n}\,,\\
\beta_{f} & =C_{3}s_{2}s_{1}\sqrt{d}\left(\sqrt{n}+s_{1}\sqrt{\frac{dm}{a_{2}}}+\left(\lambda_{1}nm\right)^{1/4}\right)\,.
\end{align*}
Together with $\tilde{\mathcal{L}}$ satisfying the PL condition with
$\alpha_{\mathcal{L}}=n^{-1}$ (\cref{claim:mse-loss-pl-condition}),
all that remains in order to apply \cref{thm:song-convergence}
with gradient flow initialized at $\boldsymbol{\theta}_{\mathrm{CLD}}\left(\tau\right)$,
is to show that
\[
K\mathcal{L}_{\infty}\leq\mathcal{L}_{\mathrm{GF}}=\hat{c}\frac{\alpha_{\mathcal{L}}\mu^{6}}{\beta_{f}^{2}\nu^{2}},
\]
where
\begin{align*}
\mathcal{L}_{\mathrm{GF}} & =\hat{c}\frac{\alpha_{\mathcal{L}}\mu^{6}}{\beta_{f}^{2}\nu^{2}}\\
 & =\hat{c}\frac{n^{-1}C_{1}^{6}\lambda_{1}^{3}s_{2}^{6}m^{3}}{C_{3}^{2}s_{2}^{2}s_{1}^{2}d\left(\sqrt{n}+s_{1}\sqrt{\frac{dm}{a_{2}}}+\left(\lambda_{1}nm\right)^{1/4}\right)^{2}C_{2}^{2}\frac{s_{2}^{2}s_{1}^{2}md}{a_{2}a_{1}}\left(a_{1}+a_{2}\right)n}\\
\left[\lambda_{1}=\Theta\left(1\right)\right] & \asymp\frac{s_{2}^{2}m^{2}}{n^{2}d^{2}s_{1}^{4}\left(\frac{1}{a_{1}}+\frac{1}{a_{2}}\right)\left(n+\frac{s_{1}^{2}dm}{a_{2}}+\sqrt{nm}\right)}\\
 & \asymp\frac{s_{2}^{2}m^{2}}{n^{2}d^{2}s_{1}^{4}\left(\frac{1}{a_{1}}+\frac{1}{a_{2}}\right)\left(\frac{s_{1}^{2}dm}{a_{2}}+\sqrt{nm}\right)}\,,
\end{align*}
where in the last transition we assumed $m\gg n$. Substituting $a_{1}=ds_{1}^{2},\,a_{2}=\frac{1}{\kappa^{2}}ms_{2}^{2}$,
we get that
\[
\mathcal{L}_{\mathrm{GF}}\asymp\frac{s_{2}^{2}m^{2}}{n^{2}d^{2}s_{1}^{4}\left(\frac{1}{ds_{1}^{2}}+\frac{\kappa^{2}}{ms_{2}^{2}}\right)\left(\frac{\kappa^{2}s_{1}^{2}d}{s_{2}^{2}}+\sqrt{nm}\right)}.
\]
Hence, it is enough to demand that 
\begin{equation}
\mathcal{L}_{\infty}\lesssim\frac{s_{2}^{2}m^{2}}{K n^{2}d^{2}s_{1}^{4}\left(\frac{1}{ds_{1}^{2}}+\frac{\kappa^{2}}{ms_{2}^{2}}\right)\left(\frac{\kappa^{2}s_{1}^{2}d}{s_{2}^{2}}+\sqrt{nm}\right)}
\label{eq:stopped_cld_loss_floor_demand}
\end{equation}
in order to achieve convergence via gradient flow from time $t=\tau$
with probability at least 
\(
1-\eta\exp\left(-c_{4}\frac{m}{c_{\beta}\bar\kappa^2\beta\frac{n}{\lambda_{1}}\ln\left(\frac{n}{\lambda_{1}}\right)}\right)-\frac{2}{K}
\)%
, which is the claim.
\end{proof}
\subsection{Examples of known parameterizations}
\label{sec:song_result_under_parameterizations}
We conclude by making the demand \cref{eq:stopped_cld_loss_floor_demand} on the
loss floor explicit under the two common parameterizations, and reading off
from \cref{tab:param_comparison} the width and inverse temperature that meet it.
\subsubsection{Standard parameterization}
Under standard parameterization, $s_{1}=s_{2}=1$, $\kappa=1$, and
the demand becomes
\[
\mathcal{L}_{\infty}\lesssim\frac{m^{2}}{Kn^{2}d^{2}\left(\frac{1}{d}+\frac{1}{m}\right)\left(d+\sqrt{nm}\right)}\,.
\]
Under $d\ll n^{3/2}$, $n^{2}\lesssim m$, as we assume in \cref{sec:convergence_result_analysis}, the above simplifies to
\begin{align*}
\mathcal{L}_{\infty} & \lesssim\frac{m^{3/2}}{Kn^{5/2}d}\,.
\end{align*}
From \cref{tab:param_comparison}, we have that this is achieved when demanding
\begin{align*}
\beta &= \Omega\prn{ \frac{K n^{7/2} d}{m^{3/2}}}\,,\quad m=\tilde\Omega \prn{\frac{K n^{9/2} d}{m^{3/2}}}\,.
\end{align*}
To achieve this, it is enough to demand:
\begin{align*}
m & =\tilde\Omega \prn{{K^{2/5} n^{9/5} d^{2/5}}}\,,\quad
 \beta  =\Omega \prn{{K^{2/5} n^{4/5} d^{2/5}}}\,.
\end{align*}
Since $\beta = \Omega \prn{n}$, the above will hold for $K=O \prn{\frac{\sqrt{n}}{d}}$, yielding an asymptotically small failure probability when $d \ll \sqrt{n}$, which is common.
This is the example stated in \cref{prop:convergence_to_zero_loss_shallow_stopped_cld}: at $K=\Theta \prn{\frac{\sqrt{n}}{d}}$ the two demands read $m=\tilde{\Omega}\prn{n^{2}}$ and $\beta=\Omega\prn{n}$.

We further note that by allowing $K=\Theta \prn{1}$, 
The RHS of this demand is $\gtrsim1$ when
\[
m\gtrsim n^{5/3}d^{2/3}\,.
\]
This result matches the result of \citet{song2021subquadratic}, up to $d$ contributions, which are also affected by
the data scaling---in \citet{song2021subquadratic} the data is scaled such that $\forall i\in\left[n\right],\,\left\Vert \mathbf{x}_{i}\right\Vert =1$
(see their Assumption 2), where in our analysis, $\left\Vert \mathbf{x}_{i}\right\Vert \asymp\sqrt{d}$.
\subsubsection{NTK parameterization}
Under NTK parameterization, $s_{1}=\frac{1}{\sqrt{d}},\,s_{2}=\frac{1}{\sqrt{m}}$,
$\kappa=1$, and the demand becomes
\[
\mathcal{L}_{\infty}\lesssim\frac{m}{Kn^{2}\left(1+1\right)\left(m+\sqrt{nm}\right)}\,,
\]
which under $m\gg n$ simplifies to
\[
\mathcal{L}_{\infty}\lesssim\frac{1}{Kn^{2}}\,.
\]
As elaborated in \cref{tab:param_comparison}, this is achieved when $\beta=\Omega\left(Kn^{4}\right)$
and $m=\tilde{\Omega}\left(\beta n\right)=\tilde{\Omega}\left(Kn^{5}\right)$.

\newpage

\section{Experiments}
\label{app:97_app_experiments}

\subsection{Experimental details}
The experiments are performed according to the following implementation. We sample $n=10$ datapoints from the digits dataset of scikit-learn \citep{optical_recognition_of_handwritten_digits_80, scikit-learn}, center the inputs to have $0$ mean and scale them to have Euclidean norm $\sqrt{d}$ with $d = 64$. We train a neural network defined as in \Cref{eq:network_definition} with scales $s_1 = s_2 = 1$, activation $\phi = \tanh$ and output dimension 10 (the only deviation from the setup of \Cref{eq:network_definition} is the output dimension). The parameter initialization matches \Cref{eq:Wv_initialization} with $a_1 = d$ and $a_2 = m$. We train the neural network according to the Euler–Maruyama discretization of the SDE in \Cref{eq:regularized_langevin} with step size $\eta = 10^{-5}$. We set the hidden layer width as $m=160,000$ to ensure  $m = \max(d^2, n \beta_{\max} \log n \log \beta_{\max})$, where $\beta_{\max}$ is the largest $\beta$ value we use, so that the width assumptions of our theorem will hold up to constants. We repeat this procedure over 5 seeds for $\beta \in \{10n, 50n, 100n, \infty\}$, where $\beta=\infty$ is equivalent to noiseless dynamics. In each graph we plot the mean across seeds with a shaded region showing the standard deviation. The code is available at \url{https://github.com/og-research/cld-optimization-experiments}.

The experiments were run on a dedicated server using 5 Nvidia RTX A6000 GPUs, each with 48GB of RAM. The run utilized 30GB of each GPU's RAM and took 20 hours. The amount of required resources is due to the tracking of the neural network Jacobian. Its computation relies on the function \texttt{torch.autograd.functional.jacobian}, and storing its output takes a substantial amount of memory, as the number of parameters is $n \times m \times (d+1) \approx 10^8$.

\begin{figure*}[hb]
    \centering
    \begin{subfigure}{0.40\textwidth}
        \captionsetup{labelfont=normalfont,textfont=normalfont,justification=centering}
        \centering
        \includegraphics[width=\textwidth]{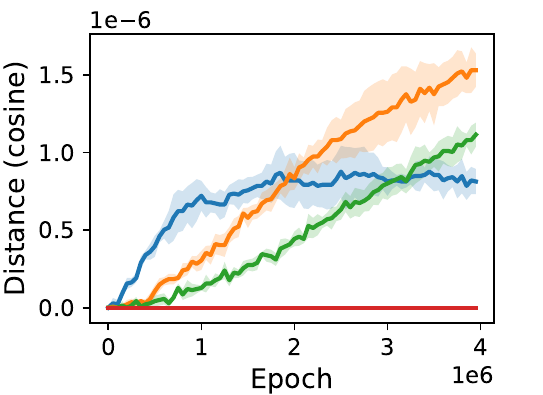}
        \caption{Deviation from linearized dynamics in cosine distance.}
        \label{fig:expr1_nn_to_lin_dist_co}
    \end{subfigure}
    \begin{subfigure}{0.40\textwidth}
        \captionsetup{labelfont=normalfont,textfont=normalfont,justification=centering}
        \centering
        \includegraphics[width=\textwidth]{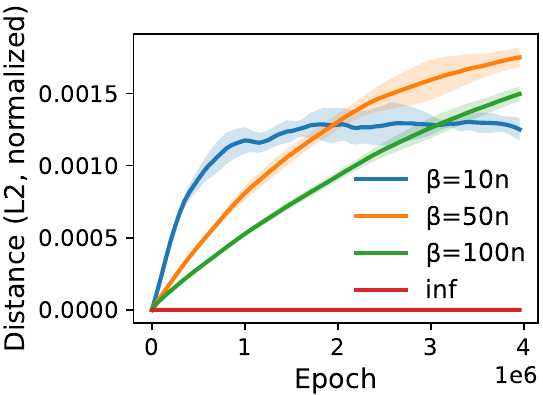}
        \caption{Deviation from linearized dynamics in $L_2$ distance.}
        \label{fig:expr1_nn_to_lin_dist_l2}
    \end{subfigure}
    \begin{subfigure}{0.40\textwidth}
        \captionsetup{labelfont=normalfont,textfont=normalfont,justification=centering}
        \centering
        \includegraphics[width=\textwidth]{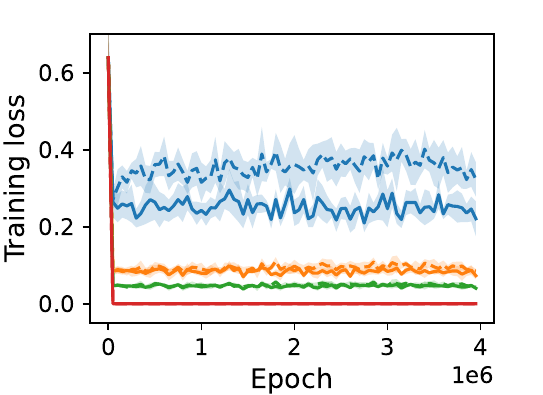}
        \caption{Training loss (NN in solid, linearized model in dashed).}
        \label{fig:expr1_train_loss_with_lin}
    \end{subfigure}
    
    \caption{The deviation from the linearized dynamics as a function of number of updates for $\beta \in \{10n, 50n, 100n, \infty\}$, measured as the parameter distance between the neural network and its linearized model given by the first-order approximation at initialization. In \cref{fig:expr1_nn_to_lin_dist_co} we see the cosine distance and in \cref{fig:expr1_nn_to_lin_dist_l2} we see the $L_2$ distance divided by the parameter norm at initialization. In \cref{fig:expr1_train_loss_with_lin} we see the training loss of the neural network (solid) and the linearized model (dashed).}
    \label{fig:figure2}
\end{figure*}

\begin{figure*}[ht]
    \centering
    \begin{subfigure}[t]{0.40\textwidth}
        \captionsetup{labelfont=normalfont,textfont=normalfont,justification=centering}
        \centering
        \includegraphics[width=\linewidth]{figures/expr1_jacobian_dist_co.pdf}
        \caption{Jacobian drift in cosine distance.}
        \label{fig:expr1_jacobian_dist_co_2}
    \end{subfigure}
    \begin{subfigure}[t]{0.40\textwidth}
        \captionsetup{labelfont=normalfont,textfont=normalfont,justification=centering}
        \centering
        \includegraphics[width=\linewidth]{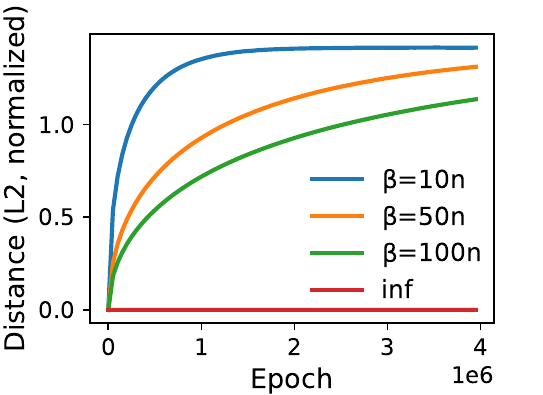}
        \caption{Jacobian drift in Frobenius norm.}
        \label{fig:expr1_jacobian_dist_l2}
    \end{subfigure}

    \caption{The distance of the Jacobian from its value at initialization as a function of number of updates for $\beta \in \{10n, 50n, 100n, \infty\}$. In \cref{fig:expr1_jacobian_dist_co_2} the distance is the cosine distance; In \cref{fig:expr1_jacobian_dist_l2} the distance is the Frobenius norm divided by the norm of the Jacobian at initialization. }
    \label{fig:figure3}
\end{figure*}

\subsection{Additional results}
\label{sec:additional_experiments}
In \Cref{fig:expr1_nn_to_lin_dist_co,fig:expr1_nn_to_lin_dist_l2} we measure the neural network's \textit{deviation from its linearized dynamics}: we construct the neural network's linearization around its initialization, defined by 
\mbox{$h(\boldsymbol{\theta}) = f(\boldsymbol{\theta}_0) + \nabla_{\boldsymbol{\theta}} f(\boldsymbol{\theta}_0) (\boldsymbol{\theta} - \boldsymbol{\theta}_0)$}.
We train both $f$ and $h$ with the same noise samples and measure the distance between their parameter vectors. 
In all cases of noisy training, the cosine distance is of order $10^{-6}$, showing that the dynamics flow in similar directions in parameter space. Interestingly, the distance is not monotone in $\beta$. In \Cref{fig:expr1_train_loss_with_lin} we observe that despite the similarity in parameter space, the loss of the neural network improves over that of the linearized model for sufficiently large noise. This suggests that the dynamics differ in the space of \textit{predictors}.

In \Cref{fig:figure3} we again consider the \textit{Jacobian drift} throughout training. \Cref{fig:expr1_jacobian_dist_co_2} is identical to \Cref{fig:expr1_jacobian_dist_co} and presented again for completion; \Cref{fig:expr1_jacobian_dist_l2} plots the distance in Frobenius norm, divided by the norm of the Jacobian at initialization.

\begin{figure*}[hb]
    \centering
    \begin{subfigure}[t]{0.40\textwidth}
        \captionsetup{labelfont=normalfont,textfont=normalfont,justification=centering}
        \centering
        \includegraphics[width=\linewidth]{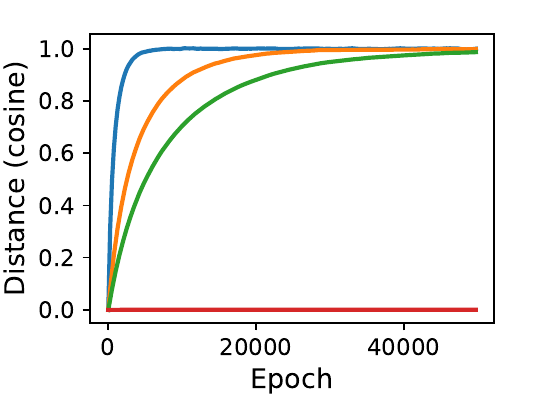}
        \caption{Jacobian drift in cosine distance.}
        \label{fig:expr1_jacobian_dist_co_deep}
    \end{subfigure}
    \begin{subfigure}[t]{0.40\textwidth}
        \captionsetup{labelfont=normalfont,textfont=normalfont,justification=centering}
        \centering
        \includegraphics[width=\linewidth]{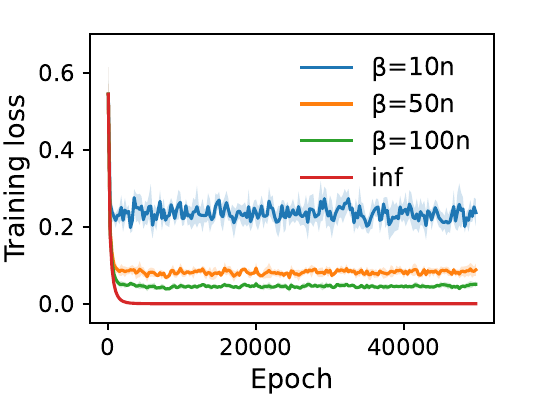}
        \caption{Training loss.}
        \label{fig:expr1_train_loss_deep}
    \end{subfigure}
    \begin{subfigure}[t]{0.50\textwidth}
        \captionsetup{labelfont=normalfont,textfont=normalfont,justification=centering}
        \centering
        \includegraphics[width=\linewidth]{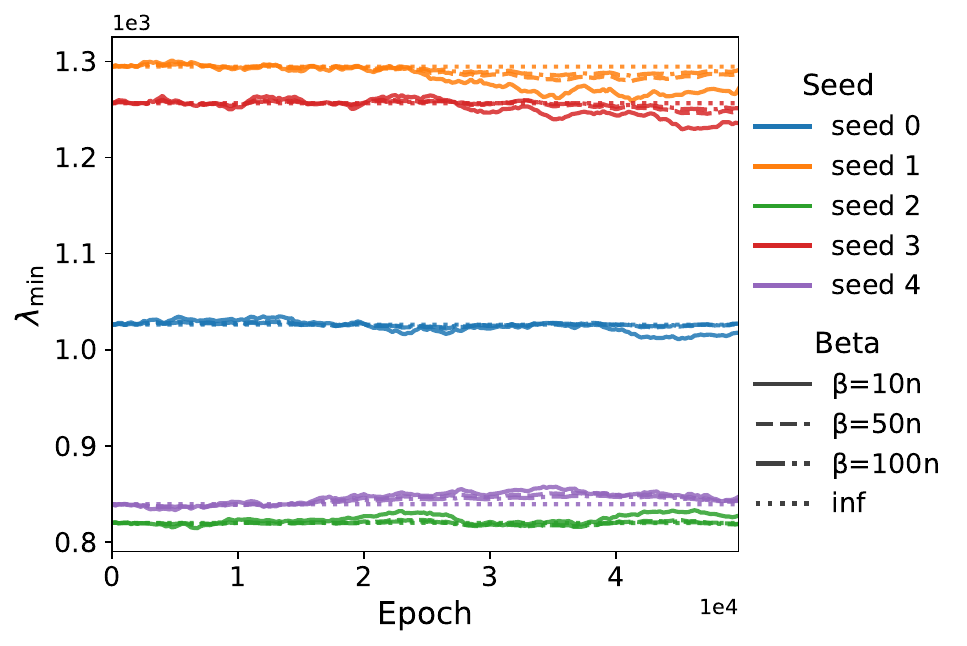}
        \caption{Minimal eigenvalue of feature Gram matrix.}
        \label{fig:feat_gram_lambda_all_betas_seeds_deep}
    \end{subfigure}

    \caption{Deep network training statistics as a function of the number of updates for $\beta \in \{10n, 50n, 100n, \infty\}$. 
    In \cref{fig:expr1_jacobian_dist_co_deep} we plot the (cosine) Jacobian drift throughout training. In \cref{fig:expr1_train_loss_deep} we plot the training loss. In \cref{fig:feat_gram_lambda_all_betas_seeds_deep} we show the minimal eigenvalue of the feature Gram matrix, plotting each ($\beta$, seed) pair separately. For $\beta=\infty$ the seed controls only the dataset.
    }
    \label{fig:deep}
\end{figure*}

In \Cref{fig:deep} we consider a \textit{deep} neural network defined as in \Cref{eq:network_definition} with $L=3$ and with scales $s_l = 1,\,l \in [L+1]$. The parameter initialization matches \Cref{eq:Wv_initialization} with $a_l = m_{l-1},\,l \in [L+1]$. Due to limited resources, we only match the hidden width with our theory up to log factors: we set the hidden layer width as $m=10,000$ to ensure $m = \max(d^2, n \beta_{\max})$, where $\beta_{\max}$ is the largest $\beta$ value we use. The rest of the implementation details are as in the previous experiments. Compared to the shallow network, here the Jacobian cosine distance increases at a significantly higher rate. This may be attributed to either the change in depth, change in width or both. The rest matches the behavior observed in the shallow network training.

\clearpage

\end{document}